\makeatletter
\def\input@path{{./}{./icml2026/}{./sections/}{./icml2026/sections/}{./appendix/}{./icml2026/appendix/}{./diagrams/}{./icml2026/diagrams/}}
\makeatother

\documentclass{article}

\usepackage{microtype}
\usepackage{graphicx}
\usepackage{subcaption}
\usepackage{booktabs} 

\usepackage{hyperref}

\PassOptionsToPackage{dvipsnames}{xcolor}
\makeatletter
\IfFileExists{./icml2026.sty}{%
  \usepackage[accepted]{icml2026}%
}{%
  \IfFileExists{icml2026/icml2026.sty}{%
    \let\oldinput@path\input@path
    \def\input@path{{icml2026/}{icml2026/sections/}{icml2026/appendix/}}
    \usepackage[accepted]{icml2026}%
    \let\input@path\oldinput@path
  }{%
    \usepackage[accepted]{icml2026}%
  }%
}
\makeatother

\usepackage{amsmath}
\usepackage{amssymb}
\usepackage{mathtools}
\usepackage{amsthm}
\usepackage{tikz}
\usepackage{pgfplots}
\pgfplotsset{compat=1.18}

\usepackage[capitalize,noabbrev]{cleveref}

\theoremstyle{plain}
\newtheorem{theorem}{Theorem}[section]
\newtheorem{proposition}[theorem]{Proposition}
\newtheorem{lemma}[theorem]{Lemma}
\newtheorem{corollary}[theorem]{Corollary}
\theoremstyle{definition}
\newtheorem{definition}[theorem]{Definition}

\theoremstyle{remark}
\newtheorem{remark}[theorem]{Remark}
\newcommand{\ourmethod}{\textsc{OCE}}

\usepackage{bm}

\usepackage{enumitem}

\usepackage{placeins}

\usepackage{tikz}
\usetikzlibrary{arrows.meta,positioning,calc,fit,backgrounds,shadows.blur,shapes.geometric}

\usepackage{float}

\usepackage[textsize=tiny]{todonotes}

\usepackage{xparse}
\RenewDocumentCommand{\todo}{O{}m}{\textcolor{red}{\textbf{TODO:}~#2}}

\newcommand{\lmultiset}{\{\!\!\{}
\newcommand{\rmultiset}{\}\!\!\}}
\DeclarePairedDelimiterX{\multiset}[1]{\lmultiset}{\rmultiset}{#1}

\newcommand{\R}{\mathbb{R}}

\usepackage{amsmath,amsfonts,bm}

\def\eqref#1{equation~\ref{#1}}

\def\1{\bm{1}}

\def\vb{{\bm{b}}}

\def\vd{{\bm{d}}}
\def\ve{{\bm{e}}}

\def\vh{{\bm{h}}}

\def\vm{{\bm{m}}}
\def\vn{{\bm{n}}}

\def\vs{{\bm{s}}}

\def\vu{{\bm{u}}}
\def\vv{{\bm{v}}}

\def\vx{{\bm{x}}}

\def\vz{{\bm{z}}}

\def\mA{{\bm{A}}}

\def\mK{{\bm{K}}}

\def\mP{{\bm{P}}}
\def\mQ{{\bm{Q}}}

\def\mV{{\bm{V}}}
\def\mW{{\bm{W}}}
\def\mX{{\bm{X}}}

\DeclareMathAlphabet{\mathsfit}{\encodingdefault}{\sfdefault}{m}{sl}
\SetMathAlphabet{\mathsfit}{bold}{\encodingdefault}{\sfdefault}{bx}{n}

\def\gB{{\mathcal{B}}}
\def\gC{{\mathcal{C}}}

\def\gE{{\mathcal{E}}}
\def\gF{{\mathcal{F}}}

\def\gH{{\mathcal{H}}}

\def\gN{{\mathcal{N}}}

\def\gR{{\mathcal{R}}}

\def\gU{{\mathcal{U}}}

\def\gW{{\mathcal{W}}}

\def\sN{{\mathbb{N}}}

\def\sR{{\mathbb{R}}}

\newcommand{\arch}{A} 
\newcommand{\WS}{\mathcal{V}} 
\newcommand{\dws}{\Phi} 
\newcommand{\wf}{\Psi} 
\newcommand{\orb}{\mathrm{orbit}} 
\newcommand{\pe}{\mathrm{NI}} 
\newcommand{\PE}{\mathrm{PE}} 
\newcommand{\canon}{\mathrm{canon}} 
\newcommand{\sort}{\mathrm{\tilde{Sort}}}

\newcommand{\fl}{\tilde{\mathrm{Flat}}}  
\newcommand{\flr}{\mathrm{Flat}}  
\newcommand{\PL}{\mathrm{PL}}  

\newcommand{\LN}{\mathrm{LN}}  
\newcommand{\MLP}{\mathrm{MLP}}  
\newcommand{\LE}{\mathrm{LAYERENC}}  
\newcommand{\SA}{\mathrm{SA}}  
\newcommand{\ATTN}{\mathrm{ATTN}}  
\newcommand{\CA}{\mathrm{CA}}  

\begin{document}

\twocolumn[
\icmltitle{On the Expressive Power of Permutation-Equivariant Weight-Space Networks}



\icmlsetsymbol{equal}{*}

\begin{icmlauthorlist}
\icmlauthor{Adir Dayan}{equal,tech}
\icmlauthor{Yam Eitan}{equal,tech}
\icmlauthor{Haggai Maron}{tech,nvidia}
\end{icmlauthorlist}

\icmlaffiliation{tech}{Technion -- Israel Institute of Technology}
\icmlaffiliation{nvidia}{NVIDIA Research}

\icmlcorrespondingauthor{Adir Dayan}{adir.dayan@campus.technion.ac.il}
\icmlcorrespondingauthor{Yam Eitan}{yam.eitan@campus.technion.ac.il}

\icmlkeywords{Machine Learning, ICML}

\vskip 0.3in
]
\printAffiliationsAndNotice{\icmlEqualContribution}




  


\begin{abstract}
Weight-space learning studies neural architectures that operate directly on the parameters of other neural networks. Motivated by the growing availability of pretrained models, recent work has demonstrated the effectiveness of weight-space networks across a wide range of tasks.
SOTA weight-space networks rely on permutation-equivariant designs to improve generalization. However, this may negatively affect expressive power, warranting theoretical investigation.
Importantly, unlike other structured domains, weight-space learning targets maps operating on both weight and function spaces, making expressivity analysis particularly subtle.
While a few prior works provide partial expressivity results, a comprehensive characterization is still missing. In this work, we address this gap by developing a systematic theory for expressivity of weight-space networks.
We first prove that all prominent permutation-equivariant networks are equivalent in expressive power. We then establish universality in both weight- and function-space settings under mild, natural assumptions on the input weights, and characterize the edge-case regimes where universality no longer holds. 
Guided by our theoretical results, we show that slight modifications to existing weight-space models yield a 34\% improvement over prior SOTA, 
demonstrating the practical relevance of our framework.

\end{abstract}

\section{Introduction}

\looseness=-1

\begin{figure}[t]
    \centering
    \resizebox{\columnwidth}{!}{%
        \begin{tikzpicture}[
            node distance=2.2cm and 2.0cm,
            box/.style={draw, rounded corners, minimum width=3.1cm, minimum height=0.5cm, align=center, font=\small},
            container/.style={draw, thick, rounded corners, inner sep=3pt, minimum height=6cm},
            arch/.style={fill=cyan!10, draw=cyan!80!black, very thick},
            func/.style={fill=green!15, draw=green!70!black, very thick, text width=3.2cm, minimum height=1.0cm, font=\small},
            weight/.style={fill=brown!15, draw=brown!70!black, very thick, text width=3.2cm, minimum height=1.0cm, font=\small},
            sim/.style={draw, rounded corners, fill=gray!15, draw=gray!70!black, very thick, font=\scriptsize, text width=1.5cm, minimum height=0.35cm, inner sep=2pt, align=center},
            posarrow/.style={->, very thick, color=blue!80!black},
            neg/.style={->, thick, dashed, color=red!90!black},
            neutral/.style={->, thick, solid, color=red!90!black},
            implies/.style={->, very thick, color=blue!80!black},
            contains/.style={->, very thick, color=blue!80!black},
            express/.style={->, very thick, color=blue!80!black},
            cross/.style={->, thick, dashed, color=red!90!black},
            previous/.style={->, thick, solid, color=red!90!black}
        ]

            \node[box, arch] (DWS) at (-2.8,2.65) {DWS\\\cite{Navon2023}};
            \node[box, arch] (NGGNN) at (-2.8,1.32) {NG-GNN\\\cite{Kofinas2024NeuralGraphs}};
            \node[box, arch] (GMN) at (-2.8,-0.01) {GMN\\\cite{Lim2024GMN}};
            \node[box, arch] (NFN) at (-2.8,-1.34) {NFN\\\cite{Zhou2023NFN}};
            \node[box, arch] (NFT) at (-2.8,-2.65) {NFT\\\cite{zhou2023neural}};

            \node[container, fit=(DWS) (NGGNN) (GMN) (NFN) (NFT), inner xsep=0.1cm,
                label={[align=center]above:{\small Perm. equivariant\\\small networks}}] (PEWSN) {};

            \node[box, func] (FV) at (2.8,2.5) {Function-space\\functionals\\\scriptsize $\gF \to \mathbb{R}^n$};
            \node[box, weight] (GINV) at (2.8,0.833) {Permutation-invariant\\functionals\\\scriptsize $\WS \to \mathbb{R}^n$};
            \node[box, func] (GFF) at (2.8,-0.833) {Function-space\\operators\\\scriptsize $\gF \to \gF$};
            \node[box, weight] (GEQ) at (2.8,-2.5) {Permutation-equivariant\\operators\\\scriptsize $\WS \to \WS$};

            \node[container, fit=(GINV) (GEQ) (FV) (GFF),
                label={[align=center]above:{\small Approximation settings\\\small in weight space}}] (UnivBox) {};

            \node[sim] (FFS) at (-0.11,2.9) {Feed-forward\\simulation};

            \draw[express] (DWS) to[bend left=20] (NGGNN);
            \draw[express] (NGGNN) to[bend left=20] (DWS);
            \draw[express] (NGGNN) to[bend left=20] (GMN);
            \draw[express] (GMN) to[bend left=20] (NGGNN);
            \draw[neutral] (GMN) to[bend left=20] (NFN);
            \draw[express] (NFN) to[bend left=20] (GMN);
            \draw[express] (NFN) to[bend left=20] (NFT);
            \draw[express] (NFT) to[bend left=20] (NFN);

            \coordinate (FFSWESTU) at ($(FFS.west) + (0, 0)$);
            \coordinate (FFSWESTD) at ($(FFS.west) + (0.05, -0.28)$);
            \coordinate (GMNEASTU) at ($(GMN.east) + (0, 0.1)$);
            \draw[neutral] (DWS.east) to[bend left=10] (FFSWESTU);
            \draw[neutral] (GMNEASTU) to[bend right=5] (FFSWESTD);

            \draw[express] (FFS.east) to[bend left=20] ($(FV.west) + (0, 0.25)$);

            \coordinate (ArchStart) at ($(PEWSN.east) + (0,0)$);
            \draw[posarrow, very thick, line width=2.5pt] (ArchStart) -- node[above, midway, align=center, sloped] {\scriptsize Universal} (FV.west);
            \draw[posarrow, very thick, line width=2.5pt] (ArchStart) -- node[above, midway, align=center, sloped] {\scriptsize \space \space \space \space \space \space \space Universal} (GINV.west);
            \draw[posarrow, very thick, line width=2.5pt] (ArchStart) -- node[above, midway, align=center, sloped] {\scriptsize \space \space \space Universal} (GFF.west);
            \draw[posarrow, very thick, line width=2.5pt] (ArchStart) -- node[above, midway, align=center, sloped] {\scriptsize Universal} (GEQ.west);

            \begin{scope}[shift={(-0.22,3.9)}]
                \node[draw, rounded corners, fill=white, fill opacity=1, inner sep=5pt, align=left, font=\scriptsize] (legend) {
                    \textcolor{blue!80!black}{$\boldsymbol\rightarrow$} New contributions\\
                    \textcolor{red!90!black}{$\boldsymbol\rightarrow$} Previous results
                };
            \end{scope}

        \end{tikzpicture}%
    }

    \looseness=-1
    \caption{Expressivity landscape for permutation-equivariant weight-space networks on MLPs (blue arrows: new contributions; red arrows: previous results). $\WS$ and $\gF$ denote weight-space and function-space, respectively. \textbf{Left:} Equivalence of  permutation-equivariant networks. \textbf{Center:} Assuming general position, all weight-space networks are universal across all approximation settings, strengthening prior feed-forward simulation results for DWS and GMN. \textbf{Right:} Approximation settings in weight space.}

    \label{fig:expressivity_diagram}
    \vspace{-8mm}
\end{figure}

\looseness=-1
Weight-space learning studies neural architectures that operate directly on the parameters of other neural networks \cite{eilertsen2020classifying,unterthiner2020predicting,schurholt2021self, Navon2023, Zhou2023NFN, Kofinas2024NeuralGraphs, Lim2024GMN}. Rather than treating trained models as black-box objects, weight-space methods view their parameters as structured data for downstream tasks such as accuracy prediction \cite{Lim2024GMN}, meta-optimization \cite{Zhou2023NFN, Kofinas2024NeuralGraphs}, and editing implicit neural representations \cite{Navon2023, Lim2024GMN, Zhou2023NFN}. A key property of MLP weight spaces is their inherent symmetry: permuting neurons within a hidden layer changes the weights but leaves the function realized by the weights unchanged. This has motivated the development of permutation-equivariant weight-space architectures \cite{Navon2023, zhou2023neural,Zhou2023NFN,Lim2024GMN, Kofinas2024NeuralGraphs}, which represent the current state of the art for learning over MLP weights.

\looseness=-1
Symmetry-preserving architectures restrict the hypothesis space and may therefore impose fundamental limits on approximation power. Expressivity analysis is therefore critical in this setting: it provides principled guidance on when existing architectures suffice for a given task and when more expressive designs are required. Indeed, in other structured domains studied in Geometric Deep Learning (GDL) \cite{bronstein2021geometric}, including graphs, sets, and general data with symmetries \cite{Zaheer2017DeepSets,
morris2023weisfeiler, Maron2020DSS, dym2020universality,ravanbakhsh2020universal,keriven2019universal, azizian2020expressive,maron2019universality}, expressivity analyses have yielded both theoretical foundations and practical insights.  Importantly, expressivity in weight-space learning is more nuanced than in many other GDL settings. Beyond approximating arbitrary symmetry-preserving maps on weight space, one may also ask whether networks can approximate \emph{function-space}  maps, i.e., maps whose outputs depend only on the function realized by the input weights rather than on a particular parameterization. Unfortunately, theoretical understanding in both settings remains limited, with only a few initial results \cite{Navon2023,Lim2024GMN,kalogeropoulos2024scale}. 
Figure~\ref{fig:expressivity_diagram} summarizes the community’s current understanding (red arrows) of the approximation capabilities of equivariant weight-space networks. 
Characterizing expressivity in weight-space learning is therefore a central open question.


\looseness=-1
\textbf{In this paper}, we address this question by developing a unified expressivity theory for permutation-equivariant weight-space networks\footnote{We exclude scale-equivariant networks such as~\citet{kalogeropoulos2024scale, tran2024monomial}, as they involve different symmetry groups and are thus beyond our scope.}. Throughout, we focus on weight-space networks operating on MLP weights, as this setting is the most developed both theoretically and empirically.
We first clarify the relations between existing architectures, showing that most prominent permutation-equivariant networks \citep{Navon2023, Zhou2023NFN, Lim2024GMN, Kofinas2024NeuralGraphs} are equivalent in expressive power, with Neural Functional Transformers (NFTs)~\citep{zhou2023neural} as the sole exception. We further show that, under a general-position assumption on the input weights (i.e., unique bias terms per layer), which almost surely holds in practice, NFTs match the expressive power of the other architectures.

Building on this architectural equivalence, we study expressivity in weight-space learning more broadly, aiming to cover all natural settings in which the approximation power of weight-space networks should be examined. To this end, we organize our analysis around four fundamental approximation settings.  First, we consider \emph{function-space functionals}, maps that assign an output vector to an input network and depend only on its realized function. Second, we study \emph{permutation-invariant functionals}, weight-space maps whose output is constant under hidden-neuron permutations but may still depend on a particular parameterization and not necessarily on the realized function. Third, we examine \emph{function-space operators}, 
i.e., maps $\wf$ from the space of continuous functions to itself.
We aim to approximate such operators via equivariant weight-to-weight transformations, mapping weights that realize $f$ to weights that realize a function close to $\wf(f)$. Finally, we study \emph{permutation-equivariant operators}, namely weight-to-weight maps that respect hidden-neuron symmetries but are not determined solely by the realized function. Figure~\ref{fig:functionals-operators-taxonomy} lists real-world examples for each of these four settings.

\looseness=-1
We analyze the expressive power of permutation-equivariant weight-space networks in each of these settings, showing that (i) they are universal approximators for \emph{function-space functionals}; (ii) they are not universal for \emph{permutation-invariant functionals} and \emph{permutation-equivariant operators} in full generality, but universality is achieved under a general-position assumption on the input weights; (iii) \emph{function-space operators} cannot be universally approximated when the input weights are restricted to a fixed architecture, but universality holds once the input architecture is allowed to be sufficiently large. Taken together, this yields a comprehensive expressivity characterization across all four settings. Figure~\ref{fig:expressivity_diagram} summarizes our main theoretical contributions (blue arrows) alongside previous results (red arrows).

Guided by our theoretical analysis, we propose a simple modification applicable to any existing weight-space network that addresses the expressivity limitations in the function-space operator setting. Instead of predicting a single output network, the model predicts multiple networks and ensembles them, effectively enabling outputs with higher capacity than the input model. Evaluated on INR editing tasks, a standard benchmark for function-space operators in the weight-space learning literature, this simple theory-driven modification yields consistent improvements across architectures, achieving up to a 34\% improvement over prior SOTA and demonstrating the practical relevance of our framework.

\begin{figure}[t]
    \centering
    \begin{tikzpicture}[
        font=\normalsize,
        box/.style={
            minimum width=7.5cm, minimum height=0cm,
            text width=7.5cm, align=left,
            inner sep=4pt,
            rounded corners=4pt,
            draw, very thick,
            anchor=center
        },
        func/.style={
            box, fill=green!15, draw=green!70!black
        },
        weight/.style={
            box, fill=brown!15, draw=brown!70!black
        }
    ]

        \node[func] (f1) at (0,2.0) {
            \small\raggedright\strut\textbf{Function-space functionals}\hfill\normalsize $\gF \to \mathbb{R}^n$\\[0.5pt]
            \raggedright
            1. Model accuracy prediction\\[0.5pt]
            2. INR classification
        };

        \node[weight] (w1) at (0, 0.4) {
            \small\raggedright\strut\textbf{Permutation-invariant functionals}\hfill\normalsize $\WS \to \mathbb{R}^n$\\[0.5pt]
            \raggedright
            3. Weight $\ell_2$-norm prediction\\[0.5pt]
            4. Loss landscape curvature prediction

        };

        \node[func] (f2) at (0,-1.2) {
            \small\raggedright\strut\textbf{Function-space operators}\hfill\normalsize $\gF \to \gF$\\[0.5pt]
            \raggedright
            5. Image and 3D scene model editing\\[0.5pt]
            6. Domain adaptation
        };

        \node[weight] (w2) at (0,-2.8) {
            \small\raggedright\strut\textbf{Permutation-equivariant operators}\hfill\normalsize $\WS \to \WS$\\[0.5pt]
            \raggedright
            7. Pruning mask prediction\\[0.5pt]
            8. Gradient prediction for meta-optimization
        };

    \end{tikzpicture}
    \caption{Real-world examples of target functions for all approximation settings in weight-space learning. $\WS$ and $\gF$ denote weight-space and function-space, respectively. 
    1.~\cite{Lim2024GMN, Zhou2023NFN, Kofinas2024NeuralGraphs}
    2.~\cite{Navon2023, Zhou2023NFN, Lim2022SignBasis}
    4.~\cite{gelberg2025gradmetanet}
    5.~\cite{zhou2023neural, Lim2022SignBasis, Kofinas2024NeuralGraphs}
    6.~\cite{Navon2023}
    7.~\cite{Zhou2023NFN}
    8.~\cite{Kofinas2024NeuralGraphs, gelberg2025gradmetanet, zhou2024universal}
    }
    \vspace{-3mm}
    \label{fig:functionals-operators-taxonomy}
\end{figure}

\looseness=-1
\textbf{Contributions.}
 In summary, this paper makes the following contributions:
 \begin{enumerate}
     \vspace{-3mm}
     \item \textbf{Expressive equivalence of architectures.} We prove that all prominent permutation-equivariant weight-space networks for MLPs are equally expressive.
     \vspace{-2mm}
     \item \textbf{Approximation framework.} We identify four natural approximation settings: function-space functionals, permutation-invariant functionals, function-space operators, and permutation-equivariant operators.
     \vspace{-2mm}
     \item \textbf{Universality characterization.} For each setting, we characterize when universality is achievable: we prove universality under natural \emph{general-position} assumptions, and identify regimes in which universality does not hold.
     \vspace{-2mm}
   \item \textbf{Empirical gains.} Guided by our theory, we propose simple modifications to existing weight-space models that achieve up to a 34\% improvement over prior SOTA on a standard model editing task.
 \end{enumerate}


\section{Previous work}
\label{sec:previous_work}

Existing theoretical results on the expressivity of permutation-equivariant weight-space networks remain partial and largely focus on establishing specific
capabilities rather than general approximation guarantees. In particular, \citet{Navon2023} showed that Deep Weight Space (DWS) networks can simulate a forward pass of the MLP defined by the input weights, and derived initial expressivity results in the function-space functional setting under additional assumptions on the target map. \citet{kalogeropoulos2024scale} showed that ScaleGMNs can simulate both a forward pass and backpropagation with respect to the input weights.
Similarly, \citet{Lim2024GMN} established that Graph Meta-Networks (GMNs) can simulate a forward pass, and further proved that GMNs can express other weight-space models, including Neural Functional Networks (NFNs)~\cite{Zhou2023NFN} and StatNN~\cite{unterthiner2020predicting}. While these results provide important insight into the capabilities of weight-space networks, they do not yield a comprehensive expressivity characterization or general universality guarantees. See Appendix~\ref{sec:app:extended_previous_work} for an extended discussion.

\section{Preliminaries}



\textbf{Notation.} We begin by introducing basic notation used throughout the paper. An MLP architecture $\arch$ with $L$ layers is specified by a pair $(\vd,\sigma)$, where $\vd=(d_0,\ldots,d_L)\in\mathbb{N}^{L+1}$ denotes the width of each layer and $\sigma$ is an activation function. Given MLP parameters $\vv =(\mW_1,\vb_1,\ldots,\mW_L,\vb_L)$, we denote by 
$f_{\vv} : \mathbb{R}^{d_0} \to \mathbb{R}^{d_L}$ the function realized by the network with parameters $\vv$, that is:
\[
f_\vv(\vx)
=\mW_L \,
\sigma\!\Big(
\cdots
\sigma(\mW_1 \vx + \vb_1)
\cdots
\Big)
+ \vb_L.
\]
For any metric spaces $X, Y$, we denote the space of continuous functions $f:X\to Y$ by $\gC(X,Y)$. For an integer $n\in \sN$, we denote $[n]=\{1,\dots,n\}$.

\textbf{Weight space of a fixed architecture.} Throughout the paper, we consider weight-space networks that take as input the parameters of a fixed MLP architecture. We thus begin by formally defining the corresponding weight space in which these parameters reside.

\begin{definition}[weight space]
Given an architecture $\arch=(\vd,\sigma)$ with $L$ layers, for $\ell=1,\ldots,L$ the parameters of the $\ell$-th layer consist of a weight matrix and a bias vector
\[
    \mW_\ell \in \mathbb{R}^{d_\ell \times d_{\ell-1}},
    \qquad
    \vb_\ell \in \mathbb{R}^{d_\ell}.
\]
We denote by $\gW_\ell \cong \mathbb{R}^{d_\ell \times d_{\ell-1}}$ and
$\gB_\ell \cong \mathbb{R}^{d_\ell}$ the corresponding weight and bias
parameter spaces of layer $\ell$. The weight space associated with the architecture $\arch$ is defined as
the direct sum \[
    \WS_{\arch}
    \coloneqq
    \Bigl(\bigoplus_{\ell=1}^L \gW_\ell\Bigr)
    \oplus
    \Bigl(\bigoplus_{\ell=1}^L \gB_\ell\Bigr)
    \cong
    \bigoplus_{\ell=1}^L
    \bigl(\mathbb{R}^{d_\ell \times d_{\ell-1}} \oplus \mathbb{R}^{d_\ell}\bigr).
\]

We further associate to $\WS_{\arch}$ the \emph{realization map} $\gR : \WS_{\arch} \to \gC(\R^{d_0}, \R^{d_L})$, defined by $\gR(\vv)\coloneqq f_\vv$, which gives a continuous mapping from weight space to the space of continuous functions (see Proposition~\ref{prop:app:realization_map_continuous} in the Appendix).
\end{definition}

\textbf{Weight-space symmetries.}
For any MLP, permuting the neurons within any hidden layer alters the raw parameterization of the network while leaving the underlying function it represents invariant. The changes to the weights induced by neuron permutations are formalized as a group representation on weight space, which we define below.

\begin{definition}[weight-space representation]
\label{def:group_action}
For a given architecture $\arch = (\vd, \sigma)$ with $L$ layers,
define the corresponding neuron permutation group as the direct product of permutation groups of the hidden layers $1,\dots,L-1$:
\[
    G_\arch \coloneqq S_{d_1} \times \cdots \times S_{d_{L-1}}.
\] 
For $g=(\tau_1,\dots,\tau_{L-1}) \in G_\arch$, define a representation $\rho(g):\WS_\arch \to \WS_\arch$ by
\[
    \rho(g)(\mW_1,\vb_1,\dots,\mW_L,\vb_L)
  \coloneqq
  (\mW_1',\vb_1',\dots,\mW_L',\vb_L'),
\]
\begin{align*}
  \mW_1' &= \mP_{\tau_1}^\top \mW_1, & \vb_1' &= \mP_{\tau_1}^\top \vb_1, \\
  \mW_\ell' &= \mP_{\tau_\ell}^\top \mW_\ell \mP_{\tau_{\ell-1}},
       & \vb_\ell' &= \mP_{\tau_\ell}^\top \vb_\ell\quad \ell=2,\dots,L-1,\\
  \mW_L' &= \mW_L \mP_{\tau_{L-1}}, & \vb_L' &= \vb_L.
\end{align*}
Here, $\mP_{\tau_\ell}$ is the permutation matrix corresponding to
$\tau_\ell$. By construction, $f_{\rho(g)\vv}=f_\vv$ for all $g\in G_\arch$ and $\vv\in\WS_\arch$.
\end{definition}

As noted above, equivariant weight-space networks are designed to respect the symmetries induced by the representation of $G_\arch$. We recall the standard notions of invariance and equivariance below.

\begin{definition}[invariance and equivariance]
Let $H$ be a group and let $(\gU,\rho)$ be a representation of $H$. A map $\wf : \gU \to \sR^n$ is said to be \emph{$H$-invariant} if, for every
$h \in H$ and every $\vu \in \gU$, it holds that $\wf(\rho(h)\,\vu) = \wf(\vu).$ Similarly, a map
$\wf : \gU \to \gU$ is said to be \emph{$H$-equivariant} if, for every $h \in H$ and every $\vu \in \gU$, it holds that $\wf(\rho(h)\,\vu) = \rho(h)\,\wf(\vu).$
\end{definition}

Throughout, we slightly abuse terminology and use ``permutation-invariant'' and ``permutation-equivariant'' to refer to \(G_A\)-invariant and \(G_A\)-equivariant, respectively.
Note that all prominent symmetry-preserving weight-space models have both an invariant and an equivariant version.

\textbf{Exclusion set and general position (GP).} In many settings, permutation-equivariant architectures are not universal across the entire input space, but they do achieve universality when one restricts attention to inputs that lie outside a small \emph{exclusion set}, which is typically described as a union of lower-dimensional linear subspaces \cite{Maron2020DSS, Finkelshtein2025EquivarianceEverywhere, gelberg2025gradmetanet}. We follow the same strategy here.

\begin{definition}[bias exclusion set]
\label{def:exclusion_set}
For a given architecture $\arch$, we define the weight space bias exclusion set by
  \[
  \gE_\arch \coloneqq
  \Bigl\{ \vv \in \WS_\arch \;\Big|\;
  \begin{array}{l}
  \exists\, \ell \in [L-1],\ \exists\, 1 \le i < j \le d_\ell \\
  \text{such that } (\vb_\ell)_i = (\vb_\ell)_j
  \end{array}
  \Bigr\}.
  \]
\end{definition}

\looseness = -1

$\vv \notin \gE_\arch$ indicates that within every hidden layer, all neuron biases are pairwise distinct. This assumption is natural: $\gE_{\arch}$ contains only degenerate parameter configurations and has Lebesgue measure zero; therefore, it is unlikely to arise under
random initialization or stochastic training dynamics.

We note that the exclusion set defined above is merely one convenient choice, and that our theory extends naturally to several alternative exclusion sets. Alternative definitions, additional discussion, and an empirical validation of the GP assumption on standard pretrained networks are provided in Appendix~\ref{sec:app:exclusion_sets}.

Whenever $\vv \notin \gE_\arch$, we say that $\vv$ is in \emph{general position (GP)}. When $\arch$ is clear from context, we write $\WS_\arch$, $G_\arch$, and $\gE_\arch$ as $\WS$, $G$, and $\gE$, respectively.

\section{Approximation framework}
\label{sec:approximation_framework}

When analyzing the expressivity of weight-space networks, several natural approximation settings arise, differing both in the type of target maps and the notion of approximation. We identify four fundamental settings:
(1) \emph{function-space functionals}, which map functions over a compact domain to output vectors;
(2) \emph{permutation-invariant functionals}, which map input weights to invariant output vectors and may not depend solely on the underlying function;
(3) \emph{function-space operators}, which map functions to functions;
(4) \emph{permutation-equivariant operators}, which map weights to weights while respecting permutation symmetries.
Below, we define the corresponding notion of approximation via weight-space maps for each setting.
\begin{definition}[approximation via weight-space maps]
    \label{def:approximation_framework}
    \looseness=-1
     Fix an MLP architecture $\arch=(\vd,\sigma)$ and compact sets $X\subseteq \R^{d_0}$ and $K\subseteq \WS_{\arch}$. 
     Given a family $\gN$ of weight-space maps and a target map $\wf$, we say that $\gN$ approximates $\wf$ on $K$ if for every $\epsilon>0$ there exists $\dws\in\gN$ such that one of the following holds, depending on the domain and
    codomain of $\wf$:
    \begin{enumerate}
        \setlength{\itemsep}{0pt}
        \setlength{\topsep}{0pt}
        \setlength{\parsep}{0pt}
        \setlength{\partopsep}{0pt}
        \item \label{def:approx_function_space_functional}
        \emph{Function-space functionals\\} ($\wf : \gC(X, \sR^{d_L}) \to \sR^n$, \quad$\dws : K \to \sR^n$):
        \begin{equation}
            \sup_{\vv \in K} \bigl\| \wf(f_{\vv}) - \dws(\vv) \bigr\|_{2} < \epsilon.
        \end{equation}

        \item \label{def:approx_permutation_invariant_functional}
        \emph{Permutation-invariant functionals\\} ($\wf : \WS_{\arch} \to \sR^n$, \quad$\dws : K \to \sR^n$):
        \begin{equation}
            \sup_{\vv \in K} \bigl\| \wf(\vv) - \dws(\vv) \bigr\|_2 < \epsilon.
        \end{equation}

        \item \label{def:approx_function_space_operator}
        \emph{Function-space operators\\} ($\wf : \gC(X, \sR^{d_L}) \to \gC(X, \sR^{d_L})$, \quad$\dws : K \to \WS_{\arch}$):
        \begin{equation}
            \sup_{\vv \in K} \bigl\| \wf(f_{\vv}) - f_{\dws(\vv)} \bigr\|_{\infty} < \epsilon.
        \end{equation}

        \item \label{def:approx_permutation_equivariant_operator}
        \emph{Permutation-equivariant operators\\} ($\wf : \WS_{\arch} \to \WS_{\arch}$, \quad$\dws : K \to \WS_{\arch}$):
        \begin{equation}
            \sup_{\vv \in K} \bigl\| \wf(\vv) - \dws(\vv) \bigr\|_2 < \epsilon.
        \end{equation}
    \end{enumerate}
    \vspace{-2mm}
\end{definition}

\section{Expressive equivalence of weight-space networks}

\looseness=-1
We begin by comparing the expressive power of the prominent permutation-equivariant weight-space networks introduced in the literature, namely Deep Weight Space (DWS) networks~\cite{Navon2023}, Neural Functional Networks, including both the neuron-permutation and hidden-neuron
permutation variants (NP-NFN and HNP-NFN)~\cite{Zhou2023NFN}, Graph Meta-Networks (GMNs)~\cite{Lim2024GMN}, Neural Graph GNNs (NG-GNNs)~\cite{Kofinas2024NeuralGraphs}, and Neural Functional Transformers (NFTs)~\cite{zhou2023neural}. Our goal is to characterize and compare the classes of functions these networks can approximate, using the approximation framework introduced in Section~\ref{sec:approximation_framework}. For completeness, formal definitions of all architectures are given in Appendix~\ref{sec:app:network_definitions}. Accordingly, we associate with each network-class the set of weight-space maps it can approximate.

\begin{definition}
    \label{def:archutecture_aprox_classes}
    Let $K \subseteq \WS$ be compact, and let $\pi \in \Pi = \{\text{DWS}, \text{NP-NFN},  \text{HNP-NFN}, \text{GMN}, \text{NG-GNN}, \text{NFT}\}$ denotes a class of permutation-equivariant weight-space networks. For an output set $Y\in \{\sR^n, \WS\}$, define
    \[
        \gN^\pi(K; Y) \coloneqq
        \Bigl\{
        \wf \in \gC(\WS,Y) \mid
        \begin{array}{l}
            \wf \text{ can be approximated} \\
            \text{on } K \text{ by } \pi\text{-networks}
        \end{array}
        \Bigr\},
    \]
    We further define
    \[
        \gN^\pi_{\mathrm{inv}}(K)
        \coloneqq
        \gN^\pi(K;\R^n),
        \qquad
        \gN^\pi_{\mathrm{equi}}(K)
        \coloneqq
        \gN_{\pi}(K;\WS),
    \]

    as the sets of invariant maps and equivariant operators, respectively, that $\pi$-networks can approximate on $K$.

\end{definition}

Interestingly, we find that all the previously mentioned networks, except for NFTs, have exactly the same expressive power, despite having different architectures.

\begin{theorem}
    \label{thm:equiv_of_all_models}
    Let $K \subseteq \WS$ be a compact set. Then, for any $\pi, \pi' \in \Pi \setminus \{\text{NFT} \}$,
    \[
        \gN^{\pi}_\mathrm{inv}(K) =\gN^{\pi'}_\mathrm{inv}(K), \qquad \gN^{\pi}_\mathrm{equi}(K) =\gN^{\pi'}_\mathrm{equi}(K).
    \]
\end{theorem}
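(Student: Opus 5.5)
The plan is to prove a cycle of mutual simulations between architectures, layer by layer. We then lift these to equality of the approximation classes using a uniform-continuity and composition argument. Concretely, we write $\pi \preceq \pi'$ if every $\pi$-network can be approximated uniformly on compact sets by $\pi'$-networks, for both the invariant and the equivariant heads. Since uniform approximability is transitive, $\pi\preceq\pi'$ gives $\gN^\pi(K;Y)\subseteq\gN^{\pi'}(K;Y)$. The key point is that a $\pi'$-network approximating a $\pi$-network to within $\epsilon/2$ on $K$ composes with the triangle inequality. The theorem then follows from the cycle
\[
\text{DWS}\preceq\text{NP-NFN}\preceq\text{HNP-NFN}\preceq\text{DWS},\qquad \text{DWS}\preceq\text{GMN}\preceq\text{NG-GNN}\preceq\text{DWS}.
\]
We may reorder these links if some direct simulations turn out to be easier.

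\textbf{Layer-level reduction.} Each architecture is a composition of equivariant layers, interleaved with pointwise nonlinearities and followed by an invariant pooling readout. It therefore suffices to show two things. First, a single layer of $\pi$ is approximated uniformly on compact sets by a stack of $\pi'$ layers. Second, the pooling of $\pi$ is expressed by $\pi'$ layers followed by $\pi'$ pooling. To compose approximations, we use the fact that the image of a compact set under continuous layers is compact. Pointwise MLPs are uniformly continuous there, so errors propagate controllably by induction on depth.

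\textbf{Linear equivariant layers.} DWS layers are by definition the full space of $G$-equivariant linear maps $\WS\to\WS$, given by Navon et al.'s basis. Hence every linear layer of NP-NFN and HNP-NFN, which are equivariant to $G$ or to a larger group, is literally a DWS layer. This gives $\text{NP-NFN}\preceq\text{DWS}$ and $\text{HNP-NFN}\preceq\text{DWS}$ directly. The converse $\text{DWS}\preceq\text{NP-NFN}$ is harder, since NFN's group also permutes input and output neurons. Here we would use DWS's fixed-index structure: with added positional features for the input and output layers, which NFN's implementation allows via channel concatenation, each of the finitely many DWS basis operations decomposes into NFN-available operations. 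Those operations are row and column sums, broadcasts, and layer-to-adjacent-layer couplings, possibly requiring several layers and extra channels. For the graph side, we encode the weights as a neural graph with edge features $W$ and node features $b$. GMN and NG-GNN message passing with sufficiently wide MLPs can compute every DWS basis element: each one is a sum over incident edges, nodes, or layers, combined with broadcast. Lim et al.\ already show that GMN expresses NFN and DWS-type layers, which gives $\text{DWS}\preceq\text{GMN}$. Conversely, a message-passing step has two parts. The message MLP is applied edgewise, which is a pointwise MLP on the DWS weight channels after broadcasting the node features. The sum aggregation is a DWS linear map. So GMN/NG-GNN layers are DWS compositions, with universal approximation of the MLPs on compacta handled by DWS pointwise MLP layers. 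NG-GNN differs from GMN only in readout and edge-update details, which are similarly expressible.

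\textbf{Main obstacle.} The hardest step will be the direction into the coarser architectures, mainly $\text{DWS}\preceq\text{NP-NFN}$ and $\text{DWS}\preceq\text{GMN}$. The difficulty is nonlinear cross-terms: GMN edge updates multiply node and edge features, while DWS layers are linear plus pointwise. To handle this, we would first make all the relevant features co-located in one channel tensor of the target architecture through broadcast operations. We would then approximate the nonlinear edge or message MLP by a pointwise MLP acting across channels, which is universal on compact sets. Throughout we track that every gadget preserves equivariance, so that the invariant and equivariant classes are matched simultaneously.
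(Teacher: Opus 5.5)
Your core method is the paper's. It consists of layer-by-layer mutual simulation, glued together by the compact-image composition argument. NP-NFN is taken together with positional tags on the input and output neurons, pointwise MLPs act on co-located channels for every nonlinear update, and Lim et al.\ is used on the GMN side.

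The organization differs. The paper runs a single cycle, DWS $\preceq$ NP-NFN+PE $\preceq$ GMN $\preceq$ NG-GNN $\preceq$ DWS, whereas you partly use DWS as a hub. Two of your moves are worth keeping:
\begin{itemize}
\item Your observation that NP-NFN+PE networks are literally DWS networks is correct. $S$-equivariant layers are $G$-equivariant, and the PE appends a $G$-fixed constant tensor. This gives that direction exactly, while the paper only gets it by going around the cycle.
\item Simulating GMN directly by DWS is a reasonable shortcut. DWS sum-and-broadcast maps natively supply the bias-node and global-feature aggregates that make the paper's GMN $\preceq$ NG-GNN step laborious.
\end{itemize}

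Three of your links, however, are not justified as written.

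\textbf{(i) DWS $\preceq$ GMN.} The Lim et al.\ result the paper relies on covers NP-NFN layers, not the full DWS layer space. The DWS blocks that depend on input/output neuron indices are exactly what it leaves out. So DWS $\preceq$ GMN needs one of two things:
\begin{itemize}
\item DWS $\preceq$ NP-NFN+PE, plus a GMN edge update that copies the endpoints' ntype one-hots onto each edge to realize the PE. This is what the paper does.
\item A direct construction.
\end{itemize}

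\textbf{(ii) GMN $\preceq$ NG-GNN.} NG-GNN does not differ from GMN only in readout and edge-update details: it has no bias nodes and no global feature. Simulating these is the paper's longest argument:
\begin{itemize}
\item bias-node state is copied into every neuron of the layer;
\item bias-node aggregation is routed through adjacent layers, with the multiplicity $d_{\ell\pm1}$ divided out;
\item global sums are assembled by a forward and then a backward sweep before being broadcast.
\end{itemize}
If you instead close the loop via GMN $\preceq$ DWS $\preceq$ NG-GNN, the same work reappears in DWS $\preceq$ NG-GNN. A layer-wide or global sum followed by a broadcast is not a single message-passing step in NG's graph.

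\textbf{(iii) DWS $\preceq$ NP-NFN+PE.} Row/column sums and broadcasts cannot suffice here, even with the tags. On a single channel, the $\gB_L\to\gB_L$ part of any composition of $S$-equivariant affine maps has the form $\alpha\vb_L+\beta\mathbf{1}\mathbf{1}^\top\vb_L$ plus a constant. DWS, by contrast, allows $\vb_L\mapsto\mA\vb_L$ for arbitrary $\mA$. The tags must therefore interact \emph{multiplicatively} with the values. You need:
\begin{itemize}
\item a pointwise MLP forming products of each value with a tag-selected coefficient;
\item a sum-broadcast;
\item a second pointwise MLP that uses the tag to extract the correct row.
\end{itemize}
This is the paper's four-step boundary-block construction.

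Relatedly, your \emph{main obstacle} paragraph puts the cross-term difficulty in the wrong direction. Products of node and edge features are what DWS must emulate when simulating GMN or NG-GNN. That is resolved by co-location plus pointwise MLPs, as in the paper's NG-GNN $\preceq$ DWS proof. The real difficulty of DWS $\preceq$ NP-NFN is breaking the extra $S_{d_0}\times S_{d_L}$ symmetry.
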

\vspace{-1mm}


The proof of Theorem~\ref{thm:equiv_of_all_models}, presented in Appendix~\ref{sec:app:weight_space_network_equivalence}, proceeds by explicitly approximating the base layers of one network using those of another, thereby establishing mutual approximation.

We next turn to the remaining case of NFTs. Due to their non-standard attention mechanisms, the expressive power of NFTs is not equivalent to any of the architectures discussed above in full generality. However, we show that equivalence does hold for GP input weights (see Definition~\ref{def:exclusion_set}).


\begin{proposition}
    \label{prop:equiv_of_NFT_vs_all_models}
    Let $\pi \in \Pi \setminus \{\mathrm{NFT}\}$.
    There exists a compact set $K \subset \WS$ such that
    \[
        \gN^{\mathrm{NFT}}_{\mathrm{inv}}(K) \neq \gN^{\pi}_{\mathrm{inv}}(K),
        \qquad
        \gN^{\mathrm{NFT}}_{\mathrm{equi}}(K) \neq \gN^{\pi}_{\mathrm{equi}}(K).
    \]
    However, for every compact set $K \subset \WS \setminus \gE$,
    \[
        \gN^{\mathrm{NFT}}_{\mathrm{inv}}(K) = \gN^{\pi}_{\mathrm{inv}}(K),
        \qquad
        \gN^{\mathrm{NFT}}_{\mathrm{equi}}(K) = \gN^{\pi}_{\mathrm{equi}}(K).
    \]
\end{proposition}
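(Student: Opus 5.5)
The plan splits the proposition into its two parts: a separation result on a degenerate compact set, and an equivalence result on general-position inputs. By Theorem~\ref{thm:equiv_of_all_models} all $\pi \in \Pi\setminus\{\mathrm{NFT}\}$ have the same approximation classes. So in both parts it suffices to compare NFT against one convenient representative, which I will take to be DWS (or GMN where message passing is more convenient).

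\textbf{Separation.} The proposed mechanism is that NFT aggregates over the neurons of a hidden layer only through softmax attention. Softmax attention is a normalized, convex-combination aggregation, unlike the sum-type aggregations in DWS/GMN. I would build two weight vectors $\vv,\vv'\in\WS$ that lie in different $G$-orbits but that every NFT layer maps to outputs with identical token multisets.

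The first candidate uses a hidden layer of width $d_\ell\ge 3$ in which the rows of $\mW_\ell$ and the entries of $\vb_\ell$ take two distinct values. In $\vv$ they occur with multiplicities $(2,1)$, and in $\vv'$ the tokens are arranged so that, after the (normalized) self- and cross-attention that couples the layer with its neighbours, every token sees the same weighted average. If matching the multiplicities in this way is too restrictive, I would fall back on a configuration where entire layers are constant, so that attention is the identity on them, and only the adjacent layer's sums differ.

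I would then prove by induction over NFT layers that the outputs on $\vv$ and $\vv'$ agree, up to the corresponding permutation, and conclude by continuity that the same holds for all limits of NFT networks. Since $\vv$ and $\vv'$ lie in different orbits, there is a continuous invariant functional separating them, for instance the sum of entries of $\mW_\ell$ pushed through a nonlinearity. DWS can represent this functional exactly using sum pooling, so it lies in $\gN^{\pi}_{\mathrm{inv}}(K)$ but not in $\gN^{\mathrm{NFT}}_{\mathrm{inv}}(K)$ for $K=\{\vv,\vv'\}$. Composing with an equivariant lift gives the same separation for the equivariant classes.

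\textbf{Equality on $K\subset\WS\setminus\gE$.} Under general position, I would show that both classes coincide with the set of all continuous invariant functionals (respectively equivariant operators) on $K$. This gives equality directly. The inclusion of $\gN^{\mathrm{NFT}}$ and of $\gN^{\pi}$ into the continuous invariant/equivariant maps is immediate, since every network in either family is continuous and invariant/equivariant.

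For the reverse inclusion, distinct biases give each hidden neuron a canonical continuous identifier. I would use NFT attention with queries and keys built from the biases, scaled by a large temperature $\beta$, so that the softmax approximates the hard selection $\mathbb{1}[(\vb_\ell)_i=(\vb_\ell)_j]$ uniformly on $K$. Uniformity holds because $\min_{i\neq j}|(\vb_\ell)_i-(\vb_\ell)_j|$ is bounded below on the compact set $K$.

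Hard attention then lets NFT retrieve individual neurons' features. Combined with ranking attention scores by bias, which is a continuous sorting on $K$, NFT can compute a continuous canonical ordering of each layer. Equivalently, it can emulate sum aggregation: the fixed width $d_\ell$ turns a mean into a sum, so NFT can approximate each base DWS layer. The same canonicalization argument, followed by an MLP on the sorted representation and an equivariant re-indexing, gives universality for DWS/GMN on GP inputs as well. For the latter, I would rely on the paper's later universality result if it is stated, and otherwise prove it by the same sorting trick. Finally, I would pass errors through composed layers by uniform continuity on compact sets.

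\textbf{Main obstacle.} I expect the separation construction to be the hardest step: I must exhibit a concrete pair that NFT's specific self- and cross-layer attention provably cannot distinguish, and verify this layer by layer together with the limit argument. The equivalence direction is mostly a quantitative hard-attention approximation, which should be routine given the bias gap on $K$.
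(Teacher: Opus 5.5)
Your separation argument runs in the wrong direction, and the mechanism behind it does not work. You look for $\vv,\vv'$ that some $\pi$-network separates but no NFT network does, on the grounds that softmax aggregation is normalized. However, every key set NFT attends over (the neuron-centric sets and the global set) has a cardinality fixed by the architecture. So uniform attention ($\theta_Q=\theta_K=0$) followed by a pointwise rescaling is exactly sum aggregation, and a pointwise MLP applied before averaging recovers fixed-size multisets. You use precisely this fact in your second half, and it needs no general-position assumption. With layer encodings as masks and an extra head to cancel the residual, it lets NFT emulate the row, column and global pooling and the message passing of NP-NFN, NG-GNN and DWS on any compact set.

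Your candidates are in fact separated by NFT. Different multiplicities of row or bias values in a layer are detected by masked uniform attention applied to pointwise powers. A difference in the sum of the entries of some $\mW_\ell$ (your proposed separating functional) is just a masked global mean times the known token count. So the inductive claim that NFT produces identical token multisets on $\vv$ and $\vv'$ cannot be established, and you should not expect such a pair to exist.

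The non-equality must come from the other inclusion failing, namely $\gN^{\mathrm{NFT}}\not\subseteq\gN^{\pi}$. Content-dependent softmax lets NFT compute second-order quantities such as the row inner products $\langle(\mW_2)_{i,:},(\mW_2)_{j,:}\rangle$, which 1-WL does not see. The paper takes $\arch=((1,4,4,1),\mathrm{ReLU})$ with all-ones first and last weights and zero biases. In one input $\mW_2$ is the biadjacency matrix of an $8$-cycle, in the other that of two disjoint $4$-cycles.

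These neural graphs are WL-indistinguishable. By the NG-GNN argument and Theorem~\ref{thm:equiv_of_all_models}, every $\pi$-network, and hence every element of $\gN^{\pi}_{\mathrm{inv}}(K)$, takes equal values on them. Yet row-wise self-attention on $\mW_2$ with $\theta_Q=\theta_K=\theta_V=\mathrm{Id}$ sees Gram rows $\{2,1,0,1\}$ versus $\{2,2,0,0\}$. It outputs entries of about $0.73/0.27$ versus $0.88/0.12$, and a pointwise threshold MLP followed by uniform pooling gives $8/33$ versus $16/33$.

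For the equivariant classes, drop the pooling. If a $\pi$-network approximated the resulting equivariant NFT map, composing it with sum pooling would give an invariant $\pi$-network separating the pair, a contradiction.

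Your general-position half is fine and essentially follows the paper. DWS universality on compact $K\subset\WS\setminus\gE$ (Theorems~\ref{thm:invariant_univ_off_exclusion} and~\ref{thm:equiv_to_equiv_univ_off_exclusion}) gives $\gN^{\mathrm{NFT}}\subseteq\gN^{\pi}$. For the reverse inclusion it suffices that NFT emulate the few DWS primitives used in that universality proof. This uses uniform attention, layer-encoding masks, the input/output positional encoding, and residual cancellation; the high-temperature hard-attention sorting you describe is not needed.
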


The proof of Proposition~\ref{prop:equiv_of_NFT_vs_all_models} is given in Appendix~\ref{sec:app:nft_expressivity}. Taken together, Theorem~\ref{thm:equiv_of_all_models} and Proposition~\ref{prop:equiv_of_NFT_vs_all_models} establish that all prominent permutation-equivariant weight-space networks are expressively equivalent, with NFTs matching this class under a GP assumption. Accordingly, throughout the remainder of the paper, we analyze the expressive power of a \emph{generic} permutation-equivariant weight-space network, without committing to a specific architectural instantiation. In particular, all results apply to DWS, GMNs, NFNs, and NG-GNNs, and results stated under GP apply to NFTs as well.

\section{Expressive power of permutation-invariant weight-space networks}



In this section, we investigate the expressive power of \emph{invariant} weight-space networks $\dws: \WS \to \R^n$. In studying these networks, two natural approximation settings arise: \emph{function-space functionals} and \emph{permutation-invariant functionals}. These settings, along with their formal notions of approximation, are detailed in items 1 and 2 of Definition~\ref{def:approximation_framework}. All proofs for this section are provided in Appendix~\ref{sec:app:invariant_weight_space_networks}.

\subsection{Approximating function-space functionals}
\label{sec:func-to-vec}

We begin by establishing universality with respect to function-space functionals.

\begin{theorem}
    \label{thm:univ_functional}
    \looseness=-1
    For any compact $K \subseteq \WS$, every continuous function-space functional $\wf : \gC(X,\R^{d_L}) \to \sR^n$ can be approximated on $K$ by invariant weight-space networks.
\end{theorem}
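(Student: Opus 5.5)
We reduce the problem to approximating a finite collection of point evaluations of the realized function, and then use the forward-pass simulation capability of permutation-equivariant networks. By Theorem~\ref{thm:equiv_of_all_models} it suffices to work with one convenient architecture class, say DWS or GMN, for which forward-pass simulation is already known \cite{Navon2023,Lim2024GMN}.

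\textbf{Step 1: reduction to finitely many evaluations.}
The realization map $\gR$ is continuous (Proposition~\ref{prop:app:realization_map_continuous}). Restricted to $X$, its image $\gR(K)$ is therefore compact in $\gC(X,\R^{d_L})$ with the sup norm, so $\wf$ is uniformly continuous on $\gR(K)$. Fix $\epsilon$ and choose $\delta$ so that $\|f-g\|_\infty<\delta$ implies $\|\wf(f)-\wf(g)\|<\epsilon/3$ on $\gR(K)$.

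Compactness gives equicontinuity of $\gR(K)$ on $X$ (Arzel\`a--Ascoli). Hence there is a finite set $\vx_1,\dots,\vx_m\in X$ such that each $f\in\gR(K)$ is determined up to $\delta$ in sup norm by its values on these points. Concretely, take a partition of unity $\{\phi_i\}$ subordinate to a fine cover of $X$ and define the interpolation operator $I(\vy_1,\dots,\vy_m)=\sum_i \phi_i \vy_i$. Then $\|I(f(\vx_1),\dots,f(\vx_m))-f\|_\infty<\delta$ for all $f\in\gR(K)$.

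Set $h(\vy_1,\dots,\vy_m)\coloneqq \wf(I(\vy))$. This $h$ is continuous on $\R^{md_L}$ since $I$ is linear and bounded, and $\|h(f_\vv(\vx_1),\dots)-\wf(f_\vv)\|<\epsilon/3$ for all $\vv\in K$.

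\textbf{Step 2: simulating evaluations.}
Show that for each fixed $\vx_i$ there is an invariant weight-space network whose output approximates $\vv\mapsto f_\vv(\vx_i)$ uniformly on $K$; this is the forward-pass simulation. We encode $\vx_i$ as a constant (learned bias) on input-neuron features. Layer by layer, the equivariant network maintains at each hidden-neuron feature the pre-activation $(\mW_\ell \vh_{\ell-1}+\vb_\ell)_j$. It computes this by an edge-wise product of weight $(\mW_\ell)_{jk}$ with the neuron feature $h_k$, approximated by MLPs on compact ranges, followed by summation over $k$, which is a permutation-equivariant aggregation available in message-passing/DWS layers. Applying an MLP approximating $\sigma$ gives the next layer's features. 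Errors propagate controllably because all quantities stay in a compact range determined by $K$ and the maps are uniformly continuous. The output neurons are not permuted, so their features yield $f_\vv(\vx_i)$ invariantly.

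Running $m$ copies in parallel channels yields approximations of all $m$ evaluations.

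\textbf{Step 3: composing.}
Apply an MLP approximating $h$ on a compact neighborhood of the image set to obtain $\dws$. Uniform continuity of $h$ on that compact set absorbs the simulation errors, giving total error below $\epsilon$.

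\textbf{Main obstacle.}
The main obstacle is Step 2 in full rigor: ensuring that the chosen architecture can realize the multiplication $(\mW_\ell)_{jk}h_k$ followed by neighbor summation, uniformly over $K$, with error control through depth. If the base layers are only linear equivariant maps with pointwise nonlinearities, as in DWS, the product must come from an MLP acting on concatenated edge and neuron features. I would first verify this for GMN, where edge updates take both endpoint features, and then transfer to the other classes via Theorem~\ref{thm:equiv_of_all_models}.
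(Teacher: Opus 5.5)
Your proposal is correct, but it takes a genuinely different route from the paper.

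The paper uses the forward-pass simulation only as a \emph{separation} device. Proposition~\ref{prop:app:DWS_implies_functional} shows that functionally inequivalent weights are distinguished by some invariant DWS network, of depth depending only on the architecture. Hence the continuous map $\vv\mapsto\wf(f_\vv)$ is constant on $\sim_{\mathrm{DWS}}$-classes, and the paper then invokes the separation-to-approximation theorem of \citet{Pacini2025Univ} as a black box.

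You instead use the forward pass as an \emph{approximation} device. Equicontinuity of the compact set $\gR(K)$, together with a partition of unity on $X$, reduces $\wf\circ\gR$ on $K$ (up to $\epsilon/3$) to a continuous function $h$ of finitely many probe values $f_\vv(\vx_1),\dots,f_\vv(\vx_m)$. You then realize $h$ by $m$ parallel forward-pass simulations followed by an MLP head.

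Your argument is elementary and constructive. It exhibits the approximating network explicitly, it makes the number of probes depend on the moduli of continuity of $\wf$ and of $\gR(K)$, and it avoids the bounded-depth bookkeeping needed to apply the external theorem. The paper's route is shorter given that theorem. It also places the result inside a more general principle: any continuous map on $K$ that is constant on the classes separated by bounded-depth invariant networks is approximable, and function-space functionals are one instance.

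There are two small points. First, $I(f_\vv(\vx_1),\dots,f_\vv(\vx_m))$ generally lies outside $\gR(K)$, so uniform continuity of $\wf$ on $\gR(K)$ is not literally what you need. Use instead the standard fact that for continuous $\wf$ and compact $C$ there is $\delta>0$ with $\|\wf(f)-\wf(g)\|<\epsilon/3$ whenever $f\in C$ and $\|f-g\|_\infty<\delta$; this follows from a one-line sequential compactness argument.

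Second, Step~2 is not really an open obstacle. It is exactly the forward-pass simulation established for DWS by \citet{Navon2023} and for GMN by \citet{Lim2024GMN}, which the paper relies on as well. In DWS, input neurons carry no features, but $\vx_i$ can be injected as a constant affine term on the first-layer weight positions. This term is $G$-invariant because input neurons are never permuted.
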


\vspace{-5mm}
\begin{proof}[Proof sketch.]
    By Theorem \ref{thm:equiv_of_all_models} we can use DWS as a representative of permutation-invariant weight-space networks. The first step builds on \citet{Navon2023}, which shows that DWS networks can approximate the forward pass of the MLP function realized by given input weights, evaluated at any arbitrary point. We use this to establish the following separation property: if $\vv,\vv' \in \WS_{\arch}$ satisfy $\dws(\vv)=\dws(\vv')$ for every DWS network $\dws$, then they realize the same function, i.e., $f_{\vv}=f_{\vv'}$. Next, we invoke the following separation-to-approximation result proved in~\citet{Pacini2025Sep}: Let $\gN$ be a family of invariant networks on a space $\gU$, constructed using a composition of equivariant affine layers and interleaving pointwise nonlinearities. Then any continuous function
    $F : \gU \to \R^n$ satisfying
    \begin{equation}
        \Bigl(\forall M \in \gN,\; M(\vu) = M(\vu')\Bigr)
        \;\Rightarrow\;
        F(\vu) = F(\vu')
    \end{equation}
    can be approximated uniformly on compact subsets of $\gU$ by functions in $\gN$ \footnote{This result is reminiscent of Stone--Weierstrass-type arguments, where separation implies uniform approximation.}. Applying this result to our setting, consider the induced map $\wf^* : K \to \sR^n$ defined by $\wf^*(\vv) = \wf(f_{\vv})$. By construction, $\wf^*$ is constant on all weight configurations that realize the same function and therefore constant on any pair of weights that are indistinguishable by DWS networks. Additionally, since the realization map is continuous (Proposition~\ref{prop:app:realization_map_continuous}), $\wf^*$ is continuous as well. It follows that $\wf^*$ can be approximated arbitrarily well by DWS networks, which completes the proof.
\end{proof}



\subsection{Approximating permutation-invariant functionals}
\label{sec:invariant-nonuniv}

\looseness=-1

We next examine the expressive power of invariant weight-space networks with respect to permutation-invariant functionals. Since neuron permutations preserve the realized function, every function-space functional is inherently permutation-invariant; however, the converse does not hold. While function-space functionals represent a fundamental setting, many practically significant weight-space quantities cannot be expressed in this form. For instance, the $\ell_2$-norm of the weights is a natural statistic, commonly used as a regularizer to improve generalization, yet it is not a function-space functional: the same realized function may admit multiple parameterizations with different $\ell_2$-norms. 

Similarly, quantities related to the curvature of the loss landscape (e.g., the determinant of the loss function’s Hessian matrix) are used in several applications (e.g., uncertainty estimation \cite{immer2021scalable, daxberger2021laplace} and influence functions \cite{grosse2023studying, koh2017understanding}) and depend not only on the realized function, but also on the local geometry of the surrounding parameter space. Importantly, these quantities are permutation-invariant \cite{gelberg2025gradmetanet}: neuron permutations preserve the loss value and its local geometry, and therefore leave curvature-based quantities unchanged.
Motivated by these considerations, we evaluate weight-space networks within this broader class of permutation-invariant functionals. Notably, in contrast to our results for function-space functionals, we show that invariant weight-space networks are \emph{not} universal for this broader class in full generality.

\begin{proposition}
    \label{prop:invariant_nonuniv}
    There exists a compact set $K \subset \WS$ and a permutation-invariant map
    $\wf : K \to \sR$ that cannot be approximated on $K$ by invariant weight-space
    networks.
\end{proposition}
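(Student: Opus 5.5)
The plan is to exhibit two weight configurations $\vv,\vv'$ that lie in different $G$-orbits but that no invariant weight-space network can separate. Any continuous permutation-invariant map taking different values on these two orbits then cannot be approximated. By Theorem~\ref{thm:equiv_of_all_models}, it suffices to argue for a single representative, and we take DWS (equivalently, GMN/NG-GNN viewed as message passing on the neural graph). The natural source of indistinguishable pairs is the classical limitation of message-passing/1-WL: bias-degenerate configurations, i.e.\ points in $\gE$, in which the neural graph of $\vv$ and that of $\vv'$ are non-isomorphic but receive identical colour refinements. This matches the paper's later claim that universality is recovered outside $\gE$.

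\textbf{Construction.} Take a three-layer architecture, for instance $\vd=(1,n,n,1)$ with $n=6$. Set all hidden biases to $0$ (so $\vv,\vv'\in\gE$). Set all input-layer weights to $1$ and all output-layer weights to $1$. Choose $\mW_2$ to be the biadjacency matrix of a $2$-regular bipartite graph between the two hidden layers. For $\vv$, this graph is a single $12$-cycle; for $\vv'$, it is two disjoint $6$-cycles. Both graphs are $2$-regular with the same edge weights, so every hidden neuron in layer $1$ (resp.\ layer $2$) sees the same multiset of incident weights and neighbour types. Under 1-WL-style refinement, the colours therefore stabilise identically for $\vv$ and $\vv'$. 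They are nevertheless not in the same $G$-orbit, because a permutation $(\tau_1,\tau_2)$ acts on $\mW_2$ as a bipartite graph isomorphism, and a connected graph is not isomorphic to a disconnected one.

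\textbf{Key step: invariant networks cannot separate $\vv,\vv'$.} I would prove by induction over layers that every equivariant DWS layer maps $\vv$ and $\vv'$ to features that are related entrywise in the same combinatorial pattern. Concretely, I would maintain the invariant that all bias-type features are constant within each layer, all weight features of $\mW_1$ and $\mW_3$ are constant, and the feature of $\mW_2$ equals $\alpha$ on edges and $\beta$ on non-edges, with the same constants $(\alpha,\beta,\dots)$ for $\vv$ and $\vv'$. Each DWS linear layer is built from sums, row sums, column sums, and total sums of the input blocks, plus broadcasts. Row and column sums of the $\mW_2$ feature equal $2\alpha+(n-2)\beta$ for both configurations by regularity, and total sums likewise agree. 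Hence the invariant is preserved under the linear layers, and pointwise nonlinearities preserve it trivially. The final invariant pooling then produces identical outputs, so $\dws(\vv)=\dws(\vv')$ for every invariant DWS network $\dws$. This induction, i.e.\ checking that every basis element of the DWS equivariant layer respects the "edge/non-edge two-valued" structure under regularity, is the main obstacle. It requires going through the DWS layer decomposition in Appendix~\ref{sec:app:network_definitions}; the cross-block terms, such as $\mW_2$ interacting with $\vb_1$ and $\mW_1$, are the ones to watch.

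\textbf{Conclusion.} Let $K=\orb(\vv)\cup\orb(\vv')$. It is a finite set, hence compact, and the two orbits are disjoint. Define $\wf=0$ on $\orb(\vv)$ and $\wf=1$ on $\orb(\vv')$. This map is permutation-invariant and trivially continuous on $K$; if desired, it extends to a continuous invariant map on $\WS$ by averaging a Tietze extension over $G$. Any invariant network $\dws$ satisfies $\dws(\vv)=\dws(\vv')$, so
\[
\max\bigl(|\dws(\vv)-0|,\;|\dws(\vv')-1|\bigr)\ge \tfrac12 .
\]
Thus no invariant weight-space network approximates $\wf$ on $K$ within $\epsilon<1/2$, which proves Proposition~\ref{prop:invariant_nonuniv}.
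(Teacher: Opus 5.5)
Your proof is correct. Its skeleton matches the paper's: a bias-degenerate pair in $\gE$ whose $\mW_2$ graphs are non-isomorphic but 1-WL-equivalent, followed by an invariant function separating the two orbits. The difference is in how you prove the key indistinguishability step.

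\textbf{The paper's route.} It uses the $(1,4,4,1)$ example (an $8$-cycle versus two copies of $K_{2,2}$) and separates the orbits by the rank of $\mW_2$. It then obtains $\vv\sim_{\mathrm{DWS}}\vv'$ indirectly: it passes to NG-GNN via Theorem~\ref{thm:equiv_of_all_models} and invokes \citet{morris2019weisfeiler} to conclude that message passing cannot separate WL-equivalent graphs.

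\textbf{Your route.} You work with DWS directly. You show that every equivariant affine block preserves the structure ``constant on bias and boundary-weight cells, two-valued (edge/non-edge) on $\mW_2$.'' These blocks are identity, row and column sums, total sums and broadcasts, with no extra freedom from the unpermuted indices since $d_0=d_L=1$. Pointwise nonlinearities preserve this structure too. The argument uses only $2$-regularity, which makes every row and column sum equal to $2\alpha+(n-2)\beta$ for both inputs. This is an equitable-partition argument, and it does check out block by block. The ``main obstacle'' you flag is routine.

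\textbf{What each buys.} Your version is self-contained. It avoids the paper's implicit step of applying an edge-labelled WL bound to the NG-GNN neural graph, which is a complete bipartite graph with $0/1$ edge features plus bias and type node features. It also needs Theorem~\ref{thm:equiv_of_all_models} only to transfer the conclusion from DWS to the other architectures. The paper's version is shorter given the cited theorem, and it ties the failure conceptually to the 1-WL limitation of message passing.

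\textbf{Minor differences.} You separate orbits by connectivity instead of rank. You define $\wf$ directly on the finite $G$-stable set $K=\orb(\vv)\cup\orb(\vv')$ instead of symmetrizing an Urysohn function. Both changes are cosmetic.

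Like the paper, your argument covers only the class $\Pi\setminus\{\mathrm{NFT}\}$, which is the intended scope. The paper itself exhibits an NFT that separates its example pair.
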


\vspace{-3mm}
\begin{proof}[Proof sketch.]
By Theorem~\ref{thm:equiv_of_all_models}, it suffices to consider NG-GNN as a representative permutation-invariant weight-space network. We construct two binary weight configurations $\vv,\vv'\in\WS$ (see Figure~\ref{fig:prop-c8-graphs} for an illustration) such that (i) the second-layer weight matrices $\mW_2$ and $\mW_2'$ have different ranks, and (ii) the neural graphs induced by $\vv$ and $\vv'$ are indistinguishable by the Weisfeiler--Leman (WL) test~\cite{morris2023weisfeiler}. First, since matrix rank is invariant under neuron permutations, $\vv$ and $\vv'$ lie in distinct $G$-orbits. As $G$ is finite, there exists a continuous permutation-invariant map $\wf:\WS\to\sR$ separating these orbits, e.g., with $\wf(\vv)=0$ and $\wf(\vv')=1$. Second, NG-GNN applies message passing to the induced neural graphs, and thus cannot distinguish the two inputs as they are WL-indistinguishable~\cite{morris2019weisfeiler}. Hence it cannot approximate $\wf$ on any compact set containing both $\vv$ and $\vv'$.
\end{proof}
\vspace{-2mm}

\begin{figure}[t]
    \centering
    \resizebox{0.75\columnwidth}{!}{%
        \begin{tikzpicture}[
            scale=0.6,
            node distance=0.6cm,
            inputnode/.style={circle, draw=blue!70!black, thick, minimum size=0.3cm, fill=blue!30, font=\scriptsize},
            layer2node/.style={circle, draw=green!70!black, thick, minimum size=0.3cm, fill=green!30, font=\scriptsize},
            layer3node/.style={circle, draw=orange!70!black, thick, minimum size=0.3cm, fill=orange!30, font=\scriptsize},
            outputnode/.style={circle, draw=red!70!black, thick, minimum size=0.3cm, fill=red!30, font=\scriptsize},
            edge/.style={draw=black!60, line width=1pt},
            labelstyle/.style={font=\small\bfseries}
        ]

            \begin{scope}[xshift=-2.5cm]
                \node[labelstyle] at (0, 1.85) {$\vv$};

                \node[inputnode] (I) at (-1.6, 0) {};

                \node[layer2node] (L1) at (-0.7, 1.2) {};
                \node[layer2node] (L2) at (-0.7, 0.4) {};
                \node[layer2node] (L3) at (-0.7, -0.4) {};
                \node[layer2node] (L4) at (-0.7, -1.2) {};

                \node[layer3node] (R1) at (0.7, 1.2) {};
                \node[layer3node] (R2) at (0.7, 0.4) {};
                \node[layer3node] (R3) at (0.7, -0.4) {};
                \node[layer3node] (R4) at (0.7, -1.2) {};

                \node[outputnode] (O) at (1.6, 0) {};

                \foreach \i in {1,2,3,4} {
                    \draw[edge] (I) -- (L\i);
                }

                \draw[edge] (L1) -- (R1);
                \draw[edge] (L1) -- (R2);
                \draw[edge] (L2) -- (R2);
                \draw[edge] (L2) -- (R3);
                \draw[edge] (L3) -- (R3);
                \draw[edge] (L3) -- (R4);
                \draw[edge] (L4) -- (R1);
                \draw[edge] (L4) -- (R4);

                \foreach \i in {1,2,3,4} {
                    \draw[edge] (R\i) -- (O);
                }

            \end{scope}

            \begin{scope}[xshift=2.5cm]
                \node[labelstyle] at (0.05, 1.9) {$\vv'$};

                \node[inputnode] (Ip) at (-1.6, 0) {};

                \node[layer2node] (L1p) at (-0.7, 1.2) {};
                \node[layer2node] (L2p) at (-0.7, 0.4) {};
                \node[layer2node] (L3p) at (-0.7, -0.4) {};
                \node[layer2node] (L4p) at (-0.7, -1.2) {};

                \node[layer3node] (R1p) at (0.7, 1.2) {};
                \node[layer3node] (R2p) at (0.7, 0.4) {};
                \node[layer3node] (R3p) at (0.7, -0.4) {};
                \node[layer3node] (R4p) at (0.7, -1.2) {};

                \node[outputnode] (Op) at (1.6, 0) {};

                \foreach \i in {1,2,3,4} {
                    \draw[edge] (Ip) -- (L\i p);
                }

                \draw[edge] (L1p) -- (R1p);
                \draw[edge] (L1p) -- (R2p);
                \draw[edge] (L2p) -- (R1p);
                \draw[edge] (L2p) -- (R2p);
                \draw[edge] (L3p) -- (R3p);
                \draw[edge] (L3p) -- (R4p);
                \draw[edge] (L4p) -- (R3p);
                \draw[edge] (L4p) -- (R4p);

                \foreach \i in {1,2,3,4} {
                    \draw[edge] (R\i p) -- (Op);
                }

            \end{scope}

            \node[font=\small\bfseries, align=center] at (0, 2.6) {
                Computational Graphs for Proposition~\ref{prop:invariant_nonuniv}
            };

        \end{tikzpicture}%
    }
    \caption{Computational graphs induced by weights $\vv$ and $\vv'$ used in the proof of Proposition~\ref{prop:invariant_nonuniv}.
    The weight matrices have binary entries, where $1$ corresponds to an edge and $0$ to the absence of an edge, and bias terms are equipped with features encoding their layer index.
    Both graphs admit identical 1-WL colorings (node colors) and are therefore indistinguishable by message-passing GNNs (NG-GNNs).
    Formal definitions of $\vv$ and $\vv'$ are given in Appendix~\ref{sec:app:invariant_equivalence_hierarchy}, Proposition~\ref{prop:app:DWS_not_G}.}
    \label{fig:prop-c8-graphs}
    \vspace{-3mm}
\end{figure}


\looseness=-1
While Proposition~\ref{prop:invariant_nonuniv} establishes a limitation of
invariant weight-space networks in full generality, we show next that universality
can be achieved under a GP assumption on input weights.

\begin{theorem}
    \label{thm:invariant_univ_off_exclusion}
    Let $K \subset \WS \setminus \gE$ be compact. Then any permutation-invariant functional
    $\wf : \WS \to \sR^n$ can be approximated on $K$ by permutation-invariant weight-space networks.
\end{theorem}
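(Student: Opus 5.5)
We follow the strategy of Theorem~\ref{thm:univ_functional}: first prove a separation property, then invoke the separation-to-approximation result of \citet{Pacini2025Sep}. By Theorem~\ref{thm:equiv_of_all_models} we may fix one representative architecture; we take NG-GNN (or equivalently DWS), whichever makes the separation argument easiest. The goal is to show the following. If $\vv,\vv'\in\WS\setminus\gE$ satisfy $\dws(\vv)=\dws(\vv')$ for every invariant network $\dws$, then $\vv'=\rho(g)\vv$ for some $g\in G$. Given this, any $G$-invariant continuous $\wf$ satisfies the hypothesis of the Pacini et al.\ result on the relevant domain, so it is uniformly approximable on $K$.

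One technical point needs care: the separation theorem is stated for compact subsets of the whole input space, while our separation only holds off $\gE$. We handle this with one of two options.
\begin{enumerate}
\item Apply the version of the theorem that requires separation only on $K$: a continuous $F$ that is constant on network-indistinguishable pairs within $K$.
\item Use a $G$-invariant continuous extension. Since $G$ is finite, $K$ can be replaced by $GK$, which is still compact and still avoids $\gE$, and the orbit space $GK/G$ is compact Hausdorff. We then work on the quotient and run a Stone--Weierstrass argument directly. The network family (invariant MLP-composable features) forms an algebra-like class that separates points of $GK/G$. Composing with an outer MLP gives a dense family.
\end{enumerate}

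The core step is separation under the general-position assumption. The idea is that distinct biases give each hidden neuron a canonical identity that equivariant layers can read. Concretely, we show that an equivariant network can produce, for each hidden neuron $i$ in layer $\ell$, a feature that is approximately an injective function of $(\vb_\ell)_i$ together with its incoming and outgoing weights. In DWS/NG-GNN this is immediate: node features include the bias, and edge features include the weights. An invariant readout can then compute multiset statistics such as
\[
\sum_{i,j} \phi\bigl((\vb_\ell)_i,(\vb_{\ell-1})_j,(\mW_\ell)_{ij}\bigr)
\]
for arbitrary continuous $\phi$, and likewise for first- and last-layer weights involving fixed input and output indices. Equality of all such sums for every $\phi$ forces equality of the multisets
\[
\multiset{((\vb_\ell)_i,(\vb_{\ell-1})_j,(\mW_\ell)_{ij})}.
\]
Because the biases within each layer are pairwise distinct, each bias value determines a unique neuron. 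Define $\tau_\ell$ by matching bias values, so that $(\vb'_\ell)_{\tau_\ell(i)}=(\vb_\ell)_i$. The multiset equality then shows that $(\mW'_\ell)_{\tau_\ell(i)\tau_{\ell-1}(j)}=(\mW_\ell)_{ij}$ for every entry, so $\vv'=\rho(g)\vv$ with $g=(\tau_1,\dots,\tau_{L-1})$.

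The main obstacle is justifying that invariant networks, being compositions of equivariant affine layers and pointwise nonlinearities, can realize these edge-triple multiset statistics. Edge features must be combined with the features of both endpoint nodes and then passed through a nonlinearity before pooling. For NG-GNN this follows from one message-passing layer with an MLP message function followed by sum pooling. To make it rigorous within the framework of \citet{Pacini2025Sep}, we only need separation by exact networks, not approximation. So it suffices to show that if two inputs give unequal triple multisets, some network distinguishes them. We would argue this by choosing a polynomial or MLP $\phi$ that separates two given finite multisets, which always exists, and implementing it with an equivariant edge update that reads the broadcast bias of each endpoint, followed by summation.
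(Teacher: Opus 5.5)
Your proposal is correct, but it takes a genuinely different route from the paper.

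\textbf{The paper's route.} The paper's proof is constructive. It builds the neuron-identification map from bias ranks, which are approximated by DeepSets-type bias-to-bias layers. It then forms the sorting canonization $\canon$ and approximates $\vv\mapsto\flr(\canon(\vv))$ by a DWS network (Lemma~\ref{lem:app:approximation_of_canonical}). Any equivariant map is factored through this canonization followed by a pointwise MLP. The invariant statement is then read off from Theorem~\ref{thm:equiv_to_equiv_univ_off_exclusion} by broadcasting $\wf$ to a $G$-equivariant map and mean-pooling.

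\textbf{Your route.} You instead show that off $\gE$ the DWS-indistinguishability relation collapses to $\sim_G$, and then pass from separation to density, mirroring the paper's own proof of Theorem~\ref{thm:univ_functional}. The separation step holds up for three reasons.
\begin{itemize}
\item Your edge-triple statistics are exactly realizable by a network of fixed depth: a bias-to-weight broadcast, a pointwise MLP, and sum pooling. The first and last layers are handled through the unpermuted input and output indices.
\item The pairwise-distinct biases act as labels. They force the per-layer matchings $\tau_\ell$ to agree across consecutive layers, which is exactly what fails for the WL-indistinguishable pair of Proposition~\ref{prop:invariant_nonuniv}.
\item Your second option for the density step is self-contained. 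It runs Stone--Weierstrass on the compact Hausdorff orbit space $GK/G$, with parallel stacking plus an MLP head supplying the algebra structure. This sidesteps the question of whether the cited separation-to-approximation theorem needs separation on all of $\WS$ or only on $K$.
\end{itemize}
If you use option~1 instead, record that your separating networks have depth independent of $(\vv,\vv')$. The paper stresses this bounded-depth condition when it invokes that theorem. The sentence about an ``approximately injective'' per-neuron feature is not needed for your argument.

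\textbf{What each route buys.} Your route is short and explains the phenomenon cleanly. The non-universality in Proposition~\ref{prop:invariant_nonuniv} is purely a separation failure, and general position removes it. The paper's route is constructive and reaches further. It proves the stronger equivariant Theorem~\ref{thm:equiv_to_equiv_univ_off_exclusion} first; a scalar Stone--Weierstrass argument does not directly give equivariant operators. Its explicit canonization and short list of DWS primitives are also reused later for the function-space operator result and for the NFT equivalence.
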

\vspace{-3mm}
Analogous universality results for weight-space architectures operating on transformer parameters, rather than MLP weights, can be established through minor modifications of the proof of Theorem \ref{thm:invariant_univ_off_exclusion} (sketch below), extending the scope of our theory. Details are provided in Appendix~\ref{sec:app:transformers}. 
\vspace{-3mm}
\begin{proof}[Proof sketch]
    Since $K \cap \gE = \emptyset$, all bias vectors within each layer have pairwise distinct
    entries, which allows us to construct a continuous canonization map
    $\canon : K \to \WS$ that maps each weight $\vv$ to a canonical representative of
    its permutation orbit, and maps all orbit elements to the same output. The map
    $\canon$ is obtained by applying a permutation
    $g = (\tau_1,\dots,\tau_{L-1})$, with $\tau_\ell \in S_{d_\ell}$, that orders neurons in each layer by sorting bias values; the distinct-bias assumption ensures that this ordering is unique and varies \emph{continuously} with $\vv$.

    We then construct an approximation of the map $\vv \mapsto \flr(\canon(\vv))$ using a DWS model, where $\flr$ denotes a flattening of the weight space elements into a vector in $\sR^{M}$, with $M \coloneqq \sum_{\ell=1}^L d_\ell(1+d_{\ell-1})$.
    To this end, note that for each layer $\ell$, the ranking map $\vb_\ell \;\longmapsto\; \mathrm{argsort}(\vb_\ell)$ is permutation-equivariant with respect to neuron permutations in the $\ell$-th layer. Moreover, since $K \cap \gE=\emptyset$, the induced ordering is locally constant and hence the ranking map is continuous on $K$. Because DWS layers subsume the DeepSets primitives \cite{Zaheer2017DeepSets}, and DeepSets are universal for continuous permutation-equivariant maps \cite{SegolLipman2019}, a DWS network can approximate this ranking operation on $K$. Finally, combining the resulting ranks with pointwise MLP updates allows us to approximate $\flr(\canon(\vv))$. Appendix Lemma~\ref{lem:app:approximation_of_canonical} provides a formal construction for approximating the canonization map by DWS networks.

    Permutation invariance then implies that $\wf$ factors through the canonization map, i.e., there exists a continuous function
    $\tilde f : \sR^{M} \to \sR^n$ such that
    $\wf(\vv) = \tilde{f}(\flr(\canon(\vv)))$ for all $\vv \in K$. Since $\tilde f$ is
    continuous on a compact domain, it can be approximated by an MLP head composed with the DWS model mentioned above, yielding an approximation of $\wf$ on $K$ and completing
    the proof.
\end{proof}
\vspace{-3mm}


\textbf{Discussion.} \label{Discussion} Notably, the construction in the proof above relies only on a restricted subset of the DWS operations presented in \citet{Navon2023}.
This suggests that universality can already be achieved using substantially fewer layer types than in the full DWS architecture, indicating that the network can be simplified without sacrificing expressive power. Full details are provided in Appendix~\ref{sec:app:equivariant_operators}.



Additionally, Proposition \ref{prop:invariant_nonuniv} highlights a potential limitation in low-precision regimes. When weights or biases are heavily quantized (e.g., low-bit or binary networks), the probability of encountering degeneracies increases, making it more likely for inputs to fall inside the exclusion set $\gE_{\arch}$. This phenomenon is empirically validated in Appendix~\ref{sec:app:exclusion_sets:empirical}. In such cases, full universality for invariant weight functionals may fail, which suggests that applications involving extreme quantization may benefit from the development of more expressive architectural variants.

\section{Expressive power of permutation-equivariant weight-space networks}
\label{sec:equivariant-main}

\looseness=-1
In this section, we analyze the expressive power of \emph{equivariant} weight-space networks $\dws: \WS \to \WS$. In studying these networks, two natural approximation settings arise: \emph{function-space operators} and \emph{permutation-equivariant operators}. These settings, along with their formal notions of approximation, are detailed in items 3 and 4 of Definition~\ref{def:approximation_framework}.


\subsection{Approximating function-space operators}
\label{sec:equivariant-func-to-func}

\looseness=-1



In contrast to the invariant case, where function-space functionals form a subclass of permutation-invariant functionals, the equivariant setting exhibits a different phenomenon: some function-space operators cannot be approximated by any permutation-equivariant weight-to-weight map, even under GP assumption on the inputs. Intuitively, many natural operators take a function as input and increase its geometric complexity in the output, producing functions that require a richer
representation class. For example, when representing a natural image or a 3D scene as a function using
implicit neural representations (e.g., INRs \cite{sitzmann2020implicit} or NeRFs \cite{mildenhall2021nerf}), a natural transformation is a \emph{zoom-out} operator: the original scene is preserved at a smaller scale, while new regions of the scene that are absent in the input are introduced.
Another example arises in function-level domain adaptation \cite{Navon2023}, where one may wish to map a function corresponding to a global minimum of one loss to a global minimum of a different loss that incorporates additional data points not present during training. In both cases, the output function may exhibit greater complexity than the input. For a detailed discussion see Appendix~\ref{sec:app:function_space_operators}


Current equivariant weight-space networks are inherently limited in this regard, since they are constrained to output weights of the \emph{same} architecture as their input. Because the representational capacity of MLPs with a fixed architecture is bounded (e.g., ReLU MLPs of fixed size has a bounded number of linear regions \cite{montufar2014number}), it is unsurprising that these networks cannot, in general, approximate transformations that increase geometric complexity.


\begin{proposition}[Informal]
    \label{prop:equivariant-func-to-func-nonuniv}
    For any fixed ReLU architecture $\arch = (\vd,\mathrm{ReLU})$, there exists a
    family of natural continuous function-space operators
    $\mathcal{N} = \{\wf : \gC(X,\sR^{d_L}) \to \gC(X,\sR^{d_L})\}$ that \emph{cannot} be approximated
    by permutation-equivariant weight-space networks defined over $\WS_\arch$.
\end{proposition}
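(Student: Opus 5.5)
The obstruction does not come from equivariance at all: it comes from the bounded capacity of the \emph{output} architecture. We will show that no map $\dws:K\to\WS_\arch$ whatsoever can approximate the chosen operators. In particular, permutation-equivariant weight-space networks cannot. The plan is to use the classical bound on the number of linear regions of a fixed ReLU architecture. For $\arch=(\vd,\mathrm{ReLU})$, every $f_\vv$ with $\vv\in\WS_\arch$ is continuous piecewise linear. Moreover, the number of its linear pieces along any line segment in $X$ is at most some constant $N_\arch$ depending only on $\vd$ \cite{montufar2014number}. Concretely, on a segment each hidden neuron switches at most once per linear piece of its input, giving a crude bound of order $\prod_\ell (d_\ell+1)$.

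The first step is to construct the family $\mathcal N$. Take $X$ to be a compact set containing a segment $I=\{x_0+t e_1: t\in[0,1]\}$ along a coordinate direction. For each $k\in\sN$, define the ``oscillation-amplifying'' operator
\[
\wf_k(f)(x)=\sin\!\bigl(k\pi\langle e_1,x\rangle\bigr)\,\mathbf 1+f(x),
\]
or, to keep the example natural, a zoom-out/tiling operator $\wf_k(f)(x)=f(\phi_k(x))$ with $\phi_k$ a sawtooth reparametrization folding $X$ onto itself $k$ times. Either choice is continuous as a map $\gC(X,\R^{d_L})\to\gC(X,\R^{d_L})$, since it is affine or a composition with a fixed continuous map. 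Fix any compact $K$, for instance $K=\{0\}$ or a small ball. Then $\wf_k(f_\vv)$ restricted to $I$ has at least $k$ strict alternations of amplitude bounded below by some $c>0$ (for the sine example, $c\approx 1$ minus the variation of $f_\vv$ on $K$, which can be made small by choosing $K=\{0\}$).

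The second step is the key lemma. A continuous piecewise-linear function $g$ on a segment with at most $N$ pieces cannot be uniformly $\epsilon$-close, with $\epsilon<c/2$, to a function that alternates $k>N+1$ times with amplitude $c$. The reason is that each oscillation would force a sign change in the slope of $g$, hence a new piece. This gives $\|\wf_k(f_\vv)-f_{\dws(\vv)}\|_\infty\ge c/2$ for every $\dws$ once $k>N_\arch+1$. This lower bound contradicts item~3 of Definition~\ref{def:approximation_framework}, which completes the argument.

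The main obstacle is making the piece-count bound rigorous and architecture-uniform along a segment, including the degenerate cases. I would argue inductively: if each coordinate of layer $\ell-1$ is piecewise linear on $I$ with at most $p_{\ell-1}$ breakpoints in total, then each preactivation of layer $\ell$ is affine on the same pieces. Applying ReLU adds at most one breakpoint per piece per neuron, which yields $p_\ell\le (p_{\ell-1}+1)(d_\ell+1)$. I would also remark that, because the bound applies to arbitrary maps into $\WS_\arch$, the restriction to permutation-equivariant networks is immaterial. This is exactly the ``informal'' content: the fixed output architecture is the bottleneck.
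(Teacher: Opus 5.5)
Your argument is correct. Its core matches the paper's: a fixed ReLU architecture has uniformly bounded linear-region complexity, so the obstruction lives in the output space $\WS_\arch$ and applies to every map $K\to\WS_\arch$, equivariant or not. The paper's proof of Theorem~\ref{thm:app:fixed_architecture_expressivity_limit} likewise uses only that every realizable function lies in $\PL_{M,R}$.

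The two routes differ in the key lemma and in the operator family. The paper works on all of $X=[0,1]^n$. It bounds the number of polytopal regions and their vertices (Lemma~\ref{lem:app:linear_region_bound}), and proves $\PL_{M,R}$ is closed under uniform limits via a Hausdorff-compactness argument (Lemma~\ref{lem:app:piecewise_linear_closed}). It then argues that a zoom-out operator pushes a maximal-complexity $f_\vv$ out of $\PL_{M,R}$, which yields a nonconstructive gap $C=d(\wf(f_\vv),\PL_{M,R})>0$.

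You instead restrict to a segment, where the breakpoint recursion is elementary. Your alternation-counting lemma then gives an explicit gap: at least $1$ for the sine example at $\vv=\mathbf{0}$. This sidesteps both the closedness lemma and the delicate step in the paper of controlling how the regions of $\wf(f_\vv)$ extend outside $[a,b]^n$.

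What the paper's route buys is a stronger per-operator statement: each zoom-out operator fails on every fixed architecture. Your $\wf_k$ fails only once $k>N_\arch+1$, so your family must be chosen depending on $\arch$; the informal statement's quantifier order permits this. The zoom-out family is also closer to the ``natural'' operators the statement has in mind than adding a fixed sinusoid.

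One small slip: the tiling variant $f\mapsto f\circ\phi_k$ does not work with $K=\{\mathbf{0}\}$, since then $\wf_k(f_{\mathbf{0}})\equiv 0$ is trivially approximable. For that variant you need some $\vv$ with $f_\vv$ nonconstant along $I$, e.g.\ $f_\vv(\vx)=x_1$, which every ReLU architecture realizes on $[0,1]^n$.
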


\looseness=-1
The proposition is formally stated and proved in Appendix~\ref{sec:app:function_space_operators}. We note that while the construction of $\gN$ relies on the use of ReLU activations, we believe it could be adapted to other commonly used nonlinearities by generalizing the notion of linearity regions. Encouragingly, this expressivity limitation can be overcome by allowing the underlying architecture to be sufficiently large to accommodate the complexity of the target function-space operator.

\begin{theorem}[Informal]
\label{thm:func_to_func_univ_off_exclusion}
    Let $\wf : \gC(X,\sR^{m}) \to \gC(X,\sR^{m})$ be a continuous function-space operator where $X \subseteq \sR^{n}$ is a compact set, and let $K \subseteq \gC(X,\sR^m)$ be a compact function set with respect to the supremum norm. Then, for any sufficiently large architecture $\arch$ and any compact set $K' \subset \WS_\arch \setminus \gE_\arch$ whose realized functions approximate those in $K$ to sufficient accuracy, the map $\wf$ can be approximated on $K'$ by
    permutation-equivariant weight-space networks.
\end{theorem}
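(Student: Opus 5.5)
The plan is to reduce the statement to the invariant universality result (Theorem~\ref{thm:invariant_univ_off_exclusion}), combined with a canonization-and-decanonization construction and a universal-approximation argument for the output MLP.

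\textbf{Step 1: fix a target parameterization.} By compactness of $\wf(K)$ (continuous image of a compact set), fix $\epsilon>0$. By a uniform version of the classical universal approximation theorem, there is a width $N$ and a continuous map $T:\gC(X,\sR^m)\supseteq K_\delta\to\WS_{\arch_0}$ such that $\|f_{T(h)}-\wf(h)\|_\infty<\epsilon/3$ for all $h$ in a $\delta$-neighborhood $K_\delta$ of $K$. Here $\arch_0$ is a single-hidden-layer architecture of width $N$. To get $T$ continuous and not just pointwise, we use a partition of unity over a finite $\delta$-net $\{h_i\}$ of $\wf(K)$: choose weights $\vv_i$ approximating each $h_i$, and take $T(h)=\sum_i\phi_i(h)\vv_i$ realized as a sum of networks. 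That is, we concatenate the subnetworks for the different $i$ and scale the output layer of each by $\phi_i(h)$, which depends continuously on $h$. The realized function is $\sum_i\phi_i(h)f_{\vv_i}$, which is within $\epsilon/3$ of $\wf(h)$ by continuity of $\wf$. "Sufficiently large" means $\arch$ contains enough hidden neurons to embed $\arch_0$; excess neurons get zero outgoing weights, and the architecture is padded with identity-like layers if deeper.

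\textbf{Step 2: reduce to a continuous weight-level map.} Define $\wf^\dagger:K'\to\WS_\arch$ by $\wf^\dagger(\vv)=\rho(g_\vv)^{-1}\,E(T(f_\vv))$. Here $E$ embeds $\arch_0$-weights into $\arch$, and $g_\vv$ is the canonizing (bias-sorting) permutation from the proof of Theorem~\ref{thm:invariant_univ_off_exclusion}. Since $K'\cap\gE=\emptyset$, $g_\vv$ is locally constant, so $\wf^\dagger$ is continuous. It is equivariant because $f_{\rho(h)\vv}=f_\vv$ and $g_{\rho(h)\vv}=g_\vv h^{-1}$. Its realized function is $f_{E(T(f_\vv))}$ since permutations preserve realization, and this lies within $\epsilon/3$ of $\wf(f_\vv)$, using $f_\vv\in K_\delta$ by the hypothesis on $K'$.

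\textbf{Step 3: approximate $\wf^\dagger$ and control the error.} We need an equivariant universality statement off $\gE$. The idea is to run the invariant construction channel-wise: each neuron computes its rank via DeepSets on biases (as in Lemma~\ref{lem:app:approximation_of_canonical}), alongside the invariant code $\flr(\canon(\vv))$ broadcast to all positions. A pointwise MLP then outputs, at neuron position with rank $r$, the $r$-th entry of the canonical output $E(T(f_\vv))$. Rank is locally constant on $K'$, so this is a continuous function of (rank, code) and hence approximable. Finally, continuity of the realization map (Proposition~\ref{prop:app:realization_map_continuous}) is uniform on a compact neighborhood of $\wf^\dagger(K')$. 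It therefore converts a small weight error into an $\epsilon/3$ function error, which gives the total bound of $\epsilon$.

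I expect the \emph{main obstacle} to be Step 3. Extracting the correct output entry per neuron requires handling the $\mW_\ell$ entries indexed by pairs of neurons (rank pairs), not just single neurons. I would first try using DWS layers that broadcast row and column ranks to matrix entries.
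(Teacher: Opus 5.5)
Your proposal is correct and follows essentially the paper's route. The paper likewise builds a continuous lift by a partition of unity over a finite net of $\wf(K)$, using parallel subnetworks whose final layer carries the coefficients (Lemma~\ref{lem:app:function_approximation_with_mlps}), and equivariantizes it as $\rho(g_\vv)^{-1}$ applied to the lifted weights via $g_{\rho(h)\vv}=g_\vv h^{-1}$ (Lemma~\ref{lem:app:function_to_weight}); it then simply invokes Theorem~\ref{thm:equiv_to_equiv_univ_off_exclusion}, whose proof runs along the same lines as your Step~3, since the map $\pe$ broadcasts source and target neuron ranks from biases to weight entries, which resolves exactly the obstacle you anticipate. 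Your direct scaling of each subnetwork's output layer is a slightly cleaner parametrization than the paper's basis-of-span construction, and your explicit use of uniform continuity of the realization map, to turn weight error into sup-norm error, fills in a step the paper leaves implicit.
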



The theorem is illustrated in Figure~\ref{fig:function_to_function_diagram}, and is formally stated and proved in Appendix~\ref{sec:app:function_space_operators}. Taken together, these results indicate that increasing architectural capacity of the input weights can substantially enhance expressivity, potentially unlocking new capabilities for weight-space networks. This may have practical significance for weight-space learning, as many prior model-editing studies~\cite{Navon2023, Zhou2023NFN, Kofinas2024NeuralGraphs} consider relatively small MLPs, often with only two hidden layers and modest hidden dimensions; see Section~\ref{sec:experiments} for empirical validation. More broadly, an interesting direction for future work is to explore possible connections between the above findings and the well-known benefits of overparameterization in deep learning, where larger models are often easier to optimize \cite{du2019gradient, allen2019convergence} and can exhibit improved generalization \cite{belkin2019reconciling, kaplan2020scaling}.


\begin{figure}[t]
    \centering
    \input{diagrams/function_space_operator_commutativity}
    \caption{Diagram illustrating Theorem~\ref{thm:func_to_func_univ_off_exclusion}.
    A function-space operator $\wf : K\subset \gC(X,\sR^m) \to \gC(X,\sR^m)$ is approximated by a permutation-equivariant weight-space operator $\dws : K'\subset \WS_{\arch} \setminus \gE_\arch \to \WS_{\arch}$.
    The set $K'$ serves as an approximation of $K$ under the realization map $\gR$. The diagram is approximately commutative, showing that $\dws$ approximates $\wf$ via $\gR$.}
    \vspace{-5mm}
    \label{fig:function_to_function_diagram}
\end{figure}


\vspace{-3mm}
\begin{proof}[Proof sketch]
    The proof proceeds in three steps. First, we construct a continuous lifting of
    the function-space operator $\wf$ to weight-space. That is, a continuous map $\wf' : K' \to \WS$ such that $f_{\wf'(\vv)} \approx \wf(f_{\vv})$, for all $\vv \in K'$. Since $K \subset \gC(X,\sR^{m})$ is compact, so is $\wf(K)$, and thus $\wf(K)$ admits a finite open cover by $\epsilon$-balls centered at reference functions $f_{\vv_1}, \dots, f_{\vv_M}$, provided the architecture
    $\arch$ is chosen sufficiently large. By construction, the functions realized by weights in $K'$ approximate those in $K$, implying that $\wf(f_{\vv})$ lies in this cover for all $\vv \in K'$. Using a continuous partition of unity subordinate to this cover, we express each $\wf(f_{\vv})$ as a continuous convex combination of the reference functions. This combination can in turn be implemented by an architecture containing the corresponding MLP weights $\vv_1, \dots, \vv_M$ as parallel sub-networks along with a final layer encoding the combination coefficients. This yields a continuous map $\vv \mapsto \wf'(\vv)$. Second, as $\wf'$ is not permutation-equivariant, we augment it using a continuous canonization map that selects a unique representative from each permutation orbit, obtaining a continuous permutation-equivariant operator $\tilde{\wf} : K' \to \WS$ such that $f_{\tilde{\wf} (\vv)} \approx \wf(f_{\vv})$. Finally, since \(K' \subset \WS \setminus \gE\), the universality result established in the next subsection guarantees that permutation-equivariant weight-space networks can approximate \(\tilde{\wf}\) arbitrarily well on \(K'\), completing the proof.
\end{proof}
\vspace{-3mm}

\subsection{Approximating permutation-equivariant operators}

As in the invariant setting, many practically relevant weight-space transformations cannot be expressed as function-space operators. For example, pruning methods typically depend on parameter-level
quantities such as weight magnitude (e.g., threshold-based pruning \cite{han2015learning}) or local properties of the loss landscape (e.g.,~\cite{lecun1989optimal, hassibi1993optimal}).
Although such transformations are permutation-equivariant, they are not determined solely by the function realized by the weights. Similar considerations apply to tasks such as adapted gradient prediction \cite{zhou2024universal, Kofinas2024NeuralGraphs} and winning-ticket mask prediction \cite{Zhou2023NFN}, which may likewise depend on weight-space geometry rather than only on the realized function. Motivated by these considerations, we also study the expressive power of equivariant weight-space networks in relation to permutation-equivariant operators.

The analysis of the expressive power of weight-space networks with respect to
permutation-equivariant operators closely parallels the invariant case. We begin by
showing that equivariant weight-space networks are \emph{not} universal for this
class across the entire weight space.

\begin{proposition}
    \label{prop:equiv_nonuniv}
    There exists a compact set $K \subset \WS$ and a permutation-equivariant operator
    $\wf : K \to \WS$ that \emph{cannot} be approximated on $K$ by permutation-equivariant\footnote{In fact, the same argument shows that the statement extends to
    scale- and permutation-equivariant networks as well.}
    weight-space
    networks.
\end{proposition}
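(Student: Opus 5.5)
By Theorem~\ref{thm:equiv_of_all_models}, it suffices to prove the claim for a single representative architecture. We take NG-GNN, whose equivariant outputs on edges and biases are produced by message passing on the induced neural graph. We reuse the two binary weight configurations $\vv,\vv'$ from the proof of Proposition~\ref{prop:invariant_nonuniv} (Figure~\ref{fig:prop-c8-graphs}). Their induced neural graphs are 1-WL indistinguishable, and $\mathrm{rank}(\mW_2)\neq\mathrm{rank}(\mW_2')$, so $\vv$ and $\vv'$ lie in distinct $G$-orbits. Let $K = G\vv \cup G\vv'$. Since $G$ is finite, $K$ is a finite set and hence compact. We then reduce equivariant non-universality to invariant non-universality.

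\textbf{Step 1 (target operator).} Define $\wf$ on $K$ by $\wf(\rho(g)\vv)=\rho(g)\vv$ (the identity on the first orbit) and $\wf(\rho(g)\vv')=0$. This map is well defined and $G$-equivariant, because $0$ is a fixed point of $\rho$. It is continuous, since $K$ is finite. Continuous extension off $K$ is not needed, because the statement only concerns $\wf:K\to\WS$. If an extension to a neighborhood were wanted, we would average a Tietze extension over $G$.

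\textbf{Step 2 (indistinguishability propagates to outputs).} We show that any NG-GNN equivariant network $\dws$ produces outputs on $\vv$ and $\vv'$ that cannot be told apart. Message passing respects 1-WL colorings: each edge feature of $\dws(\vv)$ is a function of the final colors of its endpoints and its initial edge feature, and likewise for $\vv'$. All first-layer edges carry weight $1$, and all first-hidden-layer nodes share a single stable color. Hence $\dws(\vv)$ assigns one common value $a$ to every first-layer weight entry, and $\dws(\vv')$ assigns the same value $a$. Taking an invariant readout such as the sum of first-layer entries, we get $\Sigma(\dws(\vv))=\Sigma(\dws(\vv'))$ for every $\dws$. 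For the target, $\Sigma(\wf(\vv))=4$ while $\Sigma(\wf(\vv'))=0$. If $\sup_K\|\wf-\dws\|_2<\epsilon$, then $|\Sigma(\dws(\vv))-4|$ and $|\Sigma(\dws(\vv'))|$ are both $O(\epsilon)$, which is impossible for small $\epsilon$.

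\textbf{Main obstacle.} The delicate step is making rigorous that the equivariant outputs of NG-GNN, including its edge-update functions and bias and positional features, are determined by 1-WL colors. We also need to check that $\vv$ and $\vv'$ share the same initial edge and node features, namely binary weights and layer-index encodings. We would first formalize an induction over layers: for every node, the hidden node and edge features of $\vv$ and $\vv'$ agree up to a color-preserving correspondence. This is the standard argument of \cite{morris2019weisfeiler} extended to edge features. Because the first layer is a complete bipartite graph with identical weights, we only need the node-color claim there, which makes the induction simple.
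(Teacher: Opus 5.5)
Your proof is correct and rests on the same idea as the paper: an equivariant network followed by an invariant readout cannot separate the WL-indistinguishable pair $\vv,\vv'$. The execution differs, though.

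The paper never looks at the equivariant outputs. It takes the separating invariant functional of Proposition~\ref{prop:invariant_nonuniv} and broadcasts it to every coordinate, which gives an equivariant target. It then notes that composing any approximating equivariant DWS network with the invariant linear mean-pooling layer would give an invariant DWS network approximating that functional, contradicting the proposition.

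You instead choose an explicit target (identity on $G\vv$, zero on $G\vv'$) and a specific readout $\Sigma$ (sum of the first-layer entries). You then re-derive indistinguishability directly, through an edge-level 1-WL induction for NG-GNN.

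Your route gains a self-contained argument with an explicit, quantitative obstruction: your estimates give $4 \le 4\sup_K\|\wf-\dws\|_2$, so every network has error at least $1$. You also get a structural description of what the equivariant outputs look like. The paper's route gains modularity: any invariant non-universality result transfers verbatim, which is what justifies the footnote about scale-equivariant networks.

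The step you flag as the main obstacle can be avoided. Your readout $\Sigma$ is a $G$-invariant linear map on $\WS$, hence an invariant DWS layer. So for any equivariant DWS network $\dws$, the composition $\Sigma\circ\dws$ is an invariant DWS network, and Proposition~\ref{prop:app:DWS_not_G} gives $\Sigma(\dws(\vv))=\Sigma(\dws(\vv'))$ at once. The WL analysis with evolving edge features is then done only once, at the invariant level, which is exactly how the paper organizes it.
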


    

\looseness=-1
Proposition~\ref{prop:equiv_nonuniv} follows directly from Proposition~\ref{prop:invariant_nonuniv}, since any permutation-invariant weight-space map can be converted into a permutation-equivariant one via broadcasting (see Appendix~\ref{sec:app:function_space_operators} for a complete proof).
As in the invariant case, while Proposition~\ref{prop:equiv_nonuniv} shows that equivariant weight-space networks are not universal in full generality, we show
next that universality is achieved for GP inputs.

\begin{theorem}
    \label{thm:equiv_to_equiv_univ_off_exclusion}
    Let $K \subset \WS \setminus \gE$ be compact. Then any permutation-equivariant operator
    $\wf : \WS \to \WS$ can be approximated on $K$ by permutation-equivariant weight-space networks.
\end{theorem}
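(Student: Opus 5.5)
By Theorem~\ref{thm:equiv_of_all_models} it suffices to work with DWS networks. We reduce to the invariant case (Theorem~\ref{thm:invariant_univ_off_exclusion}) by canonization, and we read off each output coordinate equivariantly using ranks. Write $\vv\mapsto g_\vv\in G$ for the permutation that sorts the biases in every hidden layer. On $\WS\setminus\gE$ this permutation is unique and locally constant, so $\canon(\vv)=\rho(g_\vv)\vv$ is continuous on $K$. Equivariance of $\wf$ gives the identity
\[
\wf(\vv)=\rho(g_\vv)^{-1}\,\wf(\canon(\vv)).
\]
So $\wf$ is determined by the invariant map $\vv\mapsto \flr(\wf(\canon(\vv)))\in\sR^M$, which is continuous on $K$, together with the equivariant information describing where each neuron sits in the sorted order.

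\textbf{Step 1 (invariant part).} Apply Theorem~\ref{thm:invariant_univ_off_exclusion} coordinatewise to the continuous invariant map $\vv\mapsto \flr(\wf(\canon(\vv)))$. This gives an invariant DWS network whose output, broadcast to every position of weight space (broadcasting is an allowed equivariant DWS operation), approximates the full table $T(\vv)$ of canonical output entries.

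\textbf{Step 2 (rank features).} As in Lemma~\ref{lem:app:approximation_of_canonical}, we approximate on $K$ the equivariant rank map $\vb_\ell\mapsto \mathrm{argsort}(\vb_\ell)$ for each hidden layer. This uses DeepSets universality \cite{SegolLipman2019}, which applies because the ranks are locally constant on the compact set $K$. We then broadcast these ranks to every weight entry that touches the neuron. As a result, the entry $(\mW_\ell)_{ij}$ carries the pair of ranks $(r_{\ell}(i),r_{\ell-1}(j))$ and the bias $(\vb_\ell)_i$ carries $r_\ell(i)$. Input and output layers need no ranks, since their neurons are not permuted.

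\textbf{Step 3 (readout).} At each position we apply a pointwise MLP to the local rank features together with the broadcast table $T(\vv)$. This MLP selects the table entry indexed by those ranks, i.e. $\wf(\vv)_{(\ell,i,j)}=T(\vv)_{(\ell,r_\ell(i),r_{\ell-1}(j))}$.

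The main obstacle is making this selection continuous and stable under approximation. The true ranks take integer values, while the network only approximates them. We handle this as follows. By compactness, the bias gaps on $K$ are bounded below by some $\delta>0$, so the ranks are locally constant. The approximate ranks therefore lie within $1/4$ of the true integers. On that region, selection can be realized by a continuous function of the form $\sum_k \phi(r-k)\,T_k$, where $\phi$ is a bump of width less than $1/2$. A pointwise MLP approximates this function uniformly on the compact range of its inputs.

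Error accumulation is controlled by uniform continuity of the later stages on compact sets. Each composed approximation error can thus be made smaller than $\epsilon$, which yields an equivariant network within $\epsilon$ of $\wf$ on $K$.
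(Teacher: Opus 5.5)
Your overall route is the paper's. You canonize by sorting each hidden layer's biases, which is continuous and invariant on $\WS\setminus\gE$. You attach rank identifiers to every entry via the DeepSets part of the bias-to-bias DWS layers, place the canonical output table at every position, and read each entry off with a pointwise continuous selection keyed by its identifier. Your bumps $\phi(r-k)$ play the role of the indicator $f_1(\vx)=e_{\vn}$ in Lemma~\ref{lem:app:pointwise_representation_of_equivariant_function}. Your handling of approximate ranks (gap $\delta>0$, errors below $1/4$, bumps of width below $1/2$) is more careful than the paper's.

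The gap is in Step 1, which gets the broadcast table $T(\vv)$ by invoking Theorem~\ref{thm:invariant_univ_off_exclusion}.
\begin{itemize}
\item \textbf{Circularity.} The paper's formal proof of that theorem (Theorem~\ref{thm:app:invariant_univ_off_exclusion}) broadcasts an invariant functional into weight space, applies the very statement you are proving, and then pools. So Step 1 as justified is circular.
\item \textbf{Not a formality.} Step 1 is exactly where an equivariant network must assemble the whole input, in canonical order, at every position. You have to build this directly, which is the content of Lemma~\ref{lem:app:approximation_of_canonical}. The lemma you cite in Step 2 in fact outputs, at each position, the local identifier concatenated with the flattened canonical form, so it covers Steps 1 and 2 at once.
\end{itemize}
Your own tools suffice to close it, in three moves.
\begin{itemize}
\item \textbf{Extra tags.} Besides the rank tags, inject constant tags for the layer index, the weight/bias type, and the fixed input/output neuron indices. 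These are $G$-invariant, hence admissible affine biases in DWS layers.
\item \textbf{Routing.} For each possible tag $\vn$, use a pointwise bump-MLP that keeps an entry iff its tag is within $1/4$ of $\vn$ and writes it into a channel slot reserved for $\vn$.
\item \textbf{Gathering.} One global sum-broadcast, an equivariant linear DWS map, then puts $\flr(\canon(\vv))$ at every position, where in your notation $\canon(\vv)=\rho(g_\vv)\vv$.
\end{itemize}
Since $\flr(\wf(\canon(\vv)))=(\flr\circ\wf\circ\flr^{-1})(\flr(\canon(\vv)))$ and $\flr\circ\wf\circ\flr^{-1}$ is continuous, a further pointwise MLP approximating it on the compact set $\flr(\canon(K))$ produces $T(\vv)$ at every position. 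Alternatively, fold this into your Step~3 readout, as the paper does with its $f_2$. With Step 1 replaced this way, the rest of your argument goes through.
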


\vspace{-2mm}
As before, Analogous universality results for transformer-weight-space architectures can be established through minor modifications of the proof of Theorem \ref{thm:equiv_to_equiv_univ_off_exclusion} (sketch below). Details are provided in Appendix~\ref{sec:app:transformers}.
\vspace{-2mm}

\begin{proof}[Proof sketch]
    Let $\vv=(\mW_1,\vb_1,\dots,\mW_L,\vb_L)\in K$ and write
    $\wf(\vv)=(\mW'_1,\vb'_1,\dots,\mW'_L,\vb'_L)$ for the target output. We first
    construct a DWS model that approximates an intermediate map $\tilde{\canon}: \WS \to \WS^{M+1},$ where $M \coloneqq \sum_{\ell=1}^L d_\ell(1+d_{\ell-1})$, which augments each parameter entry with a canonical,
    permutation-invariant summary. Specifically, using the canonization map $\canon$ and the
    flattening map $\flr$ defined in the proof of~\ref{thm:invariant_univ_off_exclusion}, we define $\tilde{\canon}(\vv)$ by
    broadcasting $\flr(\canon(\vv))$ to every weight and bias entry along an
    additional feature dimension:
    \begin{equation}
    (\mW^*_\ell)
            _{i,j,:} = \bigl[(\mW_\ell)_{i,j},\ \flr(\canon(\vv))\bigr]
    \end{equation}
    \begin{equation}
            (\vb^*_\ell)_{i,:}   = \bigl[(\vb_\ell)_i,\ \flr(\canon(\vv))\bigr].
    \end{equation}

    We then show that any equivariant operator $\wf$ factors through $\tilde{\canon}$, i.e.,
    there exists a continuous function
    $\tilde f:\sR^{M+1}\to\sR$ such that for all layers
    $\ell$ and indices $i,j$,
\begin{equation}
    (\mW'_\ell)
        _{i,j} = \tilde f\bigl((\mW^*_\ell)_{i,j,:}\bigr),
        \qquad
        (\vb'_\ell)_i = \tilde f\bigl((\vb^*_\ell)_{i,:}\bigr).
    \end{equation}
    This factorization is obtained by first computing the full output $\wf(\canon(\vv))$ and then using the local input entry to select the corresponding component from it. Since $\tilde f$ is continuous on a compact domain, it can be approximated by an MLP applied pointwise across the feature dimension. By Corollary~\ref{cor:app:dws_pointwise_mlp}, this pointwise update can be realized by a DWS network, yielding the desired approximation of $\wf$ on $K$ and completing the proof.
\end{proof}

\section{Empirical Gains from Increased Output Capacity}
\label{sec:experiments}
As discussed in Section~\ref{sec:equivariant-func-to-func}, permutation-equivariant weight-space networks are fundamentally limited in approximating function-space operators when constrained to output networks with the same architecture as the input (Proposition~\ref{prop:equivariant-func-to-func-nonuniv}). Our theory further shows that this limitation can be overcome by increasing the output architecture size (Theorem~\ref{thm:func_to_func_univ_off_exclusion}). Motivated by this insight, we propose Output Capacity Expansion (\ourmethod),  a simple modification applicable to any weight-space network: using a final feature dimension of $k > 1$ and interpreting the output as an ensemble of $k$ MLPs, thereby increasing the effective output capacity. The final prediction is obtained by averaging the outputs of these MLPs (See Appendix \ref{app:sec:ourmethod} for more details). We empirically validate\footnote{Code available at \url{https://github.com/dayanadir/capacity_increase_inr_editing_experiment}.} \ourmethod, obtaining substantial improvements over existing weight-space methods with only minor architectural changes and achieving SOTA performance on an established weight-space benchmark.



  \begin{table}[htbp]
    \centering
    \small
    \setlength{\tabcolsep}{3pt}
    \caption{
   Performance on the MNIST INR dilation benchmark (3 seeds). The best two results are highlighted in bold. $k$ denotes the output feature dimension, i.e., the number of models in the output ensemble.
    }
    \label{tab:sec71-sota}
    \resizebox{\columnwidth}{!}{%
    \begin{tabular}{@{}llc@{}}
    \toprule
    \textbf{Method} & \textbf{Reference} & \textbf{MSE in $10^{-2}$ ($\downarrow$)} \\
    \midrule
    NFT & \cite{zhou2023neural} & $5.10 \pm 0.04$ \\
    NP-NFN & \cite{Kofinas2024NeuralGraphs} & $2.55 \pm 0.00$ \\
    NG-GNN-64 & \cite{Kofinas2024NeuralGraphs} & $2.06 \pm 0.01$ \\
    ScaleGMN-B & \cite{kalogeropoulos2024scale} & $1.89 \pm 0.00$ \\
    NG-T-64 & \cite{Kofinas2024NeuralGraphs} & $1.75 \pm 0.01$ \\
    ScaleGMN + GradMetaNet++ & \cite{gelberg2025gradmetanet} & $1.60 \pm 0.01$ \\
    \midrule
    DWS ($k$=1) & \cite{gelberg2025gradmetanet} & $2.29 \pm 0.01$ \\
    GMN ($k$=1) & \cite{gelberg2025gradmetanet} & $1.96 \pm 0.02$ \\
    \midrule
    DWS + \ourmethod\ ($k$=8) & This paper & $\mathbf{1.36 \pm 0.03}$ \\
    GMN + \ourmethod\ ($k$=8) & This paper & $\mathbf{1.06 \pm 0.13}$ \\
    \bottomrule
    \end{tabular}%
    }
    \end{table}
    
\textbf{Experimental details.}
We evaluate \ourmethod\ on the MNIST INR \emph{dilation} benchmark used in prior work~\citep{Zhou2023NFN,kalogeropoulos2024scale,Kofinas2024NeuralGraphs,gelberg2025gradmetanet}. This benchmark consists of input--output pairs of MNIST images, where the target image is a dilated version of the input. Importantly, this task corresponds to a function-space operator, since the target depends only on the function realized by the input weights rather than on a specific parameterization. We evaluate \ourmethod\ over two representative permutation-equivariant weight-space architectures: DWS~\citep{Navon2023} and GMN~\citep{Lim2024GMN}. Full experimental details are provided in Appendix~\ref{sec:app:experiments}.


\textbf{Results and discussion.}
Our results show that \ourmethod\ consistently improves the performance of existing weight-space architectures. As shown in Table~\ref{tab:sec71-sota}, both DWS+\ourmethod\ and GMN+\ourmethod\ achieve strong results, even compared to methods that leverage additional signals such as gradients or probes. In particular, GMN+\ourmethod\ achieves a $34\%$ improvement over the previous SOTA. Furthermore, Table~\ref{tab:sec71-capacity} in Appendix~\ref{sec:app:experiments} shows that, for both architectures, increasing the number of ensemble models, and thus the effective output capacity, consistently improves performance, yielding MSE reductions of up to $41\%$ for DWS and $46\%$ for GMN, despite using no additional parameters overall. 

These results suggest that a key limitation of prior weight-space approaches on function-space operator tasks stem from the limited capacity of their output representations, highlighting Theorem~\ref{thm:func_to_func_univ_off_exclusion} as useful architectural guidance.


\section{Conclusion}



This work develops a comprehensive theoretical foundation for permutation-equivariant weight-space networks. We first show that all prominent weight-space networks effectively fall into a single expressivity class, thus unifying a diverse body of prior work. We then identify four natural and practical settings for weight-space learning: function-space functionals, permutation-invariant functionals, function-space operators, and permutation-equivariant operators, and analyze expressivity in each. Our theoretical results clarify both the capabilities and inherent limitations of weight-space networks, and precisely identify natural conditions, such as GP assumptions and sufficiently large input architectures, under which these networks achieve universality. 
Motivated by our theory, we propose \ourmethod, an architectural augmentation procedure compatible with all existing weight space models. \ourmethod\ achieves consistent improvements over base weight space networks achieving SOTA performance on the standard INR dilation benchmark \citep{Zhou2023NFN} and demonstrating the practical relevance of our theory.
Together, these findings provide principled guidance and theoretical guarantees for the design and analysis of weight-space networks.


\looseness=-1
\textbf{Limitations and future work.}
Our theoretical results focus on expressive power, leaving questions of optimization and generalization to future work. Another promising direction is the design of weight-space architectures that map smaller input networks to larger output networks. Our theory suggests that increasing output capacity can help mitigate expressivity limitations in moderate-size settings. While \ourmethod\ provides a simple first step in this direction, richer and more principled approaches remain to be explored.

\section*{Acknowledgements}
HM is supported by the Israel Science Foundation through a personal grant (ISF 264/23) and an equipment grant (ISF 532/23), and by the Career Advancement Chairs in Artificial Intelligence – Schmidt Futures.
YE is supported by the Zeff PhD fellowship.

\section*{Impact statement}
\label{sec:app:impact_statement}
This paper is primarily theoretical, aiming to advance the mathematical foundations of machine learning, in particular weight-space learning. As such, while there may be potential societal consequences for this line of work, we do not believe it raises any direct or immediate societal impact considerations.


\FloatBarrier
\bibliography{references}
\bibliographystyle{icml2026}

\newpage
\appendix
\onecolumn

\section*{Contents}
\addcontentsline{toc}{section}{Contents}
\begin{description}
\item[A. Extended previous work] \dotfill \pageref{sec:app:extended_previous_work}
\item[B. Extended preliminaries] \dotfill \pageref{sec:app:extended_preliminaries}
\item[C. Topological properties of the realization map] \dotfill \pageref{sec:app:realization_map_continuity}
\begin{itemize}
\item[C.1] Continuity of the realization map \dotfill \pageref{prop:app:realization_map_continuous}
\item[C.2] Equivalence of supremum and quotient topologies on compact sets \dotfill \pageref{sec:app:compact_topology_equivalence}
\end{itemize}
\item[D. Expressive power equivalence of weight-space networks] \dotfill \pageref{sec:app:weight_space_network_equivalence}
\begin{itemize}
\item[D.1] Main equivalence result \dotfill \pageref{thm:app:architecture_equivalence}
\item[D.2] Network definitions \dotfill \pageref{sec:app:network_definitions}
\item[D.3] Proof strategy and supporting lemmas \dotfill \pageref{sec:app:proof_strategy}
\item[D.4] Proof of main theorem \dotfill \pageref{sec:app:proof_of_main_theorem}
\begin{itemize}
\item[D.4.1] DWSNets and NP--NFN+PE \dotfill \pageref{prop:app:npnfn_to_dws}
\item[D.4.2] NP--NFN+PE and GMN \dotfill \pageref{prop:app:gmn_to_npnfn}
\item[D.4.3] GMN and NG-GNN \dotfill \pageref{prop:app:ng_to_gmn}
\item[D.4.4] NG-GNN and DWSNets \dotfill \pageref{prop:app:dws_to_ng}
\end{itemize}
\item[D.5] Expressive power of NFT \dotfill \pageref{sec:app:nft_expressivity}
\end{itemize}
\item[E. Expressive power of permutation-invariant weight-space networks] \dotfill \pageref{sec:app:invariant_weight_space_networks}
\begin{itemize}
\item[E.1] Hierarchy of weight-space equivalence relations \dotfill \pageref{sec:app:invariant_equivalence_hierarchy}
\item[E.2] Universal approximation of function-space functionals \dotfill \pageref{sec:app:invariant_function_space_functionals}
\item[E.3] Universal approximation of permutation-invariant functionals \dotfill \pageref{sec:app:invariant_permutation_invariant_functionals}
\end{itemize}
\item[F. Expressive power of permutation-equivariant weight-space networks] \dotfill \pageref{sec:app:equivariant_weight_space_networks}
\begin{itemize}
\item[F.1] Universal approximation of permutation-equivariant operators \dotfill \pageref{sec:app:equivariant_operators}
\item[F.2] Universal approximation of function-space operators \dotfill \pageref{sec:app:function_space_operators}
\end{itemize}
\item[G. Exclusion Sets] \dotfill \pageref{sec:app:exclusion_sets}
\begin{itemize}
\item[G.1] Empirical validation of the general position assumption \dotfill \pageref{sec:app:exclusion_sets:empirical}
\end{itemize}
\item[H.] \textbf{Transformer-Weight-Space Architectures} \dotfill \pageref{sec:app:transformers}
\item[I. Experimental Details] \dotfill \pageref{sec:app:experiments}
\begin{itemize}
\item[I.1] INR dilation task \dotfill \pageref{sec:app:experiments:setup}
\item[I.2] Baselines \dotfill \pageref{sec:app:experiments:baselines}
\item[I.3] Data preparation \dotfill \pageref{sec:app:experiments:data}
\item[I.4] Training details \dotfill \pageref{sec:app:experiments:training}
\end{itemize}

\end{description}

\newpage

\section{Extended previous work}
\label{sec:app:extended_previous_work}
\textbf{Weight-space networks.}
A growing body of work studies neural architectures that take the weights of other neural networks as input \cite{eilertsen2020classifying, unterthiner2020predicting, schurholt2021self, dupont2022data, de2023deep}, where the dominant design principle in this literature is \emph{equivariance to hidden-neuron permutations}. Several architectural families that respect this symmetry have been proposed. Early permutation-equivariant weight-space networks include Deep Weight Space (DWS) networks~\cite{Navon2023} and Neural Functional Networks (NFNs)~\cite{Zhou2023NFN}, which are constructed by characterizing the space of affine maps equivariant to neuron permutations and composing them with pointwise nonlinearities. Neural Functional Transformers (NFTs)~\cite{zhou2023neural} extend this paradigm by replacing linear equivariant layers with a structured attention mechanism. A parallel line of work encodes neural network parameters as graphs, and applies message passing over them. These include Graph Meta-Networks (GMNs)~\cite{Lim2024GMN} and Neural Graph GNNs (NG-GNNs)~\cite{Kofinas2024NeuralGraphs}. Beyond permutation equivariance, additional works develop weight-space networks that incorporate other symmetry structures inherent to architectures using specific activation functions, such as scale or sign symmetries~\cite{kalogeropoulos2024scale, vo2024equivariant, tran2024monomial}. Since these networks are defined with respect to different symmetry groups and target different function spaces, we view them as complementary and leave a unified analysis to future work. Other works in the weight-space literature explore related directions, including learning over low-rank adaptations (LoRA) \cite{putterman2024learning}, weight-space data augmentation \cite{shamsian2024improved}, Kolmogorov–Arnold–based architectures \cite{elbaz2025fs}, parameter generation \cite{wang2025recurrent}, and model-zoo construction and analysis \cite{schurholt2025model, honegger2023eurosat, honegger2023sparsified}.

\textbf{Expressive power of equivariant networks.}
Theoretical analysis of expressive power under symmetry constraints has a long history across data modalities. Perhaps the most thoroughly studied case is graph-structured data, where permutation-equivariant architectures are known to have inherent expressivity limitations \cite{morris2019weisfeiler, xu2018powerful}. This has motivated a large body of work aimed at enhancing GNN expressivity, including higher-order methods \cite{morris2019weisfeiler, maron2019provably}, subgraph-based approaches \cite{zhang2021nested, cotta2021reconstruction, bevilacqua2021equivariant, bar2024flexible, frasca2022understanding}, topological methods \cite{rieck2019persistent, bodnar2021weisfeiler, eitan2024topological}, positional and structural encodings \cite{dwivedi2023benchmarking, Lim2022SignBasis,eitan2025expressive,bouritsas2022improving,southern2025balancing}, and more. Closely related phenomena arise in ($SO(3)$)-equivariant point-cloud networks, where widely used architectures are not universal, motivating higher-order designs to improve expressive power \cite{dym2020universality, hordan2024weisfeiler, hordan2024complete}. In contrast, for \emph{sets}, classical permutation-invariant architectures such as DeepSets are universal \cite{qi2017pointnet, SegolLipman2019}. For more structured inputs—such as \emph{sets of symmetric elements} \cite{Maron2020DSS}, gradient bags \cite{gelberg2025gradmetanet}, and graph foundation models \cite{Finkelshtein2025EquivarianceEverywhere}—equivariant architectures are not universal in full generality; however, universality can be recovered under mild assumptions on the input space. Recent work has refined the theory of equivariant approximation by developing general tools that connect separation properties, invariance constraints, and universality, and by characterizing how architectural choices affect expressive power~\cite{Pacini2025Sep, Pacini2025Univ}.

\textbf{Expressive power of weight-space networks.} Despite rapid architectural progress, the theoretical understanding of expressivity in weight-space learning remains limited. Existing results are largely task-driven, showing that specific architectures can realize particular operations on network parameters—for example, approximating forward or backward
passes \cite{Navon2023, Lim2024GMN, kalogeropoulos2024scale}, or subsuming other weight-space models \cite{Lim2024GMN}. While these works establish important capabilities, they do not provide a general characterization of expressive power. In contrast, our work develops a global view of expressivity in weight-space learning by establishing equivalence among permutation-equivariant architectures and proving universality results across several natural approximation settings in weight space. Figure~\ref{fig:expressivity_diagram} summarizes the current understanding of expressivity in this setting (red arrows).

\section{Extended preliminaries}
\label{sec:app:extended_preliminaries}
\paragraph{Notation.}
We begin by introducing basic notation used throughout the paper. An MLP architecture $\arch$ with $L$ layers is specified by a pair $(\vd,\sigma)$, where $\vd=(d_0,\ldots,d_L)\in\sN^{L+1}$ denotes the width of each layer (here $d_0$ and $d_L$ are the input and output dimensions respectively) and $\sigma$ is an activation function. Given MLP parameters $\vv =(\mW_1,\vb_1,\ldots,\mW_L,\vb_L)$, we let
$f_{\vv} : \R^{d_0} \to \R^{d_L}$
denote the function computed by the network with parameters $\vv$ and often refer to $f_\vv$ as the function realized by $\vv$.

\begin{equation}
    f_\vv(\vx)
    =\mW_L \,
    \sigma\!\Big(
    \mW_{L-1}\,
    \sigma\!\big(
    \cdots
    \sigma(\mW_1 \vx + \vb_1)
    \cdots
    \big)
    + \vb_{L-1}
    \Big)
    + \vb_L.
\end{equation}

Given a compact set $X\subseteq\R^n$ and an arbitrary (not necessarily compact) set $Y\subseteq\R^m$, we denote by $\gC(X,Y)$ the space of continuous functions $f:X\to Y$, equipped with the uniform norm $\|\cdot\|_\infty$. When $Y=\R$, we write $\gC(X)$ for brevity. For an integer $n\in \sN$, we denote $[n]=\{1,\dots,n\}$.

\paragraph{Weight space of a fixed architecture}
Throughout the paper, we consider models whose inputs are the parameters of a fixed MLP architecture. As a first step, we formally define the corresponding input space, namely the weight-space associated with a given architecture.

\begin{definition}[weight-space]
    Given an architecture $\arch=(\vd,\sigma)$ with $L$ layers, for $\ell=1,\ldots,L$ the parameters of the $\ell$-th layer consist of a weight matrix and a bias vector
    \begin{equation}
        \mW_\ell \in \R^{d_\ell \times d_{\ell-1}},
        \qquad
        \vb_\ell \in \R^{d_\ell}.
    \end{equation}
    We denote by $\gW_\ell \cong \R^{d_\ell \times d_{\ell-1}}$ and
    $\gB_\ell \cong \R^{d_\ell}$ the corresponding weight and bias
    parameter spaces of layer $\ell$.

    The weight-space associated with the architecture $\arch$ is defined as
    the direct sum
    \begin{equation}
        \WS_{\arch}
        \coloneqq
        \Bigl(\bigoplus_{\ell=1}^L \gW_\ell\Bigr)
        \oplus
        \Bigl(\bigoplus_{\ell=1}^L \gB_\ell\Bigr)
        \cong
        \bigoplus_{\ell=1}^L
        \bigl(\R^{d_\ell \times d_{\ell-1}} \oplus \R^{d_\ell}\bigr).
    \end{equation}

    For a given integer $c \in \sN$, we define the weight-space with
    feature dimension $c$ by
    \begin{equation}
        \WS^c_{\arch}
        \coloneqq
        \Bigl(\bigoplus_{\ell=1}^L \gW^c_\ell\Bigr)
        \oplus
        \Bigl(\bigoplus_{\ell=1}^L \gB^c_\ell\Bigr)
        \cong
        \bigoplus_{\ell=1}^L
        \bigl(
        \R^{d_\ell \times d_{\ell-1} \times c}
        \oplus
        \R^{d_\ell \times c}
        \bigr).
    \end{equation}
    Elements of $\WS^c_{\arch}$ are tuples $\vv = (\mW_1,\vb_1,\ldots,\mW_L,\vb_L)$,  where each weight tensor $\mW_\ell$ has three indices and each bias tensor
    $\vb_\ell$ has two. For a fixed feature index $r \in \{1,\ldots,c\}$, we use the notation
    \begin{equation}
        \mW_{\ell,:, :, r} \in \R^{d_\ell \times d_{\ell-1}}
        \quad\text{and}\quad
        (\vb_\ell)_{:,r} \in \R^{d_\ell}
    \end{equation}
    to denote the corresponding slices of the weight and bias tensors. Slices across the feature dimension are denoted similarly. For each $r \in \{1,\ldots, c\}$ and $\vv \in \WS^c_{\arch}$, we define
    \begin{equation}
        \vv_r
        \coloneqq
        \bigl(
        (\mW_1)_{:,:,r},\,(\vb_1)_{:,r},\,\ldots,\,
        (\mW_L)_{:,:,r},\,(\vb_L)_{:,r}
        \bigr).
    \end{equation}
    The final axis is referred to as the \emph{feature dimension}.
\end{definition}

When the architecture is clear from context, we slightly abuse notation and write $\WS$ instead of $\WS_{\arch}$.

Note that the realization map $\gR : \WS_\arch \to \gC(X,\sR^{d_L})$ that sends weights $\vv$ to their realized function $f_\vv$ is continuous (see Proposition~\ref{prop:app:realization_map_continuous}).

\paragraph{Weight-space symmetries.}
Most weight-space networks (e.g., \cite{Navon2023, Kofinas2024NeuralGraphs, Lim2024GMN}) are designed to account for permutation symmetries of MLP parameters.
Specifically, permuting the neurons within any hidden layer alters the raw parameterization of the network while leaving invariant the underlying function it represents, as well as local geometric properties of the loss landscape.
The changes to the weights caused by neuron permutations are represented as a natural group action on the weight-space, which we formalize below.

\begin{definition}[Weight-space symmetry group]
    \label{def:app:group_action_extended}
    For a given architecture $\arch = (\vd, \sigma)$ with $L$ layers,
    define the corresponding neuron permutation group by
    \begin{equation}
        G_\arch \coloneqq S_{d_1} \times \cdots \times S_{d_{L-1}}.
    \end{equation}

    $G_\arch$ is the direct product of permutation groups of the hidden layers $1,\dots,L-1$. For $g=(\tau_1,\dots,\tau_{L-1}) \in G_\arch$, define an action $\rho(g):\WS^c_\arch \to \WS^c_\arch$ by
    \begin{equation}
        \rho(g)(\mW_1,\vb_1,\dots,\mW_L,\vb_L)
        \coloneqq
        (\mW_1',\vb_1',\dots,\mW_L',\vb_L'),
    \end{equation}

    where
    \begin{align*}
    (\mW_1')
        _{:,:,i} &= \mP_{\tau_1}^\top (\mW_1)_{:,:,i}, & (\vb_1')_{:,i} &= \mP_{\tau_1}^\top (\vb_1)_{:,i}, \\
        (\mW_l')_{:,:,i} &= \mP_{\tau_l}^\top (\mW_l)_{:,:,i} \mP_{\tau_{l-1}},
        & (\vb_l')_{:,i} &= \mP_{\tau_l}^\top (\vb_l)_{:,i},\quad l=2,\dots,L-1,\\
        (\mW_L')_{:,:,i} &= (\mW_L)_{:,:,i} \mP_{\tau_{L-1}}, & (\vb_L')_{:,i} &= (\vb_L)_{:,i}.
    \end{align*}
    Here, $\mP_{\tau_\ell}$ is the permutation matrix corresponding to
    $\tau_\ell$, and its action is applied independently across the feature
    dimension.
\end{definition}

Similar to before, when the architecture is clear from context, we slightly abuse notation and write $G$ instead of $G_{\arch}$.

As noted above, weight-space networks are designed to respect the symmetries
induced by the action of a group. We recall the standard notions of invariance
and equivariance below.

\begin{definition}[Invariance and equivariance]
    Let $G$ be a group and let $(\gU,\rho)$ be a representation of $G$.
    A map $\wf : \gU \to \R^n$ is said to be \emph{$G$-invariant} if, for every
    $g \in G$ and every $\vu \in \gU$,
    \begin{equation}
        \wf(\rho(g)\,\vu) = \wf(\vu).
    \end{equation}
    If $(\gU',\rho')$ is another representation of $G$, a map
    $\wf : \gU \to \gU'$ is said to be \emph{$G$-equivariant} if, for every $g \in G$ and
    every $\vu \in \gU$,
    \begin{equation}
        \wf(\rho(g)\,\vu) = \rho'(g)\,\wf(\vu).
    \end{equation}
\end{definition}

We next recall the definition of canonization maps, which will be central to our analysis.
\begin{definition}[Canonization map]
\label{def:app:canonization_general}
 Let $G$ be a group and let $(\gU,\rho)$, $(\gU',\rho')$ be representations of $G$.
    A map $\wf : K \subseteq \gU \to \gU'$ is said to be a canonization map, if for every $\vu \in K$ 
       \begin{equation}
           \canon(\vv) = \canon(\rho(g)\vv)
        \quad \forall g \in G.
       \end{equation}
\end{definition}

Finally, since many of our results involve compact subsets of weight-space that approximate compact subsets of function-space, we formalize this notion below.

\begin{definition}[$\epsilon$-approximation of sets]
    \label{def:app:epsilon_approximation}
    Let $\arch = (\vd, \sigma)$ be an MLP architecture with corresponding weight-space $\WS = \WS_\arch$, let
    $X \subseteq \R^{d_0}$ be compact, and let
    $K \subseteq \gC(X,\R^{d_L})$ be a compact set of target functions. We say that a compact set $K' \subseteq \WS$ is an \emph{$\epsilon$-approximation}
    of $K$ if
    \begin{equation}
        d_H\bigl(\{f_{\vv} \mid \vv \in K'\},\, K\bigr) < \epsilon,
    \end{equation}
    where $d_H$ denotes the Hausdorff distance induced by the uniform norm
    $\|\cdot\|_\infty$.
    Put simply, this means that every function in $K$ can be uniformly approximated up to $\epsilon$
    by a function realized by parameters in $K'$, and conversely, every function
    realized by parameters in $K'$ is $\epsilon$-close to
    some function in $K$.
\end{definition}

\section{Topological properties of the realization map}
\label{sec:app:realization_map_continuity}
In this section, we establish fundamental topological properties of the realization map, which plays a central role in connecting weight-space networks to function-space functionals. We prove two key results: first, that the realization map is continuous when the function space is equipped with the supremum norm topology; and second, that on compact sets of weights, the supremum topology and quotient topology on the function space coincide. These results are essential for the universality proofs in subsequent sections, as they allow us to relate continuity properties of functionals on function space to continuity properties of their pullbacks on weight space.

Let $\arch = (\vd,\sigma)$ be an architecture where $\vd=(d_0,\ldots,d_L)$ and $\WS$ its associated weight space.
Let $X \subset \sR^{d_0}$ be a compact input domain.

\begin{definition}[Realization map]
    \label{def:app:realization_map}
    The \emph{realization map} $\gR : \WS \to \gC(X,\sR^{d_L})$ is defined by
    \[
        \gR(v) := f_v,
    \]
    where $f_v : X \to \sR^{d_L}$ denotes the function realized by the MLP with weights $v$.
\end{definition}
We denote
\[
    \gF := \gR(\WS)
\]
and endow $\gF$ with two topologies:

\begin{itemize}
    \item the \emph{supremum topology} $\tau_\infty$ induced by the ambient sup norm on $\gC(X,\sR^{d_L})$;
    \item the \emph{quotient topology} $\tau_q$ induced by the functional equivalence relation $\sim_{\mathrm{func}}$ (see Definition~\ref{def:app:functional_equivalence}),
    transported to $\gF$ via the bijection $\gR^* : \WS/{\sim_{\mathrm{func}}} \to \gF$.
\end{itemize}

\subsection{Continuity of the realization map}

\begin{proposition}[Continuity of the realization map]
    \label{prop:app:realization_map_continuous}
    The realization map $\gR : \WS \to \gC(X,\sR^{d_L})$ is continuous, where $\gC(X,\sR^{d_L})$ is equipped with the supremum norm topology.
\end{proposition}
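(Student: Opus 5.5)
We prove continuity at an arbitrary point $\vv\in\WS$ directly, by induction over the layers, using that $X$ is compact and that $\sigma$ is continuous. We assume, as is implicit throughout, that $\sigma$ is continuous on $\R$; it is applied coordinatewise. Fix $\vv=(\mW_1,\vb_1,\dots,\mW_L,\vb_L)$ and a sequence (or net) $\vv^{(k)}\to\vv$ in $\WS$. Since $\WS$ is finite dimensional and $\|\cdot\|_\infty$ is a metric, sequential continuity suffices. Our goal is to show that $\sup_{\vx\in X}\|f_{\vv^{(k)}}(\vx)-f_\vv(\vx)\|\to 0$.

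Define the intermediate pre-activation maps $\vh_0(\vx)=\vx$, $\vz_\ell(\vx)=\mW_\ell\vh_{\ell-1}(\vx)+\vb_\ell$, and $\vh_\ell=\sigma(\vz_\ell)$ for $\ell<L$, so that $f_\vv=\vz_L$. Define $\vh^{(k)}_\ell,\vz^{(k)}_\ell$ analogously for $\vv^{(k)}$. We will prove by induction on $\ell$ two claims:
\begin{enumerate}
\item there is a compact set $C_\ell$ containing $\vz_\ell(X)$ and $\vz^{(k)}_\ell(X)$ for all $k$;
\item $\sup_{X}\|\vz^{(k)}_\ell-\vz_\ell\|\to 0$.
\end{enumerate}

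For the base case, $\vz^{(k)}_1-\vz_1=(\mW^{(k)}_1-\mW_1)\vx+(\vb^{(k)}_1-\vb_1)$. This is bounded by $\|\mW^{(k)}_1-\mW_1\|\,\sup_X\|\vx\|+\|\vb^{(k)}_1-\vb_1\|\to0$. Boundedness then follows because the convergent parameter sequence is bounded and $X$ is bounded. For the inductive step, write
\[
\vz^{(k)}_{\ell+1}-\vz_{\ell+1}=(\mW^{(k)}_{\ell+1}-\mW_{\ell+1})\vh^{(k)}_\ell+\mW_{\ell+1}\bigl(\sigma(\vz^{(k)}_\ell)-\sigma(\vz_\ell)\bigr)+(\vb^{(k)}_{\ell+1}-\vb_{\ell+1}).
\]
The first term vanishes uniformly because $\vh^{(k)}_\ell=\sigma(\vz^{(k)}_\ell)$ takes values in the compact set $\sigma(C_\ell)$, which is bounded. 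The third term is trivial. The middle term is the only place needing care.

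The main (though mild) obstacle is this middle term. Pointwise continuity of $\sigma$ is not enough, because we need uniformity over $X$. We resolve this by enlarging $C_\ell$ to a compact box $B_\ell$ containing all the values, on which $\sigma$ is \emph{uniformly} continuous by Heine--Cantor. Uniform convergence $\vz^{(k)}_\ell\to\vz_\ell$ inside $B_\ell$ then gives $\sup_X\|\sigma(\vz^{(k)}_\ell)-\sigma(\vz_\ell)\|\to0$. Multiplying by the fixed matrix $\mW_{\ell+1}$ preserves this convergence. Boundedness of $\vz^{(k)}_{\ell+1}$ follows from the same decomposition. At $\ell=L$, claim 2 is exactly $\|f_{\vv^{(k)}}-f_\vv\|_\infty\to0$, which proves continuity of $\gR$. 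Continuity of each $f_\vv$, so that $\gR$ indeed lands in $\gC(X,\R^{d_L})$, holds because $f_\vv$ is a composition of continuous maps.
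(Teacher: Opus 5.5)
Your proof is correct, but it is organized differently from the paper's. The paper does not look inside the network. It notes that $F(\vv,\vx)\coloneqq f_\vv(\vx)$ is jointly continuous on $\WS\times X$ and restricts it to the compact set $B\times X$, where $B$ is a closed ball containing $\vv$ and the tail of the sequence. It then applies Heine--Cantor once to $F$, so $\|\vv^{(k)}-\vv\|<\delta$ gives $\|F(\vv^{(k)},\vx)-F(\vv,\vx)\|<\epsilon$ uniformly in $\vx$.

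You instead apply Heine--Cantor to $\sigma$ on each layer's range box. You then propagate uniform convergence and boundedness through the layers by induction, using your three-term decomposition of $\vz^{(k)}_{\ell+1}-\vz_{\ell+1}$.

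The paper's route is shorter and architecture-agnostic: it works verbatim for any model whose forward pass is jointly continuous in parameters and inputs. Yours is more explicit. It isolates the activation as the only place where uniformity is needed, and it yields quantitative estimates. For instance, when $\sigma$ is Lipschitz the same induction gives local Lipschitz continuity of $\gR$, with constants depending on the weight norms and $\sup_{X}\|\vx\|$.
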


\begin{proof}
    For each $v \in \WS$, write $v = (\mW_1, \vb_1, \ldots, \mW_L, \vb_L)$, where $\mW_\ell \in \mathcal{W}_\ell$ and $\vb_\ell \in \mathcal{B}_\ell$ denote, respectively, the weights and biases of layer $\ell$. For $x \in X$, the network computes
    \begin{equation}
        h^{(0)}(x) = x,
        \qquad
        h^{(\ell)}(x) = \sigma\big(\mW_\ell h^{(\ell-1)}(x) + \vb_\ell\big),
        \quad \ell = 1,\ldots,L,
    \end{equation}
    and $f_v(x)$ is obtained by applying a final affine map to $h^{(L)}(x)$.

    Define the function $F : \WS \times X \to \sR^{d_L}$ by
    \[
        F(v,x) := f_v(x).
    \]
    Each layer computation is obtained from $(v,x)$ by a finite composition of affine maps and the activation $\sigma$, which is continuous by assumption. Therefore $F$ is continuous as a function of both $v$ and $x$.

    To show that $\gR$ is continuous, it suffices to show that if $v_k \to v$ in $\WS$ (in Euclidean norm), then
    \[
        \| f_{v_k} - f_v \|_{\infty} \to 0.
    \]

    Fix $\epsilon > 0$. Since $F$ is continuous on the product space $\WS \times X$ and $X$ is compact, the map $(v,x) \mapsto F(v,x)$ is uniformly continuous on any compact subset of $\WS \times X$. Let $B \subset \WS$ be a closed Euclidean ball containing $v$ and all $v_k$ for $k$ large enough. Then $B \times X$ is compact, so $F$ is uniformly continuous on $B \times X$.

    Hence there exists $\delta > 0$ such that for all $(v',x'),(v'',x'') \in B \times X$,
    \[
        \|(v',x') - (v'',x'')\| < \delta \quad \Rightarrow \quad \|F(v',x') - F(v'',x'')\| < \epsilon.
    \]

    Since $v_k \to v$, there exists $N$ such that for all $k \geq N$, we have $\|v_k - v\| < \delta$. Then for all $x \in X$ and $k \geq N$,
    \[
        |f_{v_k}(x) - f_v(x)| = |F(v_k,x) - F(v,x)| < \epsilon,
    \]
    where the inequality follows from uniform continuity since $\|(v_k,x) - (v,x)\| = \|v_k - v\| < \delta$. Taking the supremum over $x \in X$, we obtain
    \[
        \|f_{v_k} - f_v\|_{\infty} < \epsilon
    \]
    for all $k \geq N$, which completes the proof.
\end{proof}

\subsection{Equivalence of supremum and quotient topologies on compact sets}
\label{sec:app:compact_topology_equivalence}

Having established the continuity of the realization map, we now investigate the relationship between two natural topologies on the function space $\gF$: the supremum topology inherited from the ambient space $\gC(X,\sR^{d_L})$, and the quotient topology induced by functional equivalence. While these topologies may differ in general, we show that they coincide when restricted to the image of compact weight sets. This equivalence is crucial for establishing that continuous functionals on function space correspond to continuous maps on weight space.

By Proposition~\ref{prop:app:realization_map_continuous}, the realization map $\gR$ is continuous into $(\gF,\tau_\infty)$.
Hence the induced map $\gR^*:(\WS/{\sim_{\mathrm{func}}},\tau_q)\to(\gF,\tau_\infty)$ is continuous,
and therefore
\[
    \tau_\infty \subseteq \tau_q.
\]
In general, we cannot conclude equality of topologies on all of $\gF$,
since the map $\gR$ need not be a quotient map globally.

\textbf{Topological equivalence on compact weight sets.}

The following theorem gives a precise condition under which the two topologies coincide.

\begin{proposition}[Topological Equivalence on Compact Weight Sets]
    \label{prop:app:compact_topology_equivalence}
    Let $K\subset \WS$ be compact, and define
    \[
        \gF_K := \gR(K) \subset \gC(X,\sR^{d_L}).
    \]
    Let $\tau_\infty^K$ denote the subspace topology on $\gF_K$ inherited from
    $(\gC(X,\sR^{d_L}),\|\cdot\|_\infty)$, and let $\tau_q^K$ denote the quotient
    topology on $\gF_K$ induced by the restricted map $\gR_K := \gR|_K$.
    Then
    \[
        \tau_\infty^K = \tau_q^K.
    \]
\end{proposition}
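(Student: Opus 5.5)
The plan is the standard closed-map argument: a continuous surjection from a compact space onto a Hausdorff space is a quotient map. Concretely, the quotient topology $\tau_q^K$ on $\gF_K$ is the finest topology making $\gR_K : K \to \gF_K$ continuous. A set $U \subseteq \gF_K$ lies in $\tau_q^K$ exactly when $\gR_K^{-1}(U)$ is open in $K$.

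First we show $\tau_\infty^K \subseteq \tau_q^K$. By Proposition~\ref{prop:app:realization_map_continuous}, $\gR$ is continuous into $(\gC(X,\sR^{d_L}),\|\cdot\|_\infty)$. Hence its restriction $\gR_K$ is continuous into $(\gF_K,\tau_\infty^K)$. For every $U\in\tau_\infty^K$ the preimage $\gR_K^{-1}(U)$ is therefore open in $K$, so $U\in\tau_q^K$.

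For the reverse inclusion, take $U\in\tau_q^K$ and set $C = \gF_K\setminus U$. By definition of the quotient topology, $\gR_K^{-1}(C) = K\setminus \gR_K^{-1}(U)$ is closed in $K$. Since $K$ is compact, $\gR_K^{-1}(C)$ is compact as well. Its image $\gR_K(\gR_K^{-1}(C))$ equals $C$ because $\gR_K$ is surjective onto $\gF_K$. As the continuous image of a compact set, $C$ is compact in $(\gF_K,\tau_\infty^K)$. The supremum-norm topology is a metric topology and hence Hausdorff, so the compact set $C$ is closed in $\tau_\infty^K$. Therefore $U$ is open in $\tau_\infty^K$, which proves $\tau_q^K\subseteq\tau_\infty^K$.

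The only subtle point is the bookkeeping of the quotient: $\tau_q^K$ is induced by $\gR_K$, with $K/{\sim_{\mathrm{func}}}$ identified with $\gF_K$ via the bijection $\gR^*$. Because the identification is exactly the fibers of $\gR_K$, the saturation issue disappears: every preimage $\gR_K^{-1}(U)$ is automatically saturated. So this step is not a real obstacle. It is also worth noting that compactness of $K$ is essential, since globally $\gR$ need not be a closed map, as the remark preceding the proposition indicates.
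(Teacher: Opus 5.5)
Your proof is correct and follows the same argument as the paper: $\gR_K$ is a continuous surjection from the compact space $K$ onto the Hausdorff space $(\gF_K,\tau_\infty^K)$, so it is closed and hence a quotient map. The only difference is that you prove the closed-map step explicitly (closed in $K$ $\Rightarrow$ compact $\Rightarrow$ compact image $\Rightarrow$ closed in $\tau_\infty^K$), whereas the paper cites it as a standard fact.
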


\begin{proof}
    The restricted map $\gR_K : K \to \gF_K$ is continuous.
    Since $K$ is compact and $(\gF_K,\tau_\infty^K)$ is a subspace of a Hausdorff space,
    the map $\gR_K$ is a continuous surjection from a compact space to a Hausdorff space.
    Every such map is closed and hence a quotient map.

    By definition of quotient topology, a set $U\subset \gF_K$ is open in $\tau_q^K$ if and only if
    $\gR_K^{-1}(U)$ is open in $K$.
    Since $\gR_K$ is a quotient map into $(\gF_K,\tau_\infty^K)$, the same condition
    characterizes openness in $\tau_\infty^K$.
    Thus the two topologies coincide:
    \[
        \tau_\infty^K = \tau_q^K.
    \]
\end{proof}

\begin{corollary}
    Let $K\subset \WS$ be compact.
    A function $\Psi : \gF_K \to \sR$ is continuous with respect to the supremum norm
    if and only if $\Psi\circ \gR_K : K \to \sR$ is continuous.
\end{corollary}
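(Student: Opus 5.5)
The corollary follows directly from Proposition~\ref{prop:app:compact_topology_equivalence}: we identify continuity into $(\gF_K,\tau_\infty^K)$ with continuity into the quotient, and then use the universal property of quotient maps. We prove the two directions separately.

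\emph{Forward direction.} Suppose $\Psi:\gF_K\to\sR$ is continuous with respect to the supremum norm, i.e.\ with respect to $\tau_\infty^K$. By Proposition~\ref{prop:app:realization_map_continuous}, $\gR$ is continuous into $(\gC(X,\sR^{d_L}),\|\cdot\|_\infty)$. Hence its corestriction $\gR_K:K\to(\gF_K,\tau_\infty^K)$ is continuous, because the subspace topology on $\gF_K$ is precisely the one for which maps landing in $\gF_K$ are continuous whenever they are continuous into the ambient space. It follows that $\Psi\circ\gR_K$ is a composition of continuous maps and is therefore continuous. This direction does not need compactness.

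\emph{Reverse direction.} Suppose $\Psi\circ\gR_K:K\to\sR$ is continuous, and let $U\subseteq\sR$ be open. Then
\[
    (\Psi\circ\gR_K)^{-1}(U)=\gR_K^{-1}\bigl(\Psi^{-1}(U)\bigr)
\]
is open in $K$. By the definition of the quotient topology $\tau_q^K$ induced by $\gR_K$, a subset $V\subseteq\gF_K$ is $\tau_q^K$-open exactly when $\gR_K^{-1}(V)$ is open in $K$. Applying this with $V=\Psi^{-1}(U)$ shows that $\Psi^{-1}(U)$ is $\tau_q^K$-open. By Proposition~\ref{prop:app:compact_topology_equivalence}, $\tau_q^K=\tau_\infty^K$, so $\Psi^{-1}(U)$ is open in the supremum topology, and hence $\Psi$ is continuous with respect to $\|\cdot\|_\infty$.

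The only nontrivial step is the reverse direction, which relies on $\gR_K$ being a quotient map onto $(\gF_K,\tau_\infty^K)$. That fact is exactly the content of the proposition, which rests on the closed-map lemma for continuous surjections from a compact space to a Hausdorff space. Once the proposition is invoked, no further argument is needed.
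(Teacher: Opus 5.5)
Your proof is correct and takes the same route as the paper, which invokes the universal property of the quotient map $\gR_K : K \to (\gF_K,\tau_\infty^K)$ supplied by Proposition~\ref{prop:app:compact_topology_equivalence}. Your two directions spell out that universal property explicitly: continuity of $\gR_K$ for the forward direction, and the definition of $\tau_q^K$ together with $\tau_q^K=\tau_\infty^K$ for the reverse direction.
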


\begin{proof}
    Since $\gR_K : (K,\text{usual topology}) \to (\gF_K,\tau_\infty^K)$ is a quotient map,
    the statement follows from the universal property of quotient topologies.
\end{proof}

\section{Expressive power equivalence of weight-space networks}
\label{sec:app:weight_space_network_equivalence}

In this section, we establish that several recently proposed permutation-equivariant weight-space networks have identical expressive power when restricted to MLP weight-space architectures. This result unifies a diverse landscape of architectural designs, showing that despite their different structural forms—ranging from graph-based message passing to transformer-style attention mechanisms—these models can approximate each other to arbitrary precision on any compact set of weights. Our main contribution is to prove that for any MLP architecture $\arch$, the following model classes can mutually approximate one another:

\begin{itemize}[leftmargin=*,nosep]
    \item $\mathrm{DWSNets}_{\mathrm{eq}}^{\arch}$~\cite{Navon2023}: Deep networks composed of hidden-neuron permutation equivariant affine layers interleaved with pointwise nonlinearities (Definition~\ref{def:app:dws_networks}).
    \item $\mathrm{HNP\text{-}NFN}_{\mathrm{eq}}^{\arch}$~\cite{Zhou2023NFN}: Neural Functional Networks with hidden-neuron permutation equivariance, mathematically equivalent to $\mathrm{DWSNets}_{\mathrm{eq}}^{\arch}$ (Remark~\ref{rem:app:hnp_dws_equivalence}).
    \item $\mathrm{NP\text{-}NFN{+}PE}_{\mathrm{eq}}^{\arch}$~\cite{Zhou2023NFN}: Neural Functional Networks of neuron permutation (including first and last layers) equivariant affine layers applied after a positional encoding that appends one-hot identifiers for the neurons in the first and the last layers (Definition~\ref{def:app:npnfn_pe}).
    \item $\mathrm{GMN}_{\mathrm{eq}}^{\arch}$~\cite{Lim2024GMN}: Message-passing neural networks on the MLP parameter graph with node, edge, and global feature updates (Definition~\ref{def:app:gmn}).
    \item $\mathrm{NG}_{\mathrm{eq}}^{\arch}$~\cite{Kofinas2024NeuralGraphs}: A simplified variant of GMN with biases stored as node features and no global features (Definition~\ref{def:app:neural_graphs}).
    \item $\mathrm{NFT}_{\mathrm{eq}}^{\arch}$~\cite{zhou2023neural}:
    A transformer-based permutation-equivariant architecture obtained by composing multiple types of equivariant attention updates (Definition~\ref{def:app:nft}).

\end{itemize}

To establish expressive equivalence among these architectures, we first formalize the notion of expressive equivalence. This requires defining what it means for one model class to be able to express another, and then showing that this relationship is bidirectional. We begin by defining the notion of expressive containment, which captures the ability of one architecture to approximate functions from another.
\begin{definition}[Expressive containment]
    \label{def:app:expressive_containment}
    Let $X\subseteq \R^n$ and $Y\subseteq \R^m$ be two sets, let $K \subseteq X$ be a compact set, and let $\gF_1, \gF_2 \subseteq \gC(X, Y)$ be two classes of continuous functions.
    We say that $\gF_2$ \emph{can express} $\gF_1$ with respect to $K$ if for every $\epsilon > 0$, and every $f_1 \in \gF_1$, there exists $f_2 \in \gF_2$ such that
    \[
        \sup_{x \in K}\|f_1(x) - f_2(x)\| < \epsilon.
    \]
    We denote this by $\gF_1 \preceq_K \gF_2$.
\end{definition}

\begin{remark}[Transitivity of $\preceq_K$]
    \label{rem:app:transitivity}
    If $\gF_1 \preceq_K \gF_2$ and $\gF_2 \preceq_K \gF_3$, then $\gF_1 \preceq_K \gF_3$.
    Indeed, let $f_1 \in \gF_1$ and $\epsilon > 0$.
    By assumption, there exists $f_2 \in \gF_2$ with $\sup_{x \in K}\|f_1(x) - f_2(x)\| < \epsilon/2$, and there exists $f_3 \in \gF_3$ with $\sup_{x \in K}\|f_2(x) - f_3(x)\| < \epsilon/2$.
    By the triangle inequality,
    $\sup_{x \in K}\|f_1(x) - f_3(x)\| \le \sup_{x \in K}\|f_1(x) - f_2(x)\| + \sup_{x \in K}\|f_2(x) - f_3(x)\| < \epsilon$.
\end{remark}

\begin{definition}[Expressive equivalence]
    \label{def:app:expressive_equivalence}
    Let $X\subseteq \R^n$ and $Y\subseteq \R^m$ be two sets, let $K \subseteq X$ be a compact set, and let $\gF_1, \gF_2 \subseteq \gC(X, Y)$ be two classes of continuous functions.
    We say that $\gF_1$ and $\gF_2$ \emph{have equivalent expressive power} with respect to $K$ if $\gF_1 \preceq_K \gF_2$ and $\gF_2 \preceq_K \gF_1$ (Definition~\ref{def:app:expressive_containment}).
    We denote this by $\gF_1 \simeq_K \gF_2$.
\end{definition}

\begin{remark}[$\simeq_K$ is an equivalence relation]
    \label{rem:app:equivalence_relation}
    For any compact set $K$, $\simeq_K$ is reflexive, symmetric, and transitive on $\gC(X, Y)$.
    Reflexivity and symmetry are immediate from the definition.
    Transitivity follows from the transitivity of the $\preceq_K$ relation in Remark~\ref{rem:app:transitivity}.
\end{remark}

\begin{remark}[Equivalence of definitions]
    \label{rem:app:equivalence_of_definitions}
    The definition of expressive equivalence given above (Definition~\ref{def:app:expressive_equivalence}) works with explicit network sets, such as $\mathrm{DWSNets}_{\mathrm{eq}}^{\arch}$ or $\mathrm{GMN}_{\mathrm{eq}}^{\arch}$, and depends on a specific compact set $K$. In contrast, the main paper (Definition~\ref{def:archutecture_aprox_classes}) defines expressive power in terms of the set of continuous functions that can be approximated on compact sets, denoted $\gN^\pi(K; Y)$ for a network architecture $\pi$ and compact set $K$.
    These two definitions are equivalent: when $\pi$ is an explicit network set (e.g., $\pi = \mathrm{DWSNets}_{\mathrm{eq}}^{\arch}$), the set $\gN^\pi(K; Y)$ is precisely the closure of $\pi$ under the topology of uniform convergence on the compact set $K$. Consequently, for any compact set $K$, two network sets $\pi$ and $\pi'$ have equivalent expressive power in the sense of Definition~\ref{def:app:expressive_equivalence} (i.e., $\pi \simeq_K \pi'$) if and only if $\gN^\pi(K; Y) = \gN^{\pi'}(K; Y)$, establishing the equivalence of the two definitions.
\end{remark}

\subsection{Main equivalence result}

Having established the formal framework for expressive equivalence, we now state our main result. This theorem establishes that five prominent permutation-equivariant weight-space networks—DWSNets, HNP-NFN, NP-NFN+PE, GMN, and NG-GNN—have identical expressive power. The result holds for weight-space networks with arbitrary input and output feature dimensions, and applies uniformly across all compact sets of weights. Let $c_{\mathrm{in}}, c_{\mathrm{out}} \in \sN$ denote the input and output feature dimensions, respectively.

\begin{theorem}[Expressive equivalence of weight-space architectures]
    \label{thm:app:architecture_equivalence}
    For any MLP architecture $\arch = (\vd, \sigma)$ and any compact set $K \subseteq \WS^{c_{\mathrm{in}}}_\arch$,
    \begin{equation}
        \label{eq:app:architecture_equivalence}
        \mathrm{DWSNets}_{\mathrm{eq}}^{\arch} \simeq_K \mathrm{HNP\text{-}NFN}_{\mathrm{eq}}^{\arch} \simeq_K \mathrm{NP\text{-}NFN{+}PE}_{\mathrm{eq}}^{\arch} \simeq_K \mathrm{GMN}_{\mathrm{eq}}^{\arch} \simeq_K \mathrm{NG}_{\mathrm{eq}}^{\arch}
    \end{equation}
    (Definition~\ref{def:app:expressive_equivalence}), where the model classes are defined in (\ref{def:app:dws_networks}), (\ref{rem:app:hnp_dws_equivalence}), (\ref{def:app:npnfn_pe}), (\ref{def:app:gmn}), and (\ref{def:app:neural_graphs}), respectively.
\end{theorem}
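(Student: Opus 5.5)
Since $\simeq_K$ is an equivalence relation (Remark~\ref{rem:app:equivalence_relation}), it suffices to prove a cycle of containments rather than all pairwise equivalences. The chain I will prove is
\[
\mathrm{DWSNets}_{\mathrm{eq}}^{\arch} \preceq_K \mathrm{NG}_{\mathrm{eq}}^{\arch} \preceq_K \mathrm{GMN}_{\mathrm{eq}}^{\arch} \preceq_K \mathrm{NP\text{-}NFN{+}PE}_{\mathrm{eq}}^{\arch} \preceq_K \mathrm{DWSNets}_{\mathrm{eq}}^{\arch}.
\]
Combined with the identity $\mathrm{HNP\text{-}NFN}=\mathrm{DWSNets}$ (Remark~\ref{rem:app:hnp_dws_equivalence}), transitivity (Remark~\ref{rem:app:transitivity}) then yields the full statement.

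Each containment follows the same template: a \emph{layerwise simulation lemma} plus a \emph{composition lemma}.

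\textbf{Composition lemma.} Suppose every base layer of class $\mathcal{A}$ (an equivariant affine map, a message-passing step, etc.) can be uniformly approximated on compact sets by a network from class $\mathcal{B}$. Suppose further that $\mathcal{B}$ is closed under composition and under feature-wise concatenation. Then every $\mathcal{A}$-network is approximable by a $\mathcal{B}$-network on $K$. The proof is a standard induction on depth. The point is that intermediate images of $K$ stay compact, each layer is uniformly continuous on a compact neighborhood of that image, and so the errors propagate through an $\epsilon/2^k$-type splitting. The lemma also needs $\mathcal{B}$ to contain pointwise MLPs (for nonlinearities and readouts) and the identity or skip channels. These I will check for each architecture directly from its definition. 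I will also allow the encoding/decoding maps between representations (matrices versus graph edges, biases as node features) to be fixed linear bijections that are absorbed into the first and last layers.

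\textbf{The four simulation steps.}
\begin{itemize}
\item $\mathrm{DWS}\preceq\mathrm{NG}$: each DWS linear block is a sum of pooling and broadcast operations (row sums, column sums, full sums, and broadcasts between adjacent layers). I will express each as a few rounds of message passing on the neural graph. Edge updates see both endpoint nodes; node aggregation gives row and column sums; two-hop aggregation across layers gives the cross-layer terms. Global sums are the hard case, since NG-GNN has no global node. I would realize them by propagating sums along the layered graph: aggregate to the input and output neurons, which are individually distinguished, then broadcast back. This uses depth that depends only on $L$.
\item $\mathrm{NG}\preceq\mathrm{GMN}$: this is essentially immediate, since GMN has a superset of the update functions. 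I would ignore the global features and move biases into node features using an initial edge-to-node update.
\item $\mathrm{GMN}\preceq\mathrm{NP\text{-}NFN{+}PE}$: I would adapt the argument of \citet{Lim2024GMN} in the reverse direction. Each message-passing update is a composition of gather and scatter operations along edges, followed by MLPs. With positional encodings identifying the input and output neurons, NP-NFN linear layers contain these gathers and scatters, and pointwise MLPs approximate the update functions by the classical universal approximation theorem applied pointwise.
\item $\mathrm{NP\text{-}NFN{+}PE}\preceq\mathrm{DWS}$: once the one-hot encodings are constant features, NP-NFN equivariant layers, restricted to the smaller group $G_\arch$, lie inside the span of $G_\arch$-equivariant affine maps, which DWS characterizes as complete. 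The PE channels can be generated inside DWS because the input and output neurons are not permuted, so DWS layers can inject index-dependent constants there.
\end{itemize}

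\textbf{Main obstacle.} I expect the hardest step to be $\mathrm{DWS}\preceq\mathrm{NG}$, because DWS contains the complete basis of equivariant linear maps, including terms that couple non-adjacent layers and global pooling. Showing that plain message passing without a global node reproduces every one of these basis elements requires careful bookkeeping of the basis. My plan is to enumerate the DWS basis blocks by (source layer, target layer) and, for each, exhibit an explicit finite message-passing schedule that realizes it. Multiplicative interactions may appear where a sum must be routed through nonlinear edge and node MLPs. There I would handle them with the standard trick of approximating the identity via MLPs on compact sets.
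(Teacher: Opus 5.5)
Your proposal is correct, but it runs the cycle in the opposite direction from the paper.

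\textbf{The paper's cycle.} The paper proves $\mathrm{DWSNets}\preceq_K\mathrm{NP\text{-}NFN{+}PE}\preceq_K\mathrm{GMN}\preceq_K\mathrm{NG}\preceq_K\mathrm{DWSNets}$. Its four links are:
\begin{itemize}
\item NP--NFN+PE simulates the boundary blocks of DWS that are $G$-equivariant but not $S$-equivariant, using the one-hot PE channels and pointwise MLPs to extract slices of the coefficient tensor.
\item GMN simulates NP--NFN+PE by citing \citet[Proposition~10]{Lim2024GMN}, plus one edge update that realizes $\PE$.
\item NG simulates GMN by storing bias-node and global features inside neuron nodes, and computing global sums by a forward sweep followed by a backward sweep.
\item DWS simulates an NG layer via broadcasts, pointwise MLPs and row/column pooling.
\end{itemize}

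\textbf{What your direction buys.} Two links become almost free.
\begin{itemize}
\item $\mathrm{NP\text{-}NFN{+}PE}\preceq_K\mathrm{DWSNets}$ is an exact inclusion. $\PE$ is itself a $G$-equivariant affine map, since its appended constants depend only on input/output indices, which $G$ fixes. Every $S$-equivariant layer is $G$-equivariant, so $L_1\circ\PE$ is already a DWS layer.
\item $\mathrm{NG}\preceq_K\mathrm{GMN}$ is a routine embedding.
\end{itemize}

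\textbf{What it costs.} You cannot lean on Lim et al., who prove the converse of your $\mathrm{GMN}\preceq_K\mathrm{NP\text{-}NFN{+}PE}$, so that step must be built by hand. It does work:
\begin{itemize}
\item store input-node features broadcast down the columns of $\mW_1$;
\item store the other node features in the biases;
\item store bias-node and global features as layerwise or global broadcasts.
\end{itemize}
All the resulting gathers and scatters are $S$-equivariant once PE supplies the boundary identities.

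The heaviest bookkeeping then lands on $\mathrm{DWSNets}\preceq_K\mathrm{NG}$. When you enumerate the DWS basis there, note that it contains more than row/column/full sums and adjacent-layer broadcasts. Because $G$ fixes input and output neurons, it also contains arbitrary linear mixing along those indices, for example $(\mW'_1)_{i,j}=\sum_{j'}A_{j,j'}(\mW_1)_{i,j'}$, and per-output-neuron functionals of $\vb_L$ broadcast to every entry. Message passing still reaches these. NG node types identify boundary neurons uniquely, so messages can be one-hot tagged, and a node can collect, say, its whole row of $\mW_1$ in separate channels. Your propagation scheme for global sums is essentially the same forward/backward sweep the paper uses in its GMN-to-NG step.
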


The proof of Theorem~\ref{thm:app:architecture_equivalence} proceeds in several stages. We first provide formal definitions of each model class, establishing the precise mathematical structure of each architecture. We then develop a proof strategy based on cyclic equivalence arguments, showing that each architecture can approximate the next in a closed cycle. This approach allows us to establish mutual expressivity without requiring direct pairwise comparisons between all architectures.

\subsection{Network definitions}
\label{sec:app:network_definitions}

In this subsection, we provide formal definitions for each of the weight-space architectures considered in this section. We begin with notation for one-hot encodings, which are used extensively across several architectures to represent discrete features of the MLP parameters.

\paragraph{Notation (one-hot encoding).}
Several architectures use one-hot encodings to represent discrete attributes of weights and biases in the MLP (e.g., layer index, neuron type, edge direction).
For any positive integer $n$ and index $k \in [n]$, we denote by $\ve^{(n)}_k \in \R^n$ the $k$-th standard basis vector, i.e., the one-hot encoding of index $k$ in $\R^n$:
\[
    \bigl(\ve^{(n)}_k\bigr)_j \;=\; \mathbf{1}[j = k] \;=\;
    \begin{cases}
        1 & \text{if } j = k, \\
        0 & \text{otherwise}.
    \end{cases}
\]

\paragraph{DWSNets and HNP--NFN.}

\begin{definition}[$\mathrm{DWSNets}_{\mathrm{eq}}^{\arch}$~\cite{Navon2023}]
    \label{def:app:dws_networks}
    A DWS equivariant network $\dws : \WS^{c_{\mathrm{in}}}_\arch \to \WS^{c_{\mathrm{out}}}_\arch$ is a deep architecture of the form
    \begin{equation}
        \mathrm{DWSNets}_{\mathrm{eq}}^{\arch}
        \;\coloneqq\;
        \left\{
            L_n \circ \sigma \circ L_{n-1} \circ \cdots \circ \sigma \circ L_1
            \;\middle|\;
            \begin{aligned}
                & L_i \in \mathrm{Aff}_G\!\left(\WS^{c_{i-1}}_\arch, \WS^{c_i}_\arch\right), \quad i = 1,\dots,n, \\
                & n \in \sN, \quad c_i \in \sN, \\
                & c_0 = c_{\mathrm{in}}, \quad c_n = c_{\mathrm{out}}
            \end{aligned}
        \right\}.
    \end{equation}
\end{definition}

\begin{remark}[Equivalence of HNP--NFN and DWSNets]
    \label{rem:app:hnp_dws_equivalence}
    \cite{Zhou2023NFN} introduced \emph{HNP--NFNs} (Hidden Neuron Permutation Neural Functional Networks), which are mathematically equivalent to DWSNets.
    In both cases, the admissible linear layers are precisely the $G$-equivariant linear maps
    $\mathrm{Hom}_G(\WS^{c_{\mathrm{in}}}_\arch, \WS^{c_{\mathrm{out}}}_\arch)$.
    Accordingly, we treat $\mathrm{DWSNets}_{\mathrm{eq}}^{\arch}$ and $\mathrm{HNP\text{-}NFN}_{\mathrm{eq}}^{\arch}$ as \emph{synonymous} model classes throughout this appendix.
\end{remark}

\paragraph{NP--NFN with positional encoding.}

\begin{definition}[$\mathrm{NP\text{-}NFN{+}PE}_{\mathrm{eq}}^{\arch}$~\cite{Zhou2023NFN}]
    \label{def:app:npnfn_pe}
    Let
    \[
        S \coloneqq S_{d_0} \times G \times S_{d_L}
    \]
    denote the full NP group.

    \medskip
    \noindent\textbf{Neuron type set.}
    Define the neuron type set
    \[
        \mathcal{T}_{\mathrm{PE}}\;\coloneqq\;
        \{\texttt{in}_1,\dots,\texttt{in}_{d_0}\}\ \cup\
        \{\texttt{out}_1,\dots,\texttt{out}_{d_L}\}\ \cup\
        \{\texttt{hidden}\},
        \qquad
        |\mathcal{T}_{\mathrm{PE}}| = d_0 + d_L + 1.
    \]
    Each neuron type $t \in \mathcal{T}_{\mathrm{PE}}$ is encoded as a one-hot vector $\ve_t \in \R^{|\mathcal{T}_{\mathrm{PE}}|}$.

    \medskip
    \noindent\textbf{Positional encoding.}
    Define the \emph{positional encoding}
    $\PE : \WS^{c}_\arch \to \WS^{c + 2|\mathcal{T}_{\mathrm{PE}}|}_\arch$
    by appending neuron type identifiers as additional channels.
    For $\vv = \{(\mW_\ell,\vb_\ell)\}_{\ell=1}^L \in \WS^c_\arch$, we define
    $\PE(\vv) \coloneqq \{(\mW_\ell^*,\vb_\ell^*)\}_{\ell=1}^L$,
    where each weight entry receives two type encodings---one for the \emph{source} neuron and one for the \emph{target} neuron:
    \begin{equation}
    (\mW_\ell^*)
        _{i,j,:} \;\coloneqq\;
        \begin{cases}
            \bigl[(\mW_1)_{i,j,:},\; \ve_{\texttt{in}_j},\; \ve_{\texttt{hidden}}\bigr] & \text{if } \ell = 1, \\[4pt]
            \bigl[(\mW_\ell)_{i,j,:},\; \ve_{\texttt{hidden}},\; \ve_{\texttt{hidden}}\bigr] & \text{if } 1 < \ell < L, \\[4pt]
            \bigl[(\mW_L)_{i,j,:},\; \ve_{\texttt{hidden}},\; \ve_{\texttt{out}_i}\bigr] & \text{if } \ell = L,
        \end{cases}
    \end{equation}
    and each bias entry receives a zero-padded type encoding to match the feature dimension of weights:
    \begin{equation}
    (\vb_\ell^*)
        _{i,:} \;\coloneqq\;
        \begin{cases}
            \bigl[(\vb_\ell)_{i,:},\; \mathbf{0}_{|\mathcal{T}_{\mathrm{PE}}|},\; \ve_{\texttt{hidden}}\bigr] & \text{if } 1 \le \ell < L, \\[4pt]
            \bigl[(\vb_L)_{i,:},\; \mathbf{0}_{|\mathcal{T}_{\mathrm{PE}}|},\; \ve_{\texttt{out}_i}\bigr] & \text{if } \ell = L.
        \end{cases}
    \end{equation}
    Here, $\ve_{\texttt{in}_j}$ and $\ve_{\texttt{out}_i}$ encode the unique identity of input neuron $j$ and output neuron $i$, respectively, while $\ve_{\texttt{hidden}}$ is shared by all hidden neurons.

    \medskip
    \noindent\textbf{Model class.}
    We define the NP--NFN+PE model class as
    \begin{equation}
        \mathrm{NP\text{-}NFN{+}PE}_{\mathrm{eq}}^{\arch}
        \;\coloneqq\;
        \left\{
            L_n \circ \sigma \circ L_{n-1} \circ \cdots \circ \sigma \circ L_1 \circ \PE
            \;\middle|\;
            \begin{aligned}
                & L_i \in \mathrm{Aff}_S\!\left(\WS^{c_{i-1}}_\arch, \WS^{c_i}_\arch\right), \quad i = 1,\dots,n, \\
                & n \in \sN, \quad c_i \in \sN, \\
                & c_0 = c_{\mathrm{in}} + 2|\mathcal{T}_{\mathrm{PE}}| = c_{\mathrm{in}} + 2(d_0 + d_L + 1), \quad c_n = c_{\mathrm{out}}
            \end{aligned}
        \right\}.
    \end{equation}
\end{definition}

\paragraph{Message-Passing on MLP Parameter Graphs.}
\label{sec:app:neural_graphs_gmn_relation}

Both \cite{Lim2024GMN} (Graph MetaNetworks; GMN) and \cite{Kofinas2024NeuralGraphs} (Neural Graphs; NG) propose applying message-passing neural networks to MLPs by viewing the MLP as an undirected graph.
We first define the GMN framework on MLP graphs (Definition~\ref{def:app:gmn}), then describe the Neural Graphs variant which differs in its bias representation and does not allow global features (Definition~\ref{def:app:neural_graphs}).

\begin{definition}[$\mathrm{GMN}_{\mathrm{eq}}^{\arch}$~\cite{Lim2024GMN}]
    \label{def:app:gmn}
    Fix an $L$-layer MLP with widths $(d_0,\dots,d_L)$ and $c_{\mathrm{in}}$ input channels, and let
    \[
        \{(\mW_\ell,\vb_\ell)\}_{\ell=1}^L \in \WS^{c_{\mathrm{in}}}_\arch,
        \qquad
        \mW_\ell \in \R^{d_\ell \times d_{\ell-1} \times c_{\mathrm{in}}},
        \quad
        \vb_\ell \in \R^{d_\ell \times c_{\mathrm{in}}}.
    \]

    \paragraph{Graph construction.}
    Define $\mathsf{G}_{\mathrm{MLP}} = (N, E)$ as the directed MLP parameter graph with:
    \begin{itemize}[leftmargin=*,nosep]
        \item Node set $N = \bigcup_{\ell=0}^{L} \{\nu^{(\ell)}_i : i \in [d_\ell]\} \cup \bigcup_{\ell=1}^{L} \{\beta^{(\ell)}\}$, comprising one \emph{neuron node} $\nu^{(\ell)}_i$ per neuron and one \emph{bias node} $\beta^{(\ell)}$ per layer.
        \item Edge set $E = E_{\mathrm{w}} \cup E_{\mathrm{b}}$, where
        \begin{align*}
            E_{\mathrm{w}} &\;\coloneqq\; \bigl\{(\nu^{(\ell-1)}_j,\nu^{(\ell)}_i),(\nu^{(\ell)}_i,\nu^{(\ell-1)}_j)
            :\ \ell\in[L],\ i\in[d_\ell],\ j\in[d_{\ell-1}]\bigr\}, \\
            E_{\mathrm{b}} &\;\coloneqq\; \bigl\{(\beta^{(\ell)},\nu^{(\ell)}_i),(\nu^{(\ell)}_i,\beta^{(\ell)})
            :\ \ell\in[L],\ i\in[d_\ell]\bigr\}.
        \end{align*}
    \end{itemize}

    \medskip

    \noindent\textbf{Edge feature initialization.}
    Define a map $p:E\to\R^{c_{\mathrm{in}}}$ extracting the MLP parameters associated with each edge:
    \[
        p(e)\;\coloneqq\;
        \begin{cases}
        (\mW_\ell)
            _{i,j,:} \in \R^{c_{\mathrm{in}}}, & e \in \{(\nu^{(\ell-1)}_j, \nu^{(\ell)}_i),\; (\nu^{(\ell)}_i, \nu^{(\ell-1)}_j)\},\\[2pt]
            (\vb_\ell)_{i,:} \in \R^{c_{\mathrm{in}}}, & e \in \{(\beta^{(\ell)}, \nu^{(\ell)}_i),\; (\nu^{(\ell)}_i, \beta^{(\ell)})\}.
        \end{cases}
    \]

    We define three discrete edge attributes, each represented as a one-hot vector:
    \begin{itemize}[leftmargin=*,nosep]
        \item \emph{Layer index} $\mathrm{layer}:E\to\R^{L}$, where $\mathrm{layer}(e) \coloneqq \ve^{(L)}_\ell$ is the one-hot encoding of $\ell \in [L]$;
        \item \emph{Direction} $\mathrm{dir}:E\to\R^{2}$, where $\mathrm{dir}(e) \coloneqq \ve^{(2)}_1$ for forward edges and $\mathrm{dir}(e) \coloneqq \ve^{(2)}_2$ for backward edges;
        \item \emph{Parameter type} $\mathrm{ptype}:E\to\R^{2}$, where $\mathrm{ptype}(e) \coloneqq \ve^{(2)}_1$ for weight edges and $\mathrm{ptype}(e) \coloneqq \ve^{(2)}_2$ for bias edges.
    \end{itemize}
    Concretely, for weight edges $e \in E_{\mathrm{w}}$:
    \[
        \mathrm{layer}(e)\coloneqq \ve^{(L)}_\ell,\quad
        \mathrm{ptype}(e)\coloneqq \ve^{(2)}_1,\quad
        \mathrm{dir}(e)\coloneqq
        \begin{cases}
            \ve^{(2)}_1, & e = (\nu^{(\ell-1)}_j, \nu^{(\ell)}_i),\\
            \ve^{(2)}_2, & e = (\nu^{(\ell)}_i, \nu^{(\ell-1)}_j),
        \end{cases}
    \]
    and for bias edges $e \in E_{\mathrm{b}}$:
    \[
        \mathrm{layer}(e)\coloneqq \ve^{(L)}_\ell,\quad
        \mathrm{ptype}(e)\coloneqq \ve^{(2)}_2,\quad
        \mathrm{dir}(e)\coloneqq
        \begin{cases}
            \ve^{(2)}_1, & e = (\beta^{(\ell)}, \nu^{(\ell)}_i),\\
            \ve^{(2)}_2, & e = (\nu^{(\ell)}_i, \beta^{(\ell)}).
        \end{cases}
    \]

    The initial edge feature vector $\ve_e \in \R^{d_e}$  with $d_e \coloneqq c_{\mathrm{in}} + L + 4$ is the concatenation:
    \[
        \ve_e \;\coloneqq\; \bigl[p(e),\; \mathrm{layer}(e),\; \mathrm{dir}(e),\; \mathrm{ptype}(e)\bigr] \in \R^{c_{\mathrm{in}} + L + 2 + 2}.
    \]

    \paragraph{Node feature initialization.}
    We define two discrete node features, each represented as a one-hot vector:
    \begin{itemize}[leftmargin=*,nosep]
        \item \emph{Layer index} $\mathrm{nlayer}:N\to\R^{L+1}$, where $\mathrm{nlayer}(v) \coloneqq \ve^{(L+1)}_\ell$ is the one-hot encoding of the layer index $\ell \in \{0,\ldots,L\}$:
        \[
            \mathrm{nlayer}(\nu^{(\ell)}_i)\coloneqq \ve^{(L+1)}_{\ell+1},\qquad \mathrm{nlayer}(\beta^{(\ell)})\coloneqq \ve^{(L+1)}_{\ell+1};
        \]
        \item \emph{Node type} $\mathrm{ntype}:N\to\R^{|\mathcal{T}_{\mathrm{GMN}}|}$, where $|\mathcal{T}_{\mathrm{GMN}}| = d_0 + d_L + L + 1$ and
        \[
            \mathcal{T}_{\mathrm{GMN}}\;\coloneqq\;
            \{\texttt{in}_1,\dots,\texttt{in}_{d_0}\}\ \cup\
            \{\texttt{out}_1,\dots,\texttt{out}_{d_L}\}\ \cup\
            \{\texttt{bias}_1,\dots,\texttt{bias}_L\}\ \cup\
            \{\texttt{hidden}\}.
        \]
        Each node type $t \in \mathcal{T}_{\mathrm{GMN}}$ is encoded as a one-hot vector $\ve^{(|\mathcal{T}_{\mathrm{GMN}}|)}_t \in \R^{|\mathcal{T}_{\mathrm{GMN}}|}$:
        \[
            \mathrm{ntype}(v)\;\coloneqq\;
            \begin{cases}
                \ve^{(|\mathcal{T}_{\mathrm{GMN}}|)}_{\texttt{in}_i}, & v=\nu^{(0)}_i,\\
                \ve^{(|\mathcal{T}_{\mathrm{GMN}}|)}_{\texttt{out}_i}, & v=\nu^{(L)}_i,\\
                \ve^{(|\mathcal{T}_{\mathrm{GMN}}|)}_{\texttt{hidden}}, & v=\nu^{(\ell)}_i\ \text{for}\ 1\le \ell \le L-1,\\
                \ve^{(|\mathcal{T}_{\mathrm{GMN}}|)}_{\texttt{bias}_\ell}, & v=\beta^{(\ell)}.
            \end{cases}
        \]
    \end{itemize}

    The initial node feature vector $\vh_v \in \R^{d_h}$ with $d_h = (L+1) + (d_0 + d_L + L + 1) = 2L + d_0 + d_L + 2$ is the concatenation:
    \[
        \vh_v \;\coloneqq\; \bigl[\mathrm{nlayer}(v),\; \mathrm{ntype}(v)\bigr] \in \R^{2L + d_0 + d_L + 2}.
    \]

    \paragraph{Global feature initialization.}
    The global feature vector is initialized to zero:
    \[
        \vu \;\coloneqq\; \mathbf{0}_{d_u} \in \R^{d_u},
    \]
    where $d_u \in \sN$ is a hyperparameter specifying the global feature dimension.

    \medskip
    \noindent\textbf{MPNN layer update.}
    A \emph{GMN layer} is a tuple of MLPs $\mathcal{L} = (\phi_m, \phi_h, \phi_e, \phi_u)$ that updates the state $(\vh, \ve, \vu) \mapsto (\vh', \ve', \vu')$ via:
    \begin{align}
        \label{eq:gmn-message}
        \vm_{j \to i}
        &=
        \phi_m\!\bigl(\vh_i, \vh_j, \ve_{ij}, \vu\bigr),\\
        \label{eq:gmn-node}
        \vh'_i
        &=
        \phi_h\!\Bigl(
        \vh_i,\;
        \sum_{j \in \mathcal{N}(i)} \vm_{j \to i},\;
        \vu
        \Bigr),\\
        \label{eq:gmn-edge}
        \ve'_{ij}
        &=
        \phi_e\!\bigl(\vh_i, \vh_j, \ve_{ij}, \vu\bigr),\\
        \label{eq:gmn-global}
        \vu'
        &=
        \phi_u\!\Bigl(
        \sum_{v \in \WS} \vh_v,\;
        \sum_{e \in \mathcal{E}} \ve_e,\;
        \vu
        \Bigr),
    \end{align}
    where $\mathcal{N}(i)$ denotes the neighbors of node $i$ in the undirected graph (messages flow in both directions along edges), and $\vu \in \R^{d_u}$ is a global feature vector.

    \medskip
    \noindent\textbf{Model class.}
    We define the GMN-on-MLP model class as
    \begin{equation}
        \mathrm{GMN}_{\mathrm{eq}}^{\arch}
        \;\coloneqq\;
        \left\{
            L_n \circ L_{n-1} \circ \cdots \circ L_1
            \;\middle|\;
            \begin{aligned}
                & L_i = (\phi_{m,i}, \phi_{h,i}, \phi_{e,i}, \phi_{u,i}) \text{ is a GMN layer} \\
                & n \in \sN, \quad i = 1,\dots,n
            \end{aligned}
        \right\},
    \end{equation}
\end{definition}

\begin{definition}[$\mathrm{NG}_{\mathrm{eq}}^{\arch}$~\cite{Kofinas2024NeuralGraphs}]
    \label{def:app:neural_graphs}
    Neural Graphs are a simplified variant of GMN (Definition~\ref{def:app:gmn}) with biases stored as node features and no global features.

    \paragraph{Graph construction.}
    Define $\mathsf{G}_{\mathrm{NG}} = (N, E)$ with:
    \begin{itemize}[leftmargin=*,nosep]
        \item Node set $N = \bigcup_{\ell=0}^{L} \{\nu^{(\ell)}_i : i \in [d_\ell]\}$ (neuron nodes only, no bias nodes);
        \item Edge set $E = E_{\mathrm{w}}$ (weight edges only, as defined in Definition~\ref{def:app:gmn}).
    \end{itemize}

    \medskip
    \noindent\textbf{Edge feature initialization.}
    Using the edge attribute functions $p$, $\mathrm{layer}$, and $\mathrm{dir}$ from Definition~\ref{def:app:gmn}, the initial edge feature vector $\ve_e \in \R^{d_e}$ with $d_e \coloneqq c_{\mathrm{in}} + L + 2$ is:
    \[
        \ve_e \;\coloneqq\; \bigl[p(e),\; \mathrm{layer}(e),\; \mathrm{dir}(e)\bigr] \in \R^{c_{\mathrm{in}} + L + 2}.
    \]
    Note that $\mathrm{ptype}$ is omitted since all edges are weight edges.

    \medskip
    \noindent\textbf{Node feature initialization.}
    Node features include biases as additional channels (for $\ell > 0$) and node types via one-hot encoding.
    We adapt the node type function from Definition~\ref{def:app:gmn} to exclude bias nodes:
    \[
        \mathcal{T}_{\mathrm{NG}}\;\coloneqq\;
        \{\texttt{in}_1,\dots,\texttt{in}_{d_0}\}\ \cup\
        \{\texttt{out}_1,\dots,\texttt{out}_{d_L}\}\ \cup\
        \{\texttt{hidden}\},
        \qquad
        |\mathcal{T}_{\mathrm{NG}}| = d_0 + d_L + 1.
    \]
    The node type function $\mathrm{ntype}:N\to\R^{|\mathcal{T}_{\mathrm{NG}}|}$ is defined as:
    \[
        \mathrm{ntype}(v)\;\coloneqq\;
        \begin{cases}
            \ve^{(|\mathcal{T}_{\mathrm{NG}}|)}_{\texttt{in}_i}, & v=\nu^{(0)}_i,\\
            \ve^{(|\mathcal{T}_{\mathrm{NG}}|)}_{\texttt{out}_i}, & v=\nu^{(L)}_i,\\
            \ve^{(|\mathcal{T}_{\mathrm{NG}}|)}_{\texttt{hidden}}, & v=\nu^{(\ell)}_i\ \text{for}\ 1\le \ell \le L-1.
        \end{cases}
    \]

    Using $\mathrm{nlayer}$ from Definition~\ref{def:app:gmn}, the initial node feature vector $\vh_{\nu^{(\ell)}_i} \in \R^{d_h}$ with $d_h \coloneqq (L+1) + (d_0 + d_L + 1) + c_{\mathrm{in}}$ is:
    \[
        \vh_{\nu^{(\ell)}_i} \;\coloneqq\;
        \begin{cases}
            \bigl[\mathrm{nlayer}(\nu^{(0)}_i),\; \mathrm{ntype}(\nu^{(0)}_i),\; \mathbf{0}_{c_{\mathrm{in}}}\bigr] & \text{if } \ell = 0, \\[4pt]
            \bigl[\mathrm{nlayer}(\nu^{(\ell)}_i),\; \mathrm{ntype}(\nu^{(\ell)}_i),\; (\vb_\ell)_{i,:}\bigr] & \text{if } 0 < \ell \le L.
        \end{cases}
    \]

    \medskip
    \noindent\textbf{MPNN layer update (no global features).}
    An \emph{NG layer} is a tuple of MLPs $\mathcal{L} = (\phi_m, \phi_h, \phi_e)$ that updates the state $(\vh, \ve) \mapsto (\vh', \ve')$ via:
    \begin{align}
        \label{eq:ng-message}
        \vm_{j \to i}
        &=
        \phi_m\!\bigl(\vh_i, \vh_j, \ve_{ij}\bigr),\\
        \label{eq:ng-node}
        \vh'_i
        &=
        \phi_h\!\Bigl(
        \vh_i,\;
        \sum_{j \in \mathcal{N}(i)} \vm_{j \to i}
        \Bigr),\\
        \label{eq:ng-edge}
        \ve'_{ij}
        &=
        \phi_e\!\bigl(\vh_i, \vh_j, \ve_{ij}\bigr),
    \end{align}
    where $\mathcal{N}(i)$ denotes the neighbors of node $i$.

    \medskip
    \noindent\textbf{Model class.}
    We define the Neural Graphs model class as
    \begin{equation}
        \mathrm{NG}_{\mathrm{eq}}^{\arch}
        \;\coloneqq\;
        \left\{
            L_n \circ L_{n-1} \circ \cdots \circ L_1
            \;\middle|\;
            \begin{aligned}
                & L_i = (\phi_{m,i}, \phi_{h,i}, \phi_{e,i}) \text{ is an NG layer} \\
                & n \in \sN,\qquad  i = 1,\dots,n
            \end{aligned}
        \right\},
    \end{equation}
\end{definition}

\paragraph{NFTs.}
\begin{definition}[NFTs \cite{zhou2023neural}]
    \label{def:app:nft}


    $\quad$

    \textbf{Attention primitive.} NFTs are built from dot-product attention. Given a query $q\in\R^{d}$
    and key--value pairs $\{(k_p,v_p)\}_{p=1}^{N}$ with $k_p\in\R^{d}$
    and $v_p\in\R^{d_v}$, define
    \begin{equation}
        \label{eq:attn_def}
        \mathrm{ATTN}\!\left(q,\{(k_p,v_p)\}_{p=1}^{N}\right)
        \;\coloneqq\;
        \sum_{p=1}^{N}
        \alpha_p(q)\,v_p,
        \qquad
        \alpha_p(q)
        \;\coloneqq\;
        \frac{\exp(\langle q,k_p\rangle)}{\sum_{p'=1}^{N}\exp(\langle q,k_{p'}\rangle)}.
    \end{equation}
    When the query $q$ is array-valued, we apply $\mathrm{ATTN}$ elementwise in the
    obvious way.

    \textbf{Pointwise operators on weight features.}
    We extend standard pointwise Transformer components to weight-space elements by applying them independently to each feature vector. Concretely, for
    $\vv = (\mW_1, \vb_1, \dots, \mW_L, \vb_L)$ we define
    $\LN(\vv) = (\mW'_1, \vb'_1, \dots, \mW'_L, \vb'_L)$ and
    $\MLP(\vv) = (\mW''_1, \vb''_1, \dots, \mW''_L, \vb''_L)$ by
    \begin{equation}
    (\mW'_\ell)
        _{i,j,:} = \LN\!\bigl((\mW_\ell)_{i,j,:}\bigr),
        \qquad
        (\vb'_\ell)_{i,:}   = \LN\!\bigl((\vb_\ell)_{i,:}\bigr),
    \end{equation}
    and
    \begin{equation}
        \label{eq:nft_pointwise_update}
        (\mW''_\ell)_{i,j,:} = \MLP\!\bigl((\mW_\ell)_{i,j,:}\bigr),
        \qquad
        (\vb''_\ell)_{i,:}   = \MLP\!\bigl((\vb_\ell)_{i,:}\bigr),
    \end{equation}
    where $\LN : \R^{c} \to \R^{c}$ denotes LayerNorm and
    $\MLP : \R^{c} \to \R^{c}$ is a feed-forward network
    (e.g., $c \to c_{\mathrm{ff}} \to c$ with a nonlinearity).

    \textbf{Layer position encodings.} To distinguish different base-network layers, NFTs add a learned embedding to
    each weight layer. Let $\phi_{\ell}^{b}, \phi_{\ell}^{w} \in\R^{c}$ be trainable
    \emph{layer encodings} for $\ell=1,\dots,L$. Define $\LE:\WS^c_\arch \to \WS^c_\arch$ by $\LE(\vv) = (\mW'_1, \vb'_1, \dots, \mW'_L, \vb'_L)$
    \begin{equation}
        \label{eq:layerenc}
        (\mW_{\ell})_{i,j,:}+\phi_{\ell}^w, \qquad(\mW_{\ell})_{i,j,:}+\phi_{\ell}^b
    \end{equation}



    \textbf{Linear projections.} Let $\theta_Q,\theta_K,\theta_V\in\R^{c\times c}$ be trainable matrices.
    Given $\vv \in \WS^{c}_\arch$, define for each layer $\ell$ and index pair $(i,j)$

    \begin{equation}
        \label{eq:qkv_proj_weights}
        (\mQ^{w}_{\ell})_{i,j, : }\;\coloneqq\;\theta_Q (\mW_{\ell})_{i,j, : },
        \qquad
        (\mK^{w}_{\ell})_{i,j, : }\;\coloneqq\;\theta_K (\mW_{\ell})_{i,j, : },
        \qquad
        (\mV^{w}_{\ell})_{i,j, : }\;\coloneqq\;\theta_V (\mW_{\ell})_{i,j, : },
    \end{equation}

    \begin{equation}
        \label{eq:qkv_proj_biases}
        (\mQ^{b}_{\ell})_{i, : }\;\coloneqq\;\theta_Q (\vb_{\ell})_{i, : },
        \qquad
        (\mK^{b}_{\ell})_{i, : }\;\coloneqq\;\theta_K (\vb_{\ell})_{i, : },
        \qquad
        (\mV^{b}_{\ell})_{i, : }\;\coloneqq\;\theta_V (\vb_{\ell})_{i, : },
    \end{equation}

    We will also use the shorthand $(\mK,\mV)^{(\ell)}_{i,j, k} \coloneqq( (\mK_{\ell})_{i,j, k}, (\mV_{\ell})_{i,j, k} )$.

    \paragraph{Self Attention.}
    NFT self-attention aggregates information in several ways. The full attention map $\SA:\WS^{c}_\arch \to \WS^{c}_\arch$ it uses is given by  $\SA(\vv) = (\mW'_1, \vb'_1, \dots, \mW'_L, \vb'_L)$

    \begin{equation}
    (\mW'_\ell)
        _{i,j,: } = \ATTN( (\mQ^w_\ell)_{i,:,:}, \mathrm{KV1}) +  \ATTN( (\mQ^w_\ell)_{:,j,:}, \mathrm{KV2}) + \ATTN( (\mQ^w_\ell)_{i,j,:}, \mathrm{KV3})
    \end{equation}

    \begin{equation}
    (\vb'_\ell)
        _{i,: } = \ATTN( (\mQ^b_\ell)_{i,:}, \mathrm{KV2}) + \ATTN( (\mQ^b_\ell)_{i,:}, \mathrm{KV3})
    \end{equation}

    where

    \begin{equation}
        \label{eq:KV1}
        \mathrm{KV1}
        =
        \left\{(K,V)^{(\ell-1)}_{:,q,: }\right\}_{q=1}^{d_{\ell-2}}
        \cup
        \left\{(\mK^w,\mV^w)^{(\ell)}_{p,:}\right\}_{p=1}^{d_\ell}
        \cup
        \left\{(\mK^b,\mV^b)^{(\ell-1)}\right\},
    \end{equation}

    \begin{equation}
        \label{eq:KV2}
        \mathrm{KV2}
        =
        \left\{(\mK^w,\mV^w)^{(\ell)}_{:,q, :}\right\}_{q=1}^{d_{\ell-1}}
        \cup
        \left\{(\mK^w,\mV^w)^{(\ell+1)}_{p,:, :}\right\}_{p=1}^{d_{\ell+1}}
        \cup
        \left\{(\mK^b,\mV^b)^{(\ell)}\right\},
    \end{equation}

    \begin{equation}
        \label{eq:KV3}
        \mathrm{KV3}
        =
        \left\{(\mK^w,\mV^w)^{(s)}_{p,q, :}\;:\;\forall s,p,q\right\}
        \cup
        \left\{(\mK^b,\mV^b)^{(s)}_{p, :}\;:\;\forall s,p\right\},
    \end{equation}

    We note that each such attention summand can use its own separate key,query and value projections $\theta_Q, \theta_K, \theta_V$.

    \textbf{NFT block. } An NFT block is a Transformer-style residual update operating on
    weight-space features:
    \begin{align}
        \label{eq:block_def_1}
        \vz
        &\;\coloneqq\;
        \vv  \;+\; \mathrm{SA}(\mathrm{LN}(\vv)),
        \\
        \label{eq:block_def_2}
        \mathrm{BLOCK}(\vv)
        &\;\coloneqq\;
        \vz \;+\; \mathrm{MLP}(\mathrm{LN}(\vz)).
    \end{align}
    We note that while the full architecture proposed in \cite{zhou2023neural} may use multi-headed attention, we keep our formulation in single-head form for simplicity.

    \textbf{Invariant pooling via cross-attention. } To obtain a permutation-invariant representation, NFTs pool the final
    weight-space features into a fixed-size vector using cross-attention $\CA : \WS^{c}_\arch \to \R^{c}$ between all weight-space entries and a single
    learnable token, followed by a final MLP.

    \paragraph{Model class.}

    We define the NFTE model class as
    \begin{equation}
        \mathrm{NFT}_{\mathrm{eq}}^{\arch}
        \;\coloneqq\;
        \left\{
            \MLP \circ \CA \circ \mathrm{BLOCK}_n \circ \cdots \circ \mathrm{BLOCK}_1 \circ \LE \circ \MLP
            \;\middle|\; n \in \sN
        \right\}.
    \end{equation}
\end{definition}

\subsection{Proof strategy and supporting lemmas}
\label{sec:app:proof_strategy}

Before we prove Theorem~\ref{thm:app:architecture_equivalence}, we state and prove a few key lemmas that will be used in the proof.

\paragraph{Layer-wise approximation implies network approximation.}
A key technical insight is that to show one architecture can express another, it suffices to approximate each layer of the target architecture using the source architecture. The following lemma formalizes why layerwise approximations compose to yield approximations of full networks, which is essential for our cyclic argument.

\begin{lemma}[Layer-wise approximation implies network approximation]
    \label{lem:app:composition_approximation}
    Let $X \subset \R^{d_0}$ be a compact domain, and let $\gF_1, \ldots, \gF_L$ be families of continuous functions where $\gF_i$ consists of functions from $\R^{d_{i-1}} \to \R^{d_i}$ for some $d_1, \ldots, d_L$. Let $\gF$ be the family of functions $\{f_L \circ \cdots \circ f_1 : X \to \R^{d_L} \mid f_i \in \gF_i\}$ that are compositions of functions $f_i \in \gF_i$.

    Suppose that for each $i = 1, \ldots, L$, there exists a family $\tilde{\gF}_i$ such that for every $f_i \in \gF_i$ and every $\epsilon > 0$, there exists $\tilde{f}_i \in \tilde{\gF}_i$ that uniformly approximates $f_i$ on its domain:
    \begin{equation}
        \sup_{\vx \in \mathrm{dom}(f_i)} \| \tilde{f}_i(\vx) - f_i(\vx) \| < \epsilon,
    \end{equation}
    where $\mathrm{dom}(f_1) = X$ and $\mathrm{dom}(f_i) = \R^{d_{i-1}}$ for $i \geq 2$.

    Then for every $f \in \gF$ and every $\epsilon > 0$, there exists $\tilde{f} \in \tilde{\gF}_L \circ \cdots \circ \tilde{\gF}_1$ (the family of compositions of functions from $\tilde{\gF}_i$) such that $\tilde{f}$ uniformly approximates $f$ on $X$.
\end{lemma}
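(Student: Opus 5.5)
Induction on the number of layers $L$, using a telescoping decomposition of the error together with uniform continuity of the target layers on compact sets.

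Fix $f = f_L \circ \cdots \circ f_1 \in \gF$ and $\epsilon > 0$. Set $K_0 \coloneqq X$ and, for $i \ge 1$, $K_i \coloneqq f_i(K_{i-1})$. Each $K_i$ is compact, being the continuous image of a compact set.

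Let $K_i^{1} \coloneqq \{y : \mathrm{dist}(y, K_i) \le 1\}$ denote the closed $1$-neighborhood of $K_i$. It is compact, and $f_{i+1}$ is uniformly continuous on it.

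We choose tolerances backwards, from the last layer to the first.
\begin{itemize}
\item Set $\eta_L \coloneqq \epsilon/2$.
\item Given $\eta_{i+1}$, use uniform continuity of $f_{i+1}$ on $K_i^{1}$ to pick $\delta_i \in (0,1)$ such that
\[
\|y - y'\| < \delta_i,\ y, y' \in K_i^{1} \;\Rightarrow\; \|f_{i+1}(y) - f_{i+1}(y')\| < \eta_{i+1}/2.
\]
Then set $\eta_i \coloneqq \min(\delta_i, \eta_{i+1}/2)$.
\end{itemize}
By hypothesis, for each $i$ pick $\tilde f_i \in \tilde\gF_i$ with $\sup_{\mathrm{dom}(f_i)} \|\tilde f_i - f_i\| < \eta_i/2$. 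The uniform approximation on all of $\mathrm{dom}(f_i) = \R^{d_{i-1}}$ is what makes this step painless: we never need to check that the perturbed intermediate values stay inside a prescribed set before applying $\tilde f_i$.

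Write $g_i \coloneqq f_i \circ \cdots \circ f_1$ and $\tilde g_i \coloneqq \tilde f_i \circ \cdots \circ \tilde f_1$. We show by induction that
\[
\sup_{x \in X} \|\tilde g_i(x) - g_i(x)\| < \eta_i .
\]
\begin{itemize}
\item For $i = 1$ this is immediate from the choice of $\tilde f_1$.
\item For the inductive step, split
\[
\|\tilde f_{i+1}(\tilde g_i(x)) - f_{i+1}(g_i(x))\| \le \|\tilde f_{i+1}(\tilde g_i(x)) - f_{i+1}(\tilde g_i(x))\| + \|f_{i+1}(\tilde g_i(x)) - f_{i+1}(g_i(x))\| .
\]
The first term is below $\eta_{i+1}/2$ by the choice of $\tilde f_{i+1}$.
\item For the second term, note $g_i(x) \in K_i$ and $\|\tilde g_i(x) - g_i(x)\| < \eta_i \le \delta_i < 1$. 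So both points lie in $K_i^{1}$ and are $\delta_i$-close, and the uniform-continuity choice bounds the term by $\eta_{i+1}/2$.
\end{itemize}
Summing the two terms gives error below $\eta_{i+1}$. At $i = L$ the error is below $\eta_L = \epsilon/2 < \epsilon$.

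The only delicate point is the order of choices. The intermediate images must be controlled on a compact neighborhood so that uniform continuity applies, which forces the tolerances to be fixed from the last layer backwards before any approximant is selected. The compact $1$-neighborhoods $K_i^{1}$ together with $\delta_i < 1$ handle this. Finally, $\tilde g_L \in \tilde\gF_L \circ \cdots \circ \tilde\gF_1$, which proves the claim.
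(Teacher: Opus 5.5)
Your proof is correct and takes essentially the same route as the paper, which cites Lemma~6 of \citet{Lim2022SignBasis} and only sketches this argument: control the error layer by layer on the compact images of the preceding layers using uniform continuity. You carry it out in full, fixing the tolerances backwards from the last layer. One remark: your construction already keeps $\tilde g_i(x)$ inside the compact set $K_i^{1}$, so the argument goes through unchanged if each $\tilde f_{i+1}$ is only assumed to approximate $f_{i+1}$ on $K_i^{1}$, rather than on all of $\R^{d_i}$. That compact-set form is how the lemma is actually used later in the paper, where MLPs approximate target maps only on compact sets.
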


\begin{proof}
    This is a restatement of \citet[Lemma~6]{Lim2022SignBasis} in our notation. The key observation is that if each layer $f_i$ in a composition can be uniformly approximated on the image of the preceding layers (which is compact by continuity), then the full composition can be uniformly approximated. The proof proceeds by constructing approximations layer by layer, ensuring that the approximation error at each stage remains bounded on the compact image of the previous layers.
\end{proof}

\paragraph{Realization of MLP applied pointwise to feature vectors.}
Another key technical property that we will use repeatedly in the proofs below is that networks whose affine layers act pointwise on feature channel vectors can realize any MLP applied pointwise to feature channel vectors. This property is essential for simulating architectures like GMN and NG, which use MLPs in their update functions. The ability to realize arbitrary pointwise MLPs allows us to approximate complex update mechanisms using simpler pointwise affine operations combined with nonlinearities. We state this as a general lemma, and then provide specific instantiations for DWS networks and NP--NFN networks.

\begin{lemma}[Networks with pointwise affine layers can realize pointwise MLP applications on feature channels]
    \label{lem:app:pointwise_affine_mlp}
    Let $\WS^k$ be a weight space with feature dimension $k$, and let $M : \R^k \to \R^{k'}$ be an MLP.
    Consider a network architecture that can realize the pointwise application of any affine map $A : \R^k \to \R^{k'}$ (i.e., $A(\vx) = \vx \mA + \vu$) to feature channel vectors:
    \begin{equation}
    (\mW'_\ell)
        _{i,j,:} = (\mW_\ell)_{i,j,:} \mA + \vu,
        \qquad
        (\vb'_\ell)_{i,:} = (\vb_\ell)_{i,:} \mA + \vu,
    \end{equation}
    for all layers $\ell$ and all valid indices $i,j$. Then such networks can realize the pointwise application of $M$ to feature channel vectors.
\end{lemma}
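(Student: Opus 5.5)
The plan is to write $M$ as an alternating composition of affine maps and pointwise nonlinearities,
\[
M = A_n \circ \sigma \circ A_{n-1} \circ \cdots \circ \sigma \circ A_1,
\qquad A_r : \R^{k_{r-1}} \to \R^{k_r},\quad k_0 = k,\ k_n = k'.
\]
We then realize each factor separately inside the network class and compose.

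\emph{Affine factors.} By hypothesis, the architecture can apply any affine map pointwise on feature channels. We use this to realize, for each $r$, the layer $\vv \mapsto \vv\,\mA_r + \vu_r$ acting on $\WS^{k_{r-1}} \to \WS^{k_r}$. The hypothesis is stated for generic input and output feature dimensions, so we apply it with $(k,k')$ replaced by $(k_{r-1},k_r)$.

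\emph{Nonlinearities.} The architectures in question (DWS, NP--NFN) interleave their affine layers with the same entrywise activation $\sigma$. Applied to $\vv \in \WS^{k_r}$, this activation acts on every scalar entry $(\mW_\ell)_{i,j,s}$ and $(\vb_\ell)_{i,s}$. That is exactly the pointwise application of $\sigma:\R^{k_r}\to\R^{k_r}$ to each feature vector $(\mW_\ell)_{i,j,:}$ and $(\vb_\ell)_{i,:}$.

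\emph{Composition.} Stacking $L_r := (\cdot)\mA_r + \vu_r$ and $\sigma$ in the same order as in $M$ gives a network $L_n\circ\sigma\circ\cdots\circ\sigma\circ L_1$. Its action on each feature vector is
\[
(\mW_\ell)_{i,j,:} \mapsto M\bigl((\mW_\ell)_{i,j,:}\bigr),
\]
and likewise for the biases. This holds because every step acts independently and identically on each index $(\ell,i,j)$, so a short induction on $r$ shows that after stage $r$ each feature vector equals the partial composition applied to its original value. The realization is exact, so no approximation argument (such as Lemma~\ref{lem:app:composition_approximation}) is needed.

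\emph{Main obstacle.} The only delicate point is ensuring the class permits consecutive layers with arbitrary intermediate widths $k_r$ and uses the same activation $\sigma$ as $M$. In the definitions above, $c_i\in\sN$ is free and $\sigma$ is interleaved. If the MLP $M$ used a different activation, I would first approximate $M$ by an MLP with activation $\sigma$ on the relevant compact set via the standard universal approximation theorem. I would then realize that approximating MLP exactly as above, obtaining the statement up to arbitrary uniform accuracy on compact sets.
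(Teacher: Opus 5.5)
Your proposal is correct and follows essentially the same route as the paper: write $M$ as affine maps interleaved with the activation, realize each affine factor via the hypothesis (applied at the intermediate widths), and interleave these with the network's own entrywise $\sigma$. Your additions are sound refinements of points the paper leaves implicit: the explicit induction over stages, and the fallback to universal approximation when $M$'s activation differs from the network's.
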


\begin{proof}
    An MLP $M : \R^k \to \R^{k'}$ is a composition of affine maps interleaved with pointwise nonlinearities. Specifically, $M$ can be written as
    \begin{equation}
        M = A_n \circ \sigma \circ A_{n-1} \circ \cdots \circ \sigma \circ A_1,
    \end{equation}
    where each $A_i : \R^{d_{i-1}} \to \R^{d_i}$ is an affine map (i.e., $A_i(\vx) = \vx \mA_i + \vu_i$ for some matrix $\mA_i$ and vector $\vu_i$), $\sigma$ is a pointwise nonlinearity (e.g., ReLU), and $d_0 = k$, $d_n = k'$.

    By assumption, the network can realize each affine map $A_i$ applied pointwise to feature channel vectors. By composing these pointwise affine layers corresponding to each $A_i$ in the MLP decomposition, and interleaving them with the same pointwise nonlinearities $\sigma$ used in the MLP, we obtain a network that realizes $M$ applied pointwise to each feature channel vector, as required.
\end{proof}

\begin{corollary}[DWS networks can realize pointwise MLP applications on feature channels]
    \label{cor:app:dws_pointwise_mlp}
    DWS networks can realize the pointwise application of any MLP to feature channel vectors.
\end{corollary}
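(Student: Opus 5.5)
By Lemma~\ref{lem:app:pointwise_affine_mlp}, it suffices to show that DWS networks can realize the pointwise application of an arbitrary affine map $A(\vx)=\vx\mA+\vu$, with $\mA\in\R^{k\times k'}$ and $\vu\in\R^{k'}$, to every feature channel vector of every weight and bias entry. Concretely, for each layer $\ell$ and valid indices $i,j$ we need
\[
(\mW'_\ell)_{i,j,:}=(\mW_\ell)_{i,j,:}\mA+\vu,
\qquad
(\vb'_\ell)_{i,:}=(\vb_\ell)_{i,:}\mA+\vu .
\]
Once each affine map is a single admissible DWS layer, we interleave these layers with the pointwise nonlinearity $\sigma$. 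This matches exactly the form $L_n\circ\sigma\circ\cdots\circ\sigma\circ L_1$ in Definition~\ref{def:app:dws_networks}, so the corollary follows. The realization is exact, not merely approximate, provided the DWS nonlinearity is the one used in $M$. This matches the paper's usage, where both are the same pointwise activation such as ReLU.

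The only point requiring verification is that this pointwise affine map is a member of $\mathrm{Aff}_G(\WS^{k}_\arch,\WS^{k'}_\arch)$. We write it as a linear part plus a bias. The linear part $\vv\mapsto\vv\mA$ acts only on the feature axis. The action $\rho(g)$ of Definition~\ref{def:app:group_action_extended} acts only on the neuron indices, independently for each feature index. Hence the two commute: $\rho(g)(\vv)\mA=\rho(g)(\vv\mA)$. This holds for each block $\mW_\ell$ and $\vb_\ell$, including the first and last layers, where the action is one-sided. So the linear part is $G$-equivariant.

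The bias part is the constant element $\bar\vu\in\WS^{k'}_\arch$ having $\vu$ in every entry of every block. Permutation matrices map constant tensors to themselves, so $\rho(g)\bar\vu=\bar\vu$, and $\bar\vu$ is a $G$-invariant bias. Therefore $\vv\mapsto\vv\mA+\bar\vu$ is a $G$-equivariant affine map, and hence an admissible DWS layer.

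I expect no real obstacle. The one subtlety is notational: the paper allows a separate shared affine map per block, whereas the corollary uses the same $\mA$ and $\vu$ across all blocks. Since we only need existence of such a layer, this poses no difficulty: the equivariant layer space contains these block-diagonal feature mixings, in the channel-wise $\mathrm{Hom}_G$ description as the identity-type basis element tensored with $\mA$.
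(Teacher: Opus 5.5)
Your proof is correct and takes the same route as the paper: you reduce via Lemma~\ref{lem:app:pointwise_affine_mlp} to a single pointwise affine layer and check that this layer lies in $\mathrm{Aff}_G(\WS^{k}_\arch,\WS^{k'}_\arch)$. Your explicit check, that feature-axis mixing commutes with the index permutations and that the constant bias $\bar\vu$ is $G$-invariant, just spells out the equivariance claim the paper states by citing the DWS characterization.
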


\begin{proof}
    For any affine map $A : \R^k \to \R^{k'}$ with $A(\vx) = \vx \mA + \vu$, the DWS layer $L_{\mA,\vu} : \WS^k \to \WS^{k'}$ defined by
    \begin{equation}
    (\mW'_\ell)
        _{i,j,:} = (\mW_\ell)_{i,j,:} \mA + \vu,
        \qquad
        (\vb'_\ell)_{i,:} = (\vb_\ell)_{i,:} \mA + \vu,
    \end{equation}
    acts identically on every feature channel vector, is affine, and is equivariant with respect to hidden neuron permutations (cf.\ \cite{Navon2023}). Hence, $L_{\mA,\vu}$ can be realized as a DWS layer. The result follows from Lemma~\ref{lem:app:pointwise_affine_mlp}.
\end{proof}

\begin{corollary}[NP--NFN networks can realize pointwise MLP applications on feature channels]
    \label{cor:app:npnfn_pointwise_mlp}
    NP--NFN networks (with positional encoding) can realize the pointwise application of any MLP to feature channel vectors.
\end{corollary}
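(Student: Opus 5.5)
The plan mirrors Corollary~\ref{cor:app:dws_pointwise_mlp}. I would show that NP--NFN networks, applied after the positional encoding $\PE$, can realize every pointwise affine map $A(\vx)=\vx\mA+\vu$ on feature channel vectors. Lemma~\ref{lem:app:pointwise_affine_mlp} then gives the pointwise MLP. One subtlety must be addressed first. An NP--NFN network starts with $\PE$, which appends $2|\mathcal{T}_{\mathrm{PE}}|$ channels, and ends with a layer landing in the desired output dimension. I would therefore interpret ``realizing a pointwise MLP $M$'' as realizing $\vv\mapsto M(\vv)$ as a map $\WS^{c}\to\WS^{c'}$. To do this, the first affine layer reads only the original $c$ channels and discards the appended type channels by setting the corresponding rows of $\mA_1$ to zero.

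The key step is to exhibit the pointwise affine layer $L_{\mA,\vu}:\WS^k\to\WS^{k'}$, defined by $(\mW'_\ell)_{i,j,:}=(\mW_\ell)_{i,j,:}\mA+\vu$ and $(\vb'_\ell)_{i,:}=(\vb_\ell)_{i,:}\mA+\vu$, as an element of $\mathrm{Aff}_S(\WS^{k},\WS^{k'})$, where $S=S_{d_0}\times G\times S_{d_L}$. I would check two things. First, the linear part $\vx\mapsto\vx\mA$ acts on the channel axis only, while every element of $S$ acts by permuting the index axes $(i,j)$ of each $\mW_\ell$ and the index $i$ of each $\vb_\ell$, identically across channels. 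These two operations commute, so the linear part is $S$-equivariant. Second, the bias term places the same vector $\vu$ at every entry, and this constant tensor is fixed by every index permutation, so it is an $S$-invariant element of $\WS^{k'}$. An affine map whose linear part is equivariant and whose offset is invariant is equivariant. In NP--NFN terms, this layer is just the ``diagonal'' term of the equivariant linear basis, $W\mapsto$ the entrywise channel mixing of the same entry, together with a constant bias. This term is present in the NP--NFN parameterization of \citet{Zhou2023NFN}.

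Finally, I would compose these layers with the nonlinearities as in the proof of Lemma~\ref{lem:app:pointwise_affine_mlp}, with the $\PE$ preprocessing composed at the front as described above. I do not expect a real obstacle. The only point needing care is the bookkeeping around $\PE$ and the channel dimension $c_0=c_{\mathrm{in}}+2|\mathcal{T}_{\mathrm{PE}}|$, and confirming that the pointwise channel-mixing map is genuinely one of the admissible NP-equivariant layers, which was argued above via commutation of channel mixing and index permutation.
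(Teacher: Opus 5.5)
Your proposal is correct and follows the paper's proof: both show that the pointwise affine map $L_{\mA,\vu}$ is an $S$-equivariant NP--NFN affine layer, because channel mixing commutes with neuron-index permutations and the constant offset is invariant, and then invoke Lemma~\ref{lem:app:pointwise_affine_mlp}. Your extra step of zeroing out the appended $\PE$ channels in the first layer is a harmless refinement that the paper leaves implicit.
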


\begin{proof}
    For any affine map $A : \R^k \to \R^{k'}$ with $A(\vx) = \vx \mA + \vu$, the NP--NFN affine layer $L_{\mA,\vu} : \WS^k \to \WS^{k'}$ defined by
    \begin{equation}
    (\mW'_\ell)
        _{i,j,:} = (\mW_\ell)_{i,j,:} \mA + \vu,
        \qquad
        (\vb'_\ell)_{i,:} = (\vb_\ell)_{i,:} \mA + \vu,
    \end{equation}
    acts identically on every feature channel vector and is $S$-equivariant (where $S$ is the full neuron-permutation group), since it does not depend on neuron indices. Hence, $L_{\mA,\vu}$ can be realized as an NP--NFN affine layer. The result follows from Lemma~\ref{lem:app:pointwise_affine_mlp}.
\end{proof}

\subsection{Proof of main theorem}
\label{sec:app:proof_of_main_theorem}

\begin{proof}[Proof of Theorem~\ref{thm:app:architecture_equivalence}]
    Let $K \subset \WS^{c_{\mathrm{in}}}$ be an arbitrary compact set.
    We establish the theorem via the following cycle of expressive containments (Definition~\ref{def:app:expressive_containment}):
    \begin{align}
        \mathrm{DWSNets}_{\mathrm{eq}}^{\arch} &\preceq_K \mathrm{NP\text{-}NFN{+}PE}_{\mathrm{eq}}^{\arch} && \text{(Prop.~\ref{prop:app:npnfn_to_dws})} \\
        \mathrm{NP\text{-}NFN{+}PE}_{\mathrm{eq}}^{\arch} &\preceq_K \mathrm{GMN}_{\mathrm{eq}}^{\arch} && \text{(Prop.~\ref{prop:app:gmn_to_npnfn})} \\
        \mathrm{GMN}_{\mathrm{eq}}^{\arch} &\preceq_K \mathrm{NG}_{\mathrm{eq}}^{\arch} && \text{(Prop.~\ref{prop:app:ng_to_gmn})} \\
        \mathrm{NG}_{\mathrm{eq}}^{\arch} &\preceq_K \mathrm{DWSNets}_{\mathrm{eq}}^{\arch} && \text{(Prop.~\ref{prop:app:dws_to_ng})}
    \end{align}
    Since $\preceq_K$ is transitive (remark \ref{rem:app:transitivity}), closing the cycle implies that all four model classes have equivalent expressive power with respect to $K$ (Definition~\ref{def:app:expressive_equivalence}), i.e., $\mathrm{DWSNets}_{\mathrm{eq}}^{\arch} \simeq_K \mathrm{NP\text{-}NFN{+}PE}_{\mathrm{eq}}^{\arch} \simeq_K \mathrm{GMN}_{\mathrm{eq}}^{\arch} \simeq_K \mathrm{NG}_{\mathrm{eq}}^{\arch}$.
    The equivalence $\mathrm{DWSNets}_{\mathrm{eq}}^{\arch} \simeq_K \mathrm{HNP\text{-}NFN}_{\mathrm{eq}}^{\arch}$ follows from Remark~\ref{rem:app:hnp_dws_equivalence}.
    Since $K$ was arbitrary, the result holds for all compact sets $K \subseteq \WS^{c_{\mathrm{in}}}$.
\end{proof}

\subsubsection{DWSNets and NP--NFN+PE}

We begin the cycle by showing that DWS networks can be approximated by NP--NFN+PE networks. The key insight is that NP--NFN+PE networks, which use positional encodings to identify neurons in the first and last layers, can simulate the hidden-neuron permutation equivariant operations of DWS networks. This establishes the first link in our cycle of expressive containments.

\begin{proposition}
    \label{prop:app:npnfn_to_dws}
    Let $K \subset \WS^{c_{\mathrm{in}}}$ be a compact set. Then $\mathrm{DWSNets}_{\mathrm{eq}}^{\arch} \preceq_K \mathrm{NP\text{-}NFN{+}PE}_{\mathrm{eq}}^{\arch}$.
\end{proposition}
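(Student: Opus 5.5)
Since a DWS network is a composition $L_n\circ\sigma\circ\cdots\circ\sigma\circ L_1$ of $G$-equivariant affine layers and pointwise nonlinearities, we reduce via Lemma~\ref{lem:app:composition_approximation} to a layer-wise statement. We show that for every $L\in\mathrm{Aff}_G(\WS^{c}_\arch,\WS^{c'}_\arch)$ there is an NP--NFN+PE network that \emph{exactly} realizes $L$, provided its input carries the positional-encoding channels. The main device is a channel-carrying convention. Every intermediate NP--NFN feature map keeps, alongside the $c_i$ ``content'' channels, the $2|\mathcal{T}_{\mathrm{PE}}|$ one-hot type channels produced by $\PE$. These channels can be propagated through all layers by the identity pointwise affine map of Corollary~\ref{cor:app:npnfn_pointwise_mlp}, applied on those channels only. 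They survive the nonlinearity, either because $\sigma$ is ReLU and one-hot entries are nonnegative, or, for general $\sigma$, by re-encoding them through a pointwise MLP that approximates the identity on the finite set $\{0,1\}$. Exact realization of each layer, together with continuity, then gives $\preceq_K$ by composing layers.

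To realize a single $G$-equivariant affine map, we use the characterization of the bases of $\mathrm{Hom}_G$ (\cite{Navon2023,Zhou2023NFN}). Each block of a $G$-equivariant linear map, for example $\gW_\ell\to\gW_{\ell'}$, is a linear combination of ``parameter-sharing'' operators. These are indexed by equality patterns among the indices, where input and output layer indices (layers $0$ and $L$) are \emph{not} permuted and hence may appear as free, individually addressed indices. An $S$-equivariant layer has the same structure, except that every index, including those of layers $0$ and $L$, is tied only through sums and diagonals. The key step is therefore to show that every $G$-basis operator can be written as an $S$-equivariant operator acting on the PE-augmented input, followed by a pointwise multiplication by a one-hot channel. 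Concretely, consider an operator that reads entry $(\mW_1)_{i,j}$ at a fixed input index $j=j_0$ and broadcasts it. We first use a pointwise step to form the gated feature $(\mW_1)_{i,j,:}\cdot(\ve_{\texttt{in}_j})_{j_0}$. This is bilinear, so it is not affine; we approximate the product with a pointwise MLP on compacts, or realize it exactly with ReLU as $\mathrm{ReLU}(x+M e-M)-\mathrm{ReLU}(-x+Me-M)$ for a large bound $M$ on the compact image. The $S$-equivariant sum over $j$ then isolates the $j_0$ column. On the output side, writing into a specific output coordinate $(\mW_L)_{i_0,:}$ is handled the same way: we broadcast with an $S$-equivariant map and gate by the target's $\ve_{\texttt{out}_i}$ channel. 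Summing finitely many such gadgets, with bias terms handled identically, yields the layer $L$.

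The main obstacle is the bookkeeping in this step. We must check that every $G$-basis element, as enumerated over the roughly $O(L^2)$ block pairs in \cite{Navon2023}, factors as an (input-gating, $S$-equivariant linear, output-gating) triple. This includes the cross-layer blocks that involve both input and output layers simultaneously, which need gating by both an $\ve_{\texttt{in}}$ and an $\ve_{\texttt{out}}$ channel. For those blocks I would gate in two successive stages, relying on Lemma~\ref{lem:app:composition_approximation} to absorb the approximation errors of the gating products on the compact images. The gating relies on the fact that the PE channels are constant, so the gating products only need to be accurate on a compact set.
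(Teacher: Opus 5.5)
Your proposal is correct and follows essentially the same route as the paper. Both reduce to single $G$-equivariant affine layers via Lemma~\ref{lem:app:composition_approximation}, note that interior blocks are already $S$-equivariant, and realize the boundary blocks with the one-hot $\PE$ channels, pointwise MLPs implementing the content-times-one-hot products, and $S$-equivariant sum-and-broadcast layers. The paper works out the $\gB_L\to\gB_L$ block by encoding the coefficient slices $\mathrm{vec}(\mA_{i,:,:,:})$ linearly from $\ve_{\texttt{out}_i}$ rather than gating one basis element at a time, which is the same mechanism packaged differently. Your treatment of carrying the $\PE$ channels through every layer, and of blocks that touch both boundary layers, spells out details the paper leaves implicit or calls ``analogous.''
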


\begin{proof}
    By Lemma~\ref{lem:app:composition_approximation} that shows that layer-wise approximation implies network approximation,
    it suffices to show that any DWS affine layer $L \in \mathrm{Aff}_G\!\left(\WS^{c_{\mathrm{in}}}, \WS^{c_{\mathrm{out}}}\right)$ can be approximated by an NP--NFN network applied after the positional encoding $\PE$.

    \smallskip
    Recall that the weight space with $c_{\mathrm{in}}$ feature channels is:
    \begin{equation}
        \WS^{c_{\mathrm{in}}}
        \coloneqq
        \Bigl(\bigoplus_{\ell=1}^L \gW^{c_{\mathrm{in}}}_\ell\Bigr)
        \oplus
        \Bigl(\bigoplus_{\ell=1}^L \gB^{c_{\mathrm{in}}}_\ell\Bigr)
        \cong
        \bigoplus_{\ell=1}^L
        \bigl(
        \R^{d_\ell \times d_{\ell-1} \times c_{\mathrm{in}}}
        \oplus
        \R^{d_\ell \times c_{\mathrm{in}}}
        \bigr).
    \end{equation}

    By the DWS basis blocks characterization~\cite{Navon2023}, any DWS affine layer decomposes as a linear combination of basis maps of four types: $\gW^{c_{\mathrm{in}}}_\ell \to \gW^{c_{\mathrm{out}}}_{\ell'}$, $\gB^{c_{\mathrm{in}}}_\ell \to \gB^{c_{\mathrm{out}}}_{\ell'}$, $\gW^{c_{\mathrm{in}}}_\ell \to \gB^{c_{\mathrm{out}}}_{\ell'}$, and $\gB^{c_{\mathrm{in}}}_\ell \to \gW^{c_{\mathrm{out}}}_{\ell'}$ (for $0 \leq \ell, \ell' \leq L$), plus a bias term.

    For \emph{interior layers} where $0 < \ell, \ell' < L$, the hidden-neuron permutation group $G = S_{d_1} \times \cdots \times S_{d_{L-1}}$ coincides with the restrictions of the full neuron-permutation group $S \coloneqq S_{d_0} \times G \times S_{d_L}$ to these layers. Hence DWS affine layers acting only on interior components are automatically $S$-equivariant, and thus are NP--NFN affine layers.

    The non-trivial cases are \emph{boundary basis blocks} involving layers $\ell \in \{0, L\}$ or $\ell' \in \{0, L\}$, where $G$-equivariance differs from $S$-equivariance. For these, we exploit the positional encoding features appended by $\PE$ (Definition~\ref{def:app:npnfn_pe}).

    We demonstrate the construction for a representative boundary case: the bias-to-bias map $\gB^{c_{\mathrm{in}}}_L \to \gB^{c_{\mathrm{out}}}_L$. The remaining boundary cases follow by analogous arguments.

    \medskip
    \noindent\textbf{Representative boundary case: $G$-equivariant bias-to-bias map at the output layer.}
    By the characterization of~\cite{Navon2023} (Table~6), the most general $G$-equivariant linear map $T_{\mA} : \gB^{c_{\mathrm{in}}}_L \to \gB^{c_{\mathrm{out}}}_L$ is given by:
    \begin{equation}
        \label{eq:navon-lin-bias}
        (T_{\mA}(\vb_L))_{j,k} = \sum_{i=1}^{d_L} \sum_{m=1}^{c_{\mathrm{in}}} \mA_{i,m,j,k} \cdot (\vb_L)_{i,m},
    \end{equation}
    for an arbitrary tensor $\mA \in \R^{d_L \times c_{\mathrm{in}} \times d_L \times c_{\mathrm{out}}}$. The tensor $\mA$ mixes both neuron indices and feature channels.

    The positional encoding $\PE$ (Definition~\ref{def:app:npnfn_pe}) augments the output-layer bias with unique identifiers and zero padding:
    \begin{equation}
        \PE(\vb_L) = \vb_L^{(0)}, \qquad (\vb_L^{(0)})_{i,:} \coloneqq \bigl[(\vb_L)_{i,:},\; \mathbf{0}_{|\mathcal{T}_{\mathrm{PE}}|},\; \ve_{\texttt{out}_i}\bigr] \in \R^{c_{\mathrm{in}} + 2|\mathcal{T}_{\mathrm{PE}}|},
    \end{equation}
    where $\ve_{\texttt{out}_i} \in \R^{|\mathcal{T}_{\mathrm{PE}}|}$ is the one-hot encoding of output neuron $i$, and the zero padding ensures matching feature dimensions with weights.

    Throughout this construction, we will define the maps only on $\gB_L$ and the weights at all layers, biases at layers $\ell < L$ are set to zero by the NP--NFN layers, as we are only approximating the $\gB_L \to \gB_L$ block.

    \medskip
    Define the layer $L_1 : \WS^{c_{\mathrm{in}} + 2|\mathcal{T}_{\mathrm{PE}}|}|_{\gB_L} \to \WS^{c_{\mathrm{in}} + d_L \cdot c_{\mathrm{out}} + |\mathcal{T}_{\mathrm{PE}}|}|_{\gB_L}$ by:
    \begin{equation}
    (\vb_L^{(1)})
        _{i,:} \coloneqq \bigl[(\vb_L)_{i,:},\; \mathrm{vec}(\mA_{i,:,:,:}),\; \ve_{\texttt{out}_i}\bigr] \in \R^{c_{\mathrm{in}} + d_L \cdot c_{\mathrm{out}} + |\mathcal{T}_{\mathrm{PE}}|}
    \end{equation}
    where $\mA_{i,:,:,:} \in \R^{c_{\mathrm{in}} \times d_L \times c_{\mathrm{out}}}$ is the slice of $\mA$ corresponding to input neuron $i$, and $\mathrm{vec}(\cdot)$ denotes flattening to a vector of dimension $c_{\mathrm{in}} \cdot d_L \cdot c_{\mathrm{out}}$. Note that the zero padding from the positional encoding is dropped as it serves no purpose in this construction.

    We verify $S$-equivariance of each component:
    \begin{enumerate}[label=(\roman*), nosep]
        \item The term $(\vb_L)_{i,:}$ is the identity map, which is $S$-equivariant.

        \item The slice $\mA_{i,:,:,:}$ is extracted using the one-hot $\ve_{\texttt{out}_i}$. Formally, $\mathrm{vec}(\mA_{i,:,:,:}) = (\ve_{\texttt{out}_i})^\top \mA^{\mathrm{reshape}}$ where $\mA^{\mathrm{reshape}} \in \R^{d_L \times (c_{\mathrm{in}} \cdot d_L \cdot c_{\mathrm{out}})}$ is a reshaped view of $\mA$. Since $\mA$ is a constant (affine bias), this is an $S$-equivariant affine map.

        \item The one-hot $\ve_{\texttt{out}_i}$ is preserved from the positional encoding, an $S$-equivariant operation.
    \end{enumerate}

    Since $L_1$ is a concatenation of $S$-equivariant linear maps with affine bias terms, we have $L_1 \in \mathrm{Aff}_S(\WS^{c_{\mathrm{in}}+2|\mathcal{T}_{\mathrm{PE}}|}|_{\gB_L}, \WS^{c_{\mathrm{in}} + d_L \cdot c_{\mathrm{out}} + |\mathcal{T}_{\mathrm{PE}}|}|_{\gB_L})$, i.e., $L_1$ is an NP--NFN affine layer.

    \medskip
    Define the continuous function $\psi : \R^{c_{\mathrm{in}} + d_L \cdot c_{\mathrm{out}} + |\mathcal{T}_{\mathrm{PE}}|}  \to \R^{d_L \cdot c_{\mathrm{out}} + |\mathcal{T}_{\mathrm{PE}}|}$ by:
    \begin{equation}
        \psi\bigl((\vb_L)_{i,:}, \mathrm{vec}(\mA_{i,:,:,:}), \ve_{\texttt{out}_i}\bigr) = \Bigl[\ve_{\texttt{out}_i},\; \mathrm{vec}\Bigl(\sum_{m=1}^{c_{\mathrm{in}}} (\vb_L)_{i,m} \cdot \mA_{i,m,:,:}\Bigr)\Bigr],
    \end{equation}

    Let $K^{(1)} \coloneqq L_1(\PE(K|_{\gB_L})) \subset \R^{d_L \times (c_{\mathrm{in}} + d_L \cdot c_{\mathrm{out}} + |\mathcal{T}_{\mathrm{PE}}|)}$ be the compact set of intermediate representations. Since $K$ is compact, $\PE$ and $L_1$ are continuous, and the image of a compact set under a continuous map is compact, $K^{(1)}$ is compact.

    By the universal approximation theorem for MLPs~\cite{Cybenko1989, Hornik1991}, for any $\epsilon > 0$, there exists an MLP $\wf : \R^{c_{\mathrm{in}} + d_L \cdot c_{\mathrm{out}} + |\mathcal{T}_{\mathrm{PE}}|}  \to \R^{d_L \cdot c_{\mathrm{out}} + |\mathcal{T}_{\mathrm{PE}}|}$ such that:
    \begin{equation}
        \sup_{\vx \in K^{(1)}_{\mathrm{flat}}} \|\wf(\vx) - \psi(\vx)\| < \epsilon/4,
    \end{equation}
    where $K^{(1)}_{\mathrm{flat}} \subset \R^{c_{\mathrm{in}} + d_L \cdot c_{\mathrm{out}} + |\mathcal{T}_{\mathrm{PE}}|}$ is the compact set obtained by collecting all row vectors of matrices in $K^{(1)}$.

    By Corollary~\ref{cor:app:npnfn_pointwise_mlp}, NP--NFN networks can realize the pointwise application of the MLP $\wf$ to feature channel vectors. Define $M_2 : \WS^{c_{\mathrm{in}} + d_L \cdot c_{\mathrm{out}} + |\mathcal{T}_{\mathrm{PE}}|}|_{\gB_L} \to \WS^{d_L \cdot c_{\mathrm{out}} + |\mathcal{T}_{\mathrm{PE}}|}|_{\gB_L}$ by applying $\wf$ pointwise:
    \begin{equation}
    (\vb_L^{(2)})
        _{i,:} \coloneqq \wf\bigl((\vb_L^{(1)})_{i,:}\bigr) \approx \Bigl[\ve_{\texttt{out}_i},\; \mathrm{vec}\Bigl(\sum_{m=1}^{c_{\mathrm{in}}} (\vb_L)_{i,m} \cdot \mA_{i,m,:,:}\Bigr)\Bigr] \in \R^{d_L \cdot c_{\mathrm{out}} + |\mathcal{T}_{\mathrm{PE}}|}.
    \end{equation}

    $M_2$ is $S$-equivariant because it applies the same MLP $\wf$ to each neuron's feature vector independently, and can be realized by NP--NFN networks via Corollary~\ref{cor:app:npnfn_pointwise_mlp}.

    \medskip
    Define the layer $L_3 : \WS^{d_L \cdot c_{\mathrm{out}} + |\mathcal{T}_{\mathrm{PE}}|}|_{\gB_L} \to \WS^{d_L \cdot c_{\mathrm{out}} + |\mathcal{T}_{\mathrm{PE}}|}|_{\gB_L}$ by:
    \begin{equation}
    (\vb_L^{(3)})
        _{j,:} \coloneqq \Bigl[\ve_{\texttt{out}_j},\; \sum_{i=1}^{d_L} (\vb_L^{(2)})_{i, |\mathcal{T}_{\mathrm{PE}}|+1:}\Bigr] \in \R^{d_L \cdot c_{\mathrm{out}} + |\mathcal{T}_{\mathrm{PE}}|}.
    \end{equation}

    This layer consists of two operations, the first $|\mathcal{T}_{\mathrm{PE}}|$ channels preserve $\ve_{\texttt{out}_j}$ and the last $d_L \cdot c_{\mathrm{out}}$ channels compute the sum over all last layer neurons and broadcast to each last layer neuron. This is an $S$-equivariant operation.

    Both operations are $S$-equivariant linear maps, so $L_3 \in \mathrm{Hom}_S(\WS^{d_L \cdot c_{\mathrm{out}} + |\mathcal{T}_{\mathrm{PE}}|}|_{\gB_L}, \WS^{d_L \cdot c_{\mathrm{out}} + |\mathcal{T}_{\mathrm{PE}}|}|_{\gB_L})$, i.e., $L_3$ is an NP--NFN linear layer.

    The intermediate representation satisfies:
    \begin{equation}
    (\vb_L^{(3)})
        _{j,:} \approx \Bigl[\ve_{\texttt{out}_j},\; \mathrm{vec}\Bigl(\sum_{i=1}^{d_L} \sum_{m=1}^{c_{\mathrm{in}}} (\vb_L)_{i,m} \cdot \mA_{i,m,:,:}\Bigr)\Bigr] = \bigl[\ve_{\texttt{out}_j},\; \mathrm{vec}(T_{\mA}(\vb_L))\bigr] \in \R^{d_L \cdot c_{\mathrm{out}} + |\mathcal{T}_{\mathrm{PE}}|}.
    \end{equation}

    \medskip
    Define the continuous function $\phi : \R^{d_L \cdot c_{\mathrm{out}} + |\mathcal{T}_{\mathrm{PE}}|} \to \R^{c_{\mathrm{out}}}$ by:
    \begin{equation}
        \phi\bigl(\ve_{\texttt{out}_j}, \mathrm{vec}(T_{\mA}(\vb_L))\bigr) = (T_{\mA}(\vb_L))_{j,:} \in \R^{c_{\mathrm{out}}},
    \end{equation}
    which extracts the $j$-th row of the output matrix using the one-hot identifier $\ve_{\texttt{out}_j}$.

    Let $K^{(3)} \coloneqq L_3(M_2(L_1(\PE(K|_{\gB_L}))))$ be the compact set of intermediate representations at this stage. Again, $K^{(3)}$ is compact as the image of the compact set $K$ under continuous maps.

    By the universal approximation theorem for MLPs~\cite{Cybenko1989, Hornik1991}, for any $\epsilon > 0$, there exists an MLP $\rho : \R^{d_L \cdot c_{\mathrm{out}} + |\mathcal{T}_{\mathrm{PE}}|} \to \R^{c_{\mathrm{out}}}$ such that:
    \begin{equation}
        \sup_{\vx \in K^{(3)}_{\mathrm{flat}}} \|\rho(\vx) - \phi(\vx)\| < \epsilon/4,
    \end{equation}
    where $K^{(3)}_{\mathrm{flat}}$ is the compact set of row vectors from $K^{(3)}$.

    By Corollary~\ref{cor:app:npnfn_pointwise_mlp}, NP--NFN networks can realize the pointwise application of the MLP $\rho$ to feature channel vectors. Define $M_4 : \WS^{d_L \cdot c_{\mathrm{out}} + |\mathcal{T}_{\mathrm{PE}}|}|_{\gB_L} \to \WS^{c_{\mathrm{out}}}|_{\gB_L}$ by:
    \begin{equation}
    (\vb'_L)
        _{j,:} \coloneqq \rho\bigl((\vb_L^{(3)})_{j,:}\bigr) \approx (T_{\mA}(\vb_L))_{j,:} \in \R^{c_{\mathrm{out}}}.
    \end{equation}

    As in step (b), $M_4$ is $S$-equivariant because it applies the same MLP $\rho$ pointwise, and can be realized by NP--NFN networks via Corollary~\ref{cor:app:npnfn_pointwise_mlp}.

    \medskip
    The uniform approximation of the composition $M_4 \circ L_3 \circ M_2 \circ L_1$ on $K|_{\gB_L}$ follows from Lemma~\ref{lem:app:composition_approximation}.
    Concretely, since $L_1$, $L_3$ are exact NP--NFN layers and $M_2$, $M_4$ approximate continuous functions on the compact sets $K^{(1)}_{\mathrm{flat}}$ and $K^{(3)}_{\mathrm{flat}}$ respectively, we obtain:
    \begin{equation}
        \sup_{\vv \in K} \|M_4 \circ L_3 \circ M_2 \circ L_1(\PE(\vv)) - T_{\mA}(\vv|_{\gB_L})\| < \epsilon.
    \end{equation}
    The remaining boundary basis blocks are handled analogously.
\end{proof}

\subsubsection{NP--NFN+PE and GMN}

Next, we show that NP--NFN+PE networks can be approximated by GMN networks. This step connects the functional network perspective to the graph-based message passing framework. The proof leverages the fact that GMN's graph structure can encode the positional information used by NP--NFN+PE, and that GMN's message passing operations can simulate the affine equivariant layers of NP--NFN+PE.

\begin{proposition}
    \label{prop:app:gmn_to_npnfn}
    Let $K \subset \WS^{c_{\mathrm{in}}}$ be a compact set. Then $\mathrm{NP\text{-}NFN{+}PE}_{\mathrm{eq}}^{\arch} \preceq_K \mathrm{GMN}_{\mathrm{eq}}^{\arch}$.
\end{proposition}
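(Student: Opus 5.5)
By Lemma~\ref{lem:app:composition_approximation}, it suffices to approximate each building block of an NP--NFN+PE network by GMN networks on compact sets. These blocks are the positional encoding $\PE$, each $S$-equivariant affine layer $L_i\in\mathrm{Aff}_S$, and the pointwise nonlinearity $\sigma$. Throughout, we represent an element $\vv\in\WS^c$ inside a GMN as edge features. The entry $(\mW_\ell)_{i,j,:}$ is stored on both directed edges between $\nu^{(\ell-1)}_j$ and $\nu^{(\ell)}_i$, and the entry $(\vb_\ell)_{i,:}$ is stored on the bias edges between $\beta^{(\ell)}$ and $\nu^{(\ell)}_i$. Node and global features serve as scratch space.

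\textbf{Encoding and pointwise parts.} We first realize $\PE$ with one GMN edge update $\phi_e(\vh_i,\vh_j,\ve_{ij},\vu)$. The node features contain $\mathrm{ntype}$, which gives the one-hot identifiers $\ve_{\texttt{in}_j}$ and $\ve_{\texttt{out}_i}$ of the endpoints. The edge features contain $\mathrm{layer}$, $\mathrm{dir}$ and $\mathrm{ptype}$. From these, an MLP can approximate the continuous (indeed piecewise-affine on the finite set of one-hot values) map that appends the correct source and target type encodings, and $\sigma$ is likewise a pointwise edge update. Since all discrete attributes range over finite sets and the parameter part ranges over a compact set, universal approximation of MLPs applies.

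\textbf{Affine layers.} Next we use the known basis of $S$-equivariant linear maps of \citet{Zhou2023NFN}. Every NP--NFN linear layer is a sum of a few kinds of term. There are pointwise terms acting on a single entry. There are row and column sums, such as $\sum_j(\mW_\ell)_{i,j}$ and $\sum_i(\mW_\ell)_{i,j}$, broadcast back, possibly to an adjacent layer's weights or to the biases. There are full sums $\sum_{i,j}(\mW_\ell)_{i,j}$ and $\sum_i(\vb_\ell)_i$, broadcast everywhere. Finally there is a constant bias. Each kind maps onto a GMN primitive:
\begin{enumerate}
\item Row and column sums of $\mW_\ell$ at neuron $\nu^{(\ell)}_i$, resp.\ $\nu^{(\ell-1)}_j$, are aggregations over incident edges. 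The message $\phi_m$ uses $\mathrm{layer}$ and $\mathrm{dir}$ to mask all edges except those of layer $\ell$ in the correct orientation, and passes $\ve_{ij}$ through linearly. The node update $\phi_h$ then stores the result in dedicated channels.
\item Bias entries reach $\nu^{(\ell)}_i$ through bias edges in the same way.
\item Full-layer sums are computed by the global update $\phi_u$ from $\sum_e\ve_e$ after masking by layer and parameter type. Alternatively, they can be obtained by summing neuron features, where node masking uses $\mathrm{nlayer}$.
\item A final edge update $\phi_e(\vh_i,\vh_j,\ve_{ij},\vu)$ reads the aggregated quantities at both endpoints and in $\vu$ and forms the required linear combination. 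The coefficients may depend on the edge's layer and type, read from its one-hot attributes.
\end{enumerate}
Because every quantity is a finite sum of continuous terms on a compact set, and the needed selections are multiplications by one-hot indicators, each $\phi$ is a continuous function on a compact domain. It can therefore be approximated by an MLP, and Lemma~\ref{lem:app:composition_approximation} controls the accumulated error.

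\textbf{Main obstacle.} The delicate part is bookkeeping the complete NP--NFN basis, in particular the cross-layer terms. Examples are $\mW_{\ell+1}$ column sums feeding into $\mW_\ell$ row-indexed outputs, and weight-to-bias maps at adjacent layers. Each such term must be checked to depend only on statistics stored at a single neuron shared by the relevant weights, or on global statistics. The shared neuron is $\nu^{(\ell)}_i$, which is incident to both $\mW_\ell$ and $\mW_{\ell+1}$ edges. I would verify this term by term, using the fact that the NP--NFN layer never mixes entries whose only common index is not a shared neuron. I would also keep channels of distinct quantities separated via layer masks to avoid collisions, since one node aggregates edges from two layers and from its bias edge.
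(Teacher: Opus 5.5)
Your proposal matches the paper's proof. $\PE$ is realized by a single GMN edge update that reads the endpoints' $\mathrm{ntype}$ together with $\mathrm{dir}$ and $\mathrm{ptype}$, the NP--NFN layers are simulated by GMN layers, and Lemma~\ref{lem:app:composition_approximation} assembles the pieces; the only difference is that the paper cites \citet[Proposition~10]{Lim2024GMN} for the layer simulation, whereas your term-by-term sketch (NP--NFN basis terms computed from endpoint aggregates and global per-layer sums) essentially reproduces the content of that cited result.
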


\begin{proof}
    By~\citet[Proposition~10]{Lim2024GMN}, GMN on MLP parameter graphs can express any NP--NFN layer.
    It therefore suffices to show that the positional encoding $\PE$ can be realized by a single GMN layer. Then we can use lemma \ref{lem:app:composition_approximation} to conclude that $\mathrm{NP\text{-}NFN{+}PE}_{\mathrm{eq}}^{\arch} \preceq_K \mathrm{GMN}_{\mathrm{eq}}^{\arch}$.

    \medskip
    Recall from Definition~\ref{def:app:npnfn_pe} that $\PE : \WS^{c} \to \WS^{c + 2|\mathcal{T}_{\mathrm{PE}}|}$ appends neuron type identifiers to each weight and bias entry, where $\mathcal{T}_{\mathrm{PE}} = \{\texttt{in}_1,\dots,\texttt{in}_{d_0}\} \cup \{\texttt{out}_1,\dots,\texttt{out}_{d_L}\} \cup \{\texttt{hidden}\}$. We will show this encoding can be realized by a single GMN layer using the ntype feature of GMN nodes.

    By Definition~\ref{def:app:gmn}, the GMN node features are initialized as:
    \[
        \vh_v \coloneqq \bigl[\mathrm{nlayer}(v),\; \mathrm{ntype}(v)\bigr] \in \R^{2L + d_0 + d_L + 2},
    \]
    where $\mathrm{ntype}(v)$ encodes unique identities for input nodes ($\ve_{\texttt{in}_j}$ for $\nu^{(0)}_j$), output nodes ($\ve_{\texttt{out}_i}$ for $\nu^{(L)}_i$), hidden nodes (shared $\ve_{\texttt{hidden}}$), and bias nodes ($\ve_{\texttt{bias}_\ell}$).

    The GMN edge features are initialized as:
    \[
        \ve_e \coloneqq \bigl[p(e),\; \mathrm{layer}(e),\; \mathrm{dir}(e),\; \mathrm{ptype}(e)\bigr] \in \R^{c_{\mathrm{in}} + L + 4},
    \]
    where $p(e) \in \R^{c_{\mathrm{in}}}$ is the parameter value (weight or bias), $\mathrm{layer}(e)$ is the one-hot layer encoding, $\mathrm{dir}(e)$ indicates forward/backward direction, and $\mathrm{ptype}(e)$ distinguishes weight edges from bias edges.

    The GMN edge update~\eqref{eq:gmn-edge} allows propagating endpoint node features to edges:
    \[
        \ve'_{uv} = \phi_e\bigl(\vh_u, \vh_v, \ve_{uv}, \vu\bigr).
    \]

    Define a GMN layer $L_{\PE} = (\phi_m, \phi_h, \phi_e, \phi_u)$ with $\phi_m, \phi_h, \phi_u$ as identities and
    \[
        \phi_e : (\vh_u, \vh_v, \ve_{uv}, \vu) \mapsto \bigl[p(e),\; \mathrm{extract}_{\mathrm{src}}(\vh_u, \vh_v, \ve_{uv}),\; \mathrm{extract}_{\mathrm{tgt}}(\vh_u, \vh_v, \ve_{uv}),\; \mathrm{layer}(e),\; \mathrm{dir}(e),\; \mathrm{ptype}(e)\bigr],
    \]
    \begin{align*}
        \mathrm{extract}_{\mathrm{src}}(\vh_u, \vh_v, \ve_{uv}) &\coloneqq \mathrm{ptype}(e)_1 \cdot \bigl(\mathrm{dir}(e)_1 \cdot \Pi_{\mathrm{GMN} \to \mathrm{PE}}(\mathrm{ntype}(u)) + \mathrm{dir}(e)_2 \cdot \Pi_{\mathrm{GMN} \to \mathrm{PE}}(\mathrm{ntype}(v))\bigr), \\
        \mathrm{extract}_{\mathrm{tgt}}(\vh_u, \vh_v, \ve_{uv}) &\coloneqq \mathrm{dir}_1 \cdot \Pi_{\mathrm{GMN} \to \mathrm{PE}}(\mathrm{ntype}(v)) + \mathrm{dir}_2 \cdot \Pi_{\mathrm{GMN} \to \mathrm{PE}}(\mathrm{ntype}(u)),
    \end{align*}
    where $\mathrm{dir}(e)_1 = 1$ for forward edges and $\mathrm{dir}(e)_2 = 1$ for backward edges, and $\mathrm{ptype}(e)_1 = 1$ for weight edges (outputting the source neuron type) and $\mathrm{ptype}(e)_1 = 0$ for bias edges (outputting zeros).

    The key step is the projection map $\Pi_{\mathrm{GMN} \to \mathrm{PE}} : \R^{|\mathcal{T}_{\mathrm{GMN}}|} \to \R^{|\mathcal{T}_{\mathrm{PE}}|}$ that maps GMN node types to PE neuron types:
    \begin{equation}
        \Pi_{\mathrm{GMN} \to \mathrm{PE}}(\mathrm{ntype}(v)) \coloneqq
        \begin{cases}
            \ve^{(|\mathcal{T}_{\mathrm{PE}}|)}_{\texttt{in}_j} & \text{if } \mathrm{ntype}(v) = \ve^{(|\mathcal{T}_{\mathrm{GMN}}|)}_{\texttt{in}_j}, \\
            \ve^{(|\mathcal{T}_{\mathrm{PE}}|)}_{\texttt{out}_i} & \text{if } \mathrm{ntype}(v) = \ve^{(|\mathcal{T}_{\mathrm{GMN}}|)}_{\texttt{out}_i}, \\
            \ve^{(|\mathcal{T}_{\mathrm{PE}}|)}_{\texttt{hidden}} & \text{if } \mathrm{ntype}(v) = \ve^{(|\mathcal{T}_{\mathrm{GMN}}|)}_{\texttt{hidden}}, \\
            \mathbf{0}_{|\mathcal{T}_{\mathrm{PE}}|} & \text{if } \mathrm{ntype}(v) = \ve^{(|\mathcal{T}_{\mathrm{GMN}}|)}_{\texttt{bias}_\ell} \text{ for any } \ell.
        \end{cases}
    \end{equation}
    Since GMN's $\mathcal{T}_{\mathrm{GMN}}$ contains all types in PE's $\mathcal{T}_{\mathrm{PE}}$ plus additional bias node types, this projection preserves the relevant neuron type information while discarding the bias node types (which are not used in PE). The projection $\Pi_{\mathrm{GMN} \to \mathrm{PE}}$ can be implemented as a linear map (matrix multiplication) and is therefore realizable by GMN's MLP components.

    Since $\Pi_{\mathrm{GMN} \to \mathrm{PE}}$ is a linear map and $\mathrm{extract}_{\mathrm{src}}$, $\mathrm{extract}_{\mathrm{tgt}}$ are coordinatewise polynomial functions of
    $\bigl(\vh_u,\vh_v,\ve_{uv}\bigr)$ (formed by linear projection followed by additions and products of coordinates with one-hot indicators),
    it follows that the map
    \[
        (\vh_u,\vh_v,\ve_{uv},\vu)\longmapsto
        \phi_e(\vh_u,\vh_v,\ve_{uv},\vu)
    \]
    is a polynomial map, hence continuous.

    Recall $K \subset \WS^{c_{\mathrm{in}}}$ is compact, and let
    \[
        \Gamma: \WS^{c_{\mathrm{in}}}\to \R^{d_h}\times \R^{d_h}\times \R^{d_e}\times \R^{d_u}
    \]
    denote the fixed continuous initialization map that sends $\vv\in \WS^{c_{\mathrm{in}}}$ to the corresponding tuple
    $(\vh_u,\vh_v,\ve_{uv},\vu)$ for an edge $(u,v)$. Then $\Gamma(K)$ is compact.

    By the universal approximation theorem, for every $\varepsilon>0$ there exists an MLP
    $\widehat{\phi}_e$ such that
    \[
        \sup_{z\in \Gamma(K)} \bigl\|\widehat{\phi}_e(z)-\phi_e(z)\bigr\|<\varepsilon.
    \]
    In particular, on inputs arising from $\vv\in K$, the edge-update MLP $\widehat{\phi}_e$ uniformly approximates the
    polynomial rule $\phi_e$.

    After applying $L_{\PE}$, both forward and backward edge features become:
    \[
        \ve'_e \approx
        \begin{cases}
            \bigl[\PE(\mW_\ell)_{i,j,:},\; \mathrm{layer}(e),\; \mathrm{dir}(e),\; \mathrm{ptype}(e)\bigr] & e=(\nu^{(\ell-1)}_i, \nu^{(\ell)}_j),\\
            \bigl[\PE(\vb_\ell)_{j,:},\; \mathrm{layer}(e),\; \mathrm{dir}(e),\; \mathrm{ptype}(e)\bigr] & e=(\beta^{(\ell)}_i, \nu^{(\ell)}_j),
        \end{cases}
    \]

    Thus, the MLP parameters augmented with the projected positional encodings (mapped from $\mathcal{T}_{\mathrm{GMN}}$ to $\mathcal{T}_{\mathrm{PE}}$ space) are now stored in the GMN edge features (on both forward and backward edges), which is precisely the input format for NP--NFN layers operating on $\WS^{c_{\mathrm{in}} + 2|\mathcal{T}_{\mathrm{PE}}|}$. Therefore we can continue the simulation same as ~\citet[Proposition~10]{Lim2024GMN} and finish the proof.
\end{proof}

\subsubsection{GMN and NG-GNN}

NG-GNN is a simplified variant of GMN that stores biases as node features rather than using separate bias nodes and does not allow global features. However, NG-GNN's simpler structure is sufficient to simulate GMN's more complex update mechanisms.

\begin{proposition}
    \label{prop:app:ng_to_gmn}
    Let $K \subset \WS^{c_{\mathrm{in}}}$ be a compact set. Then $\mathrm{GMN}_{\mathrm{eq}}^{\arch} \preceq_K \mathrm{NG}_{\mathrm{eq}}^{\arch}$.
\end{proposition}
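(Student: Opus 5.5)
By Lemma~\ref{lem:app:composition_approximation}, it suffices to simulate each GMN layer, and the GMN initialization, by a block of NG layers acting on an enlarged state. The plan is to encode the full GMN state $(\vh,\ve,\vu)$ inside the NG state $(\vh,\ve)$ and maintain this encoding as an invariant.

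\textbf{Encoding the GMN state.} The two graphs differ in two places: GMN has bias nodes $\beta^{(\ell)}$ and bias edges, and GMN has a global vector $\vu$.
\begin{itemize}
\item The feature of each GMN bias edge $(\beta^{(\ell)},\nu^{(\ell)}_i)$ is stored as extra channels of the NG neuron node $\nu^{(\ell)}_i$. At initialization this is already the case, since NG stores $(\vb_\ell)_{i,:}$ there.
\item The feature of the bias node $\beta^{(\ell)}$ is stored, replicated, as extra channels on every neuron node of layer $\ell$.
\item The global vector $\vu$ is stored, replicated, on every node and every edge.
\end{itemize}
The first NG layer writes the GMN initial features by pointwise MLPs: the constant one-hot $\mathrm{ptype}$ and bias-node types, $\vh_{\beta^{(\ell)}}$, and $\vu=\mathbf 0$. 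Everything it needs is available from the NG $\mathrm{nlayer}$ and $\mathrm{ntype}$ features.

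\textbf{Simulating one GMN layer.}
\begin{itemize}
\item \emph{Weight-edge updates and messages.} These use only endpoint node features, edge features and $\vu$, all of which are locally available in the encoding. So $\phi_e$ and $\phi_m$ on weight edges are realized directly by the NG edge and message MLPs.
\item \emph{Bias edges and bias-node messages.} The bias-edge update $\phi_e(\vh_{\beta},\vh_{\nu},\ve,\vu)$ and the messages along bias edges in both directions are computed inside the NG node update at $\nu^{(\ell)}_i$, since all their inputs are stored there.
\item \emph{Bias-node aggregation.} The GMN update of $\beta^{(\ell)}$ needs $\sum_i \vm_{\nu_i\to\beta}$, a sum over all neurons of layer $\ell$. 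NG has no direct path for this, so it must be simulated by aggregation through an adjacent layer:
\begin{enumerate}
\item Each neuron of layer $\ell$ sends its message to all neurons of layer $\ell\pm1$, tagged with the layer index through the $\mathrm{layer}/\mathrm{dir}$ edge attributes.
\item A neighbouring node then holds $\sum_i \vm_{\nu^{(\ell)}_i\to\beta}$, because layers are fully connected.
\item One more round, in which the neighbour's messages are divided by its in-degree $d_{\ell\pm1}$ (a known constant, selectable via one-hot layer features), broadcasts this sum back to every node of layer $\ell$.
\end{enumerate}
\item \emph{Global sums.} The same relay idea handles $\sum_v \vh_v$ and $\sum_e \ve_e$ in the update of $\vu$:
\begin{enumerate}
\item Node and edge contributions are first aggregated per layer.
\item Per-layer totals are then passed along the chain of layers $0,1,\dots,L$ over $O(L)$ rounds, accumulating into layer-indexed channel slots so that the totals do not mix. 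Each round is a sum over a complete bipartite pair normalized by known widths.
\item Finally the total is broadcast back to every node and edge.
\end{enumerate}
All intermediate quantities are continuous in the input on compact sets, so each exact map is approximated by NG MLPs via universal approximation, and errors are controlled by Lemma~\ref{lem:app:composition_approximation}.
\end{itemize}

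\textbf{Main obstacle.} The hardest step is simulating the global aggregation exactly, and doing it through linear sums only. The total $\sum_v \vh_v$ must be reconstructed without double counting. My first attempt:
\begin{itemize}
\item Dedicate disjoint channel blocks to each layer's partial sum.
\item Let node MLPs mask contributions by the one-hot $\mathrm{nlayer}$.
\item Divide by the known widths $d_{\ell\pm1}$ whenever a sum is broadcast through a complete bipartite block.
\end{itemize}
Because every relay is an exact affine operation up to these known constants, the reconstructed global vector equals the GMN one up to MLP approximation error.

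Edge cases need separate checks, such as $L=1$ with no hidden layer, or a layer with a single neighbour. For these I expect the same relay through the input or output layers to suffice.
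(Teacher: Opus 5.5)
Your proposal is correct and follows essentially the same route as the paper: store the GMN bias-edge, bias-node and global features as extra channels on NG neuron nodes (and edges), simulate weight-edge updates and messages directly, and obtain the bias-node aggregation by relaying through an adjacent layer (the paper uses layer $\ell-1$, sending backward messages and then forward ones). The global sums are likewise obtained by layer-by-layer forward/backward propagation followed by a broadcast, and your bookkeeping with layer-indexed channel slots and division by the known widths $d_{\ell\pm1}$ is, if anything, more explicit than the paper's sketch.
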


\begin{proof}
    To establish that $\mathrm{GMN}_{\mathrm{eq}}^{\arch} \preceq_K \mathrm{NG}_{\mathrm{eq}}^{\arch}$, we must show that any GMN layer can be simulated by a composition of NG layers.

    \medskip
    \noindent\textbf{Setup and notation.}
    We are given a single GMN layer $L^{\mathrm{GMN}} = (\phi^{\mathrm{GMN}}_m, \phi^{\mathrm{GMN}}_h, \phi^{\mathrm{GMN}}_e, \phi^{\mathrm{GMN}}_u)$ that updates the state $(\vh^{\mathrm{GMN}}, \ve^{\mathrm{GMN}}, \vu^{\mathrm{GMN}}) \mapsto (\vh^{\prime\mathrm{GMN}}, \ve^{\prime\mathrm{GMN}}, \vu^{\prime\mathrm{GMN}})$ according to equations~\eqref{eq:gmn-message}--\eqref{eq:gmn-global}.
    Here:
    \begin{itemize}[leftmargin=*,nosep]
        \item $\vh^{\mathrm{GMN}}_v \in \R^{d_h}$ denotes the GMN node feature for node $v$ (either a neuron node $\nu^{(\ell)}_i$ or a bias node $\beta^{(\ell)}$),
        \item $\ve^{\mathrm{GMN}}_e \in \R^{d_e}$ denotes the GMN edge feature for edge $e$,
        \item $\vu^{\mathrm{GMN}} \in \R^{d_u}$ denotes the GMN global feature vector.
    \end{itemize}

    Our goal is to construct a composition of NG layers $L_n \circ \cdots \circ L_1$ that simulates this GMN layer.
    Each NG layer $L_i = (\phi_{m,i}, \phi_{h,i}, \phi_{e,i})$ updates the state $(\vh^{(i)}, \ve^{(i)}) \mapsto (\vh^{(i+1)}, \ve^{(i+1)})$ according to equations~\eqref{eq:ng-message}--\eqref{eq:ng-edge}, where:
    \begin{itemize}[leftmargin=*,nosep]
        \item $\vh^{(i)}_{\nu^{(\ell)}_j} \in \R^{d_h^{(i)}}$ denotes the NG node feature for neuron node $\nu^{(\ell)}_j$ at step $i$,
        \item $\ve^{(i)}_e \in \R^{d_e^{(i)}}$ denotes the NG edge feature for edge $e$ at step $i$.
    \end{itemize}
    Note that NG has no global features and no explicit bias nodes; we will simulate these by storing the corresponding GMN features within NG node and edge features.

    \medskip
    \noindent\textbf{Key differences and simulation strategy.}
    The key differences between GMN and NG are:
    \begin{itemize}[leftmargin=*,nosep]
        \item GMN maintains a global feature vector $\vu^{\mathrm{GMN}} \in \R^{d_u}$, while NG has no global state.
        We simulate this by maintaining a copy of $\vu^{\mathrm{GMN}}$ in each NG node feature.
        \item GMN uses explicit bias nodes $\beta^{(\ell)}$ per layer with bias parameters stored on edges, while NG stores bias parameters directly in neuron node features.
        We simulate bias nodes by storing their features within the corresponding neuron node features.
        \item GMN edge features include a parameter type indicator $\mathrm{ptype}(e)$, while NG edges are all of the same type (weights only).
        We add this indicator to NG edge features during initialization.
    \end{itemize}

    Our proof strategy is to simulate a single GMN layer using a constant-depth composition of NG layers.
    We proceed in three main phases:
    \begin{enumerate}[label=\textbf{Phase \arabic*:}, leftmargin=*, nosep]
        \item \textbf{Feature initialization}: Embed GMN node, edge, and global features into NG node and edge features.
        \item \textbf{Edge and node updates}: Simulate GMN message computation, edge updates, and node updates using NG layers.
        \item \textbf{Global feature update}: Compute and propagate the GMN global feature update using NG message passing.
    \end{enumerate}

    \medskip
    \noindent\textbf{Phase 1: Feature initialization.}
    We begin by encoding the GMN state into NG features.
    The initial NG node and edge features are:
    \begin{align}
        \vh^{(0)}_{\nu^{(\ell)}_i} &\coloneqq \bigl[\mathrm{nlayer}(\nu^{(\ell)}_i),\; \mathrm{ntype}(\nu^{(\ell)}_i),\; (\vb_\ell)_{i,:}\bigr] \in \R^{d_h}, \quad \text{for } 0 \leq \ell \leq L, \; i \in [d_\ell], \\
        \ve^{(0)}_e &\coloneqq \bigl[p(e),\; \mathrm{layer}(e),\; \mathrm{dir}(e)\bigr] \in \R^{d_e}, \quad \text{for } e \in E_{\mathrm{w}},
    \end{align}
    where $(\vb_0)_{i,:} \coloneqq \mathbf{0}_{c_{\mathrm{in}}}$ for the input layer.
    We assume the initial GMN global feature is $\vu^{\mathrm{GMN}} = \mathbf{0}_{d_u}$.

    \smallskip
    \noindent\textbf{Step 1.1: Augmenting features with GMN placeholders.}
    The first NG layer $L_1$ augments the node features to include placeholders for GMN bias node features and the global feature.
    This layer prepares the feature space to accommodate GMN's additional structure (bias nodes and global features) that NG does not natively support.
    For each neuron node $\nu^{(\ell)}_i$ with $\ell \geq 1$, we append:
    \begin{itemize}[leftmargin=*,nosep]
        \item A placeholder for the bias node feature $\vh^{\mathrm{GMN}}_{\beta^{(\ell)}}$,
        \item A placeholder for bias edge features (forward and backward),
        \item The parameter type indicator $\mathrm{ptype}$,
        \item The global feature $\vu^{\mathrm{GMN}}$ (initialized to zero).
    \end{itemize}
    For the input layer ($\ell = 0$), we append zero vectors of appropriate dimensions.

    Formally, we define the first NG layer $L_1 = (\phi_{m,1}, \phi_{h,1}, \phi_{e,1})$ as follows.
    The message function is identically zero: $\phi_{m,1}(\vh_u, \vh_v, \ve_e) \coloneqq \mathbf{0}$ (no message passing needed at this stage).
    The node update function $\phi_{h,1}$ augments features:
    \begin{equation}
        \vh^{(1)}_{\nu^{(\ell)}_i} \coloneqq \phi_{h,1}\!\Bigl(\vh^{(0)}_{\nu^{(\ell)}_i},\; \sum_{j \in \mathcal{N}(i)} \mathbf{0}\Bigr) =
        \begin{cases}
            \bigl[\vh^{(0)}_{\nu^{(\ell)}_i},\; \ve^{(L)}_\ell,\; \ve^{(2)}_2,\; \mathbf{0}_{d_u}\bigr] & \text{if } 1 \leq \ell \leq L, \\
            \bigl[\vh^{(0)}_{\nu^{(0)}_i},\; \mathbf{0}_L,\; \mathbf{0}_2,\; \mathbf{0}_{d_u}\bigr] & \text{if } \ell = 0,
        \end{cases}
    \end{equation}
    where $\ve^{(L)}_\ell$ is the one-hot layer encoding for layer $\ell$ (effectively identifying which bias node type would be associated with this layer), and $\ve^{(2)}_2$ indicates bias parameter type. Note that since NG nodes don't natively support bias node types from $\mathcal{T}_{\mathrm{GMN}}$, we use layer identification instead to distinguish bias features from different layers, which provides sufficient information for GMN simulation.
    The edge update function $\phi_{e,1}$ adds the parameter type indicator to distinguish weight edges from bias edges:
    \begin{equation}
        \ve^{(1)}_e \coloneqq \phi_{e,1}(\vh^{(0)}_u, \vh^{(0)}_v, \ve^{(0)}_e) = \bigl[(\mW_\ell)_{i,j,:},\; \mathrm{layer}(e),\; \mathrm{dir}(e),\; \ve^{(2)}_1\bigr],
    \end{equation}
    where $\ve^{(2)}_1$ indicates weight parameter type.
    Note that all operations are defined for both forward and backward edges; the $\mathrm{dir}(e)$ feature allows the message and edge update functions to distinguish between directions.

    \smallskip
    \noindent\textbf{Step 1.2: Reorganizing features for GMN simulation.}
    The second NG layer $L_2$ reorganizes features to prepare for GMN simulation.
    This layer duplicates and reorders components so that each neuron node $\nu^{(\ell)}_j$ stores all necessary GMN information in a structured format:
    \begin{itemize}[leftmargin=*,nosep]
        \item Its own GMN node feature $\vh^{\mathrm{GMN}}_{\nu^{(\ell)}_j}$,
        \item The GMN bias node feature $\vh^{\mathrm{GMN}}_{\beta^{(\ell)}}$ (to be computed),
        \item The GMN bias edge features $\ve^{\mathrm{GMN}}_{(\beta^{(\ell)}, \nu^{(\ell)}_j)}$ and $\ve^{\mathrm{GMN}}_{(\nu^{(\ell)}_j, \beta^{(\ell)})}$ (to be computed),
        \item The global feature $\vu^{\mathrm{GMN}}$,
        \item The layer width $d_{\ell-1}$ (used later for message aggregation).
    \end{itemize}
    Formally:
    \begin{align}
        \vh^{(2)}_{\nu^{(\ell)}_j} &\coloneqq \bigl[\vh^{\mathrm{GMN}}_{\nu^{(\ell)}_j},\; \vh^{\mathrm{GMN}}_{\beta^{(\ell)}},\; \ve^{\mathrm{GMN}}_{(\beta^{(\ell)}, \nu^{(\ell)}_j)},\; \ve^{\mathrm{GMN}}_{(\nu^{(\ell)}_j, \beta^{(\ell)})},\; \vu^{\mathrm{GMN}},\; d_{\ell-1}\bigr], \\
        \ve^{(2)}_e &\coloneqq \bigl[\ve^{\mathrm{GMN}}_e,\; \vu^{\mathrm{GMN}}\bigr],
    \end{align}
    where $d_{\ell-1}$ is extracted from the layer encoding in the node features.
    Note that at this stage, the bias node and edge features are placeholders that will be computed in subsequent steps.

    \medskip
    \noindent\textbf{Phase 2: Edge and node updates.}
    We now simulate the GMN edge and node update functions using NG layers.
    This phase computes the updated GMN features by applying the GMN update functions pointwise on the stored features.

    \smallskip
    \noindent\textbf{Step 2.1: Weight edge updates.}
    The third NG layer $L_3$ computes the GMN weight edge update.
    This layer applies the GMN edge update function $\phi^{\mathrm{GMN}}_e$ to each weight edge.
    For each weight edge $e = (\nu^{(\ell-1)}_i, \nu^{(\ell)}_j)$, the GMN edge update function takes as input the incident node features, the edge feature, and the global feature.
    Since all these are stored in $\vh^{(2)}$ and $\ve^{(2)}$, we can apply $\phi^{\mathrm{GMN}}_e$ pointwise:
    \begin{align}
        \ve^{(3)}_e &\coloneqq \phi_{e,3}(\vh^{(2)}_{\nu^{(\ell-1)}_i}, \vh^{(2)}_{\nu^{(\ell)}_j}, \ve^{(2)}_e) \nonumber \\
        &= \bigl[\ve^{\mathrm{GMN}}_e,\; \phi^{\mathrm{GMN}}_e(\vh^{\mathrm{GMN}}_{\nu^{(\ell-1)}_i}, \vh^{\mathrm{GMN}}_{\nu^{(\ell)}_j}, \ve^{\mathrm{GMN}}_e, \vu^{\mathrm{GMN}}),\; \vu^{\mathrm{GMN}}\bigr] \nonumber \\
        &= \bigl[\ve^{\mathrm{GMN}}_e,\; \ve^{\prime\mathrm{GMN}}_e,\; \vu^{\mathrm{GMN}}\bigr],
    \end{align}
    where $\ve^{\prime\mathrm{GMN}}_e$ denotes the updated GMN edge feature.
    This layer preserves the original edge feature and appends the updated feature, allowing us to maintain both for subsequent computations.

    \smallskip
    \noindent\textbf{Step 2.2: Bias edge updates.}
    The fourth NG layer $L_4$ computes the GMN bias edge updates.
    The GMN bias edges connect bias nodes $\beta^{(\ell)}$ to neuron nodes $\nu^{(\ell)}_j$.
    Since NG has no explicit bias nodes, we simulate the bias edge update by storing the bias edge features in the neuron node features.
    This layer computes the updated bias edge features by applying the GMN edge update function to the stored bias node and neuron node features.
    We set the message function to zero: $\phi_{m,4}(\vh_u, \vh_v, \ve_e) \coloneqq \mathbf{0}$ (no message passing needed).
    The node update function $\phi_{h,4}$ computes the bias edge updates from the stored information:
    \begin{align}
        \vh^{(3)}_{\nu^{(\ell)}_j} &\coloneqq \phi_{h,4}\!\Bigl(\vh^{(2)}_{\nu^{(\ell)}_j},\; \sum_{i \in \mathcal{N}(j)} \mathbf{0}\Bigr) \nonumber \\
        &= \bigl[\vh^{\mathrm{GMN}}_{\nu^{(\ell)}_j},\; \vh^{\mathrm{GMN}}_{\beta^{(\ell)}},\; \ve^{\mathrm{GMN}}_{(\beta^{(\ell)}, \nu^{(\ell)}_j)},\; \ve^{\prime\mathrm{GMN}}_{(\beta^{(\ell)}, \nu^{(\ell)}_j)},\; \nonumber \\
        &\qquad \ve^{\mathrm{GMN}}_{(\nu^{(\ell)}_j, \beta^{(\ell)})},\; \ve^{\prime\mathrm{GMN}}_{(\nu^{(\ell)}_j, \beta^{(\ell)})},\; \vu^{\mathrm{GMN}},\; d_{\ell-1}\bigr],
    \end{align}
    where $\ve^{\prime\mathrm{GMN}}_{(\beta^{(\ell)}, \nu^{(\ell)}_j)}$ and $\ve^{\prime\mathrm{GMN}}_{(\nu^{(\ell)}_j, \beta^{(\ell)})}$ are computed by applying the GMN edge update function to the bias edges, using the stored bias node feature $\vh^{\mathrm{GMN}}_{\beta^{(\ell)}}$ and neuron node feature $\vh^{\mathrm{GMN}}_{\nu^{(\ell)}_j}$.

    \smallskip
    \noindent\textbf{Step 2.3: Message computation and aggregation.}
    The fifth NG layer $L_5$ computes GMN messages and aggregates them for neuron node updates.
    This layer simulates the GMN message computation and aggregation process.
    For each edge $e = (\nu^{(\ell-1)}_i, \nu^{(\ell)}_j)$, the message function $\phi_{m,5}$ extracts the appropriate GMN messages.
    For forward edges:
    \begin{equation}
        \vm^{(3)}_{(\nu^{(\ell-1)}_i, \nu^{(\ell)}_j)} \coloneqq \phi_{m,5}(\vh^{(3)}_{\nu^{(\ell-1)}_i}, \vh^{(3)}_{\nu^{(\ell)}_j}, \ve^{(3)}_e) = \bigl[\vm^{\mathrm{GMN}}_{(\nu^{(\ell-1)}_i, \nu^{(\ell)}_j)},\; \vm^{\mathrm{GMN}}_{(\beta^{(\ell)}, \nu^{(\ell)}_j)},\; \mathbf{0}\bigr],
    \end{equation}
    where $\vm^{\mathrm{GMN}}_{(\nu^{(\ell-1)}_i, \nu^{(\ell)}_j)}$ is the GMN message from neuron $\nu^{(\ell-1)}_i$ to $\nu^{(\ell)}_j$, and $\vm^{\mathrm{GMN}}_{(\beta^{(\ell)}, \nu^{(\ell)}_j)}$ is the GMN message from bias node $\beta^{(\ell)}$ to neuron $\nu^{(\ell)}_j$.
    For backward edges:
    \begin{equation}
        \vm^{(3)}_{(\nu^{(\ell)}_j, \nu^{(\ell-1)}_i)} \coloneqq \phi_{m,5}(\vh^{(3)}_{\nu^{(\ell)}_j}, \vh^{(3)}_{\nu^{(\ell-1)}_i}, \ve^{(3)}_e) = \bigl[\vm^{\mathrm{GMN}}_{(\nu^{(\ell)}_j, \nu^{(\ell-1)}_i)},\; \mathbf{0},\; \vm^{\mathrm{GMN}}_{(\nu^{(\ell)}_j, \beta^{(\ell)})}\bigr],
    \end{equation}
    where $\vm^{\mathrm{GMN}}_{(\nu^{(\ell)}_j, \beta^{(\ell)})}$ is the GMN message from neuron $\nu^{(\ell)}_j$ to bias node $\beta^{(\ell)}$.
    The message function can distinguish between forward and backward edges using the $\mathrm{dir}(e)$ feature.

    The node update function $\phi_{h,5}$ aggregates messages and applies the GMN node update.
    For each neuron node $\nu^{(\ell)}_j$, we aggregate messages from all neighbors:
    \begin{align}
        \vs_{\nu^{(\ell)}_j} &\coloneqq \sum_{i=1}^{d_{\ell-1}} \vm^{(3)}_{(\nu^{(\ell-1)}_i, \nu^{(\ell)}_j)} + \sum_{k=1}^{d_{\ell+1}} \vm^{(3)}_{(\nu^{(\ell+1)}_k, \nu^{(\ell)}_j)} \nonumber \\
        &= \Bigl[\sum_{i=1}^{d_{\ell-1}} \vm^{\mathrm{GMN}}_{(\nu^{(\ell-1)}_i, \nu^{(\ell)}_j)} + \sum_{k=1}^{d_{\ell+1}} \vm^{\mathrm{GMN}}_{(\nu^{(\ell+1)}_k, \nu^{(\ell)}_j)},\; d_{\ell-1} \cdot \vm^{\mathrm{GMN}}_{(\beta^{(\ell)}, \nu^{(\ell)}_j)},\; \sum_{k=1}^{d_{\ell+1}} \vm^{\mathrm{GMN}}_{(\nu^{(\ell)}_j, \beta^{(\ell+1)})}\Bigr].
    \end{align}
    Note that the bias message $\vm^{\mathrm{GMN}}_{(\beta^{(\ell)}, \nu^{(\ell)}_j)}$ appears with coefficient $d_{\ell-1}$ because it is stored in each forward message.
    We use the stored value $d_{\ell-1}$ in $\vh^{(3)}_{\nu^{(\ell)}_j}$ to cancel this coefficient, extracting the true bias message.
    The node update then applies the GMN node update function:
    \begin{align}
        \vh^{(4)}_{\nu^{(\ell)}_j} &\coloneqq \phi_{h,5}\!\Bigl(\vh^{(3)}_{\nu^{(\ell)}_j},\; \vs_{\nu^{(\ell)}_j}\Bigr) \nonumber \\
        &= \bigl[\vh^{\mathrm{GMN}}_{\nu^{(\ell)}_j},\; \vh^{\prime\mathrm{GMN}}_{\nu^{(\ell)}_j},\; \vh^{\mathrm{GMN}}_{\beta^{(\ell)}},\; \ve^{\mathrm{GMN}}_{(\beta^{(\ell)}, \nu^{(\ell)}_j)},\; \ve^{\prime\mathrm{GMN}}_{(\beta^{(\ell)}, \nu^{(\ell)}_j)},\; \nonumber \\
        &\qquad \ve^{\mathrm{GMN}}_{(\nu^{(\ell)}_j, \beta^{(\ell)})},\; \ve^{\prime\mathrm{GMN}}_{(\nu^{(\ell)}_j, \beta^{(\ell)})},\; \vu^{\mathrm{GMN}},\; d_{\ell-1}\bigr],
    \end{align}
    where $\vh^{\prime\mathrm{GMN}}_{\nu^{(\ell)}_j}$ is the updated GMN neuron node feature, computed by applying $\phi^{\mathrm{GMN}}_h$ to the aggregated messages, the original node feature, and the global feature.
    The edge features remain unchanged: $\ve^{(4)}_e \coloneqq \ve^{(3)}_e$.

    \smallskip
    \noindent\textbf{Step 2.4: Bias node updates.}
    The sixth NG layer $L_6$ updates the GMN bias node features.
    This layer aggregates messages from all neurons in a layer to update the corresponding bias node feature.
    We compute messages from all neurons in layer $\ell$ to the bias node $\beta^{(\ell)}$:
    \begin{equation}
        \vm^{(4)}_{(\nu^{(\ell-1)}_i, \nu^{(\ell)}_j)} \coloneqq \phi_{m,6}(\vh^{(4)}_{\nu^{(\ell-1)}_i}, \vh^{(4)}_{\nu^{(\ell)}_j}, \ve^{(4)}_e) = \sum_{k=1}^{d_\ell} \vm^{\mathrm{GMN}}_{(\nu^{(\ell)}_k, \beta^{(\ell)})},
    \end{equation}
    and for backward edges: $\vm^{(4)}_{(\nu^{(\ell)}_j, \nu^{(\ell-1)}_i)} \coloneqq \mathbf{0}$.
    The node update aggregates these messages and applies the GMN bias node update:
    \begin{align}
        \vh^{(5)}_{\nu^{(\ell)}_j} &\coloneqq \phi_{h,6}\!\Bigl(\vh^{(4)}_{\nu^{(\ell)}_j},\; \sum_{i=1}^{d_{\ell-1}} \vm^{(4)}_{(\nu^{(\ell-1)}_i, \nu^{(\ell)}_j)} + \sum_{k=1}^{d_{\ell+1}} \vm^{(4)}_{(\nu^{(\ell+1)}_k, \nu^{(\ell)}_j)}\Bigr) \nonumber \\
        &= \bigl[\vh^{\mathrm{GMN}}_{\nu^{(\ell)}_j},\; \vh^{\prime\mathrm{GMN}}_{\nu^{(\ell)}_j},\; \vh^{\mathrm{GMN}}_{\beta^{(\ell)}},\; \vh^{\prime\mathrm{GMN}}_{\beta^{(\ell)}},\; \nonumber \\
        &\qquad \ve^{\mathrm{GMN}}_{(\beta^{(\ell)}, \nu^{(\ell)}_j)},\; \ve^{\prime\mathrm{GMN}}_{(\beta^{(\ell)}, \nu^{(\ell)}_j)},\; \ve^{\mathrm{GMN}}_{(\nu^{(\ell)}_j, \beta^{(\ell)})},\; \ve^{\prime\mathrm{GMN}}_{(\nu^{(\ell)}_j, \beta^{(\ell)})},\; \vu^{\mathrm{GMN}},\; d_{\ell-1}\bigr],
    \end{align}
    where $\vh^{\prime\mathrm{GMN}}_{\beta^{(\ell)}}$ is the updated GMN bias node feature.
    The edge features remain unchanged: $\ve^{(5)}_e \coloneqq \ve^{(4)}_e = [\ve^{\mathrm{GMN}}_e, \ve^{\prime\mathrm{GMN}}_e, \vu^{\mathrm{GMN}}]$.

    \medskip
    \noindent\textbf{Phase 3: Global feature update.}
    The final phase simulates the GMN global feature update $\phi^{\mathrm{GMN}}_u$, which aggregates all node and edge features.
    The GMN global update function requires computing:
    \begin{equation}
        \vu^{\prime\mathrm{GMN}} = \phi^{\mathrm{GMN}}_u\!\Bigl(\sum_{v \in N} \vh^{\prime\mathrm{GMN}}_v,\; \sum_{e \in E} \ve^{\prime\mathrm{GMN}}_e,\; \vu^{\mathrm{GMN}}\Bigr),
    \end{equation}
    where $N$ and $E$ are the GMN node and edge sets, respectively.
    Our goal is to compute the two sums $\sum_{v \in N} \vh^{\prime\mathrm{GMN}}_v$ and $\sum_{e \in E} \ve^{\prime\mathrm{GMN}}_e$ using NG message passing.

    \smallskip
    \noindent\textbf{Step 3.1: Feature consolidation.}
    The seventh NG layer $L_7$ consolidates the GMN features to prepare for aggregation.
    This layer combines neuron and bias node features within each layer, and pairs forward and backward bias edge features:
    \begin{align}
        \vh^{(6)}_{\nu^{(\ell)}_j} &\coloneqq \bigl[\vh^{\prime\mathrm{GMN}}_{\nu^{(\ell)}_j} + \vh^{\prime\mathrm{GMN}}_{\beta^{(\ell)}},\; \ve^{\prime\mathrm{GMN}}_{(\beta^{(\ell)}, \nu^{(\ell)}_j)} + \ve^{\prime\mathrm{GMN}}_{(\nu^{(\ell)}_j, \beta^{(\ell)})},\; \vu^{\mathrm{GMN}}\bigr], \\
        \ve^{(6)}_e &\coloneqq \bigl[\ve^{\prime\mathrm{GMN}}_e,\; \vu^{\mathrm{GMN}}\bigr].
    \end{align}
    This consolidation ensures that each neuron node carries both its own feature and the corresponding bias node feature, simplifying subsequent aggregation steps.

    \smallskip
    \noindent\textbf{Step 3.2: Forward propagation of sums.}
    We now propagate layer-wise sums forward through the network.
    For each layer $\ell \in [L]$, we define an NG layer $L_\ell$ that performs two operations:
    \begin{itemize}[leftmargin=*,nosep]
        \item \textbf{Node aggregation}: Sums all node features from layer $\ell-1$ and adds this sum to each node in layer $\ell$.
        This propagates the cumulative sum of all previous layers forward.
        \item \textbf{Forward edge aggregation}: Sums all forward edge features in layer $\ell$ and adds this sum to each incident node in layer $\ell$.
        This captures forward edge contributions as we move through the network.
    \end{itemize}
    After applying $L$ such layers sequentially ($L_1 \circ L_2 \circ \cdots \circ L_L$), each node in the last layer (layer $L$) contains:
    \begin{equation}
        \vh^{(L+6)}_{\nu^{(L)}_j} = \Bigl[\sum_{\ell=0}^{L-1} \sum_{i=1}^{d_\ell} (\vh^{\prime\mathrm{GMN}}_{\nu^{(\ell)}_i} + \vh^{\prime\mathrm{GMN}}_{\beta^{(\ell)}}) + \text{(forward edge contributions)},\; \vu^{\mathrm{GMN}}\Bigr],
    \end{equation}
    where the forward edge contributions include sums over all forward weight and bias edges.
    \emph{Crucially}, nodes within the last layer do not communicate with each other during forward propagation, so the sum of last-layer nodes themselves is not yet included.

    \smallskip
    \noindent\textbf{Step 3.3: Backward propagation to complete aggregation.}
    We propagate backward to complete the aggregation.
    The backward propagation serves two essential purposes:
    \begin{itemize}[leftmargin=*,nosep]
        \item \textbf{Backward edge aggregation}: Sum backward edges (which were not included in the forward pass).
        \item \textbf{Last-layer node aggregation}: Aggregate the last-layer node features, since they don't communicate with each other during forward propagation.
    \end{itemize}
    For each layer $\ell \in [L]$, we define an NG layer $M_\ell$ that:
    \begin{itemize}[leftmargin=*,nosep]
        \item \textbf{Backward edge summation}: Sums all backward edge features from layer $\ell$ and adds this sum to each node in layer $\ell-1$.
        \item \textbf{Last-layer aggregation} (for $\ell = L$): Sums all last-layer node features and propagates this sum backward to layer $L-1$.
    \end{itemize}
    After applying $L$ backward layers sequentially ($M_L \circ M_{L-1} \circ \cdots \circ M_1$), each first-layer node contains the complete sums:
    \begin{equation}
        \vh^{(2L+6)}_{\nu^{(0)}_j} = \Bigl[\sum_{\ell=0}^{L} \sum_{i=1}^{d_\ell} (\vh^{\prime\mathrm{GMN}}_{\nu^{(\ell)}_i} + \vh^{\prime\mathrm{GMN}}_{\beta^{(\ell)}}),\; \sum_{\ell=1}^{L} \sum_{e \in E_\ell} \ve^{\prime\mathrm{GMN}}_e,\; \vu^{\mathrm{GMN}}\Bigr],
    \end{equation}
    where $E_\ell$ denotes all edges (both forward and backward) in layer $\ell$.
    The complete sum now includes:
    \begin{itemize}[leftmargin=*,nosep]
        \item All node features from layers $0$ through $L$ (including the last-layer nodes that were aggregated during backward propagation),
        \item All edge features from both forward and backward directions.
    \end{itemize}

    \smallskip
    \noindent\textbf{Step 3.4: Global feature update and broadcast.}
    Finally, we apply the GMN global update function $\phi^{\mathrm{GMN}}_u$ to compute the updated global feature.
    Each first-layer node now has access to the complete sums, so we can apply the update function pointwise:
    \begin{equation}
        \vu^{\prime\mathrm{GMN}} = \phi^{\mathrm{GMN}}_u\!\Bigl(\sum_{\ell=0}^{L} \sum_{i=1}^{d_\ell} (\vh^{\prime\mathrm{GMN}}_{\nu^{(\ell)}_i} + \vh^{\prime\mathrm{GMN}}_{\beta^{(\ell)}}),\; \sum_{\ell=1}^{L} \sum_{e \in E_\ell} \ve^{\prime\mathrm{GMN}}_e,\; \vu^{\mathrm{GMN}}\Bigr),
    \end{equation}
    where $E_\ell$ denotes all edges in layer $\ell$.
    The updated global feature $\vu^{\prime\mathrm{GMN}}$ is then broadcast to all nodes using additional NG layers that copy it forward from layer to layer, ensuring every node has access to the updated global state.

    \medskip
    \noindent\textbf{Conclusion.}
    We have shown that a single GMN layer can be simulated by a constant-depth composition of NG layers (specifically, $O(L)$ NG layers).
    Since the GMN functions $\phi^{\mathrm{GMN}}_m$, $\phi^{\mathrm{GMN}}_h$, $\phi^{\mathrm{GMN}}_e$, and $\phi^{\mathrm{GMN}}_u$ are MLPs, and all operations in our construction are either pointwise MLP applications or message aggregations (which NG supports), the simulation is exact on compact sets.
    By lemma \ref{lem:app:composition_approximation} which shows that layer-wise approximation implies network approximation, we conclude that $\mathrm{GMN}_{\mathrm{eq}}^{\arch} \preceq_K \mathrm{NG}_{\mathrm{eq}}^{\arch}$.

\end{proof}

\subsubsection{NG-GNN and DWSNets}

Finally, we close the cycle by showing that NG-GNN networks can be approximated by DWS networks. This completes the cycle and establishes mutual expressivity. The proof shows that DWS networks can simulate NG-GNN's graph-based message passing operations by using their equivariant affine layers to aggregate information across the graph structure encoded in the weight space.

\begin{proposition}
    \label{prop:app:dws_to_ng}
    Let $K \subset \WS^{c_{\mathrm{in}}}$ be a compact set. Then $\mathrm{NG}_{\mathrm{eq}}^{\arch} \preceq_K \mathrm{DWSNets}_{\mathrm{eq}}^{\arch}$.
\end{proposition}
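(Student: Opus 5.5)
By Lemma~\ref{lem:app:composition_approximation}, it suffices to show that DWS networks can approximate a single NG layer $(\phi_m,\phi_h,\phi_e)$ uniformly on compact sets. We also need the NG initialization (node types, layer encodings, biases stored on nodes) to be realizable by a DWS network.

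\textbf{Encoding.} We represent the NG state inside $\WS^{c}$ as follows. The edge feature of $(\nu^{(\ell-1)}_j,\nu^{(\ell)}_i)$ is stored at $(\mW_\ell)_{i,j,:}$, with both directions kept in separate channels. The feature of a hidden node $\nu^{(\ell)}_i$, $1\le\ell\le L-1$, is stored at $(\vb_\ell)_{i,:}$. For the input and output nodes, whose indices are not permuted by $G$, we store the whole collection of their features in every entry as extra broadcast channels. This is legitimate because DWS affine layers may depend arbitrarily on the non-permuted indices of layers $0$ and $L$, and can map invariant quantities to every entry. The initialization is obtained from constant DWS biases: the layer one-hots are constant per block, $\mathrm{dir}$ is a constant channel, and the in/out identities are fixed tensors indexed by non-permuted coordinates, hence $G$-equivariant constants.

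\textbf{Simulating one NG layer.} The simulation has three stages.

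First, we gather endpoint features to each edge. The basis maps $\gB_\ell\to\gW_\ell$ (broadcast along columns) and $\gB_{\ell-1}\to\gW_\ell$ (broadcast along rows) are $G$-equivariant linear maps. A single DWS linear layer can therefore place $[\vh_{\nu^{(\ell)}_i},\vh_{\nu^{(\ell-1)}_j},\ve_{ij}]$ into the channels of $(\mW_\ell)_{i,j,:}$. For edges touching input or output neurons, the endpoint feature is read from the broadcast channels by a linear selection indexed by the fixed coordinate $j$ (respectively $i$).

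Second, we apply the pointwise MLPs. Corollary~\ref{cor:app:dws_pointwise_mlp} approximates $\phi_e$ and $\phi_m$ (in both directions) entrywise on the compact image. This produces the new edge features and the messages $\vm_{j\to i}$, $\vm_{i\to j}$ at every weight entry.

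Third, we aggregate and update nodes. The row-sum map $\gW_\ell\to\gB_\ell$, $\sum_j(\mW_\ell)_{i,j,:}$, and the column-sum map $\gW_{\ell+1}\to\gB_\ell$, $\sum_k(\mW_{\ell+1})_{k,i,:}$, are $G$-equivariant. Together they compute $\sum_{j\in\mathcal N(i)}\vm_{j\to i}$ at $(\vb_\ell)_{i,:}$. A further pointwise MLP then applies $\phi_h$. For input and output nodes, the aggregation becomes a sum over the permuted index only. This is again an allowed equivariant map, written into the broadcast channels, and $\phi_h$ is applied there pointwise.

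Each stage is either an exact DWS affine layer or a uniformly approximated pointwise MLP on a compact set, since images of compact sets under continuous maps are compact. Composing the stages via Lemma~\ref{lem:app:composition_approximation} gives the approximation of one NG layer. Iterating over layers and appending a final linear layer that projects the edge and node channels onto the desired output format yields $\mathrm{NG}_{\mathrm{eq}}^{\arch}\preceq_K\mathrm{DWSNets}_{\mathrm{eq}}^{\arch}$.

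\textbf{Main obstacle.} The hardest part is the boundary layers $0$ and $L$. There, NG nodes carry unique identities, but DWS has no slot indexed by input neurons, and output neurons appear only as rows of $\mW_L,\vb_L$. The plan is to check carefully, using the characterization of \cite{Navon2023}, that the maps required there are genuinely $G$-equivariant:
\begin{itemize}
\item the gather of input-neuron features into column $j$ of $\mW_1$;
\item the aggregation $\sum_i \vm$ into input node $j$;
\item the broadcast of the resulting $d_0$ vectors back to every entry.
\end{itemize}
Each of these uses only sums over permuted indices and arbitrary dependence on fixed indices. Alternatively, output-node features can be stored directly in $(\vb_L)_{i,:}$ instead of the broadcast channels, which simplifies that side.
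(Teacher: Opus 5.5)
Your proposal is correct and follows essentially the same route as the paper, whose proof simulates one NG layer as $M_4\circ L_3\circ M_2\circ L_1$: gather endpoint node features onto weight entries via the $\gB_\ell\to\gW_\ell$ and $\gB_{\ell-1}\to\gW_\ell$ broadcasts, apply $\phi_m,\phi_e$ pointwise, pool messages via $\gW_\ell\to\gB_\ell$ and $\gW_{\ell+1}\to\gB_\ell$, then apply $\phi_h$ pointwise. Your handling of the input nodes is more careful than the paper's: you store their features in $G$-invariant broadcast channels, using that $G$ fixes the layer-$0$ and layer-$L$ indices, whereas the paper implicitly stores them in a bias slot $\vb_0$ that does not exist in $\WS$.
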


\begin{proof}
    It suffices to show that any single NG layer (defined in~\eqref{eq:ng-message}, \eqref{eq:ng-node}, and~\eqref{eq:ng-edge}) can be implemented by a DWSNet.
    Let $\mathsf{G}_{\mathrm{NG}} = (N, E)$ be the NG graph as in Definition~\ref{def:app:neural_graphs}, and fix an NG layer $\mathcal{L} = (\phi_m, \phi_h, \phi_e)$.

    \smallskip
    Recall that the weight space $\WS^{c_{\mathrm{in}}}$ is defined as:
    \begin{equation}
        \WS^{c_{\mathrm{in}}}
        \coloneqq
        \Bigl(\bigoplus_{\ell=1}^L \gW^{c_{\mathrm{in}}}_\ell\Bigr)
        \oplus
        \Bigl(\bigoplus_{\ell=1}^L \gB^{c_{\mathrm{in}}}_\ell\Bigr)
        \cong
        \bigoplus_{\ell=1}^L
        \bigl(
        \R^{d_\ell \times d_{\ell-1} \times {c_{\mathrm{in}}}}
        \oplus
        \R^{d_\ell \times {c_{\mathrm{in}}}}
        \bigr).
    \end{equation}

    By the DWS basis characterization~\cite{Navon2023}, any DWS affine layer is a linear combination of basis maps of four types: $\gW_\ell \to \gW_{\ell'}$, $\gB_\ell \to \gB_{\ell'}$, $\gW_\ell \to \gB_{\ell'}$, and $\gB_\ell \to \gW_{\ell'}$ (for $0 \leq \ell, \ell' \leq L$), plus a bias term. These basis maps extend naturally to $c_{\mathrm{in}}$ feature channels by acting identically on each channel.
    By Corollary~\ref{cor:app:dws_pointwise_mlp}, DWSNets can implement any MLP applied pointwise on feature channels.

    Our DWSNet construction is a composition of four maps, $M_4 \circ L_3 \circ M_2 \circ L_1$:
    \begin{enumerate}[label=(\alph*), nosep, leftmargin=*]
        \item \textbf{Feature preparation} ($L_1$): Construct edge and node features using DWS affine layers, storing them in weight and bias tensors. We use a single weight entry to store features for both directions of each edge pair.
        \item \textbf{Message \& edge update} ($M_2$): Apply the NG MLPs $\phi_m$ and $\phi_e$ pointwise on feature channels to compute messages and updated edge features (realizable by Corollary~\ref{cor:app:dws_pointwise_mlp}).
        \item \textbf{Message aggregation} ($L_3$): Aggregate messages via DWS pooling: $\gW_\ell \to \gB_\ell$ for forward edges and $\gW_{\ell+1} \to \gB_\ell$ for backward edges.
        \item \textbf{Node update} ($M_4$): Apply the NG MLP $\phi_h$ pointwise on feature channels to compute updated node features $\vh'_v$ (realizable by Corollary~\ref{cor:app:dws_pointwise_mlp}).
    \end{enumerate}
    We now show that each map $L_1, M_2, L_3, M_4$ can be realized by a DWSNet.

    \medskip
    \noindent\textbf{(a) Feature preparation ($L_1$).}
    Recall that edge and node features are defined as follows. For an edge $e = (\nu^{(\ell-1)}_i, \nu^{(\ell)}_j)$:
    \[
        \ve_e \coloneqq \bigl[p(e),\; \mathrm{layer}(e),\; \mathrm{dir}(e)\bigr] \in \R^{d_e},
        \quad \text{where } d_e \coloneqq c_{\mathrm{in}} + L + 2,
    \]
    with $p(e) = (\mW_\ell)_{i,j,:} \in \R^{c_{\mathrm{in}}}$ (edge parameter), $\mathrm{layer}(e) = \ve^{(L)}_\ell \in \R^L$ (one-hot layer encoding), and $\mathrm{dir}(e) \in \{\ve^{(2)}_1, \ve^{(2)}_2\}$ (forward/backward indicator). For a node $\nu^{(\ell)}_i$:
    \[
        \vh_{\nu^{(\ell)}_i} \coloneqq \bigl[\mathrm{layer}(\nu^{(\ell)}_i),\; \mathrm{ntype}(\nu^{(\ell)}_i),\; (\vb_\ell)_{i,:}\bigr] \in \R^{d_h},
        \quad \text{where } d_h \coloneqq (L+1) + |\mathcal{T}_{\mathrm{NG}}| + c_{\mathrm{in}},
    \]
    with $\mathrm{layer}(\nu^{(\ell)}_i) = \ve^{(L+1)}_\ell$ (node layer encoding), $\mathrm{ntype}(\nu^{(\ell)}_i) \in \R^{|\mathcal{T}_{\mathrm{NG}}|}$ (node type---shared for hidden neurons, unique for boundary neurons), and $(\vb_\ell)_{i,:}$ (the associated bias parameter).

    We store edge features for both directions and the incident node features in the weight tensors, while also storing node features in the bias tensors for use in subsequent steps.
    Define the feature preparation map $L_1 \colon V^{c_{\mathrm{in}}} \to V^{c_1}$, where $c_1 \coloneqq 2d_e + 2d_h$, by:
    \begin{align*}
        \{\mW^{(1)}_\ell, \vb^{(1)}_\ell\}_{\ell=0}^{L}
        &\coloneqq
        L_1\bigl(\{\mW_\ell, \vb_\ell\}_{\ell=0}^{L}\bigr), \\[4pt]
        (\mW^{(1)}_\ell)_{i,j,:}
        &\coloneqq
        \bigl[\ve_e,\; \ve_{\bar{e}},\; \vh_{\nu^{(\ell-1)}_i},\; \vh_{\nu^{(\ell)}_j}\bigr]
        \;\in\; \R^{c_1},
        \quad \text{where } e = (\nu^{(\ell-1)}_i, \nu^{(\ell)}_j),\; \bar{e} = (\nu^{(\ell)}_j, \nu^{(\ell-1)}_i), \\[4pt]
        (\vb^{(1)}_\ell)_{j,:}
        &\coloneqq
        \bigl[\mathbf{0}_{2d_e + d_h},\; \vh_{\nu^{(\ell)}_j}\bigr]
        \;\in\; \R^{c_1}.
    \end{align*}

    We now show that $L_1$ is a DWS $G$-equivariant affine layer, i.e., $L_1 \in \mathrm{Hom}_G(V^{c_{\mathrm{in}}}, V^{c_1})$, by analyzing each component of the output.

    Expanding the definition, the output weight tensor at layer $\ell \in [L]$ is:
    \begin{align*}
    (\mW^{(1)}_\ell)
        _{i,j,:}
        \;=\;
        \Bigl[
            &\underbrace{(\mW_\ell)_{i,j,:}}_{\text{(i)}},\;
            \underbrace{\ve^{(L)}_\ell,\; \ve^{(2)}_1}_{\text{(iv)}},\;
            \underbrace{(\mW_\ell)_{i,j,:}}_{\text{(i)}},\;
            \underbrace{\ve^{(L)}_\ell,\; \ve^{(2)}_2}_{\text{(iv)}},\\[4pt]
            &\underbrace{\ve^{(L+1)}_{\ell-1},\; \mathrm{ntype}(\nu^{(\ell-1)}_i)}_{\text{(iv)}},\;
            \underbrace{(\vb_{\ell-1})_{i,:}}_{\text{(ii)}},\;
            \underbrace{\ve^{(L+1)}_{\ell},\; \mathrm{ntype}(\nu^{(\ell)}_j)}_{\text{(iv)}},\;
            \underbrace{(\vb_\ell)_{j,:}}_{\text{(iii)}}
            \Bigr].
    \end{align*}
    We verify that each component is affine and $G$-equivariant:

    \begin{enumerate}[label=(\roman*), nosep]
        \item \textbf{Weight-to-weight identity} $\gW_\ell \to \gW_\ell$ (Table~5 in~\cite{Navon2023}): The term $(\mW_\ell)_{i,j,:}$ is linear in the input. For equivariance, let $\tau = (\tau_1, \ldots, \tau_{L-1}) \in G$. The group action gives $(\tau \cdot \mW_\ell)_{\tau_{\ell-1}(i), \tau_\ell(j),:} = (\mW_\ell)_{i,j,:}$, which matches precisely the action on the output at position $(\tau_{\ell-1}(i), \tau_\ell(j))$.

        \item \textbf{Row-broadcast} $\gB_{\ell-1} \to \gW_\ell$ (Table~8 in~\cite{Navon2023}): The term $(\vb_{\ell-1})_{i,:}$ is linear in $\vb_{\ell-1}$. The same bias vector is broadcast to all columns $j$, so the output at $(i,j)$ depends only on row index $i$. Under $\tau \in G$, both $(\vb_{\ell-1})_i$ and the output row index transform by $\tau_{\ell-1}$, preserving equivariance.

        \item \textbf{Column-broadcast} $\gB_\ell \to \gW_\ell$ (Table~8 in~\cite{Navon2023}): The term $(\vb_\ell)_{j,:}$ is linear in $\vb_\ell$. The same bias vector is broadcast to all rows $i$, so the output at $(i,j)$ depends only on column index $j$. Under $\tau \in G$, both $(\vb_\ell)_j$ and the output column index transform by $\tau_\ell$, preserving equivariance.

        \item \textbf{Affine bias terms}: The layer encodings $\ve^{(L)}_\ell$, direction encodings $\ve^{(2)}_1, \ve^{(2)}_2$, layer indices $\ve^{(L+1)}_{\ell-1}, \ve^{(L+1)}_\ell$, and node types $\mathrm{ntype}(\cdot)$ are all constant (affine with zero linear part). In addition, $\mathrm{ntype}$ is equivariant because:
        \begin{itemize}[nosep]
            \item At hidden layers ($0 < \ell < L$), all neurons share the same type $\mathrm{ntype}(\nu^{(\ell)}_i) = \ve^{(|\mathcal{T}_{\mathrm{NG}}|)}_{\texttt{hidden}}$, so the output is constant across permutations.
            \item At boundary layers ($\ell \in \{0, L\}$), the group $G$ acts trivially on input/output neurons, so boundary-specific encodings are $G$-invariant.
        \end{itemize}

    \end{enumerate}

    \noindent For the bias tensor $(\vb^{(1)}_\ell)_{j,:} = [\mathbf{0}_{2d_e + d_h}, \vh_{\nu^{(\ell)}_j}]$, we expand:
    \[
        (\vb^{(1)}_\ell)_{j,:}
        \;=\;
        \Bigl[
            \underbrace{\mathbf{0}_{2d_e + d_h}}_{\text{(iv)}},\;
            \underbrace{\ve^{(L+1)}_\ell,\; \mathrm{ntype}(\nu^{(\ell)}_j)}_{\text{(iv)}},\;
            \underbrace{(\vb_\ell)_{j,:}}_{\text{(v)}}
            \Bigr].
    \]
    \begin{enumerate}[label=(\roman*), nosep, start=5]
        \item \textbf{Bias-to-bias identity} $\gB_\ell \to \gB_\ell$ (Table~6 in~\cite{Navon2023}): The term $(\vb_\ell)_{j,:}$ is copied via the identity. Under $\tau \in G$, both the input $(\vb_\ell)_j$ and output $(\vb^{(1)}_\ell)_j$ transform by $\tau_\ell$, preserving equivariance.
    \end{enumerate}

    Since $L_1$ is a concatenation of affine $G$-equivariant maps, we have $L_1 \in \mathrm{Hom}_G(V^{c_{\mathrm{in}}}, V^{c_1})$.

    \medskip
    \noindent\textbf{(b) Message \& edge update ($M_2$).}
    In the NG architecture (Definition~\ref{def:app:neural_graphs}), the functions $\phi_m$ and $\phi_e$ are MLPs~\eqref{eq:ng-message}--\eqref{eq:ng-edge}. By Corollary~\ref{cor:app:dws_pointwise_mlp}, we can apply them pointwise on feature channels, yielding:
    \[
        \bigl(\vm_e,\, \ve'_e\bigr)
        \;=\;
        \bigl(\phi_m(\vh_u, \vh_v, \ve_e),\; \phi_e(\vh_u, \vh_v, \ve_e)\bigr),
        \quad \forall e = (u, v) \in E,
    \]
    where $\vm_e \in \R^{d_{\mathrm{msg}}}$ is the message and $\ve'_e \in \R^{d_e}$ is the updated edge feature.

    We use the feature vector stored in $\mW^{(1)}_\ell$ to compute messages and edge updates for both edge directions.

    Define the map $M_2 \colon V^{c_1} \to V^{c_2}$, where $c_2 \coloneqq 2(d_e + d_{\mathrm{msg}}) + d_h$, by:
    \begin{align*}
        \{\mW^{(2)}_\ell, \vb^{(2)}_\ell\}_{\ell=0}^{L}
        &\coloneqq
        M_2\bigl(\{\mW^{(1)}_\ell, \vb^{(1)}_\ell\}_{\ell=0}^{L}\bigr), \\[4pt]
        (\mW^{(2)}_\ell)_{i,j,:}
        &\coloneqq
        (\phi_m, \phi_e)^{\oplus}\bigl((\mW^{(1)}_\ell)_{i,j,:}\bigr)
        \;=\;
        (\phi_m, \phi_e)^{\oplus}\bigl(\ve_e,\, \ve_{\bar{e}},\, \vh_{\nu^{(\ell-1)}_i},\, \vh_{\nu^{(\ell)}_j}\bigr) \\[4pt]
        &\;=\;
        \Bigl[\;
        \underbrace{\phi_m(\vh_{\nu^{(\ell-1)}_i}, \vh_{\nu^{(\ell)}_j}, \ve_e)}_{\vm_e},\;\;
        \underbrace{\phi_e(\vh_{\nu^{(\ell-1)}_i}, \vh_{\nu^{(\ell)}_j}, \ve_e)}_{\ve'_e},\\[-2pt]
        &\qquad\;\;
        \underbrace{\phi_m(\vh_{\nu^{(\ell)}_j}, \vh_{\nu^{(\ell-1)}_i}, \ve_{\bar{e}})}_{\vm_{\bar{e}}},\;\;
        \underbrace{\phi_e(\vh_{\nu^{(\ell)}_j}, \vh_{\nu^{(\ell-1)}_i}, \ve_{\bar{e}})}_{\ve'_{\bar{e}}},\;\;
        \vh_{\nu^{(\ell)}_j}
        \;\Bigr]
        \;\in\; \R^{c_2}, \\[6pt]
        (\vb^{(2)}_\ell)_{j,:}
        &\coloneqq
        \bigl[\mathbf{0}_{2(d_e + d_{\mathrm{msg}})},\; \vh_{\nu^{(\ell)}_j}\bigr]
        \;\in\; \R^{c_2}.
    \end{align*}
    Here $(\phi_m, \phi_e)^{\oplus}$ denotes the pointwise application of the NG MLPs $\phi_m$ and $\phi_e$ to both edge directions, selecting the appropriate inputs from the feature vector.
    The node features $\vh_{\nu^{(\ell)}_j}$ are preserved via the bias-to-bias identity $\gB_\ell \to \gB_\ell$.

    \medskip
    \noindent\textbf{(c) Message aggregation ($L_3$).}
    For each node $\nu^{(\ell)}_j \in N$, we aggregate messages from incident edges. Recall that $(\mW^{(2)}_\ell)_{i,j,:}$ stores the messages $\vm_e$ (forward) and $\vm_{\bar{e}}$ (backward), along with the updated edge features $\ve'_e, \ve'_{\bar{e}}$.

    Define the map $L_3 \colon V^{c_2} \to V^{c_3}$, where $c_3 \coloneqq 2d_e + d_h + d_{\mathrm{msg}}$, by:
    \begin{align*}
        \{\mW^{(3)}_\ell, \vb^{(3)}_\ell\}_{\ell=0}^{L}
        &\coloneqq
        L_3\bigl(\{\mW^{(2)}_\ell, \vb^{(2)}_\ell\}_{\ell=0}^{L}\bigr), \\[4pt]
        (\mW^{(3)}_\ell)_{i,j,:}
        &\coloneqq
        \bigl[\ve'_e,\; \ve'_{\bar{e}},\; \mathbf{0}_{d_h + d_{\mathrm{msg}}}\bigr]
        \;\in\; \R^{c_3}, \\[4pt]
        (\vb^{(3)}_\ell)_{j,:}
        &\coloneqq
        \bigl[\mathbf{0}_{2d_e},\; \vh_{\nu^{(\ell)}_j},\; \vs_{\nu^{(\ell)}_j}\bigr]
        \;\in\; \R^{c_3},
    \end{align*}
    where the aggregated message for node $\nu^{(\ell)}_j$ is:
    \[
        \vs_{\nu^{(\ell)}_j}
        \;=\;
        \underbrace{\sum_{i=1}^{d_{\ell-1}} \vm_{e}}_{\text{forward from layer } \ell-1}
        \;+\;
        \underbrace{\sum_{k=1}^{d_{\ell+1}} \vm_{\bar{e}'}}_{\text{backward from layer } \ell+1}
        \;\in\; \R^{d_{\mathrm{msg}}},
    \]
    with $e = (\nu^{(\ell-1)}_i, \nu^{(\ell)}_j)$ and $\bar{e}' = (\nu^{(\ell)}_j, \nu^{(\ell+1)}_k)$.

    This is realized via the following DWS operations:
    \begin{enumerate}[label=(\roman*), nosep]
        \item \textbf{Weight-to-weight identity} $\gW_\ell \to \gW_\ell$: Preserves the updated edge features $\ve'_e, \ve'_{\bar{e}}$.
        \item \textbf{Column-pool} $\gW_\ell \to \gB_\ell$ (Table~7 in~\cite{Navon2023}): Sums $\vm_e$ over index $i$ to aggregate forward messages.
        \item \textbf{Row-pool} $\gW_{\ell+1} \to \gB_\ell$ (Table~7 in~\cite{Navon2023}): Sums $\vm_{\bar{e}'}$ over index $k$ to aggregate backward messages.
        \item \textbf{Bias-to-bias identity} $\gB_\ell \to \gB_\ell$: Preserves the node feature $\vh_{\nu^{(\ell)}_j}$ from $\vb^{(2)}_\ell$.
    \end{enumerate}
    At boundary layers ($\ell = 0$ or $\ell = L$), only one edge direction exists, so the corresponding pooling is omitted.

    \medskip
    \noindent\textbf{(d) Node update ($M_4$).}
    The NG function $\phi_h$ is an MLP~\eqref{eq:ng-node}. Define $M_4 \colon V^{c_3} \to V^{c_4}$, where $c_4 \coloneqq 2d_e + d'_h$, by:
    \begin{align*}
        \{\mW^{(4)}_\ell, \vb^{(4)}_\ell\}_{\ell=0}^{L}
        &\coloneqq
        M_4\bigl(\{\mW^{(3)}_\ell, \vb^{(3)}_\ell\}_{\ell=0}^{L}\bigr), \\[4pt]
        (\mW^{(4)}_\ell)_{i,j,:}
        &\coloneqq
        \bigl[\ve'_e,\; \ve'_{\bar{e}},\; \mathbf{0}_{d'_h}\bigr]
        \in \R^{c_4}, \\[4pt]
        (\vb^{(4)}_\ell)_{j,:}
        &\coloneqq
        \bigl[\mathbf{0}_{2d_e},\; \phi_h\bigl(\vh_{\nu^{(\ell)}_j},\, \vs_{\nu^{(\ell)}_j}\bigr)\bigr]
        =:
        \bigl[\mathbf{0}_{2d_e},\; \vh'_{\nu^{(\ell)}_j}\bigr]
        \in \R^{c_4}.
    \end{align*}
    The updated edge features $\ve'_e, \ve'_{\bar{e}}$ are preserved via the weight-to-weight identity, while the updated node features $\vh'_{\nu^{(\ell)}_j}$ are stored in the bias.

    \medskip
    Finally, since the NG functions $\phi_m$, $\phi_e$, and $\phi_h$ are MLPs, by Corollary~\ref{cor:app:dws_pointwise_mlp} they can be applied pointwise on feature channels, and all other operations are DWS linear maps. Therefore, the composition $M_4 \circ L_3 \circ M_2 \circ L_1$ exactly implements the NG layer using a constant-depth DWSNet.
    By lemma \ref{lem:app:composition_approximation} which shows that layer-wise approximation implies network approximation, we conclude that $\mathrm{NG}_{\mathrm{eq}}^{\arch} \preceq_K \mathrm{DWSNets}_{\mathrm{eq}}^{\arch}$.
\end{proof}

\subsection{Expressive power of NFT}
\label{sec:app:nft_expressivity}

In this subsection, we compare the expressive power of the Neural Functional
Transformer (NFT) architecture~\cite{zhou2023neural} to the other
permutation-equivariant weight-space models considered in this section, namely DWS, NP-NFN,
HNP-NFN, GMN, and NG-GNN. In the main theorem of this section (Theorem~\ref{thm:app:architecture_equivalence}), we show that these architectures
all have the same expressive power. Here, we prove Proposition \ref{prop:equiv_of_NFT_vs_all_models},  showing that without any assumptions on the input space, NFTs differ in their expressive power from all of the above weight-space networks, and that under a general-position assumption
on the inputs, this gap disappears and NFTs become equivalent in expressive
power to the other permutation-equivariant models. For convenience, we split the proposition into two parts and prove each one separately. We note that the proofs in this subsection rely on results proved later in the appendix; however, we present them here to maintain a natural and coherent narrative.

\begin{proposition}
    There exists an architecture $\arch = (\vd, \sigma)$, and a compact set of weights $K \subseteq \WS_\arch$ such that
    \begin{equation}
        \label{eq:nft-dws-inequivalence}
        \mathrm{NFT}_{\mathrm{eq}}^{\arch} \not\simeq_K \mathrm{DWSNets}_{\mathrm{eq}}^{\arch}
    \end{equation}
    (Definition~\ref{def:app:expressive_equivalence}).
\end{proposition}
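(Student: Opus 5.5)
We will exhibit an explicit pair of weights that DWS networks separate but every NFT maps to the same output. Take $K=\{\vv,\vv'\}$ or a small compact neighbourhood of the pair. The target map is a continuous $G$-invariant function $\wf$ with $\wf(\vv)=0$ and $\wf(\vv')=1$.

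Then $\wf\in\gN^{\mathrm{DWS}}_{\mathrm{inv}}(K)$, because DWS separates $\vv$ from $\vv'$. On the other hand $\wf\notin\gN^{\mathrm{NFT}}_{\mathrm{inv}}(K)$, because every NFT satisfies $\dws(\vv)=\dws(\vv')$ and so cannot approximate $\wf$ to accuracy below $1/2$. Hence $\mathrm{NFT}_{\mathrm{eq}}^{\arch}\not\simeq_K\mathrm{DWSNets}_{\mathrm{eq}}^{\arch}$. The equivariant version follows by broadcasting the invariant output to every entry, as in the proof of Proposition~\ref{prop:equiv_nonuniv}.

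Candidate pair. Use the smallest nontrivial architecture $\vd=(1,2,1)$ with equal hidden biases, so that $\vv,\vv'\in\gE$. Set
\[
\mW_1=(1,2)^\top,\quad \mW_2=(1,2),\qquad
\mW_1'=(1,2)^\top,\quad \mW_2'=(2,1),
\]
with $\vb_1=\vb_1'=(0,0)$ and $\vb_2=\vb_2'=0$. These lie in different $G$-orbits, since $\sum_i(\mW_1)_i(\mW_2)_i$ is $G$-invariant and equals $5$ for $\vv$ but $4$ for $\vv'$. (The same quantity also gives $f_\vv\neq f_{\vv'}$, so Theorem~\ref{thm:univ_functional}, together with the equivalence of the non-NFT architectures in Theorem~\ref{thm:app:architecture_equivalence}, yields the separation as well.)

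DWS separation. Directly, a DWS layer can transport $\mW_2$ onto the entries of $\mW_1$ through the equivariant block $\gW_2\to\gW_1$, which matches hidden index $j$. A pointwise MLP then forms the product $(\mW_1)_j(\mW_2)_j$, and a final sum-pooling recovers $5$ versus $4$.

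NFT blindness. This is the core step. We prove by induction over the NFT pipeline ($\MLP$, $\LE$, $\mathrm{BLOCK}_1,\dots,\mathrm{BLOCK}_n$, $\CA$) the following invariant: the feature tensors on $\vv$ and $\vv'$ agree on layer~$1$ and on all biases, and on layer~$2$ they agree up to swapping the two hidden entries of that layer alone. No permutation of $G$ realizes this swap, because $G$ would also have to permute layer~$1$, where the features are not swapped.

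Pointwise operations ($\MLP$, $\LN$, $\LE$) preserve the invariant trivially. The substantive check concerns the three attention sums in $\SA$. For each query we examine which key/value multisets $\mathrm{KV1}$, $\mathrm{KV2}$, $\mathrm{KV3}$ it sees.
\begin{itemize}
\item $\mathrm{KV3}$ is the global multiset of all entries, so it is identical for $\vv$ and $\vv'$.
\item $\mathrm{KV2}$ for a query at hidden neuron $j$ gathers the column $(\cdot,j)$ of $\mW_1$ together with all rows of $\mW_2$, as multisets.
\item $\mathrm{KV1}$ draws on whole rows and columns of adjacent layers, indexed by non-hidden neurons here.
\end{itemize}
Softmax attention is a function of the query and the multiset of key/value pairs. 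So the invariant holds provided that no attention set couples the layer-$2$ entry at hidden index $j$ specifically with the layer-$1$ entry at the same index $j$, beyond what the equal hidden biases already make symmetric. The final $\CA$ pools over all entries and is therefore symmetric in them.

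Main obstacle. Checking this coupling claim against the exact index sets in $\mathrm{KV1}$ and $\mathrm{KV2}$ is where the construction could fail. If some term, for example the bias-query term routed through $\mathrm{KV2}$, does pair index $j$ across the two layers, I would try the following fallbacks in order:
\begin{itemize}
\item Enlarge the hidden width, or add a second hidden layer, so that the pairing information DWS exploits (through its row/column broadcast of $\gB_\ell$ and $\gW_{\ell+1}$) is lost in NFT's softmax-weighted means. Softmax-weighted means cannot distinguish a multiset from one with the same distinct values in different proportions.
\item Alternatively, choose $\vv,\vv'$ with identical per-index multisets but different cross-layer pairing, and verify the invariant layer by layer.
\end{itemize}
Whichever configuration survives this check, the rest of the argument, namely the separating $\wf$ and the reduction to $\preceq_K$, is routine.
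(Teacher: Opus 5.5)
\textbf{There is a genuine gap, at exactly the step you flagged: your pair is not NFT-indistinguishable.} In Definition~\ref{def:app:nft}, take a layer-$1$ weight query. Its $\mathrm{KV2}$ keys are the column of $\mW_1$, the row of $\mW_2$ and $\vb_1$, each a vector indexed by the hidden neurons, and the query is the column of $\mW_1$. The scores are therefore inner products over the hidden index.

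With $\theta_Q=\theta_K=\mathrm{Id}$, the score against the row of $\mW_2$ is $\sum_j(\mW_1)_{j,1}(\mW_2)_{1,j}$. This equals $5$ for $\vv$ and $4$ for $\vv'$, so the attention weights already differ.

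Even with $\theta_Q=\theta_K=0$ and $\theta_V=\mathrm{Id}$, the uniform average writes $\tfrac13\bigl((\mW_1)_{i,1}+(\mW_2)_{1,i}+(\vb_1)_i\bigr)$ into token $(\mW_1)_{i,1}$. That gives $(2/3,4/3)$ for $\vv$ versus $(1,1)$ for $\vv'$. A pointwise square followed by uniform pooling then gives $20/9$ versus $2$. LayerNorm does not block this once the scalar entries are lifted injectively to a few channels, and the residual adds identical $\mW_1$ tokens to both inputs.

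This is the same $\gW_2\to\gW_1$ transport you use for the DWS separation. The $\mathrm{KV1}$ term for layer-$2$ queries couples the two layers in the same way. Treating $\mathrm{KV1}$ and $\mathrm{KV2}$ as unaligned multisets is the misreading: in weight-space attention a weight attends to the tokens incident to its neurons, and those include the matching entry of the adjacent layer. Your invariant ("layer $2$ agrees up to a swap of layer $2$ alone") therefore breaks in the first block.

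\textbf{The fallbacks do not repair this.} Your softmax remark is backwards. Weighted means do distinguish different proportions; what they miss is equal proportions at different multiplicities. That cannot occur here, because the sizes of $\mathrm{KV1},\mathrm{KV2},\mathrm{KV3}$ are fixed by the architecture.

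More fundamentally, it is doubtful that any pair exists that DWS separates but NFT does not. With zeroed queries and keys, layer encodings and masking MLPs, NFT attention reproduces DWS's neuron-incidence and global sum aggregations. The paper's own implementations of its update primitives use no general-position assumption.

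\textbf{The paper argues in the opposite direction, exploiting NFT's extra power rather than looking for a weakness.}
\begin{enumerate}
\item It takes $\arch=((1,4,4,1),\mathrm{ReLU})$ with all-ones $\mW_1,\mW_3$, zero biases, and $\mW_2$ the bipartite adjacency of $C_8$ versus $2C_4$.
\item These two weight configurations are 1-WL-indistinguishable. Hence NG-GNN does not separate them, and via Theorem~\ref{thm:equiv_of_all_models} neither does DWS (Proposition~\ref{prop:app:DWS_not_G}).
\item A single row-to-row attention on $\mW_2$ computes inner products of whole rows, i.e.\ common-neighbour counts: off-diagonal at most $1$ for $C_8$ versus $2$ for $2C_4$. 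This yields entries $0.73/0.27$ versus $0.88/0.12$.
\item A threshold MLP followed by uniform pooling outputs $8/33$ versus $16/33$.
\end{enumerate}
So a separating invariant function lies in $\gN^{\mathrm{NFT}}_{\mathrm{inv}}(K)$ but not in $\gN^{\mathrm{DWS}}_{\mathrm{inv}}(K)$.
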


\begin{proof}
    By Remark~\ref{rem:app:equivalence_of_definitions}, showing the current statement is equivalent to showing that $\gN^{\mathrm{NFT}}_{\mathrm{inv}}(K) \neq \gN^{\mathrm{DWS}}_{\mathrm{inv}}(K)$ and $\gN^{\mathrm{NFT}}_{\mathrm{equiv }}(K) \neq \gN^{\mathrm{DWS}}_{\mathrm{equiv}}(K)$ for some compact set $K$, where all terms $\gN^{\mathrm{NFT}}_{\mathrm{inv}}(K), \gN^{\mathrm{DWS}}_{\mathrm{inv}}(K),  \gN^{\mathrm{NFT}}_{\mathrm{equiv }}(K),   \gN^{\mathrm{DWS}}_{\mathrm{equiv}}(K)$  are defined in Definition \ref{def:archutecture_aprox_classes}.

    Let $\arch = ((1,4,4,1), \mathrm{ReLU})$ and let $\WS = \WS_{\arch}$.
    Define $\vv^1, \vv^2 \in \WS$ by
    $\vv^i = (\mW^i_1, \vb^i_1, \mW^i_2, \vb^i_2, \mW^i_3, \vb^i_3)$, $i \in [2]$, where
    \begin{equation}
        \mW^1_1 = \mW^2_1 = \mathbf{1}_{4 \times 1},
        \qquad
        \vb_1^1 = \vb_1^2 = 0,
        \qquad
        \mW_3^1 = \mW_3^2 = \mathbf{1}_{1 \times 4},
        \qquad
        \vb_2^1 = \vb_2^2 = 0,
        \qquad
        \vb_3^1 = \vb_3^2 = 0 .
    \end{equation}
    Finally, define the second-layer weight matrices by
    \begin{equation}
        \mW_2^1  =
        \begin{pmatrix}
            1 & 1 & 0 & 0 \\
            0 & 1 & 1 & 0 \\
            0 & 0 & 1 & 1 \\
            1 & 0 & 0 & 1
        \end{pmatrix},
        \qquad
        \mW_2^2 =
        \begin{pmatrix}
            1 & 1 & 0 & 0 \\
            1 & 1 & 0 & 0 \\
            0 & 0 & 1 & 1 \\
            0 & 0 & 1 & 1
        \end{pmatrix}.
    \end{equation}
    See Figure \ref{fig:prop-c8-graphs} for a visualization of these networks. First,  it is shown in the proof of Proposition \ref{prop:app:DWS_not_G} that any invariant DWS model $\dws:\WS \to \R$ satisfies $\dws(\vv^1) = \dws(\vv^2)$. We now construct an invariant NFT model $\wf$ such that  $\wf(\vv^1) \neq \wf(\vv^2)$. This shows that for any compact $K$ containing both $\vv^1, \vv^2$ it holds that $\gN^{\mathrm{NFT}}_{\mathrm{inv}}(K) \neq \gN^{\mathrm{DWS}}_{\mathrm{inv}}(K)$.

    Recalling the definition of the NFT architecture (Definition~\ref{def:app:nft}), we set the first component of our NFT network, composed of maps $\PE$, $\MLP$ and $\LE$ to be simply the identity map (this can be achieved by using $\MLP$ to ignore the concatenated positional encoding of $\PE$ and setting the learned positional encodings of $\LE$ to zero). We then define the first block $\mathrm{Block}_1$ of $\wf$ by choosing its self-attention layer to consist only of the row-wise attention update applied to the weight matrix $\mW_2$, while leaving all other weight and bias terms unchanged. Moreover, we take identity maps for the query, key, and value projections, i.e.,
    \begin{equation}
        \theta_Q = \theta_K = \theta_V = \mathrm{Id}.
    \end{equation}

    Formally, for $\vv = (\mW_1,\vb_1,\dots,\mW_3,\vb_3)$, writing
    $\mathrm{Block}_1(\vv) = (\mW_1^*,\vb_1^*,\dots,\mW_3^*,\vb_3^*)$, we have
    \begin{equation}
        \vb_\ell^* = \vb_\ell
        \quad \forall \ell \in [3],
        \qquad
        \mW_\ell^* = \mW_\ell
        \quad \forall \ell \in \{1, 3\},
    \end{equation}
    and the only updated term is $\mW_2$, given entrywise by
    \begin{equation}
    (\mW_2^*)
        _{i,j}
        =
        \sum_{j'=1}^{4} \alpha_i(j')\,(\mW_2)_{i,j'},
    \end{equation}
    where the attention weights are
    \begin{equation}
        \alpha_i(j)
        =
        \frac{
            \exp\!\bigl(\langle (\mW_2)_{i,:}, (\mW_2)_{j,:} \rangle\bigr)
        }{
            \sum_{j'=1}^{4}
            \exp\!\bigl(\langle (\mW_2)_{i,:}, (\mW_2)_{j',:} \rangle\bigr)
        }.
    \end{equation}

    After applying $\mathrm{Block}_1$ to $\vv^1, \vv^2$ all weight and bias features remain unchanged except $\mW_2^1, \mW_2^2$ which become

    \begin{equation}
        \mW_2^1  =
        \begin{pmatrix}
            0.73 &  0.73 & 0.27 & 0.27 \\
            0.27 &  0.73 &  0.73 & 0.27 \\
            0.27 & 0.27 &  0.73 &  0.73 \\
            0.73 & 0.27 & 0.27 &  0.73
        \end{pmatrix},
        \qquad
        \mW_2^2 =
        \begin{pmatrix}
            0.88 & 0.88 & 0.12 & 0.12 \\
            0.88 & 0.88 & 0.12 & 0.12 \\
            0.12 & 0.12 & 0.88 & 0.88 \\
            0.12 & 0.12 & 0.88 & 0.88
        \end{pmatrix}.
    \end{equation}

    We then define the second block $\mathrm{Block}_2$ of $\wf$ to apply a pointwise
    MLP to each weight- and bias-feature vector independently. Specifically, we
    choose the MLP so that for scalar inputs it satisfies
    \begin{equation}
        x < 0.8 \;\Rightarrow\; \MLP(x) = 0,
        \qquad
        x > 0.85 \;\Rightarrow\; \MLP(x) = 1.
    \end{equation}
    In this block, we disable the self-attention update by setting all value
    projections $\theta_V$ to zero and relying solely on the residual (skip) connection.
    Consequently, applying $\mathrm{Block}_2$ to $\vv^1$ and $\vv^2$ leaves all
    weights and biases unchanged except for $\mW_2^1$ and $\mW_2^2$, which become

    \begin{equation}
        \mW_2^1  =
        \begin{pmatrix}
            0 &  0 & 0 & 0 \\
            0 &  0 &  0 & 0 \\
            0 & 0 &  0 &  0 \\
            0 & 0 & 0 &  0
        \end{pmatrix},
        \qquad
        \mW_2^2 =
        \begin{pmatrix}
            1 & 1 & 0 & 0 \\
            1 & 1 & 0 & 0 \\
            0 & 0 & 1 & 1 \\
            0 & 0 & 1 & 1
        \end{pmatrix}.
    \end{equation}

    Finally, we apply an invariant pooling operator that averages all weight-space
    entries. This can be implemented by setting the key and query projections in
    the final cross-attention pooling layer $\CA$ of $\wf$ to zero, so that the
    learned token attends uniformly to all terms. This yields
    \begin{equation}
        \wf(\vv^1) = \frac{8}{33},
        \qquad
        \wf(\vv^2) = \frac{16}{33}.
    \end{equation}
    In particular, $\wf(\vv^1) \neq \wf(\vv^2)$, and therefore
    \begin{equation}
        \gN^{\mathrm{NFT}}_{\mathrm{inv}}(K) \neq \gN^{\mathrm{DWS}}_{\mathrm{inv}}(K).
    \end{equation}

    Moreover, the final head of $\wf$ consists of summation pooling, which is also available within the DWS framework. Therefore, if there existed a DWS-equivariant model that approximates $\wf$ up to (and excluding) this head, then composing it with summation pooling would yield an invariant DWS model separating $\vv^1$ and $\vv^2$, a contradiction. Hence,
    \begin{equation}
        \gN^{\mathrm{NFT}}_{\mathrm{equiv}}(K) \neq \gN^{\mathrm{DWS}}_{\mathrm{equiv}}(K),
    \end{equation}
    which completes the proof.
\end{proof}

\begin{proposition}
    For any architecture $\arch = (\vd, \sigma)$ and any compact set of weights $K \subseteq \WS_\arch \setminus \gE_\arch$,
    \begin{equation}
        \label{eq:nft-dws-equivalence}
        \mathrm{NFT}_{\mathrm{eq}}^{\arch} \simeq_K \mathrm{DWSNets}_{\mathrm{eq}}^{\arch}
    \end{equation}
    (Definition~\ref{def:app:expressive_equivalence}).
\end{proposition}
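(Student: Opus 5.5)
We prove the two containments $\mathrm{NFT}_{\mathrm{eq}}^{\arch} \preceq_K \mathrm{DWSNets}_{\mathrm{eq}}^{\arch}$ and $\mathrm{DWSNets}_{\mathrm{eq}}^{\arch} \preceq_K \mathrm{NFT}_{\mathrm{eq}}^{\arch}$ separately. Both directions lean on the universality results for GP inputs established later in the appendix: Theorem~\ref{thm:equiv_to_equiv_univ_off_exclusion} for equivariant operators and Theorem~\ref{thm:invariant_univ_off_exclusion} for invariant functionals.

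\textbf{First containment ($\mathrm{NFT} \preceq_K \mathrm{DWS}$).} This direction is essentially free. Every NFT map is continuous: attention, LayerNorm (with its $\epsilon$-regularization), MLPs and residuals are all continuous. Every NFT map is also $G$-equivariant, or $G$-invariant after the cross-attention pooling, since each attention summand in $\SA$ attends over index sets that are permuted consistently by $G$. Because $K \cap \gE = \emptyset$, the two theorems above imply that DWS networks approximate every continuous $G$-equivariant operator and every continuous $G$-invariant functional on $K$. In particular, they approximate every NFT map. Some care is needed with intermediate feature dimensions $c>1$. For this we use the multi-channel versions of the universality theorems, which follow from the same canonization argument applied to the first channel, whose biases are GP.

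\textbf{Second containment ($\mathrm{DWS} \preceq_K \mathrm{NFT}$).} Here we imitate the proof of Theorem~\ref{thm:equiv_to_equiv_univ_off_exclusion} inside NFTs. We need to show that NFTs can approximate the map $\tilde\canon$, which broadcasts $\flr(\canon(\vv))$ to every weight and bias entry. Once this is available, the pointwise $\MLP$ sublayers in later blocks (with attention switched off via $\theta_V=0$) can approximate the selecting function $\tilde f$ exactly as in that proof. Any continuous equivariant target is then reached, and in particular any DWS network; the invariant case follows analogously using the invariant head.

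The construction of $\tilde\canon$ proceeds in three steps.
\begin{enumerate}
\item \emph{Computing ranks.} For each hidden layer $\ell$, we compute the rank of $(\vb_\ell)_i$ among the entries of $\vb_\ell$. The bias-row attention over KV2 (or the global KV3) lets the query at bias $i$ attend over $\{(\vb_\ell)_{i'}\}$. To extract the rank $\sum_{i'} \mathbf{1}[(\vb_\ell)_{i'}<(\vb_\ell)_i]$ up to error, we use the MLP layers to precompute features such as $(\vb_\ell)_i$, $(\vb_\ell)_i^2$, and one-hot-like layer tags from $\LE$. These are arranged so that a softmax with large temperature yields sums of smooth step functions of the differences. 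A sum with nonzero attention mass on the query's own key can be recovered by renormalization, since $N$ is known per layer.
\item \emph{Distributing ranks.} By compactness and GP, the bias gaps are bounded below by some $\delta>0$ on $K$, so these approximate ranks are uniformly accurate. They are then passed to the weight entries $(\mW_\ell)_{i,j}$ and $(\mW_{\ell+1})_{k,i}$ through the row and column attentions KV1 and KV2, giving every weight entry its row and column ranks.
\item \emph{Broadcasting the canonical vector.} Finally, global attention KV3 broadcasts $\flr(\canon(\vv))$. With value vectors of the form $(\mW_\ell)_{i,j}\cdot e_{(\ell,\mathrm{rank}_i,\mathrm{rank}_j)}$, produced by an MLP from the entry value and its rank encodings, uniform attention (zero queries and keys) computes the average. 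Multiplying by the known total count gives the flattened canonical weights in every entry.
\end{enumerate}

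\textbf{Main obstacle.} The hardest step is obtaining exact, stable pairwise comparisons and uniform averages through softmax attention with LayerNorm in the path. A useful fact here is that zero query/key projections give exact uniform averaging. What I would try first is to replace comparisons with averaging of MLP-generated features. For instance, the $\MLP$ can encode each bias as a vector of smoothed indicators $\mathbf{1}[b>t_s]$ on a fine grid of thresholds $t_s$ with spacing below $\delta$. Uniform averaging over the layer then gives the counts at each threshold, and a second MLP reads off each bias's rank from its own value and these counts. Under this approach every step reduces to uniform attention plus pointwise MLPs. LayerNorm can be neutralized by adding large constant padding channels so it acts approximately affinely on the compact image, and residual streams carry the remaining features. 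Lemma~\ref{lem:app:composition_approximation} then assembles the approximations.
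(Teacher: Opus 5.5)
\textbf{First containment.} Your argument for $\mathrm{NFT}\preceq_K\mathrm{DWS}$ is the paper's. NFT maps are continuous and $G$-equivariant (resp.\ invariant), so the general-position universality theorems for DWS give the containment immediately. The remark about multi-channel versions of those theorems is unnecessary: you approximate the whole NFT map, whose input is single-channel.

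\textbf{Reverse containment: a different decomposition.} You and the paper both reproduce the canonization-based universality proof inside NFTs, but at different granularity.
\begin{itemize}
\item The paper observes that the DWS universality construction uses only seven update primitives: pointwise affine maps, global and per-layer bias summation with broadcast, lower/upper weight-to-bias broadcasts, and first/last-layer per-neuron selections. It implements each primitive with NFT blocks. Zero query/key projections give uniform averages over KV1, KV2 or KV3. Layer encodings plus pointwise MLPs act as masks. Known set sizes rescale averages into sums. Ranks are never built by hand; they come from DeepSets universality, which these primitives already contain.
\item You instead assemble $\tilde{\canon}$ end to end. Ranks come from an explicit circuit: threshold indicators averaged over the layer and read off from the counts, which is a concrete DeepSets sum-decomposition. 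KV1/KV2 move ranks onto incident weights, and a one-hot-tagged KV3 average broadcasts the canonical vector.
\end{itemize}
The paper's route is more modular, since any DWS model built from the primitives transfers directly. Yours is more self-contained. It is also more careful about LayerNorm, which the paper's proof does not address, and about the residual connection. Writing into fresh zero channels is cleaner than the paper's cancellation of the residual via multi-head attention. Of your two rank ideas, keep the threshold-averaging one: a single softmax yields a convex combination of values, not a count.

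\textbf{Missing ingredient: boundary-neuron indices.} The canonical positions of first-layer weights, and of last-layer weights and biases, depend on the \emph{indices} of input and output neurons. No rank computation supplies these: input neurons carry no bias, and output biases are not assumed distinct by the general-position condition. You must use the NP-NFN positional encoding $\PE$ that NFT applies to its input. It tags each input and output neuron with a one-hot identifier, and the pointwise MLP should copy these tags into the position label used in the KV3 broadcast. Without it, NFTs would also be equivariant to permutations of input and output neurons, and could not approximate general DWS maps. This is exactly the role of the paper's first- and last-layer per-neuron primitives. With this addition your construction goes through.
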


\begin{proof}
    By Remark~\ref{rem:app:equivalence_of_definitions}, showing the current statement is equivalent to showing that $\gN^{\mathrm{NFT}}_{\mathrm{inv}}(K) = \gN^{\mathrm{DWS}}_{\mathrm{inv}}(K)$ and $\gN^{\mathrm{NFT}}_{\mathrm{equiv}}(K) = \gN^{\mathrm{DWS}}_{\mathrm{equiv}}(K)$ for any compact set $K \subseteq \WS_\arch \setminus \gE_\arch$.
    First, since $K \subseteq \WS_\arch \setminus \gE_\arch$, Theorems~\ref{thm:equiv_to_equiv_univ_off_exclusion} and~\ref{thm:invariant_univ_off_exclusion} show that any invariant/equivariant map can be approximated on $K$ to any precision by DWS models, thus
    \begin{equation}
        \gN^{\mathrm{NFT}}_{\mathrm{inv}}(K) \subseteq \gN^{\mathrm{DWS}}_{\mathrm{inv}}(K) \qquad  \gN^{\mathrm{NFT}}_{\mathrm{equiv }}(K) \subseteq \gN^{\mathrm{DWS}}_{\mathrm{equiv}}(K).
    \end{equation}

    To prove the reverse inclusion, we rely on the proof of Theorem~\ref{thm:equiv_to_equiv_univ_off_exclusion} from
    Appendix~\ref{sec:app:equivariant_weight_space_networks}.
    As stated in Section \ref{sec:app:dws_layers} the proof shows that, in order to achieve universality, it suffices for DWS
    models to have access to only a \emph{restricted collection of linear update
    operations}. We now go over these update primitives one by one, restate them for convenience, and show that
    each of them can be implemented using NFT-style updates. Since these updates
    can subsequently be \emph{stacked} by composing multiple NFT blocks (and, when
    needed, using multi-headed attention), this yields a direct simulation of any
    DWS model constructed from these primitives. In particular, NFT models can
    therefore approximate any invariant/equivariant function on $K$ to arbitrary
    precision, completing the proof.

    Concretely, for $\vv \in \WS^c$ written as
    $\vv = (\mW_1,\vb_1,\dots,\mW_L,\vb_L)$, the update primitives of interest are:
    \begin{itemize}
        \item Pointwise affine update.
        \item Global summation operator.
        \item Bias summation operator.
        \item Lower Weight-to-bias operator.
        \item Upper weight-to-bias operator.
        \item First layer per-neuron operator.
        \item Last layer per-neuron operator.
    \end{itemize}

    \textbf{Pointwise affine update.}
    This operator applies the same affine map to each weight and bias feature vector, using a matrix $\mA \in \R^{c \times c'}$ and a vector
    $\vu \in \R^{c'}$. Concretely, for every layer $\ell$ and all valid indices
    $i,j$, we update
    \begin{equation}
    (\mW_\ell)
        _{i,j,:} \;\longmapsto\; (\mW_\ell)_{i,j,:}\mA + \vu,
        \qquad
        (\vb_\ell)_{i,:} \;\longmapsto\; (\vb_\ell)_{i,:}\mA + \vu .
    \end{equation}
    This update is directly realizable by the pointwise MLP component of NFT (see
    Definition~\ref{def:app:nft} Equation \ref{eq:nft_pointwise_update}).

    \textbf{Global summation operator.}
    This operator aggregates all weight and bias feature vectors in $\vv$ via a global sum, and then broadcasts the resulting vector back to every
    entry. Namely, define the global summary
    \begin{equation}
        \vs(\vv)
        \;\coloneqq\;
        \sum_{\ell}\sum_{i,j} (\mW_\ell)_{i,j,:}
        \;+\;
        \sum_{\ell}\sum_i (\vb_\ell)_{i,:}
        \;\in\; \R^{c}.
    \end{equation}
    Then, for every layer $\ell$ and all valid indices $i,j$, we update
    \begin{equation}
    (\mW_\ell)
        _{i,j,:} \;\longmapsto\; \vs(\vv),
        \qquad
        (\vb_\ell)_{i,:} \;\longmapsto\; \vs(\vv).
    \end{equation}

    This update can be implemented using the self-attention mechanism of NFT.
    Specifically, we set all key, query, and value projections to zero except for the value projection used to compute the $\mathrm{KV3}$ term (see
    Definition~\ref{def:app:nft}, Eq.~\ref{eq:KV3}), which we set to the identity. This yields a uniform average of
    all tokens in $\vv$. Since the architecture (and hence the number of tokens) is
    fixed, we can convert this average into the desired sum by applying a
    subsequent pointwise MLP.

    Finally, NFT blocks are equipped with a residual connection. To realize a
    \emph{pure} broadcast update of the form above, we use multi-headed attention
    to simultaneously compute (i) the identity map $\vv \mapsto \vv$ and (ii) the
    residual update $\vv \mapsto \vv + \ATTN(\vv)$, and then apply a pointwise MLP
    to combine these channels and extract
    \begin{equation}
        \vv + \ATTN(\vv) - \vv \;=\; \ATTN(\vv),
    \end{equation}
    thus effectively canceling the residual and implementing the global summation
    operator.

    \textbf{Bias summation operator.}
    Fix a layer index $\ell$. The $\ell$-th bias summation operator aggregates the bias vectors of layer $\ell$ by summation, broadcasts the result across the bias positions of that layer, and sets all other weight and bias entries
    to zero. Concretely, for every layer $\ell' \neq \ell$ and all valid indices
    $i,j$, we set
    \begin{equation}
    (\mW_{\ell'})
        _{i,j,:} \;\longmapsto\; 0,
        \qquad
        (\vb_{\ell'})_{i,:} \;\longmapsto\; 0.
    \end{equation}
    For the selected layer $\ell$, we set all weight entries to zero and replace
    each bias entry by the sum of all biases in that layer:
    \begin{equation}
    (\mW_{\ell})
        _{i,j,:} \;\longmapsto\; 0,
        \qquad
        (\vb_{\ell})_{i,:} \;\longmapsto\; \sum_{i'} (\vb_\ell)_{i',:}.
    \end{equation}

    We now show how to implement this operator using NFT updates. We begin with a
    pointwise MLP that pads every token with an additional coordinate, set
    to zero. Next, we use the learned layer encoding (see Definition~\ref{def:app:nft},
    Eq.~\ref{eq:layerenc}) to modify this last coordinate so that it equals $1$
    \emph{exactly} for bias tokens belonging to layer $\ell$, and remains $0$ for
    all other tokens. Applying another pointwise MLP, we can then set every token
    whose last coordinate is $0$ to the zero vector, while leaving the remaining
    tokens unchanged.

    Finally, we apply self-attention with all key/query/value projections set to
    zero except for the value projection used in the $KV3$ term (see
    Definition~\ref{def:app:nft}, Eq.~\ref{eq:KV3}), which we set to the identity. This
    causes every token to receive the same broadcasted summary, namely the sum of
    the surviving bias tokens, which equals $\sum_{i'} (\vb_\ell)_{i',:}$ by
    construction.

    To remove the residual contribution and ensure that only the intended bias
    tokens remain nonzero, we use multi-headed attention as before: one head
    computes the identity map $\vv \mapsto \vv$, and a second head computes the
    attention update $\vv \mapsto \vv + \ATTN(\vv)$. We then apply a pointwise MLP
    to the concatenation of these outputs, which outputs $0$ whenever the last
    coordinate of the identity part is $0$, and otherwise outputs
    \begin{equation}
        \vv + \ATTN(\vv) - \vv \;=\; \ATTN(\vv).
    \end{equation}
    This completes the implementation. This

    \textbf{Lower weight-to-bias operator.}
    This operator replaces each weight feature vector by the bias feature vector
    of the \emph{lower} neuron connected to that weight, while leaving all bias
    terms unchanged. Concretely, for every layer $\ell$ and all valid indices
    $i,j$, we update
    \begin{equation}
    (\mW_\ell)
        _{i,j,:} \;\longmapsto\; (\vb_{\ell-1})_{i,:},
        \qquad
        (\vb_\ell)_{i,:} \;\longmapsto\; (\vb_\ell)_{i,:}.
    \end{equation}

    We now show how to implement this operator using NFT updates. As before, we
    begin with a pointwise MLP that pads every token with an additional coordinate,
    initialized to zero. We then use the learned layer encoding (see
    Definition~\ref{def:app:nft}, Eq.~\ref{eq:layerenc}) to modify this last coordinate
    so that it equals $1$ for all bias tokens and remains $0$ for all weight
    tokens. Applying another pointwise MLP, we can set every token whose last
    coordinate is $0$ to the zero vector, while leaving the remaining tokens
    unchanged. In particular, this operation zeroes out all weight feature vectors
    and keeps only the bias tokens.

    Next, we apply self-attention with all key/query/value projections set to zero
    except for the value projection used in the $KV1$ term (see
    Definition~\ref{def:app:nft}, Eq.~\ref{eq:KV1}), which we set to the identity.
    Since the $\mathrm{KV1}$ aggregation for a weight token $(\mW_\ell)_{i,j,:}$ averages
    over the bias token $(\vb_{\ell-1})_{i,:}$ together with the $d_\ell$ incoming
    weight tokens (which are now all zero), the resulting attention output is
    \begin{equation}
    (\mW_\ell)
        _{i,j,:} \;\longmapsto\; \frac{1}{d_{\ell}+1}(\vb_{\ell-1})_{i,:},
        \qquad
        (\vb_\ell)_{i,:} \;\longmapsto\; 0.
    \end{equation}

    Finally, to ensure the correct treatment of bias tokens and to cancel the
    residual connection, we use multi-headed attention as before: one head computes
    the identity map $\vv \mapsto \vv$, and a second head computes the residual
    update $\vv \mapsto \vv + \ATTN(\vv)$. We then apply a pointwise MLP to the
    concatenation of these two heads, which outputs the identity head whenever the
    last coordinate of the identity head equals $1$ (i.e., for bias tokens), and
    otherwise outputs the rescaled attention term
    \begin{equation}
    (d_\ell+1)
        \cdot \bigl(\vv + \ATTN(\vv) - \vv\bigr)
        \;=\;
        (d_\ell+1)\cdot \ATTN(\vv).
    \end{equation}
    This produces exactly $(\vb_{\ell-1})_{i,:}$ at every weight position
    $(\mW_\ell)_{i,j,:}$ while leaving the bias tokens unchanged, completing the
    implementation.

    \textbf{Upper weight-to-bias operator.}
    This operator is analogous to the lower weight-to-bias operator, but replaces
    each weight feature vector by the bias feature vector of the \emph{upper}
    neuron incident to that weight, while leaving all bias terms unchanged.
    Concretely, for every layer $\ell$ and all valid indices $i,j$, we update
    \begin{equation}
    (\mW_\ell)
        _{i,j,:} \;\longmapsto\; (\vb_{\ell})_{j,:},
        \qquad
        (\vb_\ell)_{i,:} \;\longmapsto\; (\vb_\ell)_{i,:}.
    \end{equation}

    The implementation follows \emph{exactly} the construction used for the lower
    weight-to-bias operator, with the sole modification that the attention update
    is computed via the $\mathrm{KV2}$ aggregation (see Definition~\ref{def:app:nft},
    Eq.~\ref{eq:KV2}) in place of $\mathrm{KV1}$. This yields the desired broadcast of
    $(\vb_\ell)_{j,:}$ to every weight token $(\mW_\ell)_{i,j,:}$, while preserving
    all bias tokens.

    \textbf{First-layer per-neuron operator.}
    Fix an input neuron index $i' \in [d_0]$. This operator preserves exactly the
    first-layer weights incident to neuron $i'$, and sets all other weight and bias
    terms to zero. Concretely, for all $i \in [d_0]$ and $j \in [d_1]$, we update
    \begin{equation}
    (\mW_1)
        _{i,j,:} \;\longmapsto\;
        \begin{cases}
        (\mW_1)
            _{i,j,:}, & \text{if } i = i',\\
            0, & \text{otherwise},
        \end{cases}
    \end{equation}
    and map every remaining weight and bias token in $\vv$ to $0$.

    To implement this operator with NFT updates, we recall that NFT uses the positional encoding $\PE$ of NP-NFN (see Definition~\ref{def:app:npnfn_pe}), which
    appends to each token a type-dependent identifier. In particular, each
    first-layer weight token $(\mW_1)_{i,j,:}$ is concatenated with an embedding
    that depends only on the input-neuron index $i$, and thus uniquely tags all
    weights incident to neuron $i$. We can therefore apply a pointwise MLP that
    keeps exactly the tokens with the tag corresponding to $i'$ and maps all others
    to zero, realizing the desired per-neuron restriction.

    \textbf{Last-layer per-neuron update.}
    This operator preserves only the bias terms associated with a specified output neuron
    neuron in the last layer, and sets all other weights and bias terms to zero.
    Concretely, for fixed $j' \in [d_L]$ and all  $j \in [d_L]$, we
    apply
    \begin{equation}
    (\vb_L)
        _{j,:} \;\longmapsto\;
        \begin{cases}
        (\vb_L)
            _{j,:}, & \text{if } j = j',\\
            0, & \text{otherwise}.
        \end{cases}
    \end{equation}
    All remaining weight and bias terms are mapped to zero.

    The operator may be implemented by NFT updates in the exact same way as the first-layer per-neuron update, using the NP-NFN positional encoding along with an MLP update.

    Having implemented all required update primitives, we conclude that NFT models
    can simulate any DWS model constructed from these updates, completing the
    proof.

\end{proof}

\section{Expressive power of permutation-invariant weight-space networks}
\label{sec:app:invariant_weight_space_networks}

\subsection{Hierarchy of weight-space equivalence relations}
\label{sec:app:invariant_equivalence_hierarchy}
In order to characterize the expressive power of weight-space networks, we begin by analyzing their ability to separate points in weight space. To this end, we introduce three natural equivalence relations on weight space: functional equivalence, DWS equivalence, and $G$-equivalence.

\begin{definition}[Functional equivalence]
    \label{def:app:functional_equivalence}
    Let $\vv,\vv' \in \WS$. We say that $\vv$ and $\vv'$ are \emph{functionally equivalent},
    and write $\vv \sim_{\mathrm{func}} \vv'$, if they realize the same function, i.e.,
    \begin{equation}
        f_{\vv}(\vx) = f_{\vv'}(\vx)
        \quad \forall \vx \in \R^{d_0}.
    \end{equation}
\end{definition}

\begin{definition}[$G$-equivalence]
    Let $\vv,\vv' \in \WS$. We say that $\vv$ and $\vv'$ are \emph{$G$-equivalent}, and write
    $\vv \sim_G \vv'$, if there exists $g \in G$ such that
    \begin{equation}
        \vv' = \rho(g)\,\vv .
    \end{equation}
\end{definition}

\begin{definition}[DWS-equivalence]
    \label{def:app:dws_equivalence}
    Let $\vv,\vv' \in \WS$. We say that $\vv$ and $\vv'$ are \emph{DWS-equivalent}, and write $\vv \sim_{\mathrm{DWS}} \vv'$, if for all invariant DWS networks $\dws: \WS \to \R^n$ it holds that
    \begin{equation}
        \dws(\vv) = \dws(\vv').
    \end{equation}
    We say that $\vv$ and $\vv'$ are \emph{DWS-separable} if $\vv \not\sim_{\mathrm{DWS}} \vv'$.
\end{definition}

Our next theorem characterizes the relationship between the three equivalence relations introduced above.

\begin{theorem}[Hierarchy of equivalence relations]
    \label{thm:app:equivalence_hierarchy}
    Let $\WS$ be a weight space. Viewed as subsets of $\WS \times \WS$, the
    equivalence relations defined above satisfy the strict inclusions
    \begin{equation}
        \sim_G \ \subsetneq\ \sim_{\mathrm{DWS}} \ \subsetneq\ \sim_{\mathrm{func}} .
    \end{equation}
\end{theorem}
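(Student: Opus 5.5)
The proof has four parts: two inclusions and two strictness claims. Each is handled by invoking or constructing one separating or indistinguishable pair.

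\textbf{Inclusion $\sim_G \subseteq \sim_{\mathrm{DWS}}$.} This is immediate from invariance. Every invariant DWS network $\dws$ satisfies $\dws(\rho(g)\vv)=\dws(\vv)$, so if $\vv'=\rho(g)\vv$ then $\dws(\vv')=\dws(\vv)$ for all $\dws$.

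\textbf{Inclusion $\sim_{\mathrm{DWS}} \subseteq \sim_{\mathrm{func}}$.} We use the forward-pass simulation result of \citet{Navon2023}. For every input point $\vx\in\R^{d_0}$ there is an equivariant DWS network whose output, on any compact set of weights, approximates a weight-space object encoding $f_\vv(\vx)$. Composing with the last-layer per-neuron readout, which is invariant because $G$ fixes output neurons, yields invariant DWS networks $\dws_{\vx,\epsilon}$ with $|\dws_{\vx,\epsilon}(\vu)-f_{\vu}(\vx)|<\epsilon$ on a compact set containing both $\vv$ and $\vv'$. Suppose $\vv\sim_{\mathrm{DWS}}\vv'$. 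Then $|f_\vv(\vx)-f_{\vv'}(\vx)|<2\epsilon$ for every $\epsilon$ and every $\vx$, so $f_\vv=f_{\vv'}$.

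If that simulation result does not apply verbatim, for example because of the choice of activation, the fallback is to build the simulation directly from the primitive operations listed in the proof of the NFT proposition:
\begin{enumerate}
  \item use the first-layer per-neuron operators to inject the coordinates of $\vx$;
  \item use weight-to-bias pooling to form the pre-activations $\mW_\ell h^{(\ell-1)}+\vb_\ell$;
  \item use pointwise MLPs to approximate $\sigma$;
  \item iterate these steps layer by layer, invoking Lemma~\ref{lem:app:composition_approximation} to control the accumulated error.
\end{enumerate}

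\textbf{Strictness of $\sim_G \subsetneq \sim_{\mathrm{DWS}}$.} We use the pair $\vv^1,\vv^2$ with architecture $(1,4,4,1)$ from Figure~\ref{fig:prop-c8-graphs} (the 8-cycle versus two 4-cycles).
\begin{enumerate}
  \item $\vv^1\not\sim_G\vv^2$. This holds because $\mathrm{rank}(\mW_2)$ is invariant under $\mW_2\mapsto \mP_{\tau_2}^\top\mW_2\mP_{\tau_1}$. The cyclic matrix $\mW_2^1$ has rank $3$, since its eigenvalues $1+\omega$ for $\omega^4=1$ vanish only at $\omega=-1$, while $\mW_2^2$ has rank $2$.
  \item $\vv^1\sim_{\mathrm{DWS}}\vv^2$. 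By Theorem~\ref{thm:app:architecture_equivalence}, invariant DWS and NG-GNN networks have equal expressive power, so it suffices to show that NG-GNN cannot separate the pair. (The equivalence is stated for equivariant models; composing with summation pooling transfers it to the invariant case.) The two neural graphs are both layered and 2-regular between the hidden layers, with identical biases and layer features. A direct induction on message-passing rounds then shows that all nodes in the same layer keep identical features in both graphs, and all edges in the same layer do as well. The argument is a 1-WL color refinement, noting that edge features are also updated only from endpoint colors. Hence every NG readout agrees on the two inputs.
\end{enumerate}
This step is the main obstacle. It requires carefully checking that the cross-architecture equivalence preserves exact indistinguishability, which follows since approximation limits of functions that agree on $\{\vv^1,\vv^2\}$ still agree there, and that the induction handles both edge and node updates.

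\textbf{Strictness of $\sim_{\mathrm{DWS}} \subsetneq \sim_{\mathrm{func}}$.} We take a ReLU architecture and two parameterizations related by positive rescaling: multiply $\mW_1$ and $\vb_1$ by $2$ and $\mW_2$ by $1/2$. These realize the same function. They are DWS-separable because the invariant DWS network consisting of a global sum followed by a linear readout computes $\sum(\mW_1)_{ij}$, which differs between the two whenever $\mW_1$ has nonzero sum.
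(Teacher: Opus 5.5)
Your proposal is correct and follows the paper's proof essentially step for step. Invariance gives $\sim_G\subseteq\sim_{\mathrm{DWS}}$; the forward-pass simulation of \citet{Navon2023} gives $\sim_{\mathrm{DWS}}\subseteq\sim_{\mathrm{func}}$; the same $(1,4,4,1)$ pair, with the rank argument and NG-GNN/1-WL indistinguishability via Theorem~\ref{thm:app:architecture_equivalence}, gives the first strictness; and ReLU rescaling separated by the block sum of $\mW_1$ gives the second.

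One small precision is needed in the NG induction. In the NG graph, the zero-weight edges of layer $2$ are still present, so the induction should assert that edge features depend only on layer, direction and initial weight value. This goes through because, in both inputs, the $1$-entries and the $0$-entries of $\mW_2$ each form $2$-regular bipartite patterns.
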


To prove Theorem~\ref{thm:app:equivalence_hierarchy}, we decompose the argument into a sequence
of propositions establishing each inclusion and the strictness of the
containments.

\begin{proposition}
    \label{prop:app:G_implies_DWS}
    Let $\WS$ be a weight space. For all $\vv,\vv' \in \WS$,
    \begin{equation}
        \vv \sim_G \vv'
        \ \Longrightarrow\
        \vv \sim_{\mathrm{DWS}} \vv' .
    \end{equation}
\end{proposition}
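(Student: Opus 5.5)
The plan is to show directly that every invariant DWS network is constant on $G$-orbits, which is exactly what $\vv \sim_G \vv' \Rightarrow \vv \sim_{\mathrm{DWS}} \vv'$ requires.

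Fix $\vv,\vv'$ with $\vv' = \rho(g)\vv$ for some $g\in G$, and let $\dws:\WS\to\R^n$ be an arbitrary invariant DWS network. By construction (Definition~\ref{def:app:dws_networks} together with the invariant head of \citet{Navon2023}), $\dws$ has the form $\dws = H \circ L_n \circ \sigma \circ L_{n-1} \circ \cdots \circ \sigma \circ L_1$. Here each $L_i \in \mathrm{Aff}_G(\WS^{c_{i-1}},\WS^{c_i})$ is $G$-equivariant, $\sigma$ acts pointwise, and $H$ is a $G$-invariant map. Typically $H$ is an invariant affine pooling layer, possibly followed by an MLP on $\R^n$.

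First, I will check that each building block is equivariant. The affine layers $L_i$ satisfy $L_i(\rho(g)\vu) = \rho(g) L_i(\vu)$ by definition, since both their linear part and their bias term are $G$-equivariant; the bias term is a fixed $G$-fixed vector. The pointwise nonlinearity commutes with $\rho(g)$. This holds because $\rho(g)$ only permutes coordinates of $\WS^c$: by Definition~\ref{def:app:group_action_extended} it acts by permutation matrices on the neuron indices, independently of the feature channel. Hence $\sigma(\rho(g)\vu) = \rho(g)\sigma(\vu)$.

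Second, by induction on depth, the composition $E \coloneqq L_n\circ\sigma\circ\cdots\circ L_1$ is equivariant. Then
\[
\dws(\rho(g)\vv) = H(E(\rho(g)\vv)) = H(\rho(g)E(\vv)) = H(E(\vv)) = \dws(\vv),
\]
using the invariance of $H$. Any MLP applied after $H$ preserves this equality trivially. Since $\dws$ was arbitrary, $\dws(\vv')=\dws(\vv)$ for all invariant DWS networks, which gives $\vv\sim_{\mathrm{DWS}}\vv'$.

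The only point needing care is the commutation of $\sigma$ with $\rho(g)$ on feature-augmented spaces $\WS^c$. I would make explicit that the representation is a coordinate permutation on each slice $\vv_r$, so the argument holds at every intermediate channel width $c_i$.
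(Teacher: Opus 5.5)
Your proof is correct and takes the same route as the paper: write $\vv' = \rho(g)\vv$ and use the $G$-invariance of every invariant DWS network to conclude $\dws(\vv') = \dws(\vv)$. The only difference is that the paper cites this invariance from the original DWS work, whereas you verify it directly by checking that the equivariant affine layers, the pointwise nonlinearity (which commutes with the coordinate-permuting action $\rho(g)$ on each $\WS^{c}$), and the invariant head compose to an invariant map.
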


\begin{proof}
    The claim follows directly from the equivariance of DWSNets. If $\vv \sim_G \vv'$, then there exists $g \in G$ such that
    $\vv' = \rho(g)\,\vv$.
    As shown in \cite{Navon2023}, DWS networks are $G$-equivariant; that is, for any invariant DWS network
    $\dws: \WS \to \R^n$ it holds that
    \begin{equation}
        \dws(\vv') = \dws(\rho(g)\,\vv) = \dws(\vv).
    \end{equation}
    Therefore, $\vv \sim_{\mathrm{DWS}} \vv'$, completing the proof.
\end{proof}

\begin{proposition}
    \label{prop:app:DWS_implies_functional}
    Let $\WS$ be a weight space. For all $\vv,\vv' \in \WS$,
    \begin{equation}
        \vv \sim_{\mathrm{DWS}} \vv'
        \ \Longrightarrow\
        \vv \sim_{\mathrm{func}} \vv' .
    \end{equation}
\end{proposition}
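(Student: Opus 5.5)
We prove the contrapositive: if $f_{\vv}(\vx_0) \neq f_{\vv'}(\vx_0)$ for some $\vx_0 \in \R^{d_0}$, we exhibit an invariant DWS network $\dws$ with $\dws(\vv) \neq \dws(\vv')$. The natural tool is the forward-pass simulation result of \citet{Navon2023}. For any compact $K \subset \WS$, any fixed input $\vx_0$ and any $\epsilon>0$, there is a DWS network whose output approximates $f_{\vu}(\vx_0)$ uniformly over $\vu \in K$. We take $K=\{\vv,\vv'\}$, which is finite and hence compact, and set $\delta \coloneqq \|f_{\vv}(\vx_0)-f_{\vv'}(\vx_0)\|>0$.

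The plan proceeds in three steps. \emph{First}, we recall, or re-derive if needed, the forward-pass construction inside our DWS formalism (Definition~\ref{def:app:dws_networks}). The input $\vx_0$ enters as a constant: since $G$ acts trivially on layer $0$, it can be injected through an affine bias term on $\gW_1$, e.g.\ by placing $(\vx_0)_j$ in an extra channel at position $(i,j)$. A pointwise product approximation (Corollary~\ref{cor:app:dws_pointwise_mlp} together with universal approximation of $(a,b)\mapsto ab$ on a compact set) then gives $(\mW_1)_{i,j}(\vx_0)_j$. A pooling $\gW_1 \to \gB_1$ followed by a pointwise MLP approximating $\sigma(\cdot + b)$ yields the hidden activations stored in the bias slots of layer $1$. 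We iterate layer by layer: broadcast the activations from $\gB_{\ell-1}$ to $\gW_\ell$, multiply pointwise, pool to $\gB_\ell$, and apply $\sigma$. Every intermediate quantity lies in a compact set because $K$ is finite, so Lemma~\ref{lem:app:composition_approximation} controls the accumulated error.

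\emph{Second}, we read out the result. The output activations lie in $\gB_L$, on which $G$ acts trivially, so a final affine DWS layer can map them into an invariant output in $\R^{d_L}$; alternatively, we can pool with channel masks, since output neurons are fixed. Choosing accuracy $\epsilon<\delta/2$, the triangle inequality gives $\dws(\vv)\neq\dws(\vv')$. This contradicts $\vv\sim_{\mathrm{DWS}}\vv'$, and the contrapositive is proved.

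The main obstacle is the first step: the pointwise-product and activation approximations must be carried out channel-wise while respecting equivariance, and errors must remain controlled through $L$ layers. The case $L=1$, with no hidden layers, should be checked separately but is trivial. Since the construction only needs to work on a finite set, I expect that citing \citet{Navon2023} together with Lemma~\ref{lem:app:composition_approximation} suffices, and that no explicit error bookkeeping is required.
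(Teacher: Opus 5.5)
Your proposal is correct and follows essentially the same route as the paper. Both argue by contrapositive, invoking the forward-pass simulation of Navon et al.\ with the separating input $\vx^\ast$ frozen into the network as a $G$-invariant constant, so the result is an ordinary invariant DWS network computing approximately $f_{\vv}(\vx^\ast)$. Your choice of accuracy $\epsilon<\delta/2$ is in fact slightly more careful than the paper, which asserts exact equality $\dws^\ast(\vv,\vx)=f_{\vv}(\vx)$ where only uniform approximation on compact sets is available.
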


\begin{proof}
    We prove the contrapositive. Suppose that $\vv \not\sim_{\mathrm{func}} \vv'$.
    Then there exists $\vx^\ast \in \R^{d_0}$ such that
    \begin{equation}
        f_{\vv}(\vx^\ast) \neq f_{\vv'}(\vx^\ast).
    \end{equation}

    As shown in \cite{Navon2023}, the DWS architecture can be extended to operate on
    weight--input pairs, yielding networks of the form
    \begin{equation}
        \dws : \WS \oplus \R^{d_0} \to \R^{n},
    \end{equation}
    where the group $G$ acts trivially on the input space $\R^{d_0}$.
    Concretely, the input vector $\vx \in \R^{d_0}$ is broadcast to a matrix $\mX \in \R^{d_0 \times d_1}$ and
    concatenated with the first-layer weights, producing an augmented representation in $\WS^2$; the network then proceeds as usual, by composing pointwise nonlinearities with affine $G$-equivariant maps of the form $\WS^{c} \to \WS^{c'}$. Moreover, \cite{Navon2023} show that for any compact set $K \subset \WS \oplus \R^{d_0}$, there exists such an extended DWS network
    $\dws^\ast$ satisfying
    \begin{equation}
        \dws^\ast(\vv, \vx) = f_{\vv}(\vx)
        \quad \text{for all } (\vv,\vx) \in K.
    \end{equation}

    Fix a compact set $K \subseteq \WS$ containing both $\vv$ and $\vv'$, and consider
    the compact set $K^* = K \times \{\vx^\ast\}$.
    Let $\dws^\ast$ be an extended DWS network realizing $f_{\vv}(\vx)$ on $K^*$, and define
    \begin{equation}
        \dws(\cdot) \coloneqq \dws^\ast(\cdot, \vx^\ast) : \WS \to \R^{m}.
    \end{equation}
    By construction, $\dws$ is a composition of pointwise nonlinearities and affine
    $G$-equivariant maps acting on $\WS$, and therefore constitutes a standard DWS
    network.

    Consequently,
    \begin{equation}
        \dws(\vv) = f_{\vv}(\vx^\ast)
        \neq
        f_{\vv'}(\vx^\ast) = \dws(\vv'),
    \end{equation}
    showing that $\vv$ and $\vv'$ are separable by a DWS network. Hence,
    $\vv \not\sim_{\mathrm{DWS}} \vv'$, which establishes the contrapositive and
    completes the proof.
\end{proof}

\begin{proposition}
    \label{prop:app:functional_not_DWS}
    There exists a weight space $\WS$ and weights $\vv,\vv' \in \WS$ such that
    \begin{equation}
        \vv \sim_{\mathrm{func}} \vv',
        \qquad
        \vv \not\sim_{\mathrm{DWS}} \vv' .
    \end{equation}
\end{proposition}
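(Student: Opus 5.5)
We will exhibit two weight configurations that realize the same function but are separated by a simple invariant DWS network. The most direct source of functional equivalence beyond permutations is a degeneracy of the MLP, so we use a \emph{dead neuron}. We exploit the fact that DWS networks can compute permutation-invariant statistics of the raw parameters, such as sums of squared weights. Such statistics need not be constant on functional-equivalence classes.

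Concretely, take $\arch=((1,1,1),\mathrm{ReLU})$, so $G$ is trivial and $\WS\cong\R^4$ with coordinates $(\mW_1,\vb_1,\mW_2,\vb_2)$. Let $\vv=(1,0,0,0)$ and $\vv'=(2,0,0,0)$. In both cases the output weight $\mW_2=0$ and $\vb_2=0$, so $f_{\vv}\equiv f_{\vv'}\equiv 0$ and hence $\vv\sim_{\mathrm{func}}\vv'$. (An alternative with nontrivial $G$ is the positive rescaling $\vv=(\mW_1,\vb_1,\mW_2,\vb_2)$ versus $(c\mW_1,c\vb_1,\mW_2/c,\vb_2)$ for $c>0$. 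The dead-neuron example is cleaner, though, since it involves no activation-specific identity beyond $\mW_2=0$.)

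To separate $\vv$ and $\vv'$, we construct an invariant DWS network $\dws:\WS\to\R$. Its first layer is a bias-free pointwise affine DWS layer. Such layers are $G$-equivariant and thus legitimate DWS layers (Corollary~\ref{cor:app:dws_pointwise_mlp}). It is followed by a DWS invariant pooling layer: the sum over all entries of all weight and bias tensors is a $G$-invariant linear map and is included among DWS invariant layers. The simplest choice takes the pointwise map to be the identity and pools by summation, giving $\dws(\vu)=\sum$ of all parameters. This yields $\dws(\vv)=1\neq 2=\dws(\vv')$, so $\vv\not\sim_{\mathrm{DWS}}\vv'$. If one prefers to avoid relying on the linear pooling layer alone, one can instead read off the $\gW_1$ block via an invariant projection onto that block, which is again $G$-invariant because $G$ is trivial here.

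The only point requiring care is verifying that the separating map really belongs to the invariant DWS class as defined by \cite{Navon2023}, i.e.\ that an invariant readout summing a weight block is an admissible $G$-invariant affine layer. This is immediate since $G$-invariant linear functionals on $\WS$ include all block sums, and with trivial $G$ every linear functional qualifies. Combined with Proposition~\ref{prop:app:DWS_implies_functional}, this establishes the strictness of the inclusion $\sim_{\mathrm{DWS}}\subsetneq\sim_{\mathrm{func}}$ in Theorem~\ref{thm:app:equivalence_hierarchy}.
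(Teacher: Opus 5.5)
Your proof is correct and follows the same strategy as the paper: exhibit two functionally equivalent weight configurations that are not related by a permutation, and separate them with a single $G$-invariant linear readout. The paper uses the ReLU rescaling $\mW_1\mapsto\lambda\mW_1$, $\mW_2\mapsto\lambda^{-1}\mW_2$ on $((1,2,1),\mathrm{ReLU})$ together with the functional $\mW_1^\top\mathbf{1}$, which is exactly the alternative you mention. Your dead-output-weight instance with trivial $G$ works equally well; it is somewhat more degenerate, since with trivial $G$ every affine functional is an admissible DWS layer, whereas the paper's example keeps a nontrivial hidden-neuron permutation group.
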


\begin{proof}
    We exhibit a concrete counterexample.
    Let $\arch = ((1,2,1),\mathrm{ReLU})$ and let $\WS = \WS_{\arch}$.
    Define $\vv = (\mW_1,\vb_1,\mW_2,\vb_2) \in \WS$ by
    \begin{equation}
        \mW_1 =
        \begin{pmatrix}
            1 \\ 0
        \end{pmatrix},
        \qquad
        \vb_1 =
        \begin{pmatrix}
            0 \\ 0
        \end{pmatrix},
        \qquad
        \mW_2 =
        \begin{pmatrix}
            1 & 0
        \end{pmatrix},
        \qquad
        \vb_2 = 0 .
    \end{equation}
    A direct computation shows that the function realized by $\vv$ satisfies
    \begin{equation}
        f_{\vv}(x,y) = \mathrm{ReLU}(x).
    \end{equation}

    Fix a scaling factor $\lambda > 0$ with $\lambda \neq 1$, and define
    $\vv' = (\mW_1',\vb_1',\mW_2',\vb_2') \in \WS$ by
    \begin{equation}
        \mW_1' =
        \begin{pmatrix}
            \lambda \\ 0
        \end{pmatrix},
        \qquad
        \vb_1' =
        \begin{pmatrix}
            0 \\ 0
        \end{pmatrix},
        \qquad
        \mW_2' =
        \begin{pmatrix}
            \lambda^{-1} & 0
        \end{pmatrix},
        \qquad
        \vb_2' = 0 .
    \end{equation}
    By the positive homogeneity of the ReLU activation, the rescaling of the first
    layer by $\lambda$ and of the second layer by $\lambda^{-1}$ leaves the realized
    function unchanged. Consequently,
    \begin{equation}
        f_{\vv'}(x,y) = \mathrm{ReLU}(x) = f_{\vv}(x,y),
    \end{equation}
    and therefore $\vv \sim_{\mathrm{func}} \vv'$.

    We now show that $\vv$ and $\vv'$ are not DWS-equivalent.
    Define a map $\dws : \WS \to \R$ by
    \begin{equation}
        \dws(\vv) \coloneqq \mW_1^\top \mathbf{1},
    \end{equation}
    where $\mathbf{1}$ denotes the all-ones vector of appropriate dimension.
    This map is linear and $G$-invariant, and hence constitutes a valid (single-layer)
    DWS network.
    Evaluating $\dws$ at $\vv$ and $\vv'$ yields
    \begin{equation}
        \dws(\vv) = 1,
        \qquad
        \dws(\vv') = \lambda .
    \end{equation}
    Since $\lambda \neq 1$, it follows that $\dws(\vv) \neq \dws(\vv')$, and thus
    $\vv \not\sim_{\mathrm{DWS}} \vv'$.
\end{proof}

\begin{proposition}
    \label{prop:app:DWS_not_G}
    There exists a weight space $\WS$ and weights $\vv,\vv' \in \WS$ such that
    \begin{equation}
        \vv \sim_{\mathrm{DWS}} \vv',
        \qquad
        \vv \not\sim_G \vv' .
    \end{equation}
\end{proposition}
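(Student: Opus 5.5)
We take the two weights already introduced in the NFT inequivalence proof: the architecture $\arch=((1,4,4,1),\mathrm{ReLU})$ and $\vv=\vv^1$, $\vv'=\vv^2$. They have all-ones first and last layers and zero biases. Their middle matrices are $\mW_2^1$, the $8$-cycle bipartite adjacency, and $\mW_2^2$, two disjoint $4$-cycles. We must show two things: first $\vv\not\sim_G\vv'$, and then $\vv\sim_{\mathrm{DWS}}\vv'$.

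\textbf{Step 1: the two weights lie in different $G$-orbits.} Any $g\in G$ acts on $\mW_2$ by $\mW_2\mapsto \mP_{\tau_2}^\top\mW_2\mP_{\tau_1}$. This preserves rank. We have $\mathrm{rank}(\mW_2^2)=2$. For $\mW_2^1$, the alternating sum of rows is zero, and a direct check shows the rank is $3$. Hence no $g$ maps $\vv$ to $\vv'$. (Equivalently, the bipartite graphs are non-isomorphic: one is connected, the other is not.)

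\textbf{Step 2: DWS-indistinguishability.} Rather than work with DWS layers directly, we use the cycle of containments in Theorem~\ref{thm:app:architecture_equivalence}. Every DWS network is a uniform limit, on any compact $K\ni\vv,\vv'$, of NG-GNN networks. Therefore, if every invariant NG-GNN network agrees on $\vv$ and $\vv'$, so does every invariant DWS network: a DWS network separating them by $\delta>0$ would be approximated to within $\delta/3$ by an NG network, which would then separate them too. It thus suffices to show that message passing cannot separate the two neural graphs. By the standard bound of \citet{morris2019weisfeiler}, MPNNs with edge features are no stronger than 1-WL with edge colors. So we only need to exhibit a common stable 1-WL coloring.

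\textbf{Constructing the coloring.} We color the nodes by layer index and type, identically in both graphs. The single input node, the single output node, and all hidden nodes of a given layer get the same colors in both graphs. Edges carry (weight value, layer, direction); all biases are $0$. We then check that this partition is equitable, with identical counts in both graphs:
\begin{itemize}
\item the input node is joined to all $4$ layer-1 nodes by weight-$1$ edges;
\item each layer-1 node has one input neighbor, exactly two weight-$1$ edges and two weight-$0$ edges into layer 2, and symmetrically for layer-2 nodes (since both $\mW_2$ matrices have row and column sums $2$);
\item each layer-2 node has one output neighbor;
\item the output node has four weight-$1$ neighbors.
\end{itemize}
Because the refinement of an equitable coloring is stable and the color histograms coincide, 1-WL cannot distinguish the graphs. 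NG's one-hot ntype and nlayer features are constant on each class, so they do not break the coloring.

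The main obstacle is making the transfer from the WL bound to the full NG feature model rigorous. Two points need care: weight-$0$ edges are present as edges carrying feature $0$ (the graph is complete bipartite), and the message and edge updates depend on the features at both endpoints. We handle this by an induction over layers showing that, after each NG layer, all node features within a layer and all edge features within a given (layer, direction, original weight value) class are constant and equal across the two graphs. The inductive step uses only the degree counts listed above. The invariant readout then sums equal multisets, which completes the argument.
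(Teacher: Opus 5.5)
Your proposal is correct and follows the paper's proof essentially step for step. It uses the same $((1,4,4,1),\mathrm{ReLU})$ counterexample with the $8$-cycle versus two-$4$-cycle middle matrices, the same rank argument ($3$ versus $2$) to show the weights lie in different $G$-orbits, and the same reduction through Theorem~\ref{thm:equiv_of_all_models} to NG-GNNs followed by the 1-WL bound of \citet{morris2019weisfeiler}; your explicit equitable-partition check, which accounts for the weight-$0$ edges of the complete bipartite neural graph and for NG's edge updates, simply fills in the ``direct computation'' that the paper leaves implicit.
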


\begin{proof}

    Let $\arch = ((1,4,4,1), \mathrm{ReLU})$ and let $\WS = \WS_{\arch}$.
    Define $\vv, \vv' \in \WS$ by
    $\vv = (\mW_1, \vb_1, \mW_2, \vb_2, \mW_3, \vb_3)$ and
    $\vv' = (\mW_1', \vb_1', \mW_2', \vb_2', \mW_3', \vb_3')$ where
    \begin{equation}
        \mW_1 = \mW_1' = \mathbf{1}_{4 \times 1},
        \qquad
        \vb_1 = \vb_1' = 0,
        \qquad
        \mW_3 = \mW_3' = \mathbf{1}_{1 \times 4},
        \qquad
        \vb_2 = \vb_2' = 0,
        \qquad
        \vb_3 = \vb_3' = 0 .
    \end{equation}
    Here, $\mathbf{1}_{4 \times 1}$ denotes the $4 \times 1$ column vector of ones and $\mathbf{1}_{1 \times 4}$ denotes the $1 \times 4$ row vector of ones.
    Finally, define the second-layer weight matrices by
    \begin{equation}
        \mW_2  =
        \begin{pmatrix}
            1 & 1 & 0 & 0 \\
            0 & 1 & 1 & 0 \\
            0 & 0 & 1 & 1 \\
            1 & 0 & 0 & 1
        \end{pmatrix},
        \qquad
        \mW_2' =
        \begin{pmatrix}
            1 & 1 & 0 & 0 \\
            1 & 1 & 0 & 0 \\
            0 & 0 & 1 & 1 \\
            0 & 0 & 1 & 1
        \end{pmatrix}.
    \end{equation}

    First, suppose toward a contradiction that $\vv \sim_G \vv'$.
    By Definition~\ref{def:group_action}, this implies that there exist permutation
    matrices $\mP_1, \mP_2 \in \R^{4 \times 4}$ such that
    \begin{equation}
        \mW_2 = \mP_2^\top \mW_2' \mP_1 .
    \end{equation}
    However, $\mW_2$ has rank $3$, whereas $\mW_2'$ has rank $2$. Since left and right multiplication by permutation matrices preserves matrix
    rank, the above equality cannot hold.
    We conclude that $\vv \not\sim_G \vv'$.

    We now show that $\vv \sim_{\mathrm{DWS}} \vv'$, i.e., that
    $\dws(\vv) = \dws(\vv')$ for every DWS network $\dws$.
    By Theorem~\ref{thm:equiv_of_all_models}, it suffices to prove that
    $\wf(\vv) = \wf(\vv')$ for every NG-GNN model $\wf$.

    Recall that an NG-GNN applies a message-passing graph neural network to the
    computational graph induced by the underlying MLP architecture
    (see \cite{Kofinas2024NeuralGraphs} for details).
    A direct computation shows that the computational graphs induced by $\vv$ and
    $\vv'$ are indistinguishable by the Weisfeiler--Lehman test (see Figure~\ref{fig:prop-c8-graphs} for an illustration).
    As shown in \cite{morris2019weisfeiler}, this implies that no message-passing neural network can separate the two
    graphs.
    Consequently, $\wf(\vv) = \wf(\vv')$ for all NG-GNN models $\wf$, and therefore
    $\vv \sim_{\mathrm{DWS}} \vv'$, completing the proof.

\end{proof}

\subsection{Universal approximation of function-space functionals}
\label{sec:app:invariant_function_space_functionals}
In this section, we leverage the analysis of the equivalence relations $\sim_{\mathrm{func}}$ and $\sim_{\mathrm{DWS}}$ developed in Appendix~\ref{sec:app:invariant_equivalence_hierarchy} to prove Theorem \ref{thm:univ_functional}, thus establishing the universality of weight-space networks with respect to the class of continuous function-space functionals. For completeness, we restate the theorem here.

\begin{theorem}
Let $K \subseteq \WS$ be compact. Then any function-space functional $\wf : \gC(X,\R^{d_L}) \to \sR^n$ can be approximated on $K$ by permutation-invariant weight-space networks.
\end{theorem}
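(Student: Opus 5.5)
By Theorem~\ref{thm:equiv_of_all_models}, it suffices to prove the statement for invariant DWS networks; the other architectures then inherit it. The plan is a separation-to-approximation argument. First, pull the target back to weight space. Define $\wf^\ast : K \to \R^n$ by $\wf^\ast(\vv) \coloneqq \wf(f_{\vv}) = \wf(\gR(\vv))$. By Proposition~\ref{prop:app:realization_map_continuous}, $\gR$ is continuous into $(\gC(X,\R^{d_L}),\|\cdot\|_\infty)$. Since $\wf$ is continuous, $\wf^\ast$ is therefore continuous on $K$. By Definition~\ref{def:approximation_framework}(1), the goal is to approximate $\wf^\ast$ uniformly on $K$ by invariant DWS networks.

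Second, check that $\wf^\ast$ respects the separation power of DWS. If $\vv \sim_{\mathrm{DWS}} \vv'$, then Proposition~\ref{prop:app:DWS_implies_functional} gives $\vv \sim_{\mathrm{func}} \vv'$, so $f_{\vv} = f_{\vv'}$ and hence $\wf^\ast(\vv) = \wf^\ast(\vv')$. One subtlety needs attention. Functional equivalence is defined on all of $\R^{d_0}$, whereas $\wf$ only sees $f_{\vv}|_X$. Equality on $\R^{d_0}$ implies equality on $X$, so the implication runs in the needed direction. Thus $\wf^\ast$ is constant on the fibers of the DWS separation relation.

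Third, invoke the separation-to-approximation theorem of \citet{Pacini2025Sep}. Invariant DWS networks are compositions of $G$-equivariant affine layers and pointwise nonlinearities, followed by an invariant affine readout. This is exactly the network family covered by that result. Any continuous $F : \WS \to \R^n$ satisfying ``$M(\vv)=M(\vv')$ for all $M$ in the family $\Rightarrow F(\vv)=F(\vv')$'' is then uniformly approximable on compact sets. $\wf^\ast$ is defined only on $K$, so I would apply the theorem in its compact-domain form, which only requires continuity of $F$ on $K$ and the separation implication for pairs in $K$. If the theorem instead requires a globally defined $F$, I would extend $\wf^\ast$ by Tietze. The extension may break fiber-constancy off $K$, however. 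In that case I would pass through the quotient: factor $\wf^\ast = h \circ \Phi_\infty$ on $K$, where $\Phi_\infty$ is a countable separating family of DWS networks, and then use Stone--Weierstrass. This yields, for every $\epsilon$, an invariant DWS $\dws$ with $\sup_{\vv\in K}\|\wf(f_{\vv}) - \dws(\vv)\|_2 < \epsilon$.

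The main obstacle is the third step: matching the precise hypotheses of \citet{Pacini2025Sep}. These include the allowed nonlinearity (e.g.\ ReLU versus analytic activations), whether the separation condition must hold over all of $\WS$ or only on $K$, and the form of the invariant readout layer. The second step is comparatively easy, since it reduces to Proposition~\ref{prop:app:DWS_implies_functional}, which already uses \citet{Navon2023}'s forward-pass simulation. If the hypotheses fail to match, the fallback is a direct Stone--Weierstrass argument on the compact Hausdorff quotient $K/{\sim_{\mathrm{DWS}}}$. That argument needs closure of DWS networks under products, which I would get approximately via MLP heads applied to concatenated invariant outputs.
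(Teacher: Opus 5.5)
Your proposal is correct and follows the paper's proof. You pull $\wf$ back along the continuous realization map, use Proposition~\ref{prop:app:DWS_implies_functional} to get constancy of $\wf^\ast$ on $\sim_{\mathrm{DWS}}$-classes, and invoke a separation-to-approximation theorem. The appendix version also notes that the separating networks have depth bounded by the architecture alone, to meet the bounded-depth hypothesis of \citet{Pacini2025Univ}; your Stone--Weierstrass fallback avoids needing this. The Tietze detour is unnecessary, since $\wf^\ast=\wf\circ\gR$ is already defined, continuous, and constant on $\sim_{\mathrm{DWS}}$-classes on all of $\WS$.
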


\begin{proof}
Let $\arch = (\vd, \sigma)$ be an MLP architecture with corresponding weight space $\WS$ and let $\wf : \gC(X,\R^{d_L}) \to \R^n$ be continuous. From Theorem \ref{thm:equiv_of_all_models}, it suffices to show that, for any compact set
$K \subseteq \WS$, the map $\wf^*$ defined by 
\begin{equation}
    \wf^*(\vv) = \wf(f_{\vv})
\end{equation}
can be uniformly approximated on $K$ by some DWS network.

First, $\wf^*$ is continuous because the realization map is continuous (Proposition~\ref{prop:app:realization_map_continuous}) and $\wf$ is continuous.

Second, by Proposition~\ref{prop:app:DWS_implies_functional}, for any pair of weights
$\vv,\vv' \in \WS$ such that $f_{\vv} \neq f_{\vv'}$, there exists a DWS network
$\dws$ satisfying $\dws(\vv) \neq \dws(\vv')$.
Importantly, the depth of such a separating DWS network depends only on the architecture underlying $\WS$, and not on the particular choice of
$\vv,\vv'$.
Thus, since the map $\wf^*$ is constant on functional-equivalence classes, it is also constant on DWS-equivalence classes, even when restricting to DWS networks of bounded depth.

It was shown in \cite{Pacini2025Univ} (Theorem~1) that any continuous function on a compact set that is constant on the equivalence classes induced by equivariant
affine models of bounded depth can be uniformly approximated by such models.
Applying this result, we conclude that $\wf^*$ can be
approximated arbitrarily well on $K$ by a DWS network, completing the proof.
\end{proof}

\subsection{Universal approximation of permutation-invariant functionals}
\label{sec:app:invariant_permutation_invariant_functionals}

In this section, we analyze the approximation power of weight-space networks with respect to the class of continuous $G$-invariant maps. We prove Proposition \ref{prop:invariant_nonuniv} and Theorem \ref{thm:invariant_univ_off_exclusion}, showing that, while weight-space networks are not universal approximators for this class in full generality,
they are universal assuming the input is in general position. For the remainder of this section, we focus on DWS networks, since Theorem~\ref{thm:equiv_of_all_models} shows that all other prominent architectures are expressively equivalent. For convenience, we restate the theorem below.

\begin{proposition}
    There exists a compact set $K \subseteq \WS$ and a permutation-invariant map
    $\wf : K \to \sR$ that cannot be approximated on $K$ by invariant weight-space
    models.
\end{proposition}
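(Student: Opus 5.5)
The plan is to reduce the statement to a separation failure, and then pass from that separation failure to a non-approximation statement. I will use the pair of weights $\vv,\vv'$ from Proposition~\ref{prop:app:DWS_not_G}, with $\arch=((1,4,4,1),\mathrm{ReLU})$, and take $K=\{\vv,\vv'\}$. This set is finite and hence compact; if one wants a more natural set, its $G$-orbit closure works just as well. That proposition already gives two facts. First, $\vv\not\sim_G\vv'$, because $\mathrm{rank}\,\mW_2=3\neq 2=\mathrm{rank}\,\mW_2'$. Second, $\vv\sim_{\mathrm{DWS}}\vv'$: the induced neural graphs are 1-WL indistinguishable, so NG-GNN cannot separate them, and by Theorem~\ref{thm:equiv_of_all_models} the same holds for every architecture in $\Pi\setminus\{\mathrm{NFT}\}$.

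Next I construct the target map. Since $G$ is finite, the orbits $G\vv$ and $G\vv'$ are finite, closed and disjoint. Define
\[
\wf(\vu)=\frac{d(\vu,G\vv)}{d(\vu,G\vv)+d(\vu,G\vv')},
\]
where $d$ is Euclidean distance to a set. This is continuous wherever the denominator is nonzero, which is everywhere since the two orbits are disjoint closed sets. It is $G$-invariant because $\rho(g)$ is an isometry that permutes each orbit. Finally, $\wf(\vv)=0$ and $\wf(\vv')=1$. A cleaner alternative is the invariant polynomial $\vu\mapsto\det(\mW_2^\top\mW_2)$, which vanishes at $\vv'$ (rank $2$) and is nonzero at $\vv$ (rank $3$); it is invariant because $\det(\mP_1^\top \mW_2^\top\mP_2\mP_2^\top\mW_2\mP_1)=\det(\mW_2^\top\mW_2)$. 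Either choice works.

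To conclude, suppose some invariant weight-space network $\dws$ satisfies $\sup_{K}|\wf-\dws|<1/2$. Because $\vv\sim_{\mathrm{DWS}}\vv'$, we have $\dws(\vv)=\dws(\vv')$. The triangle inequality then gives
\[
1=|\wf(\vv)-\wf(\vv')|\le|\wf(\vv)-\dws(\vv)|+|\dws(\vv')-\wf(\vv')|<1,
\]
a contradiction. The same argument transfers to every $\pi\in\Pi\setminus\{\mathrm{NFT}\}$ through Theorem~\ref{thm:equiv_of_all_models}.

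The main obstacle is the WL-indistinguishability claim, which is imported from Proposition~\ref{prop:app:DWS_not_G}. If I verify it directly, I would run color refinement on the NG graph. Every hidden node in layer 1 has the same initial features: degree $1$ to the input, degree $2$ into layer 2, bias $0$, and all weights equal to $1$. The same symmetry holds for layer 2. Both bipartite middle graphs are $2$-regular, an $8$-cycle versus two $4$-cycles, so refinement stabilizes immediately with identical color histograms. One subtlety remains: NG edge features also carry weight values. Since every present edge has weight $1$ and every absent edge has weight $0$, and NG uses only the graph edges, the zero-weight entries must be treated consistently. This is the one point I would double-check.
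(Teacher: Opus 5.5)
Your argument is correct and is essentially the paper's proof. You use the same pair $\vv,\vv'$ from Proposition~\ref{prop:app:DWS_not_G} and a continuous $G$-invariant function separating the two orbits; your distance ratio is invariant by construction, whereas the paper symmetrizes a Urysohn function by summing over $G$. You then conclude, as the paper does, from the fact that $\dws(\vv)=\dws(\vv')$ forces uniform error at least $1/2$.

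\textbf{A correction.} Your ``cleaner alternative'' $\det(\mW_2^\top\mW_2)$ does not separate the two points. Both $\mW_2$ (rank $3$) and $\mW_2'$ (rank $2$) are singular $4\times 4$ matrices, so this quantity equals $\det(\mW_2)^2=0$ at both. An invariant that does work is the sum of squares of the $3\times 3$ minors of $\mW_2$, i.e.\ the third elementary symmetric function of the eigenvalues of $\mW_2^\top\mW_2$. It is unchanged under $\mW_2\mapsto \mP_{\tau_2}^\top\mW_2\mP_{\tau_1}$, is positive at $\vv$, and vanishes at $\vv'$.

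\textbf{The point you wanted to double-check.} In the NG graph, the edges between consecutive layers form a complete bipartite graph whose edge features carry the weight values, including the zeros. In both configurations, every node of layers $1$ and $2$ has exactly two weight-$1$ and two weight-$0$ edges between these two layers. Hence edge-labelled color refinement is stable from the start, and the indistinguishability goes through.
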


\begin{proof}
    By Proposition~\ref{prop:app:DWS_not_G}, there exists a weight space $\WS$ and a pair of
    weights $\vv,\vv' \in \WS$ such that $\vv \not\sim_G \vv'$ while
    $\vv \sim_{\mathrm{DWS}} \vv'$.
    Since $\vv$ and $\vv'$ are not $G$-equivalent, their orbits
    \begin{equation}
        \orb(\vv) \coloneqq \{\rho(g)\,\vv \mid g \in G\},
        \qquad
        \orb(\vv') \coloneqq \{\rho(g)\,\vv' \mid g \in G\}
    \end{equation}
    are disjoint subsets of $\WS$.

    Because $\WS$ is a finite-dimensional Euclidean space and $\orb(\vv)$ and
    $\orb(\vv')$ are disjoint compact sets, there exists a continuous function
    $\wf : \WS \to \R$ such that
    \begin{equation}
        \wf(\vu) > 1 \quad \text{for all } \vu \in \orb(\vv),
        \qquad
        \wf(\vu) < 0 \quad \text{for all } \vu \in \orb(\vv').
    \end{equation}
    Define the symmetrized function $\wf^\ast : \WS \to \R$ by
    \begin{equation}
        \wf^\ast(\vu) \coloneqq \sum_{g \in G} \wf(\rho(g)\,\vu).
    \end{equation}
    The function $\wf^\ast$ is continuous and $G$-invariant by construction.
    Moreover, it satisfies
    \begin{equation}
        \wf^\ast(\vu) > 1 \quad \text{for all } \vu \in \orb(\vv),
        \qquad
        \wf^\ast(\vu) < 0 \quad \text{for all } \vu \in \orb(\vv').
    \end{equation}
    In particular,
    \begin{equation}
        |\wf^\ast(\vv) - \wf^\ast(\vv')| > 1.
    \end{equation}
    On the other hand, since $\vv \sim_{\mathrm{DWS}} \vv'$, every DWS network $\dws$
    satisfies $\dws(\vv) = \dws(\vv')$.
    It follows that no DWS network can approximate the $G$-invariant function
    $\wf^\ast$ on $\{\vv,\vv'\}$ with uniform error smaller than $1/2$.
    This completes the proof.
\end{proof}

\begin{theorem}
\label{thm:app:invariant_univ_off_exclusion}
    Let $K \subseteq \WS \setminus \gE$ be compact. Then any permutation-invariant map
    $\wf : \WS \to \sR^n$ can be approximated on $K$ by invariant weight-space models.
\end{theorem}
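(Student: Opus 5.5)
The plan is to follow the canonization strategy from the proof sketch in the main text, working throughout with DWS networks; by Theorem~\ref{thm:equiv_of_all_models} this suffices.

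\textbf{Step 1: a continuous canonization.} For $\vv\in K$ and each hidden layer $\ell\in[L-1]$, let $\tau_\ell(\vv)\in S_{d_\ell}$ be the unique permutation sorting $\vb_\ell$ increasingly. Uniqueness holds because $\vv\notin\gE$. Set $g(\vv)=(\tau_1,\dots,\tau_{L-1})$ and $\canon(\vv)=\rho(g(\vv))\vv$.

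Since $K$ is compact and disjoint from the closed set $\gE$, the minimal gap $\delta=\min_{\vv\in K}\min_\ell\min_{i\neq j}|(\vb_\ell)_i-(\vb_\ell)_j|$ is positive. Consequently $g$ is locally constant on a neighborhood of $K$, so $\canon$ is continuous there.

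By construction $\canon(\rho(h)\vv)=\canon(\vv)$ for all $h\in G$. Given an invariant $\wf$, we have $\wf(\vv)=\wf(\canon(\vv))$. Hence $\wf=\tilde f\circ\flr\circ\canon$ on $K$, where $\tilde f=\wf\circ\flr^{-1}$ is continuous on $\R^M$. Because $\flr(\canon(K))$ is compact, $\tilde f$ can be approximated there by an MLP. The problem therefore reduces to approximating $\vv\mapsto\flr(\canon(\vv))\in\R^M$ uniformly on $K$ by invariant DWS networks. Lemma~\ref{lem:app:composition_approximation} then lets us compose this with the MLP head.

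\textbf{Step 2: computing ranks equivariantly.} For each hidden layer, the map $\vb_\ell\mapsto r_\ell(\vb_\ell)$ sends each bias to its rank $r_i=\#\{j:(\vb_\ell)_j\le(\vb_\ell)_i\}$. It is $S_{d_\ell}$-equivariant and, on $K$, locally constant, hence continuous.

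I would realize it concretely rather than invoke DeepSets universality abstractly.
\begin{enumerate}
\item Pool the biases $\vb_\ell$ (bias-summation and broadcast primitives) to give every bias entry access to the whole multiset through power sums, or more simply to pairwise comparisons.
\item Replace each exact indicator $\mathbf 1[(\vb_\ell)_j\le(\vb_\ell)_i]$ by a steep clipped ramp, which is exact whenever the gap exceeds $\delta$.
\end{enumerate}

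Pairwise comparisons are not directly available, since DWS layers only sum. I would therefore use the DeepSets equivariant universality result \cite{SegolLipman2019}, applied to the bias block $\gB_\ell$. The ingredients it needs are pointwise MLPs (Corollary~\ref{cor:app:dws_pointwise_mlp}) and sum-and-broadcast within $\gB_\ell$, both of which are DWS layers. This yields features $\approx r_\ell$ on each bias entry. Applying a pointwise MLP that rounds and then one-hot encodes gives $\ve^{(d_\ell)}_{r_i}$ attached to each neuron, exactly on $K$ up to arbitrarily small error.

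\textbf{Step 3: assembling the canonical vector.} Broadcast the one-hot rank codes from $\gB_{\ell-1}$ and $\gB_\ell$ to the entries of $\mW_\ell$ (row/column broadcasts, as in the proof of Proposition~\ref{prop:app:dws_to_ng}). Each weight entry $(\mW_\ell)_{ij}$ then carries $(\mW_\ell)_{ij}\,\ve_{r_i}\otimes\ve_{r_j}$, and each bias entry carries $(\vb_\ell)_i\ve_{r_i}$. These products are formed by a pointwise MLP approximating multiplication on a compact set; first- and last-layer indices are fixed and need no code.

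Finally, a global invariant sum-pooling collects $\sum_{i,j}(\mW_\ell)_{ij}\ve_{r_i}\otimes\ve_{r_j}$, which is exactly the $\ell$-th weight block of $\canon(\vv)$. Doing the same for all blocks yields $\flr(\canon(\vv))$.

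\textbf{Main obstacle.} I expect Step 2 to be the crux: certifying that the rank map is uniformly approximable using only DWS operations, with the approximation error small enough that rounding is exact. The compactness-derived gap $\delta$ is what makes this work, and I would make the quantitative error budget explicit there.
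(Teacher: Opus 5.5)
Your argument is correct. It is essentially a fleshed-out version of the proof sketch in the main text, but the appendix proof of this statement takes a different route.

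\textbf{The paper's route.} The appendix derives the invariant case as a corollary of equivariant universality (Theorem~\ref{thm:equiv_to_equiv_univ_off_exclusion}). It broadcasts $\wf(\vv)$ to every weight and bias entry of $\WS^n$. This map is continuous and $G$-equivariant because it is constant along every coordinate that $G$ permutes. It is then approximated on $K$ by an equivariant DWS network, followed by a mean-pooling layer. All the canonization work is done once, inside the equivariant proof (Lemma~\ref{lem:app:approximation_of_canonical}): ranks from bias-to-bias DeepSets blocks, neuron identifiers, lexicographic sorting, and selector MLPs $f_{\vn}$ combined with a global sum-broadcast that writes $\flr(\canon(\vv))$ onto every entry. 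That proof then also needs Lemma~\ref{lem:app:pointwise_representation_of_equivariant_function} to read off each output entry and transport $\wf(\canon(\vv))$ back along the orbit.

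\textbf{Your route.} You use the same core ingredients but only need the invariant vector $\flr(\canon(\vv))$. Invariance gives $\wf=\tilde f\circ\flr\circ\canon$ immediately, with $\canon(\vv)=\rho(g(\vv))\vv$ living in $\WS$ itself. This skips the five-channel identifier construction, the per-entry selection, and the orbit bookkeeping. Your assembly via $(\mW_\ell)_{ij}\,\ve_{r_i}\otimes\ve_{r_j}$ followed by block-wise sum pooling is a cleaner explicit version of what the paper does with bump functions on index codes. You are also right that the $G$-fixed input and output indices need no computed code, since DWS invariant pooling may weight those indices differently. One small point: when you invoke the DeepSets universality result, take as domain the permutation-closure of the projection of $K$ onto $\gB_\ell$. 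This set is still compact with pairwise distinct entries, and the rank map is continuous and equivariant on it.

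\textbf{Trade-off.} Your proof is self-contained and more direct for the invariant statement. The paper's proof buys economy and uniformity: once the stronger equivariant theorem is in hand, the invariant case costs a few lines, and the same reduction is reused unchanged for transformer weight spaces.
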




\textbf{Remark:} The argument given here is conceptually different from the proof sketch in Section~\ref{sec:invariant-nonuniv}. The sketch was included primarily for intuition: it highlights the key ideas and helps explain why the statement should be true at a high level. However, a cleaner route is to derive the claim as a direct consequence of Theorem~\ref{thm:equiv_to_equiv_univ_off_exclusion}, whose complete proof is provided in Appendix~\ref{sec:app:equivariant_operators}. For this reason, we treat Section~\ref{sec:invariant-nonuniv} mainly as intuition-building, and rely on Theorem~\ref{thm:equiv_to_equiv_univ_off_exclusion} for the formal justification.

\begin{proof}

    The proof uses a reduction to the equivariant setting. We rely on Theorem~\ref{thm:equiv_to_equiv_univ_off_exclusion}, proved in the subsequent section of the appendix, which states that for any weight space $\WS$, DWS networks are universal approximators for the class of continuous $G$-\emph{equivariant} maps assuming the input is in general position. Let $\wf : \WS \to \R^n$ be a continuous $G$-invariant function. We define an associated map
    \begin{equation}
        \wf^\ast : \WS \to \WS^n
    \end{equation}
    by broadcasting the vector value $\wf(\vv)$ uniformly across all weight and bias terms of
    $\WS^n$.
    Specifically, for $\vv = (\mW_1,\vb_1,\ldots,\mW_L,\vb_L)$, we define
    $\wf^\ast(\vv) = (\mW_1^\ast,\vb_1^\ast,\ldots,\mW_L^\ast,\vb_L^\ast)$ by setting,
    for each $\ell = 1,\ldots,L$,
    \begin{equation}
    (\mW_\ell^\ast)
        _{i,j,:} \coloneqq \wf(\vv),
        \qquad
        (\vb_\ell^\ast)_{i,:} \coloneqq \wf(\vv),
    \end{equation}
    for all valid indices $i,j$.
    That is, every coordinate of $\wf^\ast(\vv)$ is equal to $\wf(\vv)$.

    Since $\wf$ is $G$-invariant, for every $g \in G$ and $\vv \in \WS$ we have
    \begin{equation}
        \wf^\ast(\rho(g)\vv)
        =
        \wf^\ast(\vv)
        =
        \rho(g)\,\wf^\ast(\vv),
    \end{equation}
    where the final equality follows from the fact that the action of $G$ permutes coordinates along which $\wf^\ast(\vv)$ is constant. Thus, $\wf^\ast$ is a continuous $G$-equivariant map.

    By Theorem~\ref{thm:equiv_to_equiv_univ_off_exclusion}, $\wf^\ast$ can be approximated by a DWS network $\dws^\ast$ over any compact set $K \subseteq \WS \setminus \gE$.
    Finally, composing $\dws^\ast$ with a fixed linear $G$-invariant pooling operator, which takes the mean over all weight and bias terms, we get a DWS network whose output uniformly approximates $\wf$.
    This completes the proof.
\end{proof}

\section{Expressive power of permutation-equivariant weight-space networks}
\label{sec:app:equivariant_weight_space_networks}

\subsection{Universal approximation of permutation-equivariant operators}
\label{sec:app:equivariant_operators}

In this section, we prove Proposition \ref{prop:equiv_nonuniv} and Theorem \ref{thm:equiv_to_equiv_univ_off_exclusion} showing that, while weight-space networks are not universal approximators for this class in full generality,
they are universal for inputs in general position. For the remainder of this section, we focus on DWS networks, since Theorem~\ref{thm:equiv_of_all_models} shows that all other prominent architectures are expressively equivalent. For convenience, we restate these results below.

\begin{proposition}
    There exists a compact set $K \subseteq \WS$ and a permutation-equivariant map
    $\wf : K \to \WS$ that cannot be approximated on $K$ by permutation-equivariant\footnote{In fact, the same argument shows that the statement extends to
    scale- and permutation-equivariant networks as well.}
    weight-space
    models.
\end{proposition}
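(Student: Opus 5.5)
The plan is to reduce to the invariant non-universality result, Proposition~\ref{prop:invariant_nonuniv}, via broadcasting. Take the architecture $\arch=((1,4,4,1),\mathrm{ReLU})$ and the pair $\vv,\vv'$ from Proposition~\ref{prop:app:DWS_not_G}, which satisfy $\vv\sim_{\mathrm{DWS}}\vv'$ but $\vv\not\sim_G\vv'$. Set $K=\orb(\vv)\cup\orb(\vv')$; it is compact because it is a finite union of points. As in the proof of Proposition~\ref{prop:invariant_nonuniv}, we obtain a continuous $G$-invariant $\wf^\ast:\WS\to\R$ with $\wf^\ast>1$ on $\orb(\vv)$ and $\wf^\ast<0$ on $\orb(\vv')$. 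We then define the equivariant target $\wf:\WS\to\WS$ by broadcasting $\wf^\ast(\vu)$ to every weight and bias coordinate. This is exactly the construction used in the proof of Theorem~\ref{thm:app:invariant_univ_off_exclusion}, and that proof already shows it is continuous and $G$-equivariant: the action only permutes coordinates, and all coordinates share the same value.

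Suppose, for contradiction, that some equivariant DWS network $\dws:\WS\to\WS$ satisfies $\sup_{\vu\in K}\|\wf(\vu)-\dws(\vu)\|_2<1/2$. By Theorem~\ref{thm:equiv_of_all_models} it suffices to treat DWS networks. Compose $\dws$ with a fixed linear $G$-invariant readout $P$, for example the readout taking the single coordinate $\vb_L\in\R$. This coordinate is untouched by $G$, so $P$ is an invariant linear layer. The composition $P\circ\dws$ is therefore an invariant DWS network, so $P\dws(\vv)=P\dws(\vv')$ because $\vv\sim_{\mathrm{DWS}}\vv'$. On the other hand, $P$ is $1$-Lipschitz and $P\wf(\vu)=\wf^\ast(\vu)$. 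This gives $|P\dws(\vv)-\wf^\ast(\vv)|<1/2$ and $|P\dws(\vv')-\wf^\ast(\vv')|<1/2$, while $|\wf^\ast(\vv)-\wf^\ast(\vv')|>1$, which is a contradiction.

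The main point to verify is that composing an equivariant DWS network with an invariant linear pooling map really yields an element of the invariant DWS class, to which $\sim_{\mathrm{DWS}}$ applies. This holds because invariant DWS networks are, by definition, compositions of equivariant affine layers followed by an invariant affine head. The final affine map of $\dws$ composed with $P$ is again an invariant affine map, so the composite has exactly that form.

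For the footnoted extension to scale- and permutation-equivariant networks, the same argument should work. The pair $\vv,\vv'$ has binary weights, and those networks reduce to message passing on the same WL-indistinguishable graphs, so the pair remains inseparable by them.
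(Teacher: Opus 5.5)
Your proposal is correct and follows the paper's own argument: you broadcast a $G$-invariant function that separates the DWS-indistinguishable but non-$G$-equivalent pair from Proposition~\ref{prop:app:DWS_not_G} into an equivariant target, then compose any putative equivariant approximant with a fixed invariant linear readout, producing an invariant network that would have to separate $\vv$ and $\vv'$. The only differences are cosmetic: you read out the $G$-fixed coordinate $\vb_L$ rather than averaging all output coordinates, and you make the compact set $K$ and the $1/2$ error threshold explicit.
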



\begin{proof}
    We rely on Proposition~\ref{prop:invariant_nonuniv}, proved in the previous section of the appendix, which states that there exists a weight space $\WS$ and a continuous $G$-invariant map $\wf: \WS \to \R$ that cannot be approximated by any DWS network.

    We define an associated map
    \begin{equation}
        \wf^\ast : \WS \to \WS
    \end{equation}
    by broadcasting the scalar value $\wf(\vv)$ uniformly across all coordinates of $\WS$.
    Specifically, for $\vv = (\mW_1,\vb_1,\ldots,\mW_L,\vb_L)$, we define
    $\wf^\ast(\vv) = (\mW_1^\ast,\vb_1^\ast,\ldots,\mW_L^\ast,\vb_L^\ast)$ by setting,
    for each $\ell = 1,\ldots,L$,
    \begin{equation}
    (\mW_\ell^\ast)
        _{i,j} \coloneqq \wf(\vv),
        \qquad
        (\vb_\ell^\ast)_i \coloneqq \wf(\vv),
    \end{equation}
    for all valid indices $i,j$.
    That is, every coordinate of $\wf^\ast(\vv)$ is equal to $\wf(\vv)$.

    Since $\wf$ is $G$-invariant, for every $g \in G$ and $\vv \in \WS$ we have
    \begin{equation}
        \wf^\ast(\rho(g)\vv)
        =
        \wf^\ast(\vv)
        =
        \rho(g)\,\wf^\ast(\vv),
    \end{equation}
    where the final equality follows from the fact that the action of $G$ permutes
    coordinates along which $\wf^\ast(\vv)$ is constant.
    Thus, $\wf^\ast$ is a continuous $G$-equivariant map.

    Assuming there existed a DWS model which was able to approximate $\wf^\ast$, by composing that model with a fixed linear $G$-invariant pooling operator, which takes the mean over all output coordinates, we get a DWS network whose output uniformly approximates $\wf$, which is a contradiction. Thus we are unable to approximate the equivariant continuous map $\wf^*$ with any DWS model, completing the proof.
\end{proof}

\begin{theorem}
    Let $K \subseteq \WS \setminus \gE$ be compact. Then any permutation-equivariant map
    $\wf : \WS \to \WS$ can be approximated on $K$ by equivariant weight-space models.
\end{theorem}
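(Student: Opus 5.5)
The plan follows the proof sketch given after Theorem~\ref{thm:equiv_to_equiv_univ_off_exclusion}, with one refinement. Each output entry must know which position of the canonical form it corresponds to, so every weight and bias token will carry both the canonical flattening $\flr(\canon(\vv))$ and the \emph{ranks} of its incident hidden neurons. By Theorem~\ref{thm:equiv_of_all_models} it suffices to work with DWS networks. By Lemma~\ref{lem:app:composition_approximation} it suffices to build the network as a composition of stages, each uniformly approximable on the compact image of the previous stage.

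\textbf{Stage 1 (ranks).} Since $K$ is compact and disjoint from $\gE$, there is $\delta>0$ such that $|(\vb_\ell)_i-(\vb_\ell)_{i'}|\ge\delta$ for all $\vv\in K$, all hidden layers $\ell$ and all $i\neq i'$. The rank map $r_\ell(\vv)_i=\#\{i' : (\vb_\ell)_{i'}<(\vb_\ell)_i\}$ is therefore locally constant, hence continuous, on $K$, and it is $S_{d_\ell}$-equivariant. DWS contains the DeepSets primitives on each $\gB_\ell$ (bias-to-bias identity, bias-sum broadcast, pointwise MLPs via Corollary~\ref{cor:app:dws_pointwise_mlp}). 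DeepSets are universal for continuous permutation-equivariant maps \cite{SegolLipman2019}, so a DWS block approximates $r_\ell$ on $K$ to within $1/4$. A subsequent pointwise MLP then snaps each value to the exact integer, or at least to within any prescribed $\eta$. Input and output neurons are not permuted by $G$, so DWS layers can attach their fixed indices directly through non-shared boundary blocks, exactly as in the boundary cases of Proposition~\ref{prop:app:npnfn_to_dws}. Using the row- and column-broadcast operations $\gB_{\ell-1}\to\gW_\ell$ and $\gB_\ell\to\gW_\ell$, each weight token $(\mW_\ell)_{i,j}$ receives the pair of ranks (or boundary indices) of its endpoints.

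\textbf{Stage 2 (canonical form, broadcast).} The canonical entry at position $(s,t)$ of layer $\ell$ is
\[
\canon(\vv)_{\ell,s,t}=\sum_{i,j}(\mW_\ell)_{i,j}\,\mathbf 1[\mathrm{rank}_i=s]\,\mathbf 1[\mathrm{rank}_j=t],
\]
and biases are handled analogously. A pointwise MLP computes the products of the local value with approximate indicators. These indicators are continuous bump functions of the integer-valued ranks, so they are exact up to the $\eta$ error from Stage 1. This yields one channel per target position $(\ell,s,t)$. A global summation over all tokens, followed by broadcast back to every token, then gives each token the full vector $\flr(\canon(\vv))\in\R^M$. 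This is the map $\tilde\canon$ of the sketch, now augmented with the local rank and index features.

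\textbf{Stage 3 (readout).} Let $g_\vv\in G$ be the sorting permutation, so that $\canon(\vv)=\rho(g_\vv)\vv$. Equivariance of $\wf$ gives $\wf(\vv)=\rho(g_\vv)^{-1}\wf(\canon(\vv))$. Hence the output entry at $(\mW_\ell)_{i,j}$ equals the $(\mathrm{rank}_i,\mathrm{rank}_j)$ entry of $\wf(\canon(\vv))$ in layer $\ell$, and similarly for biases. Consider the map sending (rank features, canonical vector) to the corresponding entry of $\wf(\canon(\vv))$. It is continuous on the compact set of attained features, because the ranks take finitely many values and $\canon$ is continuous on $K$. A pointwise MLP (Corollary~\ref{cor:app:dws_pointwise_mlp}) therefore approximates it uniformly. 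The layer identity needed to choose among blocks can be supplied by the layer-specific affine biases that DWS layers already carry.

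The main obstacle I expect is error propagation through the discontinuous-looking steps: rank snapping and the indicator products. The approach is to make every intermediate map continuous on the relevant compact image by exploiting the margin $\delta$. The Stage 1 error is made smaller than the snapping tolerance, so the snapped ranks are exact, and Lemma~\ref{lem:app:composition_approximation} is then applied to the remaining genuinely continuous stages.
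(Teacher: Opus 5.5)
Your proposal is correct and follows essentially the same route as the paper's proof. Both arguments use DeepSets-style bias-to-bias blocks to approximate the rank map, which is the paper's neuron-identification map $\pe$. Both then use indicator-weighted selection followed by a global sum to broadcast $\flr(\canon(\vv))$ to every token (Lemma~\ref{lem:app:approximation_of_canonical}). Finally, both apply a pointwise MLP that reads off the appropriate entry of $\wf(\canon(\vv))$, justified by $\wf(\vv)=\rho(g_\vv)^{-1}\wf(\canon(\vv))$ (Lemma~\ref{lem:app:pointwise_representation_of_equivariant_function}). Your readout is defined directly as $\wf\circ\flr^{-1}$ on the canonical vector, which is a slightly cleaner version of the paper's $f_1$/$f_2$ construction because it needs no continuous extension from $\flr(\canon(K))$.
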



The remainder of this section is devoted to proving the theorem above. The proof proceeds in several stages.
First, we introduce a neuron-identification map and show that, on compact sets that exclude $\gE$ (a set contained in a finite union of lower-dimensional submanifolds; see Definition~\ref{def:exclusion_set}), this map can be approximated arbitrarily well by DWS networks. Next, we use neuron identification to construct an invariant canonization map and show that it too can be approximated by DWS networks. Finally, we prove that any permutation-equivariant map admits a decomposition into the composition of this canonization map with a pointwise function applied in parallel to the resulting feature representation concatenated with the original weights, implying that DWS networks can approximate any such equivariant map.


\paragraph{Neuron identification map.}

\begin{definition}[Neuron-identification map]
    \label{def:app:neuron_identification_map}
    Let $\arch=(\vd,\sigma)$ be an architecture with $L$ layers, let $\WS_{\arch}$ be the corresponding weight-space, and
    let $\gE_{\arch}$ denote the corresponding exclusion set (see Definition~\ref{def:exclusion_set}).
    We define the \emph{neuron-identification map}
    \begin{equation}
        \pe : \WS_{\arch} \setminus \gE_{\arch}
        \longrightarrow
        \WS_{\arch}^{5}
    \end{equation}
    as follows.

    Given parameters
    $\vv = (\mW_1,\vb_1,\ldots,\mW_L,\vb_L) \in \WS_{\arch} \setminus \gE_{\arch}$,
    we define
    \begin{equation}
        \pe(\vv)
        \coloneqq
        (\mW_1',\vb_1',\ldots,\mW_L',\vb_L'),
    \end{equation}
    where

    \medskip
    \noindent\textbf{Biases.}
    For $\ell = 1,\ldots,L$ and all $i=1,\ldots,d_\ell$, we define
    \begin{equation}
    (\vb_\ell')
        _i
        \coloneqq
        \bigl((\vb_\ell)_i,\; \ell, 1, 0,  \eta^b_\ell(i) \bigr),
    \end{equation}

    where
    \begin{equation}
        \eta^b_\ell(i)
        \coloneqq
        \begin{cases}
            \mathrm{rank}_\ell(i), & 1 \leq \ell < L, \\
            i , & \ell = L.
        \end{cases}
    \end{equation}

    Here $\mathrm{rank}_\ell(i)$ denotes the rank of $(\vb_\ell)_i$ among the entries of $\vb_\ell$, i.e.,
    \begin{equation}
    \label{eq:app:rank_map}
        \mathrm{rank}_\ell(i)
        \coloneqq
        \bigl|\{\, j \in [d_\ell] : (\vb_\ell)_j < (\vb_\ell)_i \,\}\bigr|.
    \end{equation}
    This quantity is well defined on $\WS_{\arch} \setminus \gE_{\arch}$, since all
    bias entries within each layer are distinct.

    \medskip
    \noindent\textbf{Weights.}
    For $\ell = 1,\ldots,L$ and all valid indices $i,j$, we define
    \begin{equation}
    (\mW_\ell')
        _{i,j}
        \coloneqq
        \bigl((\mW_\ell)_{i,j},\, \ell, 0,  \eta^w_\ell(i),\bar{\eta}^b_\ell(j) \bigr),
    \end{equation}
    where

    \begin{equation}
        \eta^w_\ell(i)
        \coloneqq
        \begin{cases}
            \eta^b_{\ell -1}(i), & 1 < \ell \leq L, \\
            i, & \ell = 1.
        \end{cases}
    \end{equation}

\end{definition}

\begin{lemma}[Approximation of the neuron-identification map by DWS networks]
    Let $\WS$ be a weight-space with corresponding exclusion set $\gE$, and let $K \subseteq \WS \setminus \gE$ be compact.
    Then, for every $\epsilon > 0$, there exists a DWS network
    $\dws : \WS \to \WS^5$ such that
    \begin{equation}
        \sup_{\vv \in K}
        \bigl\|
        \dws(\vv) - \pe(\vv)
        \bigr\| < \epsilon .
    \end{equation}
\end{lemma}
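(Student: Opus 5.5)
The approximating network will write $\pe(\vv)$ channel by channel into a weight space with five feature channels. The first channel is the identity on all weight and bias entries. The constant channels carry the layer index $\ell$ and the weight/bias flag ($1$ for biases, $0$ for weights). These are exact, since constant affine terms of the form $\gW_\ell\to\gW_\ell$ and $\gB_\ell\to\gB_\ell$ are $G$-equivariant. The identifiers of input neurons ($\eta^w_1(i)=i$) and of output neurons ($\eta^b_L(i)=i$) are also exact constant biases: $G$ acts trivially on the layers $0$ and $L$, so per-index constants there are $G$-equivariant affine terms. All constant channels are therefore exact DWS affine terms. The only non-trivial channels are the rank channels $\mathrm{rank}_\ell(i)$ for hidden layers $1\le\ell<L$. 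These rank channels must then be broadcast to the weights: to $(\mW_{\ell+1})_{:,i}$ via the $\gB_\ell\to\gW_{\ell+1}$ broadcast, and to $(\mW_\ell)_{i,:}$ via the $\gB_\ell\to\gW_\ell$ broadcast, for the $\bar\eta$ coordinate. Both broadcasts are linear DWS blocks, as in the proof of Proposition~\ref{prop:app:dws_to_ng}. By Lemma~\ref{lem:app:composition_approximation}, it suffices to approximate the rank feature $(\mathrm{rank}_\ell(i))_i$ uniformly on $K$ inside the bias channel of layer $\ell$ and then apply these exact linear layers, since linear layers propagate error in a Lipschitz way.

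For the rank channel I use the compactness of $K$. The map $\vv\mapsto \min_\ell\min_{i\ne j}|(\vb_\ell)_i-(\vb_\ell)_j|$ is continuous and positive on $K$, because $K\cap\gE=\emptyset$. Hence it attains a minimum $\delta>0$. Writing
\[
\mathrm{rank}_\ell(i)=\sum_{j\in[d_\ell]} H\bigl((\vb_\ell)_i-(\vb_\ell)_j\bigr)
\]
with $H$ the step function ($H(t)=1$ for $t>0$, $0$ otherwise), I replace $H$ by a continuous ramp $h_\delta$. The ramp equals $0$ for $t\le \delta/2$ and $1$ for $t\ge\delta$. For $j=i$ the argument is $0$, so that summand vanishes as required. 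On $K$ this replacement is exact.

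The construction then goes as follows. A DWS affine layer forms, for each bias entry of layer $\ell$, the pair $((\vb_\ell)_i,\ \cdot)$, but pairwise differences are not directly available in DWS. So instead I use the DeepSets route mentioned in the sketch of Theorem~\ref{thm:invariant_univ_off_exclusion}. Restricted to $\gB_\ell$, DWS layers contain the DeepSets primitives: the identity, plus sum-pooling over $i$ broadcast back to every $i$. The map $\vb_\ell\mapsto (\sum_j h_\delta((\vb_\ell)_i-(\vb_\ell)_j))_i$ is continuous and $S_{d_\ell}$-equivariant. By the universality of equivariant DeepSets \cite{SegolLipman2019}, it is uniformly approximable on the compact projection of $K$. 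The approximating DeepSets network is realized by stacking DWS layers that act only on the $\gB_\ell$ block (zeroing out cross-layer interactions) with pointwise nonlinearities, alongside the other channels carried through by identities. Pointwise MLPs are available by Corollary~\ref{cor:app:dws_pointwise_mlp}. To carry the identity channel through the nonlinearities, I either use ReLU with the trick $x=\mathrm{ReLU}(x)-\mathrm{ReLU}(-x)$, or simply use the fact that the signal lives in a compact range.

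The main obstacle is making sure the DeepSets approximation really sits inside DWS without disturbing the other layers or channels. Concretely, one has to check that the block-restricted identity and mean-broadcast maps on $\gB_\ell$, extended by zero elsewhere, lie in $\mathrm{Aff}_G$, and that the nonlinearity interleaving does not corrupt the exact channels. If I want to avoid citing \cite{SegolLipman2019}, the alternative I would try first is a direct construction. Use power-sum or multiset-embedding features $\sum_j \phi((\vb_\ell)_j)$ broadcast to each $i$, concatenated with $(\vb_\ell)_i$. Then approximate, with a pointwise MLP, the continuous function of this pair that returns the rank. That function is well defined because the bias multiset together with $(\vb_\ell)_i$ determines the rank, and it is continuous on the compact image because of the separation $\delta$.
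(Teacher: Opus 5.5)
Your proposal is correct and follows essentially the same route as the paper. The value, layer-index, type and boundary-identifier channels are written exactly by equivariant affine terms, DeepSets universality \cite{SegolLipman2019} inside the $\gB_\ell\to\gB_\ell$ blocks approximates the rank map (continuous on $K$), and a final bias-to-weight broadcast attaches the source and target identifiers to the weights. Your $\delta$-separation ramp and your explicit handling of how the exact channels pass through the interleaved nonlinearities are useful refinements of points the paper only asserts, namely local constancy of the rank and ``setting all weight-to-weight components to the identity''.
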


\begin{proof}

    We first define an affine $G$-equivariant map
    \begin{equation}
        L : \WS_{\arch} \setminus \gE_{\arch} \longrightarrow \WS_{\arch}^5 .
    \end{equation}
    For $\vv = (\mW_1,\vb_1,\ldots,\mW_L,\vb_L)$ we set
    \begin{equation}
        L(\vv)
        \coloneqq
        (\mW_1^{(1)},\vb_1^{(1)},\ldots,\mW_L^{(1)},\vb_L^{(1)}),
    \end{equation}
    with components defined as follows.

    For each layer $\ell = 1,\ldots,L$ and neuron index $i=1,\ldots,d_\ell$, define
    \begin{equation}
    (\vb_\ell^{(1)})
        _i
        \coloneqq
        \bigl(
        (\vb_\ell)_i,\;
        \ell,\;
        1,\;
        0,\;
        0
        \bigr),
    \end{equation}



    Similarly, for each $\ell=1,\ldots,L$ and all valid indices $i,j$, define

    \begin{equation}
    (\mW_\ell^{(1)})
        _{i,j}
        \coloneqq
        \bigl(
        (\mW_\ell)_{i,j},\;
        \ell,\;
        0,\;
        0,\;
        0
        \bigr).
    \end{equation}


    By construction, $L$ is affine and $G$-equivariant. For each layer $\ell$, define a bias-to-bias map
    $\theta_\ell : \gB_\ell^5 \to \gB_\ell^5$ by
    \begin{equation}
    (\theta_\ell(\vb_\ell^{(1)}))
        _i
        \coloneqq
        \begin{cases}
        (0,0,0,0,\mathrm{rank}_\ell(i))
            , & 1 \le \ell < L, \\
            (0,0,0,0,i), & \ell = L ,
        \end{cases}
    \end{equation}

    The ranking map $\mathrm{rank}_\ell$ is equivariant with respect to neuron
    permutations within layer $\ell$, and hence so is $\theta_\ell$.
    Moreover, since $K \subseteq \WS_{\arch} \setminus \gE_{\arch}$, all bias values
    within each layer are pairwise distinct on $K$. Consequently,
    $\mathrm{rank}_\ell(i)$ is locally constant in a neighborhood of every
    $\vv \in K$, and therefore $\mathrm{rank}_\ell$ and consequently $\theta_\ell$ are continuous on $K$.

    As shown in \cite{Navon2023}, any affine equivariant map on weight-space decomposes into bias-to-bias, bias-to-weight, and weight-to-weight components. In particular, the bias-to-bias components contain all affine equivariant set functions acting independently on the bias vectors of each layer. Such affine functions coincide with the layers of the DeepSets architectures proposed in \cite{Zaheer2017DeepSets}, which were shown to be universal approximators of continuous permutation-equivariant set functions on compact domains \cite{SegolLipman2019}. Therefore, by appropriately choosing only the bias-to-bias components, setting all weight-to-weight components to the identity, and all weight-to-bias components to zero, we obtain intermediate representations
    \begin{equation}
    (\mW^{(m)}_1,\vb^{(m)}_1,\ldots,\mW^{(m)}_L,\vb^{(m)}_L)
    \end{equation}
    satisfying $\forall \ell \in [L]$
    \begin{equation}
        \mW^{(m)}_\ell = \mW^{(1)}_\ell ,
    \end{equation}
    and
    \begin{equation}
        \sup_{\vv \in K}
        \| (\vb^{(m)}_\ell)_i - (\vb^{(1)}_\ell)_i - \theta_\ell(\vb^{(1)}_\ell)_i \|
        < \epsilon .
    \end{equation}
    Equivalently,
    \begin{equation}
        \sup_{\vv \in K}
        \| (\vb^{(m)}_\ell)_i -
        \bigl(
        (\vb_\ell)_i,\;
        \ell,\;
        1,\;
        0,\;
        \eta^b_\ell(i)
        \bigr)
        \|
        < \epsilon .
    \end{equation}

    Finally, note that concatenating the bias features
    $(\vb_{\ell-1})_{i,:}$ to the feature dimension of weights
    $(\mW_\ell){i,:, :}$, and similarly concatenating $(\vb_\ell)_{j,:}$ to
    $(\mW_{\ell-1})_{:,j,:}$, are affine $G$-equivariant weight-to-bias maps. Applying one final equivariant linear layer therefore can yield a DWS network $\dws$ with final representation $(\mW^{(m+1)}_1,\vb^{(m+1)}_1,\ldots,\mW^{(m+1)}_L,\vb^{(m+1)}_L)$ which keeps the bias terms the same

    \begin{equation}
        \vb^{(m+1)}_\ell = \vb^{(m)}_\ell \qquad \forall \ell ,
    \end{equation}
    and augments the weight features by concatenating the relevant neuron
    identifiers. Specifically, for each layer $\ell$ and all valid indices $i,j,k$,
    \begin{equation}
        \label{eq:final-weight-construction}
        (\mW^{(m+1)}_\ell)_{i,j,k}
        =
        \begin{cases}
        (\mW^{(m)}_\ell)
            _{i,j,k}, & k = 1,2,3, \\[4pt]
            (\vb^{(m)}_{\ell-1})_{i,5}, & k = 4,\ \ell \neq 1, \\[4pt]
            i, & k = 4,\ \ell = 1, \\[4pt]
            (\vb^{(m)}_{\ell})_{j,5}, & k = 5 .
        \end{cases}
    \end{equation}

    The first three channels preserve the original weight features, while
    channels $k=4$ and $k=5$ encode the identifiers of the source and target
    neurons, respectively. All operations are affine and $G$-equivariant functions of the previous representation, as they act uniformly across neurons and respect the permutation structure within each layer.

    Combining this construction with the approximation guarantees from the
    previous steps yields a DWS network $\dws$ satisfying
    \begin{equation}
        \sup_{\vv \in K}
        \bigl\|
        \dws(\vv) - \pe(\vv)
        \bigr\| < \epsilon .
    \end{equation}
    This completes the proof.
\end{proof}

\paragraph{Weight-space canonization}

\begin{definition}[weight-space canonization map]
    Let $\arch=(\vd,\sigma)$ be an architecture with $L$ layers, and let $\WS_{\arch}$, $\gE_{\arch}$ be the corresponding weight-space and exclusion set respectively.
    We define the \emph{weight-space canonization map} \begin{equation}
                                                           \canon :
                                                           \WS_{\arch} \setminus \gE_{\arch}
                                                           \longrightarrow
                                                           \WS_{\arch}^{5}
    \end{equation}
    by
    \begin{equation}
        \canon(\vv)
        \coloneqq
        \fl^{-1}\!\bigl(\sort\bigl(\fl(\pe(\vv))\bigr)\bigr).
    \end{equation}

    Here, $\fl$ denotes the vectorization operator that flattens all weights and biases in $\WS_{\arch}^{5}$ into a matrix in $\R^{n \times 5}$ (see the main paper for the definition of $\fl$), where
    \begin{equation}
        n \coloneqq \sum_{\ell=1}^{L} d_\ell (1+ d_{\ell-1}),
    \end{equation}
    and $\sort$ denotes lexicographic sorting of the rows of
    $\R^{n \times 5}$ with respect to coordinates $2$ through $5$, i.e.,
    ignoring the first coordinate.
\end{definition}

\begin{lemma}[Properties of the canonization map]
    \label{lem:app:canonical_properties}
    Let $\WS_{\arch}$ be a weight-space with corresponding exclusion set
    $\gE_{\arch}$, and let $ \canon : \WS_{\arch} \setminus \gE_{\arch} \to \WS_{\arch}^{5}$ be the weight-space canonization map.
    Then, $\canon$ is continuous, and for every $\vv \in \WS_{\arch} \setminus \gE_{\arch}$, the following hold:
    \begin{enumerate}
        \item $\canon(\vv)_1 \in \orb(\vv)$.

        \item  $\canon$ is indeed a canonization map (Definition \ref{def:app:canonization_general}). That is, $\canon(\vv) = \canon(\rho(g)\vv)
        \quad \forall g \in G $.

    \end{enumerate}
\end{lemma}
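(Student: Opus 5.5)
The plan is to first establish an equivariance property of the neuron-identification map, then read off all three claims from it. Concretely, I would show that for every $g=(\tau_1,\dots,\tau_{L-1})\in G$ and $\vv\in\WS_\arch\setminus\gE_\arch$,
\[
\pe(\rho(g)\vv)=\rho(g)\,\pe(\vv),
\]
with $G$ acting channelwise on $\WS^5_\arch$.

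\textbf{Step 1: equivariance of $\pe$.} Permuting the entries of $\vb_\ell$ by $\tau_\ell$ sends the bias at position $i$ to position $\tau_\ell(i)$. The set $\{j:(\vb_\ell)_j<(\vb_\ell)_i\}$ only changes by relabeling, so $\mathrm{rank}_\ell$ is carried along with the entry. The same holds for the source identifier $\eta^w_\ell(i)=\eta^b_{\ell-1}(i)$ and the target identifier $\bar\eta^b_\ell(j)$ attached to weight entries. The constant channels (layer index $\ell$ and the weight/bias flag) are unaffected, and so are the fixed indices used in layers $0$ and $L$, on which $G$ acts trivially. As a consequence, $\fl(\pe(\rho(g)\vv))$ is a row permutation of $\fl(\pe(\vv))$.

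\textbf{Step 2: uniqueness of keys.} Next I would check that the keys, i.e. coordinates $2$ through $5$, are pairwise distinct across all $n$ rows. Bias rows have the form $(\ell,1,0,\eta^b_\ell(i))$, and the ranks within a layer are a bijection onto $\{0,\dots,d_\ell-1\}$ because $\vv\notin\gE$. Weight rows have the form $(\ell,0,\eta^w_\ell(i),\bar\eta^b_\ell(j))$, and the pair $(i,j)$ is recovered injectively from the two identifiers. Hence lexicographic sorting by the keys is a uniquely determined reordering, and it depends only on the multiset of rows. Combined with Step 1, this gives $\sort(\fl(\pe(\rho(g)\vv)))=\sort(\fl(\pe(\vv)))$, which is item 2.

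\textbf{Step 3: continuity.} The keys depend on $\vv$ only through the ranks. On $\WS\setminus\gE$ the biases within each layer are pairwise distinct, so the ranks are locally constant. In a neighborhood of any $\vv$, $\canon$ is therefore a fixed linear map: it appends constant channels and applies a fixed permutation of the entries. Hence $\canon$ is continuous.

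\textbf{Step 4: item 1.} Choose $g^*$ with $\tau_\ell(i)=\mathrm{rank}_\ell(i)+1$ for every hidden layer. Then $\vv^*:=\rho(g^*)\vv$ has increasing biases in every hidden layer, and its identifiers coincide with positional indices (shifted by one). I would fix $\fl$ to list entries in the order (layer, bias-before-weight, then row and column indices), so that this order agrees with the lexicographic key order. This convention must be consistent with the paper's $\fl$ and the definition of $\sort$; otherwise one composes with a fixed reindexing. With this convention, $\fl(\pe(\vv^*))$ is already sorted. By item 2,
\[
\canon(\vv)=\canon(\vv^*)=\pe(\vv^*),
\]
and the first channel of $\pe(\vv^*)$ is $\vv^*\in\orb(\vv)$.

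\textbf{Main obstacle.} I expect the delicate part to be the bookkeeping in Step 4. The sorted row order must land each entry back into the slot of $\WS^5$ dictated by its identifiers, which requires matching the conventions of $\fl$, $\fl^{-1}$, and the lexicographic order. I would address this by fixing $\fl$ as described and verifying it layer by layer.
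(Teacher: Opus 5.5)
Your proof is correct. It relies on the same facts as the paper: bias ranks fix the neuron order, the identifier channels carry the weights along consistently, and the group action only relabels entries. The organization differs, though.

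The paper proves item~1 directly, asserting that $\sort$ acts on the first channel exactly as $\rho(g)$ for the permutation $g$ that sorts each $\vb_\ell$. It then handles item~2 with a short remark. You instead isolate the equivariance $\pe(\rho(g)\vv)=\rho(g)\pe(\vv)$ and derive item~2 from the pairwise distinctness of the keys, so the sorted matrix depends only on the multiset of rows. Item~1 then follows almost for free, because the bias-sorted representative $\vv^*$ is a fixed point of $\sort$.

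Your route also makes explicit two things the paper leaves implicit:
\begin{itemize}
\item \emph{Continuity.} The paper calls $\canon$ a composition of continuous maps, but lexicographic sorting with a carried payload is not continuous on all of $\R^{n\times 5}$. What actually makes $\canon$ continuous is your observation that the keys are locally constant on $\WS\setminus\gE$, so $\canon$ is locally a fixed affine map.
\item \emph{Compatibility of $\fl$ with the key order.} The paper uses this silently (``$\canon$ acts only by permuting weight terms within each layer and bias terms within each layer''); you state it as a requirement.
\end{itemize}

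There is one slip in your convention. The type flag is $0$ for weights and $1$ for biases, so ascending lexicographic order puts the weights of layer $\ell$ \emph{before} its biases, as the paper states. Within a layer, the weight entries must be listed by (source identifier, target identifier), matching coordinates $4$ and $5$. With ``bias-before-weight,'' the matrix $\fl(\pe(\vv^*))$ would not already be sorted. Once that ordering is fixed, Step~4 goes through as you describe.
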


\begin{proof}
    Let $\vv = (\mW_1,\vb_1,\ldots,\mW_L,\vb_L)$ and write
    \begin{equation}
        \pe(\vv) = (\mW'_1,\vb'_1,\ldots,\mW'_L,\vb'_L).
    \end{equation}
    Recall that in $\pe(\vv)$ the feature coordinates are defined as follows: the second coordinate of $(\mW'_\ell,\vb'_\ell)$ encodes the layer index $\ell$, the third coordinate distinguishes weights from biases (with value $0$ for weights and $1$ for biases), and the fifth coordinate of $\vb'_\ell$ encodes the neuron identifier within layer $\ell$. First, as $\pe$ is continuous, $\canon$ is a composition of continuous maps and so it is continuous.

    Additionally, by construction, lexicographic sorting via $\sort$ first groups terms by layer index, then places all weight terms of a given layer before the bias terms of that layer. As a result, $\canon$ acts only by permuting weight terms within each layer and bias terms within each layer, without mixing across layers or types.

    Within a fixed layer $\ell$, the bias vectors $(\vb'_\ell)_{:,5}$ enforce a
    sorting of neurons according to the standard ordering of the bias values $\vb_\ell$. Let $\tau_\ell$ denote the permutation that sorts $\vb_\ell$ in ascending order, and set $g=(\tau_1,\ldots,\tau_L)\in G$.

    The fourth and fifth feature coordinates of $\mW'_\ell$ ensure that weight entries are permuted consistently with the corresponding neuron permutations. As a result, the first feature coordinate of $\canon(\vv)$ satisfies
    \begin{align*}
    (\mW_1')
        _{:,:,1} &= \mP_{\tau_1}^\top \mW_1, &
        (\vb_1')_{:,1} &= \mP_{\tau_1}^\top \vb_1, \\
        (\mW_\ell')_{:,:,1} &= \mP_{\tau_\ell}^\top \mW_\ell \mP_{\tau_{\ell-1}},
        &
        (\vb_\ell')_{:,1} &= \mP_{\tau_\ell}^\top \vb_\ell,
        \quad \ell=2,\ldots,L-1, \\
        (\mW_L')_{:,:,1} &= \mW_L \mP_{\tau_{L-1}},
        &
        (\vb_L')_{:,1} &= \vb_L .
    \end{align*}
    Therefore, restricting $\canon(\vv)$ to its first feature coordinate yields a representative of the $G$-orbit of $\vv$, proving the first claim.

    For the second claim, observe that $\canon$ depends only on the relative ordering of bias values within each layer and applies the induced permutations consistently to the weights. Since applying any group action $\rho(g)$ merely reorders neurons within layers, it does not affect the outcome of the sorting operation. Hence,
    \begin{equation}
        \canon(\vv) = \canon(\rho(g)\vv)
        \quad \forall g \in G .
    \end{equation}

    This completes the proof.
\end{proof}

\begin{lemma}[Approximation of the canonization map by DWS networks]
    \label{lem:app:approximation_of_canonical}
    Let $\WS_{\arch}$ be a weight-space with corresponding exclusion set $\gE_{\arch}$, and let $K \subseteq \WS_{\arch} \setminus \gE_{\arch}$ be compact. Define a map $\wf : \WS_{\arch} \longrightarrow \WS_{\arch}^{n+5}$ where  $ n = \sum_\ell 5d_\ell(1+d_{\ell-1})$ as follows: for $\vv = (\mW_1,\vb_1,\ldots,\mW_L,\vb_L)$,  $\pe(\vv) = (\mW^*_1,\vb^*_1,\ldots,\mW^*_L,\vb^*_L)$ let $\wf(\vv) = (\mW'_1,\vb'_1,\ldots,\mW'_L,\vb'_L)$

    where for each layer $\ell$ and all valid indices $i,j$,
    \begin{equation}
    (\mW'_\ell)
        _{i,j,:}
        \coloneqq
        \bigl(
        (\mW^*_\ell)_{i,j},\;
        \flr(\canon(\vv))
        \bigr),
    \end{equation}
    and
    \begin{equation}
    (\vb'_\ell)
        _{i,:}
        \coloneqq
        \bigl(
        (\vb^*_\ell)_i,\;
        \flr(\canon(\vv))
        \bigr),
    \end{equation}
    and $\flr$ is the standard flattening operator $\WS^5 \to \R^n$. That is, $\wf$ concatenates the vectorized canonized representation
    $\flr(\canon(\vv))$ to every weight and bias entry. Then, for every $\epsilon > 0$, there exists a DWS network
    \begin{equation}
        \dws : \WS_{\arch} \longrightarrow \WS_{\arch}^{n+5}
    \end{equation}
    such that
    \begin{equation}
        \sup_{\vv \in K}
        \bigl\|
        \dws(\vv) - \wf(\vv)
        \bigr\| < \epsilon .
    \end{equation}
\end{lemma}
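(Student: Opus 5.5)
The plan is to build $\dws$ in three stages: first approximate $\pe$, then compute the flattened canonical representation as a $G$-invariant vector, and finally broadcast that vector to every entry. The first stage is the previous lemma. It gives a DWS network $\dws_1:\WS\to\WS^5$ with $\sup_{\vv\in K}\|\dws_1(\vv)-\pe(\vv)\|<\delta$. The remaining work is to produce, from $\pe(\vv)$, the invariant vector $\flr(\canon(\vv))\in\R^n$ and concatenate it to every weight and bias feature. This uses a skip-type identity channel, which DWS affine layers provide through the weight-to-weight and bias-to-bias identity blocks. Since $\pe(K)$ is compact and $\canon$ is continuous and $G$-invariant on $\WS\setminus\gE$ (Lemma~\ref{lem:app:canonical_properties}), Lemma~\ref{lem:app:composition_approximation} lets us approximate each stage separately on the compact image of the previous one.

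The key observation is that on $\pe(K)$ the rank identifiers take only finitely many integer values, and they are locally constant. Each entry of $\canon(\vv)$ is therefore determined by a fixed ``address'': a layer $\ell$, a type (weight or bias), and identifier values $(r,s)$ or $r$. The corresponding coordinate of $\flr(\canon(\vv))$ equals the sum, over all tokens of $\pe(\vv)$, of the first coordinate multiplied by the indicator that the token's coordinates $2$ through $5$ equal that address. On $\pe(K)$ the identifiers are integers, up to the error $\delta$. So for each address $a$ we choose a continuous bump $\chi_a:\R^4\to[0,1]$ that equals $1$ near $a$ and $0$ near every other integer address. The map $x\mapsto x_1\chi_a(x_{2:5})$ is continuous, and by Corollary~\ref{cor:app:dws_pointwise_mlp} it can be approximated pointwise by an MLP over all $n$ addresses at once. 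This yields a channel vector in $\R^{n}$ at every token, and at most one token has a nonzero entry in each channel.

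The last step applies a DWS global summation: the invariant linear pooling of all weight and bias entries of every layer, broadcast back to every entry. It is a sum of the $\gW_\ell\to\gB_{\ell'}$, $\gB_\ell\to\gB_{\ell'}$ pooling blocks and the broadcast blocks of \cite{Navon2023}. Applying it to these channels gives exactly $\flr(\canon(\vv))$, up to approximation error, at every position. We then concatenate this with the preserved $\pe$ channels through identity blocks, which realizes $\wf$. Equivariance of every step is automatic because each piece is either a DWS layer or a pointwise MLP.

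The main obstacle I expect is controlling errors through the discontinuous-looking selection. The bumps must be continuous on a neighborhood of $\pe(K)$ that is robust to the $\delta$-perturbation from $\dws_1$. Choosing $\delta<1/4$ and bumps supported in balls of radius $1/3$ around the integer addresses makes $\chi_a$ uniformly continuous there. The error then propagates linearly through the summation, since the number of tokens is fixed. A second point to verify is that lexicographic sorting by coordinates $2$ through $5$ places entries in the fixed order matching our address enumeration. This holds because the addresses present are the same finite set for every $\vv\notin\gE$, namely all rank pairs, so the sorted order is an identical enumeration for all inputs.
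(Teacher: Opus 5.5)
Your proposal is correct and is essentially the paper's argument: approximate $\pe$ by the previous lemma, apply pointwise MLPs approximating continuous selectors keyed on the discrete identifier coordinates $2$--$5$, then use the global sum-and-broadcast DWS layer and concatenate with the identity-preserved $\pe$ channels. The differences are bookkeeping only. You extract just the value coordinate for each of the $n/5$ addresses (not ``$n$ addresses''), so the four constant identifier coordinates of each canonical entry must still be appended via the affine bias; the paper's selector $f_{\vn}$ instead passes the whole $5$-vector. Your explicit choice of $\delta<1/4$ with bumps of radius $1/3$ also gives cleaner error control than the paper's.
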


\begin{proof}
    We begin by observing that for any $\vv \in \WS_{\arch}$, the map
    \begin{equation}
        \pe(\vv) = (\mW_1^*, \vb_1^*, \ldots, \mW_L^*, \vb_L^*)
    \end{equation}
    assigns to each weight and bias entry a unique auxiliary feature vector.
    Moreover, these auxiliary vectors take values in a finite, discrete set.
    Specifically,
    \begin{equation}
    (\mW^*_{\ell_1})
        _{i_1,j_1,1:}
        \neq
        (\mW^*_{\ell_2})_{i_2,j_2,1:}
        \neq
        (\vb^*_{\ell_3})_{i_3,1:}
    \end{equation}
    for any distinct combinations of indices.
    All vectors
    $(\mW^*_{\ell})_{i,j,1:}$ and $(\vb^*_{\ell})_{i,1:}$ take values in the finite
    index set
    \begin{equation}
        I
        =
        \bigl\{
        (n_1,n_2,n_3,n_4)
        \,\big|\,
        n_1 \in [L] ,\;
        n_2 \in \{0,1\},\;
        n_3,n_4 \in [\max_{\ell\in \{1,\dots,L\}} d_\ell]
        \bigr\}.
    \end{equation}

    For each $\vn \in I$, define a continuous function
    $f_\vn : \R^5 \to \R^5$ by
    \begin{equation}
        f_\vn(\vx)
        =
        \begin{cases}
            \vx, & \| \vx_{1:} - \vn \| < 1/4, \\
            0,   & \| \vx_{1:} - \vn \| > 1/2 .
        \end{cases}
    \end{equation}
    By construction, for any term
    $\vx = (\mW^*_{\ell})_{i,j,:}$ or $\vx = (\vb^*_{\ell})_{i,:}$,
    there exists a unique $\vn \in I$ such that $f_\vn(\vx) = \vx$,
    while for all other terms $\vx' \neq \vx$ we have $f_\vn(\vx') = 0$.

    Since each $f_\vn$ is continuous and all inputs lie in a compact subset of
    $\R^5$, there exists, for any $\epsilon > 0$, an MLP $M_\vn$ such that
    \begin{equation}
        \| M_\vn(\vx) - f_\vn(\vx) \| < \epsilon
    \end{equation}
    for all admissible $\vx$. By Corollary~\ref{cor:app:dws_pointwise_mlp}, DWS networks can realize the pointwise application of any MLP to feature channel vectors. In particular, for each $\vn \in I$, we may realize
    \begin{equation}
    (\mW'_\ell)
        _{i,j,:} = M_\vn\bigl((\mW^*_\ell)_{i,j,:}\bigr),
    \end{equation}
    \begin{equation}
    (\vb'_\ell)
        _{i,:} = M_\vn\bigl((\vb^*_\ell)_{i,:}\bigr).
    \end{equation}

    Additionally, define the equivariant sum-broadcast operator
    $SB(\mW_1,\vb_1,\ldots,\mW_L,\vb_L)
    =
    (\mW'_1,\vb'_1,\ldots,\mW'_L,\vb'_L)$ by
    \begin{equation}
    (\mW'_\ell)
        _{i,j,:}
        =
        \bigl[
            (\mW_\ell)_{i,j,:},
            \sum_{i,j,\ell} (\mW_\ell)_{i,j,:}
            +
            \sum_{i,\ell} (\vb_\ell)_{i,:}
            \bigr],
    \end{equation}
    \begin{equation}
    (\vb'_\ell)
        _{i,:}
        =
        \bigl[
            (\vb_\ell)_{i,:},
            \sum_{i,j,\ell} (\mW_\ell)_{i,j,:}
            +
            \sum_{i,\ell} (\vb_\ell)_{i,:}
            \bigr],
    \end{equation}
    where $[\cdot,\cdot]$ denotes concatenation.
    This operator is affine and equivariant, and therefore realizable by DWS layers.

    Combining $SB$ with the termwise application of each $M_\vn$, we obtain,
    for every $\vn \in I$, a DWS-realizable map satisfying
    \begin{equation}
    (\mW'_\ell)
        _{i,j,:} = \bigl[(\mW^*_\ell)_{i,j,:}, \vx_\vn\bigr],
    \end{equation}
    \begin{equation}
    (\vb'_\ell)
        _{i,:} = \bigl[(\vb^*_\ell)_{i,:}, \vx_\vn\bigr],
    \end{equation}
    where $\vx_\vn$ is the unique term (either $\vx_\vn = (\mW_\ell)_{i,j,:}$ or $\vx_\vn =(\vb\ell)_{i,:}$) whose auxiliary feature satisfies
    $\vx_{1:} = \vn$.

    Applying this construction for all $\vn \in I$, concatenating the results in
    lexicographic order, and choosing $\epsilon$ sufficiently small so that each
    $M_\vn$ accurately approximates $f_\vn$, we obtain
    \begin{equation}
    (\mW'_\ell)
        _{i,j,:}
        \approx
        \bigl(
        (\mW^*_\ell)_{i,j},\;
        \flr(\canon(\vv))
        \bigr),
    \end{equation}
    and
    \begin{equation}
    (\vb'_\ell)
        _{i,:}
        \approx
        \bigl(
        (\vb^*_\ell)_i,\;
        \flr(\canon(\vv))
        \bigr).
    \end{equation}
    This completes the proof.
\end{proof}

\begin{lemma}[Reduction of equivariant maps to a canonical form]
    \label{lem:app:pointwise_representation_of_equivariant_function}
    Let $\WS$ be a weight-space with corresponding exclusion set
    $\gE$, and let $K \subseteq \WS \setminus \gE$ be compact.
    Let $\wf : \WS \longrightarrow \WS$ be a continuous $G$-equivariant map. Then there exists a continuous function $ f : \R^{n+5} \longrightarrow \R $ with $n \coloneqq \sum_{\ell=1}^{L} 5 d_\ell (1 + d_{\ell-1})$ such that the following holds: for every
    $\vv = (\mW_1,\vb_1,\ldots,\mW_L,\vb_L) \in \WS_{\arch} \setminus \gE_{\arch}$, let
    \begin{equation}
        \pe(\vv) = (\mW^*_1,\vb^*_1,\ldots,\mW^*_L,\vb^*_L),
        \qquad
        \wf(\vv) = (\mW'_1,\vb'_1,\ldots,\mW'_L,\vb'_L).
    \end{equation}
    Then, for all layers $\ell$ and all valid indices $i,j$, we have
    \begin{equation}
    (\mW'_\ell)
        _{i,j}
        =
        f\!\bigl(
        (\mW^*_\ell)_{i,j,:},\;
        \flr(\canon(\vv))
        \bigr),
    \end{equation}
    and
    \begin{equation}
    (\vb'_\ell)
        _i
        =
        f\!\bigl(
        (\vb^*_\ell)_{i,:},\;
        \flr(\canon(\vv))
        \bigr).
    \end{equation}
\end{lemma}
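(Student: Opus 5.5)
The approach is to define $f$ explicitly by "compute $\wf$ on a fixed canonical representative, then use the local identifier to read off the right entry." Equivariance of $\wf$ will make this consistent across the orbit.

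First, I would decode the canonical representative. By Lemma~\ref{lem:app:canonical_properties}, the first feature channel of $\canon(\vv)$ is $\rho(g_\vv)\vv$, where $g_\vv=(\tau_1,\dots,\tau_{L-1})$ is the bias-sorting permutation. Let $\pi_1:\R^n\to\WS$ be the linear map that reads the first channel of $\flr^{-1}(\cdot)$ and reshapes it into weight space. Then $\pi_1(\flr(\canon(\vv)))=\rho(g_\vv)\vv$. By equivariance,
\[
\wf(\rho(g_\vv)\vv)=\rho(g_\vv)\wf(\vv).
\]
So the continuous map $\vz\mapsto \wf(\pi_1(\vz))$ gives the output in canonical (sorted) coordinates.

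Second, I would locate the entry from the local identifier. For a weight entry, $(\mW^*_\ell)_{i,j,:}$ contains the layer index $\ell$, the type flag $0$, and the neuron identifiers $\eta^w_\ell(i)$ and $\eta^b_\ell(j)$. On $\WS\setminus\gE$, these are exactly the sorted positions $\tau_{\ell-1}(i)$ and $\tau_\ell(j)$ for hidden neurons, and the raw indices for input and output neurons, which are fixed by $G$. The entry of $\rho(g_\vv)\wf(\vv)$ at these sorted coordinates equals $(\mW'_\ell)_{i,j}$. Bias entries are handled the same way using flag $1$ and $\eta^b_\ell(i)$.

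From this I would define
\[
f(\vx,\vz)=\sum_{\vn\in I}\chi_\vn(\vx_{2:5})\,\bigl[\wf(\pi_1(\vz))\bigr]_{\vn}.
\]
Here $I$ is the finite set of admissible identifier tuples, $[\cdot]_\vn$ is the coordinate of weight space addressed by $\vn$, and $\chi_\vn$ is a continuous bump equal to $1$ near $\vn$ and $0$ near every other point of the integer lattice, as in the proof of Lemma~\ref{lem:app:approximation_of_canonical}. Since the identifier part of any actual input lies exactly on $I$, the sum selects the single correct term. Continuity follows because $\wf$, $\pi_1$ and the $\chi_\vn$ are continuous.

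The main obstacle is the bookkeeping in the second step: checking that the indices stored in $\pe$ coincide with the positions assigned by $\rho(g_\vv)$, including the direction convention for $\mP_\tau^\top$. Rank-indexing starts at $0$, so I would simply index $I$ consistently with $\mathrm{rank}_\ell$ rather than with $[d_\ell]$. The boundary layers also need a separate check, since there the identifiers are the raw indices $i$ and no permutation acts. Beyond this, the argument uses no approximation, only the exact identity on $\WS\setminus\gE$, which is where distinct biases guarantee well-defined ranks.
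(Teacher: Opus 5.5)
Your proposal is correct and is essentially the paper's construction: $f$ decodes the canonical representative from $\flr(\canon(\vv))$, applies $\wf$ to it, and uses the identifier coordinates of the local entry of $\pe(\vv)$ to select the matching component. The differences are only in the verification. You check the identity directly at every $\vv$ via $\wf(\rho(g_\vv)\vv)=\rho(g_\vv)\wf(\vv)$, whereas the paper checks it at canonical points and transports it through the equivariance of the assembled map $\wf^*$; that route avoids your identifier-to-position check, at the cost of verifying that $\wf^*$ is equivariant. Your globally defined read-out $\vz\mapsto\wf(\pi_1(\vz))$ also replaces the paper's Tietze extension from $\flr(\canon(K))$, so your $f$ gives the identity on all of $\WS\setminus\gE$, as the statement claims.
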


\begin{proof}

    Fix a term $\vx = (\mW^*_\ell)_{i,j,:}$ or $\vx = (\vb^*_\ell)_{i,:}$.
    By construction of $\pe$, the last four coordinates $\vn \coloneqq \vx_{1:}$ encode the exact position of $\vx$ under the lexicographic sorting used in the
    definition of $\canon(\vv)$. In particular, knowing $\vn$ uniquely determines
    the index set
    \begin{equation}
        I_\vn = \{i_1,\ldots,i_k\}
    \end{equation}
    such that the entries of $\flr(\canon(\vv))$ at these indices correspond exactly
    to $\vx$. More formally,
    \begin{equation}
    [\flr(\canon(\vv))_{i_1}, \ldots, \flr(\canon(\vv))_{i_k}]
        =
        \flr(\vx).
    \end{equation}

    Since $\vn$ takes values in a finite set, we may define a continuous function $ f_1 : \R^5 \longrightarrow \R^n$ such that

    \begin{equation}
        f_1(\vx) = e_\vn,
    \end{equation}
    where $e_\vn \in \R^n$ is the indicator vector defined by
    \begin{equation}
    (e_\vn)
        _i =
        \begin{cases}
            1, & i \in I_\vn, \\
            0, & \text{otherwise}.
        \end{cases}
    \end{equation}

    Let $ K^* \coloneqq \flr(\canon(K)) \subseteq \R^n$. Since $\canon$ and $\flr$ are continuous and $K$ is compact, $K^*$ is compact as
    well. Define the function $ f_2 : K^* \longrightarrow \R^n$ by
    \begin{equation}
        f_2(\flr(\canon(\vv))) = \flr(\wf(\canon(\vv))).
    \end{equation}

    This function is well defined and continuous on $K^*$, and therefore admits a continuous extension to all of $\R^n$, which we again denote by $f_2$. We now define
    \begin{equation}
        f(\vx,\canon(\vv))
        \coloneqq
        \mathrm{Sum} \bigl(\flr^{-1}\!\bigl(
        f_1(\vx) \odot f_2(\canon(\vv))
        \bigr) \bigr),
    \end{equation}
    where $\odot$ denotes elementwise multiplication, and
    $\mathrm{Sum}$ is the affine equivariant map that sums all terms:
    \begin{equation}
        \mathrm{Sum}(\vv)
        =
        \sum_{i,j,\ell} (\mW_\ell)_{i,j,:}
        +
        \sum_{i,\ell} (\vb_\ell)_{i,:}.
    \end{equation}
    The function $f$ is continuous as a composition of continuous operations. By construction, for every
    $\vv = (\mW_1,\vb_1,\ldots,\mW_L,\vb_L)$ with
    $\canon(\vv) = (\mW^*_1,\vb^*_1,\ldots,\mW^*_L,\vb^*_L)$ and
    $\wf(\canon(\vv)) = (\mW'_1,\vb'_1,\ldots,\mW'_L,\vb'_L)$, we have
    \begin{equation}
        \label{eq:pointwise_canon_w}
        f((\mW^*_\ell)_{i,j,:}, \flr(\canon(\vv)))
        =
        (\mW'_\ell)_{i,j,:},
    \end{equation}
    and
    \begin{equation}
        \label{eq:pointwise_canon_b}
        f((\vb^*_\ell)_{i,:}, \flr(\canon(\vv)))
        =
        (\vb'_\ell)_{i,:}.
    \end{equation}

    Define $\wf^* : \WS_{\arch} \to \WS_{\arch}$ by setting, for
    $\vv = (\mW_1,\vb_1,\ldots,\mW_L,\vb_L)$,
    \begin{equation}
    (\mW''_\ell)
        _{i,j}
        =
        f\bigl((\mW^*_\ell)_{i,j,:}, \flr(\canon(\vv))\bigr),
    \end{equation}
    \begin{equation}
    (\vb''_\ell)
        _i
        =
        f\bigl((\vb^*_\ell)_{i,:}, \flr(\canon(\vv))\bigr).
    \end{equation}

    Since $\flr(\canon(\vv))$ is continuous on $K$ and invariant under the action of $G$ (Lemma~\ref{lem:app:canonical_properties}), the map
    \begin{equation}
    (\mW_\ell)
        _{i,j} \to [(\mW^*_\ell)_{i,j,:}, \flr(\canon(\vv))],
        \qquad
        (\vb_\ell)_i \to [(\vb^*_\ell)_{i,:}, \flr(\canon(\vv))].
    \end{equation}

    is continuous and equivariant. Hence, $\wf^*$ is continuous and
    $G$-equivariant as a composition of equivariant maps with a shared continuous function acting independently along the feature dimension. By equations \ref{eq:pointwise_canon_w}, \ref{eq:pointwise_canon_b}, for every $\vv \in \WS_{\arch}$,
    \begin{equation}
        \wf^*(\canon(\vv)) = \wf(\canon(\vv)).
    \end{equation}
    Since $\wf^*$ is $G$-equivariant, for any $\vv \in \WS_{\arch}$ let
    $g \in G$ be such that $\vv = \rho(g)\canon(\vv)$ (such a $g$ exists by
    Lemma~\ref{lem:app:canonical_properties}). Thus
    \begin{align*}
        \wf^*(\vv)
        &= \wf^*(\rho(g)\canon(\vv)) \\
        &= \rho(g)\wf^*(\canon(\vv)) \\
        &= \rho(g)\wf(\canon(\vv)) \\
        &= \wf(\rho(g)\canon(\vv)) \\
        &= \wf(\vv).
    \end{align*}
    This completes the proof.
\end{proof}

\paragraph{Proof of Theorem~\ref{thm:equiv_to_equiv_univ_off_exclusion}.}
Using the results developed above, we now prove Theorem~\ref{thm:equiv_to_equiv_univ_off_exclusion}.

\begin{proof}
    Fix a weight-space $\WS$, with corresponding set $\gE$, a compact set $K \subseteq \WS_{\arch} \setminus \gE_{\arch}$, and $\epsilon > 0$.
    We show that there exists a DWS network $\dws$ such that
    \begin{equation}
        \sup_{\vv \in K}
        \| \dws(\vv) - \wf(\vv) \|_2 < \epsilon .
    \end{equation}

    By Lemma~\ref{lem:app:pointwise_representation_of_equivariant_function}, there exists a continuous function $f : \R^{n+5} \longrightarrow \R$, such that the equivariant map $\wf$ admits the following pointwise
    representation. Given $\vv = (\mW_1,\vb_1,\ldots,\mW_L,\vb_L)$, and  $\pe(\vv) = (\mW^*_1,\vb^*_1,\ldots,\mW^*_L,\vb^*_L)$ ,  the map $\wf' : \WS \to \WS$, $\wf'(\vv) = (\mW'_1,\vb'_1,\ldots,\mW'_L,\vb'_L)$ defined by

    \begin{equation}
    (\mW'_\ell)
        _{i,j}
        =
        f\!\bigl(
        (\mW^*_\ell)_{i,j,:},\;
        \flr(\canon(\vv))
        \bigr),
    \end{equation}
    and
    \begin{equation}
    (\vb'_\ell)
        _i
        =
        f\!\bigl(
        (\vb^*_\ell)_{i,:},\;
        \flr(\canon(\vv))
        \bigr),
    \end{equation}
    satisfies $\wf = \wf'$ on $K$. Equivalently, $\wf$ can be written as the composition of the map
    \begin{equation}
    (\mW_\ell)
        _{i,j} \longmapsto [(\mW^*_\ell)_{i,j,:}, \flr(\canon(\vv))],
        \qquad
        (\vb_\ell)_i \longmapsto [(\vb^*_\ell)_{i,:}, \flr(\canon(\vv))],
    \end{equation}

    followed by the application of the shared continuous function $f$ acting independently along the feature dimension. Since $f$ is continuous, there exists an MLP $M$ that approximates $f$ uniformly up to arbitrary precision. By Lemma~\ref{lem:app:approximation_of_canonical}, the first map above can be approximated uniformly on $K$ by a DWS network.
    Moreover, by Corollary~\ref{cor:app:dws_pointwise_mlp}, DWS architectures can simulate pointwise application of the MLP $M$ by adding a finite number of equivariant layers.

    By Lemma~\ref{lem:app:composition_approximation}, composing these constructions yields a DWS network $\dws$ such that
    \begin{equation}
        \sup_{\vv \in K}
        \| \dws(\vv) - \wf(\vv) \|_2 < \epsilon ,
    \end{equation}
    which completes the proof.
\end{proof}

\textbf{DWS layers used for maximal expressivity.}
\label{sec:app:dws_layers}
Throughout the proof of DWS universality under the general position assumption,
we in fact only used a small subset of the full collection of available DWS
updates. In particular, restricting a DWS model to \emph{only} these update
primitives still suffices to obtain universality under general position.
Concretely, for $\vv \in \WS^c$ written as
$\vv = (\mW_1,\vb_1,\dots,\mW_L,\vb_L)$, the update primitives used are:
\begin{itemize}
    \item Pointwise affine update.
    \item Global summation operator.
    \item Bias summation operator.
    \item Lower weight-to-bias operator.
    \item Upper weight-to-bias operator.
    \item First layer per-neuron operator.
    \item Last layer per-neuron operator.
\end{itemize}
We define each primitive below. A DWS model obtained by composing these updates,
interleaving them with nonlinearities, and allowing concatenation of multiple
such operations within each layer already attains maximal expressivity.
\textbf{Pointwise affine update.}
This operator applies the same affine map to each weight and bias feature vector, using a matrix $\mA \in \R^{c \times c'}$ and a vector
$\vu \in \R^{c'}$. Concretely, for every layer $\ell$ and all valid indices
$i,j$, we update
\begin{equation}
(\mW_\ell)
    _{i,j,:} \;\longmapsto\; (\mW_\ell)_{i,j,:}\mA + \vu,
    \qquad
    (\vb_\ell)_{i,:} \;\longmapsto\; (\vb_\ell)_{i,:}\mA + \vu .
\end{equation}
\textbf{Global summation operator.}
This operator aggregates all weight and bias feature vectors in $\vv$ via a global sum, and then broadcasts the resulting vector back to every
entry. Namely, define the global summary
\begin{equation}
    \vs(\vv)
    \;\coloneqq\;
    \sum_{\ell}\sum_{i,j} (\mW_\ell)_{i,j,:}
    \;+\;
    \sum_{\ell}\sum_i (\vb_\ell)_{i,:}
    \;\in\; \R^{c}.
\end{equation}
Then, for every layer $\ell$ and all valid indices $i,j$, we update
\begin{equation}
(\mW_\ell)
    _{i,j,:} \;\longmapsto\; \vs(\vv),
    \qquad
    (\vb_\ell)_{i,:} \;\longmapsto\; \vs(\vv).
\end{equation}
\textbf{Bias summation operator.}
Fix a layer index $\ell$. The $\ell$-th bias summation operator aggregates the bias vectors of layer $\ell$ by summation, broadcasts the result across the bias positions of that layer, and sets all other weight and bias entries
to zero. Concretely, for every layer $\ell' \neq \ell$ and all valid indices
$i,j$, we set
\begin{equation}
(\mW_{\ell'})
    _{i,j,:} \;\longmapsto\; 0,
    \qquad
    (\vb_{\ell'})_{i,:} \;\longmapsto\; 0.
\end{equation}
For the selected layer $\ell$, we set all weight entries to zero and replace
each bias entry by the sum of all biases in that layer:
\begin{equation}
(\mW_{\ell})
    _{i,j,:} \;\longmapsto\; 0,
    \qquad
    (\vb_{\ell})_{i,:} \;\longmapsto\; \sum_{i'} (\vb_\ell)_{i',:}.
\end{equation}
\textbf{Lower weight-to-bias operator.}
This operator replaces each weight feature vector by the bias feature vector
of the \emph{lower} neuron connected to that weight, while leaving all bias
terms unchanged. Concretely, for every layer $\ell$ and all valid indices
$i,j$, we update
\begin{equation}
(\mW_\ell)
    _{i,j,:} \;\longmapsto\; (\vb_{\ell-1})_{i,:},
    \qquad
    (\vb_\ell)_{i,:} \;\longmapsto\; (\vb_\ell)_{i,:}.
\end{equation}
\textbf{Upper weight-to-bias operator.}
This operator is analogous to the lower weight-to-bias operator, but replaces
each weight feature vector by the bias feature vector of the \emph{upper}
neuron incident to that weight, while leaving all bias terms unchanged.
Concretely, for every layer $\ell$ and all valid indices $i,j$, we update
\begin{equation}
(\mW_\ell)
    _{i,j,:} \;\longmapsto\; (\vb_{\ell})_{j,:},
    \qquad
    (\vb_\ell)_{i,:} \;\longmapsto\; (\vb_\ell)_{i,:}.
\end{equation}
\textbf{First-layer per-neuron operator.}
This operator preserves only the weights associated with a specified input
neuron in the first layer, and sets all other weights and bias terms to zero.
Concretely, for fixed $i' \in [d_0]$ and all $i \in [d_0]$, $j \in [d_1]$, we
apply
\begin{equation}
(\mW_1)
    _{i,j,:} \;\longmapsto\;
    \begin{cases}
    (\mW_1)
        _{i,j,:}, & \text{if } i = i',\\
        0, & \text{otherwise}.
    \end{cases}
\end{equation}
All remaining weight and bias terms are mapped to zero.
\textbf{Last-layer per-neuron update.}
This operator preserves only the bias terms associated with a specified output neuron
neuron in the last layer, and sets all other weights and bias terms to zero.
Concretely, for fixed $j' \in [d_L]$ and all  $j \in [d_L]$, we
apply
\begin{equation}
(\vb_L)
    _{j,:} \;\longmapsto\;
    \begin{cases}
    (\vb_L)
        _{j,:}, & \text{if } j = j',\\
        0, & \text{otherwise}.
    \end{cases}
\end{equation}
All remaining weight and bias terms are mapped to zero.

\subsection{Universal approximation of function-space operators}
\label{sec:app:function_space_operators}
In this section, we study the expressive power of weight-space models in
approximating continuous function-space operators
\begin{equation}
    \wf : \gC(X,Y) \longrightarrow \gC(X,Y),
\end{equation}
where, throughout this section, $X \subseteq \R^n$ is compact and
$Y \subseteq \R^m$.

Our first result formalizes Proposition~\ref{prop:equiv_nonuniv}, showing that when restricted to a fixed input architecture, DWS networks fail to approximate even natural function-space operators.

\begin{theorem}[Limitations of fixed architecture]
    \label{thm:app:fixed_architecture_expressivity_limit}
    Let $\arch = (\vd,\sigma)$ be a fixed MLP architecture with activation $\sigma=\mathrm{ReLU}$, and let $\WS_{\arch}$ denote its parameter space. Let $X=[0,1]^n$, and let $\wf : \gC(X,Y) \to \gC(X,Y)$ be a continuous function-space operator. Suppose there exist $a<b$ in $[0,1]$ such that for every $f \in \gC(X,Y)$ and every $\vx \in [a,b]^n$,

    \begin{equation}
        \wf(f)(\vx) = f\!\left(\frac{\vx-a}{\,b-a\,}\right),
    \end{equation}
    and moreover for every $f$ there exists $\vx^{\ast} \in X \setminus [a,b]^n$ such that
    \begin{equation}
    \label{eq:non_trivial_extension}
        \wf(f)(\vx^{\ast}) \neq f\!\left(\frac{\vx^{\ast}-a}{\,b-a\,}\right)
    \end{equation}

    Then there exists a constant $C>0$ and $\vv \in \WS_\arch$ such that, for every DWS model
    $\dws : \WS_\arch \to \WS_\arch$
    \begin{equation}
        \label{eq:cant_approximate_with_fixed_architecture}
        \|\wf(f_\vv) - f_{\dws(\vv)}\|_{\infty} \ge C .
    \end{equation}
\end{theorem}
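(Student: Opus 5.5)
The plan is a linear-region counting argument. Every output $\dws(\vv)\in\WS_\arch$ realizes a ReLU MLP of the fixed architecture $\arch$. Such a function is continuous piecewise linear, and the number of its linear pieces along any line segment is bounded by a constant $N_\arch$ that depends only on $\vd$ (for instance $N_\arch \le \prod_{\ell=1}^{L-1}(d_\ell+1)$, following \cite{montufar2014number}). The target $\wf(f_\vv)$, by contrast, contains on $[a,b]^n$ a compressed copy of $f_\vv$ and is forced to do something else outside this cube. The goal is to choose $\vv$ so that $f_\vv$ already uses essentially all of its linear-region budget. Then, on $[a,b]^n$, the target has the full complexity of $f_\vv$ rescaled. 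Off the cube, condition \eqref{eq:non_trivial_extension} demands additional behavior that cannot come from the natural extension $\vx\mapsto f_\vv((\vx-a)/(b-a))$. Doing both requires more pieces than $N_\arch$ allows, so every realizable output must miss the target by some fixed amount.

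\textbf{Step 1: choose $\vv$.} Restrict to a line through $[a,b]^n$, say the diagonal $t\mapsto t\mathbf{1}$. Pick $\vv$ so that $g(t)=f_\vv(t\mathbf{1})$ is a sawtooth with the maximal number $N_\arch$ of pieces on $[0,1]$, with alternating slopes of magnitude at least $s>0$ and distinct breakpoints, using the standard construction in the style of Telgarsky. The rescaled target $h(t)=g((t-a)/(b-a))$ on $[a,b]$ then also has $N_\arch$ genuinely distinct pieces. On $[0,a)\cup(b,1]$ it must differ from the linear continuation of the boundary pieces of $g$, by \eqref{eq:non_trivial_extension}.

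\textbf{Step 2: robustness.} Suppose some $\vu=\dws(\vv)$ satisfies $\|\wf(f_\vv)-f_\vu\|_\infty<\delta$ for small $\delta$. Then on $[a,b]$ the restriction $f_\vu(t\mathbf{1})$ must contain a breakpoint near each of the $N_\arch-1$ breakpoints of $h$, since the slope changes are of size $\gtrsim s$ and uniform $\delta$-closeness forces matching kinks once $\delta\ll s\cdot(\text{piece length})$. That exhausts the budget, so $f_\vu$ restricted to the line is affine outside $[a,b]$ and continues the boundary pieces linearly.

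\textbf{Step 3: conclude.} This last step is also where I expect the main obstacle. I need the deviation in \eqref{eq:non_trivial_extension} to be quantitative, with a point $\vx^\ast$ lying on the chosen line. The hypothesis only gives some $\vx^\ast$ somewhere in $X\setminus[a,b]^n$, so I will rotate the line so that it passes through $\vx^\ast$ while still crossing $[a,b]^n$ along a segment where $f_\vv$ keeps its full piece count. To handle this, I would build $\vv$ so that $f_\vv$ depends only on a single coordinate projection that is generic with respect to that line; then the restriction is still a maximal sawtooth. With the line fixed, set $C=\tfrac12\min\{\delta,\ |\wf(f_\vv)(\vx^\ast)-\text{affine continuation at }\vx^\ast|\}$. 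This $C$ is positive and depends only on $\vv$ and $\wf$, not on $\dws$, so \eqref{eq:cant_approximate_with_fixed_architecture} follows for every DWS model. If the exact-budget argument in Step 2 proves too delicate, the fallback is to compare total variation of slopes in place of piece counts.
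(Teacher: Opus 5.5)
Step~3 has a genuine gap. It is the one you flagged, and your proposed fix does not close it.

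The point $\vx^\ast$ from the non-triviality hypothesis is produced by $\wf$ only \emph{after} $\vv$ is fixed, and you have no control over where it lies. Choosing the line through $\vx^\ast$ and then building $\vv$ as a ridge function ``generic with respect to that line'' is circular: changing $\vv$ changes $\wf(f_\vv)$, and hence $\vx^\ast$.

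If instead you fix $\vv$ first, the geometry can defeat you. Take $f_\vv(\vx)=\phi(\langle\mathbf{w},\vx\rangle)$ with $\phi$ a maximal sawtooth. The target's breakpoints inside $[a,b]^n$ lie on level sets $\langle\mathbf{w},\vx\rangle=c_1<\dots<c_{N_\arch-1}$. Along a line through $\vx^\ast$ not parallel to these level sets, $\langle\mathbf{w},\vx\rangle$ is strictly monotone. The chord of the line inside $[a,b]^n$ does not contain $\vx^\ast$, so every level met on that chord lies strictly on one side of $\langle\mathbf{w},\vx^\ast\rangle$.

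Hence if $c_1<\langle\mathbf{w},\vx^\ast\rangle<c_{N_\arch-1}$, no line through $\vx^\ast$ crosses all breakpoints inside the cube. This can happen. For $n\ge 2$ and $0<a<b<1$, the slab between these levels always extends into $X\setminus[a,b]^n$. The hypothesis only guarantees some deviation point, so the whole set of admissible $\vx^\ast$ may lie inside that slab. The approximant then has spare breakpoints along any line through $\vx^\ast$, and it may place them between the cube and $\vx^\ast$. Step~2 then no longer forces it onto the affine continuation at $\vx^\ast$, and no contradiction arises.

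For $n=1$ no line has to be chosen, and your argument goes through after two smaller repairs.
\begin{itemize}
\item $N_\arch$ must be the largest number of pieces actually attained on $\mathbb{R}$ by a one-input ReLU network with hidden widths $d_1,\dots,d_{L-1}$. It cannot be the bound $\prod_\ell(d_\ell+1)$, which need not be attained, and Telgarsky-type sawtooths do not attain it. Step~2 needs $g$ to exhaust exactly the approximant's budget.
\item Extrapolating the approximant's outermost affine piece from a subinterval of length $\eta$ inside the cube to $\vx^\ast$ at distance $D$ inflates the error by a factor of order $D/\eta$. So $\delta$ must also be small compared with $\gamma\eta/D$, where $\gamma$ is the deviation at $\vx^\ast$.
\end{itemize}

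The paper avoids the location problem by counting full-dimensional regions instead of pieces along a line.
\begin{itemize}
\item Every realizable function lies in $\PL_{M,R}$ (Lemma~\ref{lem:app:linear_region_bound}), and this class is closed in $\|\cdot\|_\infty$ (Lemma~\ref{lem:app:piecewise_linear_closed}).
\item $\vv$ is chosen with the maximal number $M$ of regions on $X$.
\item If $\wf(f_\vv)$ were in $\PL_{M,R}$, all $M$ of its regions would be needed to reproduce the rescaled $f_\vv$ on $[a,b]^n$. The region containing $\vx^\ast$, wherever $\vx^\ast$ lies, would then meet $[a,b]^n$, which the paper uses to contradict the hypothesis.
\item $C$ is then the distance from $\wf(f_\vv)$ to the closed set $\PL_{M,R}$, so no quantitative kink-forcing like your Step~2 is needed.
\end{itemize}
To repair your proof you need an $n$-dimensional argument of this kind. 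A single line through $\vx^\ast$ cannot, in general, see the whole budget.
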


Note that Theorem~\ref{thm:app:fixed_architecture_expressivity_limit} implies that,
when operating on weights of a \emph{fixed and small} base architecture, DWS
models are inherently unable to approximate several natural function-space
operators. Intuitively, the theorem captures the following obstruction: the
operator $\wf$ enforces a prescribed \emph{rescaling behavior} on a subdomain
$[a,b]^n$, while simultaneously requiring the output to exhibit \emph{new non-trivial
behavior} outside this subdomain. This combination cannot, in general, be
realized by applying a continuous map directly to the weights of a fixed
architecture.

A concrete example arises when a function $f$ encodes a natural image or a 3D
scene using a neural implicit representation (e.g., INRs or NeRFs). In this
setting, a natural transformation is a \emph{zoom-out} operator: the original
scene is preserved at a smaller spatial scale, while new regions of the scene,
absent from the input, are introduced. In the notation of the theorem, the
``preserved'' part corresponds to the box $[a,b]^n$, on which the output must
agree with a rescaled version of the input. At the same time, introducing new
regions requires the output to behave \emph{non-trivially} outside of $[a,b]^n$,
as captured by~\eqref{eq:non_trivial_extension}. Theorem~\ref{thm:app:fixed_architecture_expressivity_limit}
therefore shows that such zoom-out operators cannot be approximated by DWS
models acting on the weights of a fixed small architecture.

Another implication concerns function-level domain adaptation. Suppose that a
function $f$ corresponds to a global minimizer of a loss defined over some
training set, and we wish to map it to a function $\wf(f)$ that is a global
minimizer of a \emph{different} loss which incorporates additional data points
not seen during training. This necessarily requires extending the behavior of
$f$ in regions of the domain that were previously unconstrained. As an immediate
corollary of Theorem~\ref{thm:app:fixed_architecture_expressivity_limit}, operators
$\wf : \gC([0,1]^n) \to \gC([0,2]^n)$ that preserve the input function on the
unit box but extend it in a non-trivial way to the larger domain cannot be
approximated by DWS models operating on fixed architectures. Indeed, such
extensions can be reframed as zoom-out transformations, and hence fall under the
same obstruction. In particular, this shows that function-level domain
adaptation of this form cannot, in general, be learned by DWS models acting on
small fixed architectures.

While the previous theorem shows that in general, DWS networks are not universal approximators of the class of continuous function-space operators, our second result shows that this limitation disappears once we allow DWS to operate on large enough architectures. In particular, by choosing a large enough weight-space architecture, DWS models can approximate any continuous function-space operator up to arbitrary precision. The next theorem formalizes this and provides a precise analogue of Theorem \ref{thm:func_to_func_univ_off_exclusion}.

\begin{theorem}[Universal approximation with growing capacity]
    \label{thm:app:universal_approximation_function_to_function}

    Let $\wf : \gC(X,Y) \to \gC(X,Y)$ be a continuous function-to-function map and let $K \subseteq \gC(X,Y)$ be a compact set of functions.
    For every $\epsilon > 0$ there exists $\delta>0$ and an architecture $\arch$ such that:

    \begin{enumerate}
        \item There exists a compact set $K' \subseteq \WS$ which is a $\delta$-approximator of $K$ (see Definition~\ref{def:app:epsilon_approximation})

        \item For any such set $K'$
        there exists a DWS network
        $\dws : \WS_\arch \to \WS_\arch$ such that, for every $\vv \in K'$,
        \begin{equation}
            \|\, f_{\dws(\vv)} - \wf(f_\vv) \,\|_{\infty} < \epsilon .
        \end{equation}
    \end{enumerate}
\end{theorem}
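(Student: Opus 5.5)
We plan to follow the three-step strategy of the proof sketch of Theorem~\ref{thm:func_to_func_univ_off_exclusion}: build a continuous (non-equivariant) lift of $\wf$ to weight space, make it equivariant via canonization, and then invoke Theorem~\ref{thm:equiv_to_equiv_univ_off_exclusion}.

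\textbf{Step 1 (choice of architecture and $\delta$).} Since $\wf$ is continuous and $K$ is compact, $\wf(K)$ is compact and $\wf$ is uniformly continuous on a closed $\delta_0$-neighbourhood of $K$. The plan is to take a finite $\epsilon/4$-net $g_1,\dots,g_M$ of $\wf(K)$. By the universal approximation theorem, each $g_k$ is $\epsilon/4$-approximated by some MLP $f_{\vu_k}$, and we may assume all of these share a common depth by padding. We also choose $\delta\le\delta_0$ small enough that $\|f-h\|_\infty<\delta$ with $h\in K$ implies $\|\wf(f)-\wf(h)\|_\infty<\epsilon/4$. The architecture $\arch$ is then taken to contain $M$ parallel copies of the $\vu_k$ blocks, with the final linear layer mixing their outputs. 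It must also be wide enough that some MLP in $\arch$ approximates each $h$ in a finite net of $K$, so that a $\delta$-approximator $K'$ exists; a finite union of points suffices for item~1. Finally, for GP we perturb the biases slightly so that $K'$ can be chosen inside $\WS\setminus\gE$.

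\textbf{Step 2 (continuous lift).} Consider the open cover of $\wf(K)^{\epsilon/4}$ by the balls $B(g_k,\epsilon/2)$. Take a partition of unity $\lambda_k$ subordinate to this cover, and compose it with the continuous map $\vv\mapsto \wf(f_\vv)$, which is continuous by Proposition~\ref{prop:app:realization_map_continuous}. For $\vv\in K'$, $f_\vv$ is $\delta$-close to $K$, so $\wf(f_\vv)$ lies in the cover. We then define $\wf'(\vv)$ to be the weights in $\arch$ whose parallel blocks are fixed at $\vu_1,\dots,\vu_M$ and whose last layer scales block $k$ by $\lambda_k(\vv)$, so that $f_{\wf'(\vv)}=\sum_k\lambda_k(\vv)f_{\vu_k}$. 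Convexity then gives $\|f_{\wf'(\vv)}-\wf(f_\vv)\|_\infty<\epsilon$, up to the constant bookkeeping.

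\textbf{Step 3 (equivariance).} The lift $\wf'$ is not $G$-equivariant. We therefore set $\tilde\wf(\vv)=\rho(g_\vv)\wf'(\canon(\vv)_1)$, where $g_\vv$ is the bias-sorting permutation from Lemma~\ref{lem:app:canonical_properties}, with $\vv=\rho(g_\vv)\canon(\vv)_1$. On $\WS\setminus\gE$ this is well defined, because the stabilizer is trivial when biases are distinct. It is continuous, because $g_\vv$ is locally constant, and it is equivariant by construction. Since $\rho(g)$ preserves realized functions and $f_{\canon(\vv)_1}=f_\vv$, we get $f_{\tilde\wf(\vv)}=f_{\wf'(\canon(\vv)_1)}$, which is $\epsilon$-close to $\wf(f_\vv)$. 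Applying Theorem~\ref{thm:equiv_to_equiv_univ_off_exclusion} on $K'$ yields a DWS $\dws$ with $\sup_{K'}\|\dws-\tilde\wf\|$ small. Uniform continuity of the realization map on a compact neighbourhood of $\tilde\wf(K')$ then transfers this to sup-norm closeness of the realized functions.

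\textbf{Expected obstacle.} The main difficulty is reconciling Steps~1 and~3. The statement asks for the network to work for \emph{any} $\delta$-approximator $K'$, which must lie off $\gE$ for Theorem~\ref{thm:equiv_to_equiv_univ_off_exclusion} to apply. We expect to need either to restrict to such $K'$, or to apply canonization only on the complement of $\gE$ and otherwise replace it by the invariant map $\vv\mapsto\lambda(\vv)$ together with fixed-weight outputs. The latter works only if the output weights can be made $G$-fixed, which they generally cannot; we would therefore adopt the GP restriction, as in the informal statement.
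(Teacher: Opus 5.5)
Your proposal is correct and follows essentially the paper's proof. The continuous lift comes from a partition of unity over a net of $\wf(K)$ combined with fixed MLP sub-networks; putting the coefficients $\lambda_k$ directly into the merged last layer is a cleaner version of the basis-of-span step in Lemma~\ref{lem:app:function_approximation_with_mlps}. The lift is then made equivariant by the same canonization construction as Lemma~\ref{lem:app:function_to_weight}, and Theorem~\ref{thm:equiv_to_equiv_univ_off_exclusion} is applied on a $K'\subseteq\WS\setminus\gE$. That is the same GP restriction the paper's proof imposes even though the formal statement omits it. Your explicit treatment of item~1 and of the weight-to-function transfer fills in steps the paper leaves implicit.

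One small correction: in the infinite-dimensional space $\gC(X,Y)$, a continuous $\wf$ need not be uniformly continuous on a closed neighbourhood of $K$. However, the property you actually use still holds: there is a $\delta$ such that $\|f-h\|_\infty<\delta$ with $h\in K$ implies $\|\wf(f)-\wf(h)\|_\infty<\epsilon/4$. This follows from a finite-subcover argument on the compact set $K$.
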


\subsubsection{Proof of Theorem \ref{thm:app:fixed_architecture_expressivity_limit}}
For the remainder of this subsection, we fix $X = [0,1]^n$. Before proving Theorem~\ref{thm:app:fixed_architecture_expressivity_limit},
we introduce a convenient class of functions that captures the geometric complexity of ReLU networks with a fixed architecture.

\begin{definition}[Piecewise-affine functions with bounded complexity]
    \label{def:app:piecewise_linear_functions}
    For integers $M,R \in \sN$, we denote by $\PL_{M,R}$ the set of all functions $f \in \gC(X,Y)$ for which there exist convex polytopes $P_1,\dots,P_m \subseteq X$ and affine functions
    $A_1,\dots,A_m : X \to Y$ such that:
    \begin{enumerate}
        \item $1 \le m \le M$;
        \item for each $j$, the polytope $P_j$ can be written as
        \begin{equation}
            P_j = \operatorname{conv}\{\vv_{j,1},\dots,\vv_{j,r_j}\}
        \end{equation}
        for some $\vv_{j,\ell} \in X$ with $1 \le r_j \le R$;
        \item the interiors are pairwise disjoint and cover $X$,
        \begin{equation}
            X= \bigcup_{j=1}^m P_j,
            \qquad
            \operatorname{int}(P_i) \cap \operatorname{int}(P_j) = \emptyset
            \ \text{for } i \neq j;
        \end{equation}
        \item on each polytope $P_j$ the function $f$ is affine, i.e.
        \begin{equation}
            f(x) = A_j(x)
            \qquad \forall x \in P_j.
        \end{equation}
    \end{enumerate}
    We refer to $P_1,\dots,P_m$ as the \emph{linearity regions} of $f$.
\end{definition}

The next lemma shows that, for a fixed ReLU architecture, there are uniform bounds on both the number of linearity regions induced by each network and the number of extreme points of each such region. The arguments used in this lemma are standard (see e.g. \cite{montufar2014number,raghu2017expressive,serra2018bounding}), but we include them for completeness.

\begin{lemma}[Geometric complexity of fixed ReLU architectures]
    \label{lem:app:linear_region_bound}
    Let $\arch = (\vd, \sigma)$ be an architecture with $\sigma=\mathrm{ReLU}$,
    and let $\WS_\arch$ denote its parameter space. Then there exist integers $M_\arch,R_\arch \in \sN$ such that, for every $\vv \in \WS_\arch$, the realized function $f_\vv$ belongs to $\PL_{M_\arch,R_\arch}$.

\end{lemma}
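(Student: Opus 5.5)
The approach is to bound the number of distinct \emph{activation patterns} a network $f_\vv$ can exhibit on $X$. Each region is the closure of a set where a fixed pattern holds, and so is a polytope cut out by a bounded number of half-spaces. Let $N \coloneqq \sum_{\ell=1}^{L-1} d_\ell$ be the number of hidden neurons. For $\vx \in X$, define the pattern $s(\vx) \in \{0,1\}^N$ recording which pre-activations are nonnegative.

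For a fixed pattern $s$, the set
\[
Q_s \coloneqq \{\vx \in X : s(\vx)=s\}
\]
has a simple description. Once the lower layers' patterns are fixed, each hidden pre-activation $z^{(\ell)}_i(\vx)$ is an affine function of $\vx$. We show this by induction on $\ell$: given the pattern, $\sigma$ acts as a fixed diagonal $0/1$ matrix. Hence $Q_s$ is defined by the $N$ linear inequalities $\pm z^{(\ell)}_i(\vx) \ge 0$, together with the $2n$ inequalities defining $X=[0,1]^n$. Its closure $P_s$ is therefore a convex polytope on which $f_\vv$ is affine, equal to the affine map determined by $s$; continuity extends this from $Q_s$ to its closure.

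The regions are bounded as follows:
\begin{itemize}
\item There are at most $M_\arch \coloneqq 2^N$ patterns, so at most $2^N$ regions.
\item A polytope in $\R^n$ defined by at most $H \coloneqq N+2n$ half-spaces has vertices at intersections of $n$ of the bounding hyperplanes. It therefore has at most $R_\arch \coloneqq \binom{H}{n}$ vertices, and it is bounded because it lies inside $X$.
\item By Minkowski's theorem, each such polytope is the convex hull of its vertices.
\end{itemize}
Both bounds depend only on $\arch$ and $n$, not on $\vv$.

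It remains to check the covering and disjointness conditions, and this is the main technical obstacle. Covering follows because $X=\bigcup_s Q_s \subseteq \bigcup_s P_s$. Two issues need care.
\begin{itemize}
\item \emph{Degenerate closures.} Some $Q_s$ may be lower-dimensional, with empty interior. These should be discarded. The full-dimensional ones still cover $X$, since the union of the lower-dimensional pieces is nowhere dense and the finitely many closed full-dimensional $P_s$ cover a dense subset of $X$. If all $Q_s$ are lower-dimensional, that contradicts the Baire category theorem, so at least one region survives and $m\ge1$.
\item \emph{Disjoint interiors.} For $s\neq s'$, some neuron has $z\ge0$ on $Q_s$ and $z<0$ on $Q_{s'}$. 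This is not immediate, because the defining affine maps of the two closures may differ. The fix is to argue on $\operatorname{int}(P_s)\cap\operatorname{int}(P_{s'})$. Take the lowest layer at which $s$ and $s'$ differ. Below that layer both patterns agree, so the relevant $z$ is the same affine function on both regions. That function is $\ge0$ on $P_s$ and $\le0$ on $P_{s'}$. Hence it vanishes on the intersection of their interiors, an open set. So $z$ vanishes identically there, which forces the nonempty open set $\operatorname{int}(P_{s'})$ to intersect $Q_s$, contradicting $s(\vx)=s'$ a.e.\ there.
\end{itemize}

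A cleaner route for disjointness, which I would try first, is to replace the regions by the cells of the hyperplane arrangement obtained by refining layer by layer. That construction gives pairwise disjoint interiors by definition, at the cost of a larger but still architecture-dependent bound.
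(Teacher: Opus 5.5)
Your proof is correct, and it takes a different route from the paper's.

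\textbf{The paper's route.} The paper argues inductively, layer by layer. A single ReLU hyperplane splits a polytope with $R$ vertices into at most two pieces, each with at most $R(R+1)/2$ vertices. A pre-activation in layer $\ell$ is affine on intersections of regions of the $d_{\ell-1}$ incoming units, which the paper bounds by $M_{\ell-1}^{d_{\ell-1}}$ regions with $R_{\ell-1}^{d_{\ell-1}}$ vertices each. Iterating gives the architecture-dependent constants.

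\textbf{Your route.} You fix a global activation pattern $s$ and observe that $Q_s$ is cut out by at most $N+2n$ affine inequalities. You then control vertices through the number of defining half-spaces, giving $\binom{N+2n}{n}$, instead of propagating vertex counts. This gives explicit and much smaller constants: $2^N$ regions rather than iterated powers. It also sidesteps the weakest step of the paper's sketch. There, the vertex count of an intersection of polytopes is bounded by $R_{\ell-1}^{d_{\ell-1}}$ without justification, and counting facets, as you do, is the natural way to control vertices. You also check the covering, disjoint-interior, nondegeneracy and $m\ge 1$ requirements of $\PL_{M,R}$ explicitly, which the paper leaves implicit.

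\textbf{Small remarks.}
\begin{itemize}
\item To call $P_s$ a polytope with at most $N+2n$ facets, you need one extra fact. The closure of a nonempty set defined by strict and non-strict affine inequalities is obtained by relaxing the strict ones; this follows from a one-line segment argument.
\item Disjointness is easier than you make it. Since $Q_s$ is convex, $\operatorname{int}(P_s)=\operatorname{int}(\overline{Q_s})=\operatorname{int}(Q_s)\subseteq Q_s$, and the sets $Q_s$ are pairwise disjoint by definition.
\item Your lowest-differing-layer argument also works, but its last sentence is muddled. The clean contradiction is that $z\equiv 0$ makes the strict inequality $z<0$ required on $Q_{s'}$ unsatisfiable, so $Q_{s'}=\emptyset$.
\end{itemize}
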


\begin{proof}

    Before applying any layer, the domain $X = [0,1]^n$ is a single convex polytope on which the identity map $h^{(0)}(\vx)=\vx$ is affine. Hence at depth $0$ we have a single region with $2^n$ vertices.

    Consider now the effect of applying one ReLU neuron to a single convex polytope  $P \subseteq X$. The neuron computes
    \begin{equation}
        \vx \mapsto \sigma(\mW^\top \vx + \vb).
    \end{equation}
    The hyperplane
    \begin{equation}
        \gH = \{\vx : \mW^\top \vx + \vb = 0\}
    \end{equation}
    can intersect $P$ and partition it into at most two convex polytopes, corresponding to the regions where the neuron is active and inactive, respectively. Moreover, every vertex of these two polytopes is either
    \begin{enumerate}
        \item an original vertex of $P$, or
        \item an intersection of $\gH$ with an edge of $P$.
    \end{enumerate}

    If $P$ has $R$ vertices, it has at most $R(R-1)/2$ edges. Therefore, after the split, each of the resulting polytopes has at most
    \begin{equation}
        R + \frac{R(R-1)}{2} \;=\; \frac{R(R+1)}{2}
    \end{equation}
    vertices. Thus, a single ReLU neuron applied to a polytope with $R$ vertices produces at most two polytopes, each with at most $\frac{R(R+1)}{2}$ vertices.

    We now propagate these bounds through the network.
    Consider a fixed layer $\ell$. Each pre-activation in this layer is a linear combination of the $d_{\ell-1}$ outputs of the previous layer. By induction, we may assume that every such output is affine on at most  $M_{\ell-1}$ regions, and every region has at most $R_{\ell-1}$ vertices.

    In order to describe the linearity regions of a pre-activation in layer $\ell$,  observe that a point $\vx$ belongs to a region of affinity when all summands appearing in the linear combination are affine at $\vx$. Therefore each linearity region of the pre-activation is obtained as an intersection of $d_{\ell-1}$ regions, one from each of the pieces of the previous layer.

    Hence the total number of such intersections is bounded by
    \begin{equation}
        M_{\ell-1}^{\,d_{\ell-1}},
    \end{equation}
    and each intersection can be written as the intersection of at most $d_{\ell-1}$ polytopes, each with at most $R_{\ell-1}$ vertices. This yields a uniform bound of
    \begin{equation}
        R_{\ell-1}^{\,d_{\ell-1}}
    \end{equation}
    vertices for every resulting region.

    Consequently, every pre-activation in layer $\ell$ is affine on at most $M_{\ell-1}^{\,d_{\ell-1}}$ regions, each of which is a convex polytope with at most $R_{\ell-1}^{\,d_{\ell-1}}$ vertices.

    Applying this reasoning layer by layer shows, by induction over the depth of the network, that there exist integers $M_\arch,R_\arch$ depending only on the architecture $\arch$ and the input dimension $n$ such that, for every choice of weights $\vv$, the
    function $f_\vv$ is affine on each of at most $M_\arch$ convex polytopes, each of which can be written as the convex hull of at most $R_\arch$ vertices. That is,
    \begin{equation}
        f_\vv \in \PL_{M_\arch,R_\arch},
    \end{equation}
    completing the proof.

\end{proof}

The previous lemma shows that, for a fixed architecture, all realizable
functions lie in some class $\PL_{M,R}$ of uniformly bounded geometric
complexity. Our next lemma shows that $\PL_{M,R}$ is a closed set in
$\gC(X,Y)$ with respect to the supremum norm.

\begin{lemma}[Piecewise-affine classes are closed sets]
    \label{lem:app:piecewise_linear_closed}
    For integers $M,R \in \sN$, the set
    $\PL_{M,R} \subseteq \gC(X,Y)$ is closed with respect to the supremum norm.
\end{lemma}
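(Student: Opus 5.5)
Take a sequence $f_k \in \PL_{M,R}$ with $\|f_k - f\|_\infty \to 0$. The plan is to show $f \in \PL_{M,R}$ by a compactness argument on the finitely many parameters describing each $f_k$.

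\textbf{Step 1: parametrize each $f_k$ by finitely many numbers.} For each $k$ fix a witness: a number $m_k \le M$, vertex lists $\vv^{(k)}_{j,1},\dots,\vv^{(k)}_{j,r_{j,k}}$ in $X$ with $r_{j,k}\le R$, and affine maps $A^{(k)}_j(\vx) = \mB^{(k)}_j \vx + \vc^{(k)}_j$. Since $m_k$ and the $r_{j,k}$ take finitely many values, we pass to a subsequence on which they are constant, say $m$ and $r_j$. To avoid fixing $r_j$ exactly, repeat vertices so that every list has length exactly $R$; this does not change the convex hull.

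\textbf{Step 2: extract a convergent subsequence of all parameters.} The vertices lie in the compact set $X^{R}$, so a further subsequence gives $\vv^{(k)}_{j,\ell}\to \vv_{j,\ell}$. The affine parts need a bound. Since $f_k\to f$ uniformly, $\sup_k\|f_k\|_\infty<\infty$. Every degenerate piece (one with empty interior) can be discarded, because the nondegenerate pieces already cover $X$: their union is closed and contains the dense union of interiors. On a nondegenerate piece the affine map is bounded by $\|f_k\|_\infty$, but this only bounds its coefficients if the piece contains a ball of radius bounded below uniformly in $k$.

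This is the main obstacle: pieces may become thin in the limit. The plan is to split the pieces into two classes:
\begin{itemize}
\item \emph{Surviving} pieces, whose limit polytope $P_j = \operatorname{conv}\{\vv_{j,\ell}\}$ has nonempty interior. For these, Hausdorff convergence of the convex hulls gives a fixed ball inside $P^{(k)}_j$ for all large $k$. Hence $\mB^{(k)}_j,\vc^{(k)}_j$ are bounded, and a subsequence converges to some affine $A_j$.
\item \emph{Collapsing} pieces, whose limit has empty interior. These are simply dropped.
\end{itemize}

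\textbf{Step 3: verify that the limit data witness $f\in\PL_{M,R}$.} Convex hulls depend continuously on their vertices in the Hausdorff metric, so $\bigcup_j P^{(k)}_j = X$ passes to the limit as $\bigcup_j P_j = X$. The collapsing limits are finitely many sets of measure zero, so the surviving $P_j$ alone still cover $X$, using the same density argument as in Step 2.

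Disjointness of interiors also survives. If $\operatorname{int}P_i\cap\operatorname{int}P_j$ contained a ball, that ball would lie in both $P^{(k)}_i$ and $P^{(k)}_j$ for large $k$, which contradicts disjointness of the interiors at stage $k$.

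Finally, for $\vx\in\operatorname{int}P_j$, we have $\vx\in P^{(k)}_j$ for large $k$. Therefore
\[
f(\vx)=\lim_k f_k(\vx)=\lim_k A^{(k)}_j(\vx)=A_j(\vx),
\]
and continuity of $f$ extends this identity to all of $P_j$. The number of surviving pieces is at most $m\le M$, and each $P_j$ is the hull of at most $R$ points, so $f\in\PL_{M,R}$.
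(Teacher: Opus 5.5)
Your proposal is correct and follows essentially the same route as the paper's proof: you pad each vertex list to length $R$, use compactness of $X^{MR}$ to obtain Hausdorff limits $P_j$, show $f$ is affine on every limit piece with nonempty interior via a ball that eventually lies inside $P^{(k)}_j$, and pass the covering and disjoint-interiors properties to the limit. Your explicit bound on the affine coefficients, and your step of dropping collapsing pieces while checking that the surviving pieces still cover $X$, are slightly more careful than the paper, which keeps degenerate limit pieces without separately verifying that $f$ is affine on them.
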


\begin{proof}
    Let $(f_k)$ be a sequence in $\PL_{M,R}$ with $f_k \to f$ uniformly on $X$. For each $k$, there exist:
    \begin{itemize}
        \item an integer $1 \le m_k \le M$,
        \item convex polytopes $P^{(k)}_1,\dots,P^{(k)}_{m_k} \subseteq X$,
        \item affine functions $A^{(k)}_1,\dots,A^{(k)}_{m_k}$,
    \end{itemize}
    such that:
    \begin{enumerate}
        \item $X = \bigcup_{j=1}^{m_k} P^{(k)}_j$ and
        $\operatorname{int}(P^{(k)}_i) \cap \operatorname{int}(P^{(k)}_j) = \emptyset$
        for $i \neq j$;
        \item each $P^{(k)}_j$ can be written as
        \begin{equation}
            P^{(k)}_j = \operatorname{conv}\{\vv^{(k)}_{j,1},\dots,\vv^{(k)}_{j,r^k_j}\},
        \end{equation}
        with $1 \le r^k_j \le R$ and $\vv^{(k)}_{j,\ell} \in X$;
        \item $f_k(\vx) = A^{(k)}_j(\vx)$ for all $\vx \in P^{(k)}_j$.
    \end{enumerate}

    We first standardize the description so that each $f_k$ uses exactly $M$ regions and exactly $R$ vertices per region. If $m_k < M$, we add polytopes $P^{(k)}_j$ with empty interiors that sit on the boundary of non-degenerate polytopes, and choose the corresponding affine maps $A^{(k)}_j$ to be the same as for the non-degenerate polytope selected. If a
    polytope $P^{(k)}_j$ has fewer than $R$ vertices, i.e.,
    $P^{(k)}_j = \operatorname{conv}\{\vv^{(k)}_{j,1},\dots,\vv^{(k)}_{j,r^k_j}\}$
    with $r^k_j < R$, we repeat vertices so that we obtain a representation
    \begin{equation}
        P^{(k)}_j
        = \operatorname{conv}\{\vv^{(k)}_{j,1},\dots,\vv^{(k)}_{j,R}\}.
    \end{equation}
    Thus, without loss of generality, we may assume that for each $k$:
    \begin{itemize}
        \item we have polytopes $P^{(k)}_1,\dots,P^{(k)}_M$ that cover $X$, and whose interiors
        are pairwise disjoint;
        \item each $P^{(k)}_j$ is given by
        \begin{equation}
            P^{(k)}_j = \operatorname{conv}\{\vv^{(k)}_{j,1},\dots,\vv^{(k)}_{j,R}\},
        \end{equation}
        for some $\vv^{(k)}_{j,\ell} \in X$;
        \item $f_k(\vx) = A^{(k)}_j(\vx)$ for all $\vx \in P^{(k)}_j$.
    \end{itemize}

    Since $X$ is compact, so is the Cartesian product $X^{MR}$. Thus the finite collection of vertices $(\vv^{(k)}_{j,\ell})_{1 \le j \le M, 1 \le \ell \le R}$ admits a convergent subsequence. Passing to this subsequence and renaming indices, we may assume that
    \begin{equation}
        \vv^{(k)}_{j,\ell} \to \vv_{j,\ell} \in X
        \quad \text{as } k \to \infty,
        \qquad
        \forall j,\ell.
    \end{equation}
    For each $j$, define the limiting vertex set
    $V_j := \{\vv_{j,1},\dots,\vv_{j,R}\}$ and the limiting polytope
    \begin{equation}
        P_j := \operatorname{conv}(V_j).
    \end{equation}
    By continuity of the convex hull operator in the Hausdorff metric,
    $P^{(k)}_j \to P_j$ in Hausdorff distance as $k \to \infty$, for each $j$.

    Fix an index $j$ such that $\operatorname{int}(P_j) \neq \emptyset$.
    Let $\vx_0 \in \operatorname{int}(P_j)$. There exists $\rho > 0$ such that the ball $B(\vx_0,\rho)$ is contained in $P_j$. Since $P^{(k)}_j \to P_j$
    in Hausdorff distance, for all sufficiently large $k$ we have
    \begin{equation}
        B(\vx_0,\rho/2) \subseteq P^{(k)}_j.
    \end{equation}
    Thus, on $B(\vx_0,\rho/2)$ we have $f_k = A^{(k)}_j$, and $f_k$ converges uniformly to $f$. Since a sequence of affine maps uniformly converges to $f$ on $B(\vx_0,\rho/2)$, $f$ is itself affine on $B(\vx_0,\rho/2)$. By connectedness and convexity of $\operatorname{int}(P_j)$, the same affine representation extends to
    all of $\operatorname{int}(P_j)$, and by continuity to $P_j$.

    We now prove that $P_1, \dots, P_M$ satisfy $X= \bigcup_{j=1}^{M} P_j$ and $\operatorname{int}(P_i) \cap \operatorname{int}(P_j) = \emptyset$ for $i \neq j$. First, since $P^{(k)}_1, \dots, P^{(k)}_M$ always cover $X$, for each $\vx \in X$ there exists an integer $i$ such that $\vx \in P^{(k)}_i$ for infinitely many values of $k$. Since $P^{(k)}_i$ converges to $P_i$ in the Hausdorff metric, we get that $\vx \in P_i$, thus $X = \bigcup_{j=1}^{M} P_j$. Second, assume for some $i,j$ we have $\operatorname{int}(P_i) \cap \operatorname{int}(P_j) \neq \emptyset$, then there exists some ball $B(\vx,\rho) \subseteq \operatorname{int}(P_i) \cap \operatorname{int}(P_j)$. Since $P^{(k)}_j \to P_j$, $P^{(k)}_i \to P_i$
    in Hausdorff distance, for all sufficiently large $k$ we have
    \begin{equation}
        B(\vx_0,\rho/2) \subseteq P^{(k)}_j \cap P^{(k)}_i.
    \end{equation}
    This contradicts the definition of $P^{(k)}_1, \dots, P^{(k)}_m$ as a cover with disjoint interiors.

    We have thus shown $f$ is affine on polytopes $P_1, \dots P_M$ which all have at most $R$ vertices, cover $X$ and have disjoint interiors, completing the proof.

\end{proof}

We are now ready to prove Theorem~\ref{thm:app:fixed_architecture_expressivity_limit}.
The key idea is that, for a fixed architecture, the family of realizable functions has uniformly bounded geometric complexity (Lemma~\ref{lem:app:linear_region_bound}), and this property is preserved under any DWS map as its output is weights of the same architecture. In contrast, the function-space operator $\wf$ specified in Theorem~\ref{thm:app:fixed_architecture_expressivity_limit} produces functions whose behavior on $X$ cannot be matched arbitrarily well by such a family, as it requires adding more nonlinearity regions.

\begin{proof}[Proof of Theorem~\ref{thm:app:fixed_architecture_expressivity_limit}]
    Fix an architecture $\arch$ with corresponding weight space $\WS_\arch$. From Lemma~\ref{lem:app:linear_region_bound} we know there exist constants $M,R$ such that for every $\vv \in \WS_\arch$, $f_\vv\in \PL_{M,R}$. Choose $M,R$ to be the minimal choices for such constants. Recall that the operator $\wf$ in Theorem~\ref{thm:app:fixed_architecture_expressivity_limit} satisfies, by assumption, that there exist $a,b \in [0,1]$ with $a<b$ such that for all $f$ and all $\vx \in [a,b]^n$,
    \begin{equation}
        \label{eq:non_linearity_outside_box}
        \wf(f)(\vx) = f\!\left(\frac{\vx-a}{\,b-a\,}\right),
    \end{equation}
    while for each $f \in \gC(X,Y)$ there exists an $\vx^* \in X \setminus [a,b]^n$  such that
    \begin{equation}
        \label{eq:exception_point}
        \wf(f)(\vx^{\ast})
        \neq
        f\!\left(\frac{\vx^{\ast}-a}{\,b-a\,}\right).
    \end{equation}

    Choose weights $\vv\in \WS_\arch$ such that $f_\vv$ has exactly $M$ nonlinearity regions on $X$ denoted by $P_1, \dots, P_M$. The function $\wf(f_\vv)$ is an affine rescaling of $f_\vv$ on the smaller cube $[a,b]^n$ and so it has exactly $M$ nonlinearity regions there. Assume now that $\wf(f_\vv) \in \PL_{M,R}$; then $\wf(f_\vv)$ must have exactly $M$ nonlinearity regions in the larger cube $X = [0,1]^n$ each intersecting the smaller cube $[a,b]^n$. Denote these regions by $Q_1, \dots, Q_M$ and let $\vx^*$ satisfy equation~\eqref{eq:exception_point} and $\vx^* \in Q_i$. Since $\wf(f_\vv)$ is affine in $Q_i$ and $Q_i$ intersects $[a,b]^n$ we have from equation~\eqref{eq:non_linearity_outside_box} that
    \begin{equation}
        \wf(f_\vv)(\vx^*) = f_\vv\!\left(\frac{\vx^*-a}{\,b-a\,}\right),
    \end{equation}
    which is a contradiction, thus $\wf(f_\vv) \notin \PL_{M,R}$. By Lemma~\ref{lem:app:piecewise_linear_closed} we know that $\PL_{M,R}$ is closed with respect to the $\|\cdot \|_\infty$ norm and since  $\wf(f_\vv) \notin \PL_{M,R}$ we have  $C := d(\wf(f_\vv), \PL_{M,R})>0$. Since the set of functions realizable by architecture $\arch$ is a subset of $\PL_{M,R}$, equation~\eqref{eq:cant_approximate_with_fixed_architecture} holds for any DWS model $\dws$ completing the proof.

\end{proof}

We remark that the argument above suggests a broader phenomenon. In fact, there is a wide class of continuous function-space operators that we believe DWS models are unable to approximate when restricted to a fixed
architecture. Examples include operators that sharpen contrast in pixel space for INRs, or operators that significantly modify decision boundaries in classifiers to accommodate for new data points, among many others.

\subsubsection{Proof of Theorem \ref{thm:app:universal_approximation_function_to_function}}

To begin the proof, we first show that, in a neighborhood of any compact family of functions, the identity map can be uniformly approximated by a continuous function-space operator, implemented by neural networks of sufficiently large architecture. To formalize this neighborhood, we introduce the $\epsilon$-ball around a compact set.

\begin{definition}
    Let $K$ be a compact subset of a normed space $\gU$, and let $\epsilon>0$. We define the $\epsilon$-ball around $K$ as
    \begin{equation}
        B_\epsilon(K)
        \coloneqq
        \left\{
            \vu \in \gU \;\middle|\; \exists \vu' \in K \text{ such that } \|\vu-\vu'\|<\epsilon
        \right\}.
    \end{equation}
\end{definition}

\begin{lemma}
    \label{lem:app:function_approximation_with_mlps}
    Let $K \subseteq \gC(X,Y)$ be a compact set of functions. Then, for every $\epsilon > 0$,  there exist an architecture $\arch = (\vd, \sigma)$ and a continuous map  $I : B_{\epsilon/4}(K) \to \WS_\arch$ such that, for every $f \in K$,
    \begin{equation}
        \| f - f_{I(f)} \|_{\infty} < \epsilon .
    \end{equation}
\end{lemma}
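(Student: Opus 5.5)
The plan is a partition-of-unity construction over a finite net of $K$, with each net element replaced by an MLP that approximates it. The parallel-subnetwork trick from the proof sketch of Theorem~\ref{thm:func_to_func_univ_off_exclusion} then glues these MLPs together with coefficients that depend continuously on $f$. Throughout we take $\sigma=\mathrm{ReLU}$, or any non-polynomial activation for which the universal approximation theorem \cite{Cybenko1989, Hornik1991} holds.

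\textbf{Step 1: a finite net.} Since $K$ is compact in $\gC(X,Y)$, we choose $g_1,\dots,g_N\in K$ such that the balls $B(g_k,\epsilon/4)$ cover $K$. By universal approximation, for each $k$ there is a one-hidden-layer network with weights $\vv_k$ and $\|f_{\vv_k}-g_k\|_\infty<\epsilon/4$. Widths are padded with zero neurons so that all $\vv_k$ share one architecture $\arch_0=(d_0,h,d_L)$.

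\textbf{Step 2: the parallel architecture.} We take $\arch=(d_0,Nh,d_L)$, or add extra layers if needed, and stack the hidden layers of $\vv_1,\dots,\vv_N$ block-diagonally. Given coefficients $\lambda\in\Delta^{N-1}$, the last layer is
\[
W_L=[\lambda_1 W^{(1)}_2,\dots,\lambda_N W^{(N)}_2], \qquad b_L=\sum_k\lambda_k b^{(k)}_2 .
\]
This realizes $\sum_k\lambda_k f_{\vv_k}$ exactly, and the map $\lambda\mapsto$ weights is continuous (indeed linear).

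\textbf{Step 3: continuous coefficients.} On $B_{\epsilon/4}(K)$ we use the continuous partition of unity $\lambda_k(f)=\phi_k(f)/\sum_j\phi_j(f)$, where
\[
\phi_k(f)=\max\bigl(0,\ \epsilon/2-\|f-g_k\|_\infty\bigr).
\]
The denominator is positive on $B_{\epsilon/4}(K)$: any $f$ there lies within $\epsilon/4$ of some point of $K$, which in turn lies within $\epsilon/4$ of some $g_k$, so $\|f-g_k\|_\infty<\epsilon/2$. Define $I(f)$ to be the Step 2 weights with $\lambda=\lambda(f)$; then $I$ is continuous as a composition of continuous maps.

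\textbf{Step 4: the estimate.} For $f\in K$, only indices with $\|f-g_k\|_\infty<\epsilon/2$ carry weight, so
\[
\|f-f_{I(f)}\|_\infty\le\sum_k\lambda_k\bigl(\|f-g_k\|_\infty+\|g_k-f_{\vv_k}\|_\infty\bigr)<\epsilon/2+\epsilon/4<\epsilon .
\]

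The main obstacle is the bookkeeping in Step 2. The parallel construction must give an honest element of a single MLP weight space: padding must not change the realized functions, and convex combinations must be achievable through the last layer alone, including the biases. If depth is greater than two, the block-diagonal stacking has to be done for every layer, which is routine. A general-position requirement is not needed here, because this lemma only requires continuity.
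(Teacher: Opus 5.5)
Your proposal is correct and follows essentially the same route as the paper: an $\epsilon/4$-net of $K$, a continuous partition of unity on $B_{\epsilon/4}(K)$, MLP approximants of the net points, and parallel sub-networks whose last layer carries the continuous coefficients. Your explicit distance-based partition of unity and your direct stacking of the $N$ approximants make the paper's detour through a basis of their span unnecessary, and they give a clean strict bound of $3\epsilon/4$.
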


\begin{proof}

    Let us fix $\epsilon > 0$. First, since $K$ is compact, there exist $f_1,\dots,f_m \in K$ such that for every $f \in K$ there exists $i$ with
    \begin{equation}
        \|f - f_i\|_\infty < \epsilon/4 .
    \end{equation}

    Define open sets
    \begin{equation}
        U_i := \{ f \in K : \|f-f_i\|_\infty < \epsilon/2 \}.
    \end{equation}
    The family $\{U_i\}_{i=1}^m$ is an open cover of $B_{\epsilon/4}(K)$. Since $B_{\epsilon/4}(K)$ is an open subset of a metric space, it is Hausdorff and paracompact. Therefore, it admits a continuous partition of unity $\{\varphi_i\}_{i=1}^m$ such that each $\varphi_i : B_{\epsilon/4}(K) \to [0,1]$ is continuous, $\operatorname{supp}(\varphi_i) \subseteq U_i$, and
    $\sum_{i=1}^m \varphi_i(f)=1$ for all $f\in B_{\epsilon/4}(K)$.

    Define
    \begin{equation}
        \Lambda(f) := \sum_{i=1}^m \varphi_i(f)\, f_i .
    \end{equation}

    Then $\Lambda : B_{\epsilon/4}(K) \to \gC(X,Y)$ is continuous, and for any $f\in B_{\epsilon/4}(K)$,

    \begin{equation}
        \|\Lambda(f)-f\|_\infty \le \sum_{i=1}^m \varphi_i(f)\|f_i-f\|_\infty = \sum_{f \in U_i} \varphi_i(f)\|f_i-f\|_\infty  \le \sum_{f \in U_i} \varphi_i(f) \cdot \epsilon/2 \le \epsilon/2 .
    \end{equation}


    Now, by universal approximation of MLPs, for each $i$ there exists an MLP with parameters $\vv_i$ (possibly different architectures) such that
    \begin{equation}
        \|f_i - f_{\vv_i}\|_\infty < \epsilon/2 .
    \end{equation}

    Choose a single architecture $\arch$ large enough so that each of these networks can be embedded as a subnetwork with zero-padded weights. We abuse notation and let $\vv_i \in \WS_\arch$ denote those embedded parameters. Then
    \begin{equation}
        \|f_i - f_{\vv_i}\|_\infty < \epsilon/2 \quad \text{for all } i .
    \end{equation}

    Define
    \begin{equation}
        \Xi(f) := \sum_{i=1}^m \varphi_i(f)\, f_{\vv_i}.
    \end{equation}

    Then $\Xi$ is continuous and
    \begin{equation}
        \|\Xi(f) - \Lambda(f)\|_\infty \le \epsilon/2,
    \end{equation}
    so
    \begin{equation}
        \|\Xi(f) - f\|_\infty \le \epsilon .
    \end{equation}

    Let $\gU = \operatorname{span}\{f_{\vv_1},\dots,f_{\vv_m}\}$ and choose a basis $g_1,\dots,g_r$ consisting of some of these functions.

    Write
    \begin{equation}
        \Xi(f) = \sum_{j=1}^r c_j(f) g_j,
    \end{equation}

    where since $\gU$ is a finite-dimensional linear space, $c_j(f)$ are well defined and depend continuously on $f$. Construct a slightly larger architecture $\tilde \arch$ consisting of $r$ parallel copies of $\arch$ whose outputs feed into a final linear layer. Fix the subnetworks so that they compute $g_1,\dots,g_r$, and the final layer parameters vary and may encode the coefficients $c_1, \dots, c_r$.

    Thus we can define a continuous affine map
    \begin{equation}
        L : \mathbb R^r \to \WS_{\tilde \arch}
    \end{equation}

    such that, for $\alpha = (\alpha_1,\dots,\alpha_r)$,
    \begin{equation}
        f_{L(\alpha)} = \sum_{j=1}^r \alpha_j g_j .
    \end{equation}

    Define
    \begin{equation}
        I(f) := L(c_1(f),\dots,c_r(f)).
    \end{equation}

    Then $I$ is continuous and
    \begin{equation}
        f_{I(f)} = \Xi(f),
    \end{equation}
    so
    \begin{equation}
        \|f - f_{ I(f)}\|_\infty < \epsilon .
    \end{equation}

    Renaming $\tilde \arch$ as $\arch$ finishes the proof.

\end{proof}

The next lemma shows that continuous maps from function space to weight space can be realized, on the relevant set, by continuous equivariant maps that operate purely in weight space.

\begin{lemma}
    \label{lem:app:function_to_weight}
    Let $\arch=(\vd, \sigma)$ be an architecture with corresponding weight space $\WS_\arch$, symmetry group $G_\arch$ and exclusion set $\gE_\arch$ (see Definition~\ref{def:exclusion_set}), and let
    $\wf : \gC(X,Y) \to \WS_\arch$ be continuous. Then, for any compact set $K^* \subseteq \WS_\arch \setminus \gE_\arch$ ,
    there exists a continuous map
    \begin{equation}
        \wf^* : \bigcup_{g \in G} (g \cdot K^*) \to \WS_\arch
    \end{equation}

    such that the following holds for all $ \vv \in  \bigcup_{g \in G} \rho(g) K^*$:

    \begin{enumerate}
        \item \textbf{Equivariance:}
        \begin{equation}
            \wf^*(\rho(g) \vv) = \rho(g) \wf^*(\vv)
            \quad \forall g \in G_\arch.
        \end{equation}

        \item \textbf{Functional equality:}
        \begin{equation}
            f_{\wf^*(\vv)} = f_{\wf(\vv)}.
        \end{equation}
    \end{enumerate}
\end{lemma}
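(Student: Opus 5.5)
The plan is to \emph{canonize, then transport back}, using the continuous canonization machinery of Lemma~\ref{lem:app:canonical_properties}. The statement as written has $\wf$ defined on $\gC(X,Y)$, while the functional equality refers to $f_{\wf(\vv)}$ for $\vv \in \WS_\arch$. I therefore interpret the input of $\wf$ as $f_\vv$, i.e.\ I work with the continuous composite $\vv \mapsto \wf(f_\vv)$, which is continuous by Proposition~\ref{prop:app:realization_map_continuous}. To keep notation uniform, I write $\wf(\vv)$ for this composite. The goal is to realize it, up to functional equivalence, by a $G$-equivariant map.

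Let $\tilde K \coloneqq \bigcup_{g\in G}\rho(g)K^*$. This set is compact as a finite union of compact sets, and it lies in $\WS_\arch\setminus\gE_\arch$, since $\gE_\arch$ is $G$-invariant (permuting biases preserves pairwise distinctness). On $\tilde K$ I use the orbit representative from Lemma~\ref{lem:app:canonical_properties}. Concretely, let $c(\vv)\coloneqq \canon(\vv)_1$ be the first feature coordinate; the lemma says it lies in $\orb(\vv)$, is $G$-invariant, and is continuous. Because biases within each layer are distinct, the sorting permutation $g_\vv\in G$ with $\vv=\rho(g_\vv)c(\vv)$ is \emph{unique}. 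Since $g_\vv$ takes values in a finite set and the bias ordering is locally constant off $\gE_\arch$, the map $\vv\mapsto g_\vv$ is locally constant. It also satisfies $g_{\rho(h)\vv}=h\,g_\vv$: indeed $\rho(h)\vv=\rho(hg_\vv)c(\vv)$ and $c(\rho(h)\vv)=c(\vv)$, so uniqueness gives the identity.

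Define
\[
\wf^*(\vv)\coloneqq \rho(g_\vv)\,\wf\bigl(c(\vv)\bigr).
\]
This map is continuous, because $c$ and $\wf$ are continuous and $g_\vv$ is locally constant. For equivariance,
\[
\wf^*(\rho(h)\vv)=\rho(hg_\vv)\wf(c(\vv))=\rho(h)\wf^*(\vv).
\]
For functional equality, first note that $f_{\rho(g)\vu}=f_\vu$ for all $g$ and $\vu$. This gives $f_{\wf^*(\vv)}=f_{\wf(c(\vv))}$. Since $c(\vv)$ and $\vv$ lie in the same orbit, they realize the same function, so $f_{c(\vv)}=f_\vv$. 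Under the interpretation above $\wf$ depends only on $f_\vv$, hence $\wf(c(\vv))=\wf(\vv)$ and $f_{\wf^*(\vv)}=f_{\wf(\vv)}$.

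The step I expect to be the main obstacle is making the uniqueness and local constancy of $g_\vv$ rigorous. Two points need care. First, the final layer is not permuted, and the layer-$L$ identifiers are fixed indices, so only hidden layers contribute to $g_\vv$. Second, the rank map~\eqref{eq:app:rank_map} must be locally constant on $\tilde K$. Both follow directly from the distinct-bias assumption, by a small-ball argument around each $\vv$. Everything else is routine bookkeeping.
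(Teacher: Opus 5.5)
Your proposal is correct and is essentially the paper's proof. The paper likewise defines $\wf^*(\vv)$ by transporting $\wf(f_{\canon(\vv)})$ back along the unique canonizing group element, and it reads $\wf(\vv)$ as $\wf(f_\vv)$ exactly as you do; your $g_\vv$ is the inverse of the paper's, so your identity $g_{\rho(h)\vv}=h\,g_\vv$ replaces its $g_{\rho(h)\vv}=g_\vv h^{-1}$, and your justification of uniqueness (distinct biases force a trivial stabilizer) and of local constancy of $g_\vv$ fills in points the paper only asserts.
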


\begin{proof}
    For simplicity, assume without loss of generality that $K^* = \bigcup_{g \in G} \rho(g) K^*$. By Lemma~\ref{lem:app:canonical_properties}, there exists a continuous map $\canon : K^* \to K^*$ such that for every $\vv \in K^*$ and every $g \in G$:

    \begin{equation}
        \canon(\vv) = \canon(\rho(g) \vv)
        \quad\text{and}\quad
        \canon(\vv) \in \orb(\vv).
    \end{equation}
    Moreover, when constructing $\canon$,  we arranged that, for each $\vv \in K^*$, there exists a \emph{unique} $g_\vv \in G$ such that
    \begin{equation}
        \canon(\vv) = \rho(g_\vv) \vv.
    \end{equation}

    For any $h \in G$ and $\vv' := h \cdot \vv$, we thus have

    \begin{equation}
        \canon(\vv) = \rho(g_{\vv'}) \vv' = \rho(g_{\vv'}) \rho(h) \vv.
    \end{equation}
    From the uniqueness of $g_\vv$ we thus have
    \begin{equation}
        \label{eq:canonizing_group_element}
        g_{\rho(h) \vv} = g_\vv h^{-1}.
    \end{equation}

    Define $\wf^* : K^* \to \WS_\arch$ by
    \begin{equation}
        \wf^*(\vv) := \rho((g_\vv)^{-1}) \wf \bigl(f_{\canon(\vv)}\bigr).
    \end{equation}

    This is well-defined because $g_\vv \in G$ is uniquely determined by $\vv$. Additionally, as the maps $v \to \canon(\vv)$, $\vv \to f_\vv$ (where continuity follows from Proposition~\ref{prop:app:realization_map_continuous}) and $\vv \to g_\vv$ are all continuous on $K^*$, $\wf^*$ is also continuous. Moreover, since $\canon(\vv) \in \orb(\vv)$, we have $f_\vv = f_{\canon(\vv)}$ and so

    \begin{equation}
        \wf^*(\vv) = \rho((g_\vv)^{-1}) \wf(f_\vv).
    \end{equation}
    Since $\wf^*(\vv) \in \orb(\wf(f_\vv))$, functional equality holds. Finally, from equation \ref{eq:canonizing_group_element} we have
    \begin{equation}
        \wf^*(\rho(h) \vv)
        = \rho((g_{\rho(h)\vv})^{-1}) \wf \bigl(f_{\canon(\rho(h) \vv)}\bigr)
        = \rho((g_\vv h^{-1})^{-1}) \wf \bigl(f_{\canon(\vv)}\bigr)
        = \rho(h) \rho(g_\vv)^{-1} \wf \bigl(f_{\canon(\vv)}\bigr) = \rho(h) \wf^*(\vv).
    \end{equation}

    which is exactly the equivariance property, completing the proof.

\end{proof}

We are now ready for our final proof.

\begin{proof}[Proof of Theorem \ref{thm:app:universal_approximation_function_to_function}]

    First, since $\wf$ is continuous and $K$ is compact, the image $\wf(K)$ is compact as well. Additionally, for any $\epsilon>0$, there exists $\delta>0$ such that
    \begin{equation}
        \wf\!\big(B_{\delta}(K)\big) \subseteq B_{\epsilon/4}\!\big(\wf(K)\big).
    \end{equation}

    Now, Lemma \ref{lem:app:function_approximation_with_mlps} states there exists a continuous map $I : B_{\epsilon/4}\!\big(\wf(K)\big) \to \WS_\arch$ such that, for every $f \in B_{\epsilon/4}\!\big(\wf(K)\big)$,
    \begin{equation}
        \| f - f_{I(f)} \|_{\infty} < \epsilon .
    \end{equation}
    Thus, the function $ I \circ \wf: B_{\delta}(K) \to \WS_\arch$ is continuous and satisfies

    \begin{equation}
        \| f_{I \circ \wf(f)} - \wf(f) \|_{\infty} < \epsilon .
    \end{equation}

    Now, from Lemma~\ref{lem:app:function_to_weight} we get that for any compact set $K^* \subseteq \WS_\arch \setminus \gE_\arch$ that is a $\delta$ approximation of $K$,
    there exists a continuous equivariant map $ \wf^* : \bigcup_{g \in G} \rho(g)K^* \to \WS_\arch$ such that

    \begin{equation}
        f_{\wf^*(\vv)} = f_{I \circ \wf(f_\vv)}
        \quad \forall \vv \in \bigcup_{g \in G} \rho(g) K^*.
    \end{equation}

    This means that for every $\vv \in K^*$

    \begin{equation}
        \| f_{\wf^*(\vv)} - \wf(f_\vv) \|_{\infty} < \epsilon .
    \end{equation}

    Since $\wf^*$ is equivariant and continuous and $K^* \subseteq \WS_\arch  \setminus \gE_\arch$, Theorem~\ref{thm:equiv_to_equiv_univ_off_exclusion} implies that we can approximate $\wf^*$ using a DWS network to any precision, completing the proof.

\end{proof}

\section{Exclusion Sets}
\label{sec:app:exclusion_sets}
Throughout our theoretical analysis, we frequently impose a general position (GP) assumption on the input weights, namely that all inputs $\vv$ lie outside a designated exclusion set $\gE$. For simplicity, our discussion primarily uses the bias exclusion set from Definition~\ref{def:exclusion_set}, denoted in this section by $\gE^b$ (Since this is the only section discussing multiple exclusion sets, we omit the superscript $b$ elsewhere in the paper for notational simplicity). The condition $\vv \notin \gE^b$ requires that, within each hidden layer, all neuron biases are pairwise distinct. This is a natural assumption, as violations correspond to degenerate parameter configurations that are unlikely to arise under random initialization or stochastic training dynamics.

That said, the bias exclusion set is less suitable in settings where biases are shared (e.g., equivariant parameter sharing) or absent altogether, as in transformer or CNN weight-space architectures. Fortunately, our analysis relies only on more abstract properties of the exclusion set. 

Specifically, we require the existence of a continuous canonization map (Definition~\ref{def:app:canonization_general})

\begin{equation}
\canon : \WS \setminus \gE \to \WS^d
\end{equation}

such that, for every $\vv \in \WS \setminus \gE$,

\begin{equation}
\label{eq:app:canon_first_coordinate}
    \canon(\vv)_1 \in \orb(\vv),
\end{equation}

or equivalently,

\begin{equation}
    \canon(\vv)_1 = \rho(g^\vv)\vv
\end{equation}

for some permutation $g^\vv = (\tau_1^\vv,\dots,\tau_L^\vv)\in G$ depending on the input $\vv$.
Given such a canonization map, we further define an associated equivariant encoding that augments each parameter with its canonicalized representation and permutation-index information.

\begin{definition}[equivariant canonization encoding map]
\label{def:app:equivariant_canonization_encoding}
    For canonization map $\canon : \WS \setminus \gE \to \WS^d$ satisfying Equation \ref{eq:app:canon_first_coordinate}, we define the associated equivariant canonization encoding map as follows.  
    
    For an input $\vv = (\mW_1,\vb_1,\dots,\mW_L,\vb_L),$ let $\bar\canon(\vv)
    =
    (\mW_1',\vb_1',\dots,\mW_L',\vb_L'),$ where for each layer $\ell$ and all valid indices $i,j$,
    
    \begin{equation}
    (\mW'_\ell)_{i,j,:}
    \coloneqq
    \Bigl(
    (\mW_\ell)_{i,j},\;
    \ell,\;
    \tau^\vv_\ell(i),\;
    \tau^\vv_{\ell-1}(j),\;
    1,\;
    \flr(\canon(\vv))
    \Bigr),
    \end{equation}
    
    and
    
    \begin{equation}
    (\vb'_\ell)_{i,:}
    \coloneqq
    \Bigl(
    (\vb_\ell)_i,\;
    \ell,\;
    \tau^\vv_\ell(i),\;
    0,\;
    0,\;
    \flr(\canon(\vv))
    \Bigr).
    \end{equation}
\end{definition}

Our proofs require that $\bar\canon$ can be approximated arbitrarily well on compact subsets of $\WS \setminus \gE$ by weight-space networks.

As shown in Lemmas \ref{lem:app:canonical_properties} and ~\ref{lem:app:approximation_of_canonical}, the bias exclusion set admits a continuous canonization map satisfying Equation \ref{eq:app:canon_first_coordinate}, whose associated equivariant canonization encoding map satisfies the above approximation property. Importantly, this is the only property of the bias exclusion set used throughout our analysis. Consequently, the bias exclusion set may be replaced by any exclusion set satisfying the same conditions, yielding a broad family of valid alternatives. We next present one such family.

\begin{definition}[$p$-norm exclusion set]
\label{def:app:p_exclusion_set}
For a given architecture $\arch$ and $p>0$, define

\begin{equation}
\gE_\arch^p
\coloneqq
\Bigl\{
\vv \in \WS_\arch
\;\Big|\;
\exists\, \ell \in [L-1],\;
\exists\, 1 \le i < j \le d_\ell
\text{ such that }
\|(\mW_\ell)_{i,:}\|_p
=
\|(\mW_\ell)_{j,:}\|_p
\Bigr\}.
\end{equation}

\end{definition}

Equivalently, $\vv \notin \gE_\arch^p$ if, within every hidden layer, the rows of each weight matrix have pairwise distinct $p$-norms. As with the bias exclusion set, $\gE_\arch^p$ has Lebesgue measure zero, and the condition $\vv \notin \gE_\arch^p$ therefore holds almost surely under random initialization or stochastic training dynamics.

Moreover, as the following proposition shows, each $\gE_\arch^p$ admits a continuous canonization map satisfying Equation \ref{eq:app:canon_first_coordinate},  whose associated equivariant canonization encoding map can be approximated arbitrarily well on compact sets by weight-space networks. Consequently, all of our theoretical results remain valid when the GP assumption $\vv \notin \gE$ is replaced by $\vv \notin \gE_\arch^p$ for any $p>0$.

\begin{proposition}
\label{prop:app:exclusion_sets}
Let $\WS_\arch$ be a weight-space with associated exclusion set $\gE_\arch^p$ for some $p>0$. Then there exists a continuous canonization map $\canon : \WS_\arch \setminus \gE_\arch^p \to \WS_\arch^d$ satisfying Equation \ref{eq:app:canon_first_coordinate}, such that, for every compact set $K \subseteq \WS_\arch \setminus \gE_\arch^p,$ the associated equivariant canonization encoding map $\bar\canon$ can be approximated arbitrarily well on $K$ by weight-space networks.
\end{proposition}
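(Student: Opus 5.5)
The plan is to mirror the bias-based construction of Lemmas~\ref{lem:app:canonical_properties} and~\ref{lem:app:approximation_of_canonical}, with the scalar bias $(\vb_\ell)_i$ replaced by the row-norm $r_{\ell,i}(\vv)\coloneqq\|(\mW_\ell)_{i,:}\|_p$ as the per-neuron sorting key.

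\textbf{Canonization map.} For $\vv\notin\gE^p_\arch$ and each hidden layer $\ell\in[L-1]$, the keys $r_{\ell,1},\dots,r_{\ell,d_\ell}$ are pairwise distinct. Let $\tau^\vv_\ell$ be the permutation sorting them in ascending order, and set $g^\vv=(\tau^\vv_1,\dots,\tau^\vv_{L-1})$. Define $\canon(\vv)$ exactly as in Definition of the weight-space canonization map, with $\mathrm{rank}_\ell(i)$ computed from $r_{\ell,\cdot}$ instead of $(\vb_\ell)_\cdot$; its first coordinate is $\rho(g^\vv)\vv$.

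There is one point to verify carefully: the keys must transform correctly under $G$. Under $\rho(g)$, row $i$ of $\mW_\ell$ is moved to $\tau_\ell(i)$, and its entries are permuted by $\tau_{\ell-1}$. The $p$-norm is invariant to permuting entries, so $r_{\ell,\cdot}$ is equivariant under $\tau_\ell$ alone. Hence the rank is $S_{d_\ell}$-equivariant, $\canon$ is invariant, and~\eqref{eq:app:canon_first_coordinate} holds, by the same argument as Lemma~\ref{lem:app:canonical_properties}. For continuity, $r_{\ell,i}$ is continuous for every $p>0$, including $p<1$ where $\|\cdot\|_p$ is only a quasi-norm. Distinctness on the open set $\WS\setminus\gE^p$ then makes each rank locally constant, so $g^\vv$ is locally constant and $\canon$ is continuous.

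\textbf{Approximation by weight-space networks.} By Theorem~\ref{thm:equiv_of_all_models} it suffices to use DWS. The new ingredient is computing $r_{\ell,i}$ equivariantly, in three steps:
\begin{itemize}
\item[(i)] Apply a pointwise MLP (Corollary~\ref{cor:app:dws_pointwise_mlp}) to each weight entry to approximate $x\mapsto|x|^p$, which is continuous on the compact image of $K$.
\item[(ii)] Apply the DWS row-pool $\gW_\ell\to\gB_\ell$ (summing over the input index), which stores $\sum_j|(\mW_\ell)_{i,j}|^p$ at bias position $i$.
\item[(iii)] Apply another pointwise MLP to approximate $s\mapsto s^{1/p}$.
\end{itemize}
Steps (ii) and (iii) are optional, since ranking by $\sum_j|\cdot|^p$ is equivalent. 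All three maps are uniformly continuous on compacts, so Lemma~\ref{lem:app:composition_approximation} controls the accumulated errors.

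On the compact image of $K$ the keys within each layer are separated by some $\delta>0$, so the ranking map is continuous and permutation-equivariant there. DeepSets universality~\cite{SegolLipman2019}, realized through the bias-to-bias blocks, then approximates it to within $\delta/4$. From this point the rank channel plays the role of $\eta^b_\ell$, and the remainder is verbatim the proof of Lemma~\ref{lem:app:approximation_of_canonical}:
\begin{itemize}
\item broadcast the ranks to weight entries through the row and column broadcasts $\gB\to\gW$;
\item isolate each entry with the bump MLPs $M_\vn$;
\item collect via the sum-broadcast $SB$, producing $\flr(\canon(\vv))$ on every entry.
\end{itemize}
The auxiliary coordinates $(\ell,\tau_\ell(i),\tau_{\ell-1}(j),\text{type})$ of $\bar\canon$ are likewise available as channels.

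\textbf{Main obstacle.} The main risk is a subtlety in this chain: the bump functions $M_\vn$ require the approximate rank channels to lie within $1/4$ of the true integer ranks. The fix is to first pass the ranks through a pointwise MLP that rounds to the nearest integer, which is continuous on a neighborhood of the integer lattice. After that, the total error in $\bar\canon$ is governed entirely by the final MLP approximations and can be made smaller than any $\epsilon$.
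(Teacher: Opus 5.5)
Your proposal is correct and is essentially the paper's proof. You replace the bias key by $\|(\mW_\ell)_{i,:}\|_p^p$, compute it with a pointwise $|x|^p$ MLP followed by the $\gW_\ell\to\gB_\ell$ row-pool, and rerun Lemmas~\ref{lem:app:canonical_properties} and~\ref{lem:app:approximation_of_canonical}; your extra checks (invariance of the key under $\tau_{\ell-1}$, continuity for $p<1$, separation of keys on compacts) make explicit what the paper leaves implicit.

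There are three minor slips. Only step (iii) is optional, since the row-pool (ii) is what produces the key. The $\delta/4$ tolerance should apply to the approximate keys, so that the ranking stays continuous on their range, not to the DeepSets rank output. Finally, the rounding MLP is harmless but redundant, because the rank approximation can be made accurate to within $1/4$ directly.
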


\begin{proof}
The proof follows the same construction as Lemmas~\ref{lem:app:canonical_properties} and~\ref{lem:app:approximation_of_canonical}, with only a minor modification to the ranking function used to canonize neurons (Equation \ref{eq:app:rank_map}). 

In the bias exclusion set construction, neuron ordering was determined by sorting the bias coordinates via
\begin{equation}
\mathrm{rank}_\ell(i)
\coloneqq
\bigl|
\{
j \in [d_\ell]
:
(\vb_\ell)_j < (\vb_\ell)_i
\}
\bigr|.
\end{equation}

Here, we instead define the ordering using the $p$-norms of the incoming weight vectors:
\begin{equation}
\mathrm{rank}_\ell(i)
\coloneqq
\bigl|
\{
j \in [d_\ell]
:
\|(\mW_\ell)_{i,:}\|_p^p
<
\|(\mW_\ell)_{j,:}\|_p^p
\}
\bigr|.
\end{equation}

Since $\vv \notin \gE_\arch^p$, these quantities are pairwise distinct within each layer, so the above defines a valid permutation ranking of the neurons.

Moreover, this ranking can be computed by standard weight-space architectures such as DWS networks. Indeed, one may first apply the pointwise transformation
\begin{equation}
\mW_\ell \mapsto |\mW_\ell|^p,
\end{equation}
followed by an equivariant aggregation layer that computes, for each neuron, the sum of its incoming transformed weights,
\begin{equation}
\sum_k |(\mW_\ell)_{i,k}|^p.
\end{equation}
The remainder of the canonization construction then proceeds exactly as in Lemmas~\ref{lem:app:canonical_properties} and~\ref{lem:app:approximation_of_canonical}. Consequently, the associated equivariant canonization encoding map can again be approximated arbitrarily well on compact sets by weight-space networks.
\end{proof}

\subsection{Empirical validation of the general position assumption}
\label{sec:app:exclusion_sets:empirical}

The GP assumption $\vv \notin \gE$ underlies our universality results for both invariant functionals (Theorem~\ref{thm:invariant_univ_off_exclusion}) and equivariant operators (Theorem~\ref{thm:equiv_to_equiv_univ_off_exclusion}), while Proposition~\ref{prop:invariant_nonuniv} shows that expressivity may degrade once this assumption fails. Although $\gE$ has Lebesgue measure zero and is therefore null under any absolutely continuous weight distribution, the practical relevance of the assumption is a separate empirical question: how often do trained weights actually lie inside an exclusion set? In this subsection we address this question on a standard weight-space dataset and use the multi-exclusion-set theory developed above to obtain a more refined picture.

\paragraph{Setup.} We use the MNIST INR dataset of~\citet{Navon2023}, which consists of $60{,}000$ pretrained INRs, one per MNIST training image. Each INR is an MLP with $3$ layers and widths $\vd = (2, 32, 32, 1)$. For a quantization level of $q$ bits we apply uniform mid-rise bias quantization to each layer independently: the bias entries of layer $\ell$ are mapped to the closest bin of $2^q$ bins built over $[\min(\vb_\ell),\max(\vb_\ell)]$. The unquantized network is treated as the $q=\infty$ baseline.\footnote{We focus on bias quantization since the bias exclusion set $\gE^b$ of Definition~\ref{def:exclusion_set} is determined by bias values; an analogous protocol on weight rows is reflected in the $p$-norm exclusion sets reported below.}

For each quantized network $\vv$ and each exclusion set $\gE_\star$ under consideration, we record whether $\vv \notin \gE_\star$ (i.e., whether $\vv$ is in GP with respect to $\gE_\star$), and report the fraction of the $60{,}000$ INRs that satisfy this condition.

\paragraph{Bias exclusion set vs.\ quantization.}
We first measure the GP rate with respect to the bias exclusion set $\gE^b$ of Definition~\ref{def:exclusion_set}, as a function of the bias quantization precision $q$. The results are summarized in Figure~\ref{fig:app:gp_rate_bias}. In the unquantized regime essentially all networks satisfy GP ($100\%$ at full precision, $97.98\%$ at $16$-bit), confirming that GP is the typical case under standard training dynamics. Violations begin to appear at $12$-bit ($72.66\%$ of INRs still in GP) and become severe at $8$-bit and below, where the GP rate collapses to below $1\%$. The renderings below the curve show that the quantized INRs continue to represent the original image reasonably well throughout this range, indicating that the loss of GP at low precision is not merely an artifact of the network ceasing to be a useful representation. Together with Proposition~\ref{prop:invariant_nonuniv}, this provides empirical support for the prediction that aggressive bias quantization is a regime in which fixed-architecture weight-space networks may suffer from expressivity limitations.

\begin{figure}[t]
    \centering
    \includegraphics[width=0.95\linewidth]{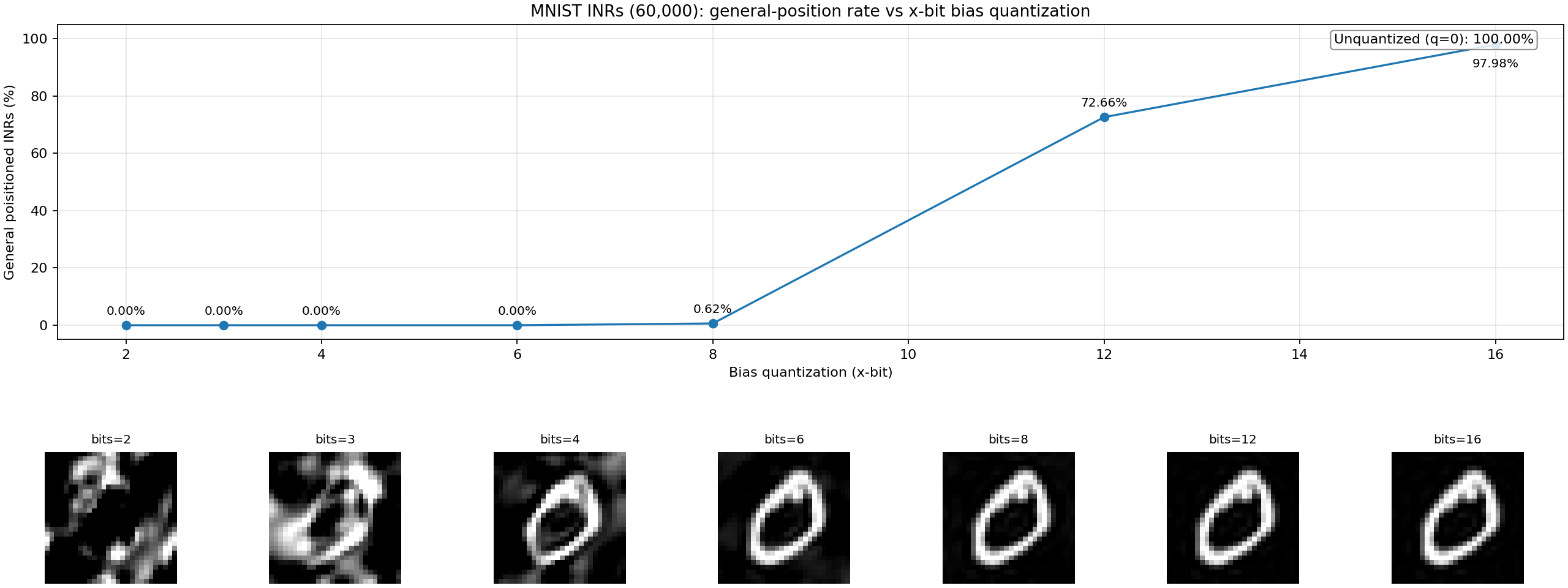}
    \caption{Percentage of MNIST INRs satisfying the bias-based GP assumption $\vv \notin \gE^b$ (Definition~\ref{def:exclusion_set}), as a function of bias quantization precision $q$. The bottom row shows reconstructions of a single representative INR at each quantization level. At standard precision GP holds for nearly all networks; the GP rate begins to drop at $12$-bit and collapses by $8$-bit, while the underlying INR remains a visually faithful representation of the digit.}
    \label{fig:app:gp_rate_bias}
\end{figure}

\paragraph{Combining exclusion sets}
As shown in Proposition~\ref{prop:app:exclusion_sets}, the bias exclusion set $\gE^b$ is only one element of a much broader family of admissible exclusion sets: each $p$-norm exclusion set $\gE^p$ (Definition~\ref{def:app:p_exclusion_set}) admits a continuous canonization map approximable by weight-space networks, and our universality results hold under any of them. Since our proofs only require the existence of \emph{some} exclusion set $\gE_\star$ with $\vv \notin \gE_\star$, a natural empirical question is the fraction of weights that lie outside the \emph{union of complements}, i.e.\ the fraction of $\vv$ for which $\vv \notin \gE_\star$ for at least one $\gE_\star$ in our family.

Concretely, we consider the increasing family of GP conditions
\begin{equation}
\label{eq:app:gp_family}
\mathrm{GP}_{P}(\vv)
\;\coloneqq\;
\Bigl(\vv \notin \gE^b\Bigr)
\;\lor\;
\bigvee_{p \in \{1,\dots,P\}}
\Bigl(\vv \notin \gE^p\Bigr),
\qquad P \in \{0,1,2,3,4,5\},
\end{equation}
where $P=0$ refers to the bias exclusion set criterion of Definition~\ref{def:exclusion_set} and increasing $P$ progressively incorporates the $\ell_p$-norm criteria of Definition~\ref{def:app:p_exclusion_set}. A network is declared to satisfy $\mathrm{GP}_P$ if it is in general position with respect to \emph{at least one} of the exclusion sets in the family. Since our theory applies separately under each such exclusion set, $\mathrm{GP}_P$ is a sufficient condition for the conclusions of Theorems~\ref{thm:invariant_univ_off_exclusion} and~\ref{thm:equiv_to_equiv_univ_off_exclusion} to hold.

Figure~\ref{fig:app:gp_rate_combined} reports $\mathrm{GP}_P$ rates on the same quantized MNIST INRs. Two qualitative observations emerge. First, broadening the family of exclusion sets meaningfully improves the GP rate at intermediate precision: at $q=8$ bits, the bias criterion alone is satisfied by only $0.62\%$ of networks, while the combined criterion $\mathrm{GP}_5$ raises this to roughly $80\%$. Second, the improvement saturates rapidly with $P$, and the qualitative dependence on quantization precision is preserved: at $q\le 6$ bits, even the combined family $\mathrm{GP}_5$ leaves a substantial fraction of networks unaccounted for.

\begin{figure}[t]
    \centering
    \includegraphics[width=0.95\linewidth]{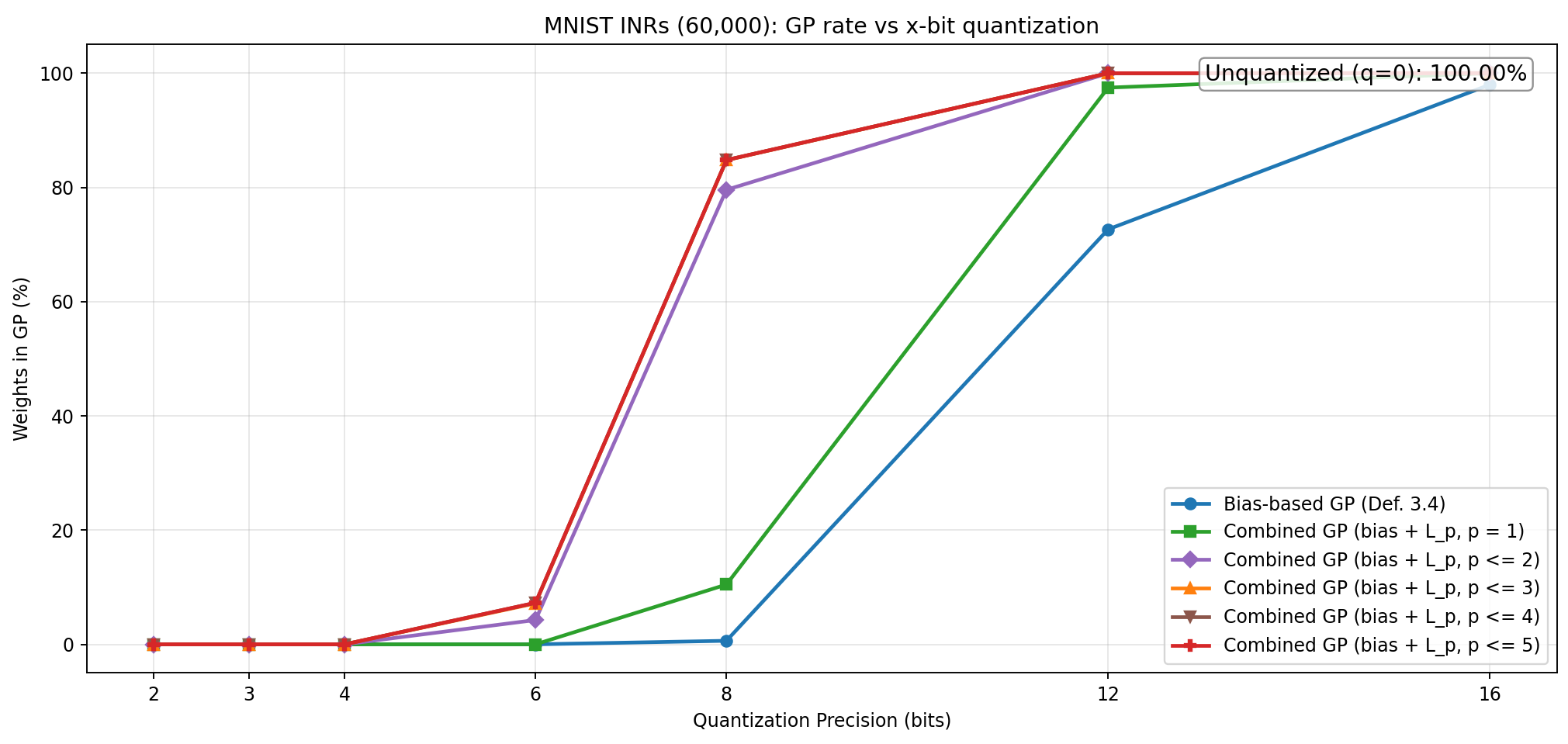}
    \caption{Percentage of MNIST INRs satisfying the combined GP criterion $\mathrm{GP}_P$ from Equation~\eqref{eq:app:gp_family}, evaluated on a monotonically growing family of exclusion sets that begins with the bias-based criterion of Definition~\ref{def:exclusion_set} and progressively adds the $\ell_p$-norm criteria of Definition~\ref{def:app:p_exclusion_set} for $p \in \{1,2,3,4,5\}$. A network is marked as in GP if it lies outside at least one exclusion set in the family. Enlarging the family substantially improves the GP rate at intermediate precision, but the dependence on quantization persists and the improvement plateaus with $P$.}
    \label{fig:app:gp_rate_combined}
\end{figure}

\paragraph{Takeaways.} The two experiments above provide an initial practical picture of the GP assumption. (i) For standard MLP weights trained at typical precision, GP holds for essentially all networks already under the simplest bias-based exclusion set, so our universality results apply directly. (ii) In the extreme low-precision regime, neither a single nor a combined finite family of measure-zero exclusion sets can cover the data. This suggests that learning over heavily quantized weight spaces is a setting in which more expressive architectures may be needed, and we view a thorough investigation of this regime as an interesting direction for future work.

\section{Transformer-Weight-Space Architectures}
\label{sec:app:transformers}

In this work, we primarily focus on weight-space architectures operating on MLP parameters. This choice is motivated both by clarity of presentation, given our already dense technical development, and by the fact that MLP-based weight-space architectures are currently the most mature and extensively studied setting. Moreover, MLPs remain fundamental building blocks in many modern neural architectures.

Nevertheless, weight-space architectures for transformers, CNNs, RNNs, and other neural network families have also been proposed~\citep{Kofinas2024NeuralGraphs,Lim2024GMN,Zhou2023NFN}. In this section, we demonstrate that our analysis naturally extends beyond the MLP setting. In particular, we consider Graph Meta-Networks (GMNs)~\citep{Lim2024GMN} operating on transformer weight spaces and establish universality results in both the invariant and equivariant settings under a suitable general-position assumption. Importantly, the required modifications to the proofs of Theorems~\ref{thm:invariant_univ_off_exclusion} and~\ref{thm:equiv_to_equiv_univ_off_exclusion} are minimal.

We begin by defining an appropriate exclusion set for transformer weight spaces. Recall that a transformer consists of four main component types: MLP blocks, multi-head attention layers, normalization layers, and residual connections. GMNs encode transformer weights as parameter graphs, where each parameter corresponds to an edge  equipped with structural features encoding the layer depth and component type.

\begin{definition}[Transformer exclusion set]
Let $\arch$ be a transformer architecture composed of $L$ components, where each component is either an MLP block, a multi-head attention layer, a normalization layer, or a residual connection. The transformer exclusion set is defined by
\begin{equation}
    \gE_\arch = \bigcup_{\ell=1}^{L} \gE_\ell,
\end{equation}
where for each $\ell \in [L]$:
\begin{enumerate}
    \item If component $\ell$ is an MLP block, then $\gE_\ell$ is defined exactly as in Definition~\ref{def:exclusion_set}.

    \item If component $\ell$ is a multi-head attention layer with hidden dimension $d$, then
    \begin{equation}
        \gE_\ell = \gE_\ell^{V} \cup \gE_\ell^{O},
    \end{equation}
    where
    \begin{equation}
        \gE_\ell^{V}
        \coloneqq
        \left\{
        \vv \in \WS_\arch
        \;\middle|\;
        \exists\, 1 \le i < j \le d
        \text{ such that }
        \sum_k \left|(\mW_V^1)_{i,k}\right|
        =
        \sum_k \left|(\mW_V^1)_{j,k}\right|
        \right\},
    \end{equation}
    with $\mW_V^1$ denoting the value projection matrix of the first attention head, and
    \begin{equation}
        \gE_\ell^{O}
        \coloneqq
        \left\{
        \vv \in \WS_\arch
        \;\middle|\;
        \exists\, 1 \le i < j \le d
        \text{ such that }
        \sum_k \left|(\mW_O)_{i,k}\right|
        =
        \sum_k \left|(\mW_O)_{j,k}\right|
        \right\},
    \end{equation}
    where $\mW_O$ denotes the output projection matrix of the attention layer.

    \item If component $\ell$ is a residual connection or normalization layer, then
    \begin{equation}
        \gE_\ell = \emptyset.
    \end{equation}
\end{enumerate}
\end{definition}

The above definition can be viewed as a combination of several valid exclusion-set constructions discussed in Appendix~\ref{sec:app:exclusion_sets}. Importantly, $\gE_\arch$ remains a finite union of lower-dimensional sets and therefore has measure zero. Consequently, transformer weights achieved by training with random initialization or stochastic gradient decent lie outside $\gE_\arch$ with probability one.

We now show that, under the general-position assumption $\vv \notin \gE$, GMNs are universal approximators of equivariant 
\footnote{Here equivariance is taken with respect to the group $G$ of automorphisms shared by all transformer parameter graphs of a given transformer architectures. For more details on these automorphisms see \citet{Lim2024GMN}.}
operators on transformer weight spaces.

\begin{theorem}
\label{thm:equiv_to_equiv_univ_transformers}
Let $\WS$ denote a transformer weight space, and let $K \subset \WS \setminus \gE_\arch$ be compact. Then every equivariant  $ \wf : \WS \to \WS^n$ can be approximated arbitrarily well on $K$ by GMN networks.
\end{theorem}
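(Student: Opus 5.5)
We follow the MLP proof of Theorem~\ref{thm:equiv_to_equiv_univ_off_exclusion} and only replace the two ingredients that depend on the specific weight space. As Appendix~\ref{sec:app:exclusion_sets} shows, that argument uses exactly two facts. First, there must be a continuous canonization map $\canon:\WS\setminus\gE\to\WS^d$ satisfying Equation~\ref{eq:app:canon_first_coordinate}. Second, the associated equivariant canonization encoding $\bar\canon$ of Definition~\ref{def:app:equivariant_canonization_encoding} must be approximable on compact sets by the network class. Given these two facts, Lemma~\ref{lem:app:pointwise_representation_of_equivariant_function} goes through with essentially no change. Any continuous $G$-equivariant $\wf$ factors on $K$ as $\bar\canon$ followed by a shared continuous function $f$, applied independently to each parameter's feature vector. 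The equivariance argument only uses that $\canon(\vv)_1=\rho(g^\vv)\vv$ and that $\canon$ is invariant. We then approximate $f$ by an MLP. Such an MLP can be realized edgewise by GMN edge updates $\phi_e$ that ignore the node inputs, since GMN stores every transformer parameter as an edge feature. Composing via Lemma~\ref{lem:app:composition_approximation} finishes the proof.

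The first step is to build $\canon$ for the transformer. The automorphism group $G$ of the transformer parameter graph, following \citet{Lim2024GMN}, consists of three kinds of permutations. There are permutations of hidden neurons in each MLP block. There are permutations of heads in each attention layer. Finally, there are permutations of coordinates inside each head's value/output subspace, which act jointly on the rows of $\mW_V^h$ and the matching columns of $\mW_O$. We canonize each factor by sorting a scalar key that is distinct under GP:
\begin{itemize}
\item MLP blocks use the biases, as in Definition~\ref{def:exclusion_set}.
\item Value/output coordinates use the row $\ell_1$-norms $\sum_k|(\mW_V^1)_{i,k}|$, together with the analogous $\mW_O$ key.
\item Heads need a tiebreak, because $\gE^V_\ell$ only constrains the first head. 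Here I would use the sorted tuple of value-row norms within each head, or argue that the definition's indexing effectively fixes the head ordering. This needs to be checked against the exact automorphism group; if heads are genuinely permutable, I would strengthen the key, for instance to lexicographic keys.
\end{itemize}
Under GP, every sort is locally constant, so the resulting $\tau^\vv$ and $\canon$ are continuous, unique, and $G$-invariant, exactly as in Lemma~\ref{lem:app:canonical_properties}. Proposition~\ref{prop:app:exclusion_sets} already covers the $p$-norm keys.

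The second step, which I expect to be the main obstacle, is to show that GMNs approximate $\bar\canon$. We use message passing in the following order:
\begin{enumerate}
\item Apply an edgewise MLP computing $|\cdot|$ on the edges incident to each value/output neuron node.
\item Aggregate these at the node, which yields the row norms as node features.
\item Compute ranks with a DeepSets-type update among nodes sharing a type and layer. GMN provides this through two message-passing rounds via a shared hub: the bias node, or the global feature after it has been made layer-specific using the layer encodings. This is universal for continuous permutation-equivariant set maps by \citet{SegolLipman2019}, and the rank map is continuous on $K$.
\item Copy the resulting rank identifiers back onto incident edges through $\phi_e$, which reproduces the neuron-identification map $\pe$.
\item Globally aggregate with one-hot selector MLPs $M_\vn$, exactly as in Lemma~\ref{lem:app:approximation_of_canonical}, using the GMN global feature $\vu$. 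This assembles $\flr(\canon(\vv))$ and broadcasts it back to every edge.
\end{enumerate}
The delicate part is ensuring that the graph structure gives message-passing access to the right sets and that layer and type features separate the different components. Residual connections and normalization layers only contribute fixed structural edges, so they require no canonization.
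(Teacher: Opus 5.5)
Your reduction is the paper's proof. Like the paper, you use only the two abstract properties isolated in Appendix~\ref{sec:app:exclusion_sets}: a continuous canonization with $\canon(\vv)_1\in\orb(\vv)$, and GMN-approximability of $\bar{\canon}$. You keep the bias ranking for MLP blocks and rank attention nodes by $\ell_1$ row norms, computed as a pointwise $|\cdot|$ followed by an equivariant aggregation. The paper then declares the rest identical to the MLP case, and your hub/global-feature DeepSets step and the one-hot selectors routed through $\vu$ are a faithful expansion of that sentence.

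On the head step you leave open, only your second option is compatible with the statement, and it is the one the paper implicitly takes. The paper introduces no head-level key. It applies $\sum_k|(\mW_V^1)_{i,k}|$ to all $d$ outgoing nodes of the first attention graph layer ($i,j\in[d]$ both in $\gE^V_\ell$ and in the rank). It applies the $\mW_O$ row-sum key to the outgoing nodes of the second graph layer, i.e.\ the attention output nodes. It then asserts, without argument, that these rankings are valid orderings.

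Note that the $\mW_O$ key does not order value coordinates. Each row sum runs over all value columns, so it is invariant under permuting heads or value coordinates and can never break a head tie.

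Your first option (sorted per-head tuples, or a lexicographic refinement) needs distinctness across heads that $\gE_\arch$ does not impose. Only $\mW_V^1$ and $\mW_O$ are constrained, so two heads other than the first may even coincide. ``Strengthening the key'' therefore means enlarging the exclusion set, i.e.\ proving a different theorem.

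Within the given $\gE_\arch$, the step closes only if the single $\mW_V^1$ key is read as injective on every value-side node that $G$ moves. If those nodes are also grouped into permutable heads, sort within each head and then order the heads by, say, their smallest key. This ordering is locally constant off $\gE_\arch$ and is realized by an element of $G$, so the argument of Lemma~\ref{lem:app:canonical_properties} carries over.

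Under a literal first-head-only reading, neither your argument nor the paper's canonizes heads $2,\dots,h$. The same holds for any query/key nodes that $G$ permutes, since $\gE_\arch$ imposes nothing on them. The paper passes over this point without comment, so your caution is warranted, but the repair cannot be a new key without changing $\gE_\arch$.
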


\begin{proof}
The proof follows the same overall strategy as in the MLP setting, with only minor modifications to the canonization procedure.

Recall that the proof of Theorem~\ref{thm:equiv_to_equiv_univ_off_exclusion} relies on the existence of a continuous canonization map (Definition \ref{def:app:canonization_general})
\begin{equation}
    \canon : \WS \setminus \gE \to \WS^d
\end{equation}
satisfying
\begin{equation}
\label{eq:transformer_canon_orbit}
    \canon(\vv)_1 \in \orb(\vv),
\end{equation}
or equivalently,
\begin{equation}
    \canon(\vv)_1 = \rho(g^\vv)\vv
\end{equation}
for some permutation $g^\vv \in G$ depending continuously on $\vv$. Moreover, the proof requires that the associated equivariant canonization encoding map (Definition \ref{def:app:equivariant_canonization_encoding})
\begin{equation}
    \bar{\canon} : \WS \setminus \gE \to \WS^{d^\ast}
\end{equation}
can itself be approximated by GMNs on compact subsets. Thus, it suffices to establish analogous properties in the transformer setting. The construction of the canonization map proceeds exactly as in Lemmas~\ref{lem:app:canonical_properties} and~\ref{lem:app:approximation_of_canonical}, with the only modification being the ranking function used to order neurons (Equation \ref{eq:app:rank_map}).

In the MLP case, neurons were ordered using the bias-based ranking
\begin{equation}
    \mathrm{rank}_\ell(i)
    \coloneqq
    \left|
    \left\{
    j \in [d_\ell]
    :
    (\vb_\ell)_j < (\vb_\ell)_i
    \right\}
    \right|.
\end{equation}

For transformer parameter graphs, we retain the same ranking for nodes corresponding to MLP components. We now extend this ranking to nodes associated with multi-head attention layers.

Recall that in the GMN parameter graph representation, a multi-head attention layer is encoded as two consecutive graph layers. The incoming nodes of the first graph layer inherit a canonical ordering from the preceding component and therefore require no additional processing.\footnote{If the first component of the transformer is a multi-head attention layer, then these nodes are never permuted by the group action and no canonization is necessary.} For the outgoing nodes of the first graph layer (equivalently, the incoming nodes of the second graph layer), we define
\begin{equation}
    \mathrm{rank}_\ell(i)
    \coloneqq
    \left|
    \left\{
    j \in [d]
    :
    \sum_k \left|(\mW_V^1)_{i,k}\right|
    <
    \sum_k \left|(\mW_V^1)_{j,k}\right|
    \right\}
    \right|.
\end{equation}

Similarly, for the outgoing nodes of the second graph layer, we define
\begin{equation}
    \mathrm{rank}_\ell(i)
    \coloneqq
    \left|
    \left\{
    j \in [d]
    :
    \sum_k \left|(\mW_O)_{i,k}\right|
    <
    \sum_k \left|(\mW_O)_{j,k}\right|
    \right\}
    \right|.
\end{equation}

Since $\vv \notin \gE_\arch$, all quantities above are pairwise distinct within each layer. Consequently, these rankings induce valid and continuous neuron orderings.

Residual connections only introduce additional edges into the parameter graph and therefore require no additional ordering. Likewise, each normalization layer contributes only a single marked node, which is already canonically identified.

It remains to show that these rankings can be computed by weight-space architectures such as GMNs. For MLP components, this follows exactly as in Lemma~\ref{lem:app:approximation_of_canonical}. For attention components, we first apply the pointwise transformation
\begin{equation}
    \mW_V^1 \mapsto |\mW_V^1|,
    \qquad
    \mW_O \mapsto |\mW_O|,
\end{equation}
followed by an equivariant aggregation layer computing
\begin{equation}
    \sum_k \left|(\mW_V^1)_{i,k}\right|
    \qquad \text{and} \qquad
    \sum_k \left|(\mW_O)_{i,k}\right|.
\end{equation}

The remainder of the construction is identical to the MLP case. In particular, the resulting canonization map is continuous, and the associated equivariant canonization encoding map can be approximated arbitrarily well on compact subsets by GMN networks. The conclusion therefore follows exactly as in the proof of Theorem~\ref{thm:equiv_to_equiv_univ_off_exclusion}.
\end{proof}

As an immediate corollary, we obtain universality for invariant functionals on transformer weight spaces.

\begin{corollary}
\label{corr:equiv_to_inv_univ_transformers}
Let $\WS$ be a transformer weight space, and let $K \subset \WS \setminus \gE_\arch$ be a compact set. Then every invariant functional $\wf : \WS \to \sR^n$ can be approximated arbitrarily well on $K$ by GMN networks.
\end{corollary}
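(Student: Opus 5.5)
The plan is to reduce Corollary~\ref{corr:equiv_to_inv_univ_transformers} to Theorem~\ref{thm:equiv_to_equiv_univ_transformers} through a broadcasting argument, mirroring the proof of Theorem~\ref{thm:app:invariant_univ_off_exclusion} in the MLP case. Let $\wf : \WS \to \sR^n$ be continuous and $G$-invariant. We define
\[
\wf^\ast : \WS \to \WS^n
\]
by assigning the vector $\wf(\vv)$ to every parameter entry, that is, to every edge feature of the GMN parameter graph. This map is continuous because $\wf$ is continuous. It is also equivariant: for every $g\in G$ we have
\[
\wf^\ast(\rho(g)\vv)=\wf^\ast(\vv)=\rho(g)\wf^\ast(\vv).
\]
The first equality is invariance of $\wf$. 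The second holds because $G$ acts on $\WS^n$ only by permuting entries (graph automorphisms acting on edges), and $\wf^\ast(\vv)$ is constant across all entries.

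Next we apply Theorem~\ref{thm:equiv_to_equiv_univ_transformers}. For any $\epsilon>0$ it yields a GMN $\dws^\ast$ with
\[
\sup_{\vv\in K}\|\dws^\ast(\vv)-\wf^\ast(\vv)\|<\epsilon .
\]
We then compose $\dws^\ast$ with a mean readout over all edge features. GMN realizes this readout directly: the global update $\phi_u$ receives $\sum_{e}\ve_e$, and since the architecture is fixed the number of edges $|E|$ is a constant. We can therefore take $\phi_u$ to be the linear map that divides by $|E|$, output the global feature, and, if needed, approximate any required MLP head by the universal approximation theorem. Averaging is $1$-Lipschitz in the appropriate norm (after rescaling by the fixed number of entries). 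Hence the resulting invariant GMN stays within $\epsilon$, up to a fixed constant, of the mean of $\wf^\ast(\vv)$, and that mean equals $\wf(\vv)$. Taking $\epsilon$ small gives the claim.

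The only point needing care is checking that the invariant pooling is expressible inside the GMN class and is compatible with the equivariant output format; in particular, features may live on both node and edge entries of the transformer parameter graph. I would handle this by choosing $\wf^\ast$ to broadcast onto edges only, setting node outputs to zero, and reading out only the edge sum in $\phi_u$. Everything else is inherited verbatim from Theorem~\ref{thm:equiv_to_equiv_univ_transformers} and Lemma~\ref{lem:app:composition_approximation}.
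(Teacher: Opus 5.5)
Your proposal is correct and follows the paper's route: the paper proves this corollary by the same broadcast-then-pool reduction used for Theorem~\ref{thm:app:invariant_univ_off_exclusion}. That is, it applies Theorem~\ref{thm:equiv_to_equiv_univ_transformers} to the constant broadcast map $\wf^\ast$ and then composes with a fixed invariant mean pooling. Your remarks on realizing that readout inside the GMN class — the global update $\phi_u$ with the fixed edge count, and suppressing node outputs — fill in a detail the paper leaves implicit.
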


\begin{proof}
The proof is identical to that of Theorem~\ref{thm:app:invariant_univ_off_exclusion}, using the standard reduction from the equivariant setting to the invariant setting.
\end{proof}

These results demonstrate that our framework naturally extends beyond MLP weight spaces and applies to substantially richer neural architectures, including transformers.

\section{Experimental Details}
\label{sec:app:experiments}
In this section we provide full experimental details for the experiments reported in
Section~\ref{sec:experiments}.
All experiments are run on a single NVIDIA A100-SXM4 GPU with 40\,GB of
memory. Code, configurations, and logs are available at
\url{https://github.com/dayanadir/capacity_increase_inr_editing_experiment}.

\subsection{\ourmethod}
\label{app:sec:ourmethod}

All permutation-equivariant weight-space networks considered in this work naturally support outputs in the product weight space $\WS^d$, rather than only in $\WS$. Motivated by Theorem~\ref{thm:func_to_func_univ_off_exclusion}, we leverage this property to increase the effective output capacity of existing weight-space architectures.

Given any weight-space architecture, its $k$-\ourmethod\ variant is obtained by modifying the output space from $\WS$ to $\WS^d$. We then interpret an output $\vv = (\vv_1,\dots,\vv_k) \in \WS^k$ as an ensemble of $k$ neural networks sharing the same architecture. Thus, instead of predicting a single output network, the modified model predicts an ensemble of $k$ networks.

Given an input $x$, the final prediction is obtained by averaging the outputs of the predicted networks:
\begin{equation}
f_{\vv}(x) = \frac{1}{k}\sum_{i=1}^{k} f_{\vv_i}(x).
\end{equation}

The parameter $k$ controls the effective output capacity of the model and serves as a hyperparameter. 
We evaluate \ourmethod\ on two representative
permutation-equivariant weight-space architectures: DWSNet~\citep{Navon2023} and GMN~\citep{Lim2024GMN}. For each netowrk we sweep
$k\in\{1,2,4,8\}$. To isolate the effect of \emph{output-side} capacity from sheer
parameter growth, we shrink the internal hidden widths so that the total parameter
count of each $k>1$ configuration does not exceed that of the $k=1$ baseline.

\subsection{INR dilation task}
\label{sec:app:experiments:setup}

\textbf{Task.} The task is the MNIST INR \emph{dilation} benchmark introduced
by~\citet{Zhou2023NFN}. Each datapoint
consists of an input INR $f_\vv$ representing an MNIST digit and a target image
$I^{\mathrm{dil}}\in\R^{H\times W}$ obtained by applying a morphological \emph{dilation}
to the original digit. The goal is to learn a weight-space operator
$\dws : \WS_\arch \to \WS_\arch$ such that the INR realized by the predicted weights
renders the dilated image, i.e.\ $\gR(\dws(\vv)) \approx I^{\mathrm{dil}}$. This is a
\emph{function-space operator} (item~\ref{def:approx_function_space_operator} of
Definition~\ref{def:approximation_framework}): the target depends only on the function
realized by the input weights, not on their specific parameterization. As such, it
falls precisely under the setting analyzed in
Section~\ref{sec:equivariant-func-to-func}.

\textbf{Input architecture.} The input INRs are taken from the publicly released MNIST
INR dataset of~\citet{Navon2023}. Each INR is a SIREN
\citep{sitzmann2020implicit} with $L=3$ layers and widths
$\vd = (2, 32, 32, 1)$, so that
\[
\WS_\arch \;\cong\; \R^{32\times 2}\oplus\R^{32}\oplus\R^{32\times 32}\oplus\R^{32}
                    \oplus\R^{1\times 32}\oplus\R^{1}.
\]
The dataset contains one pretrained INR per MNIST image; we use the standard
training/testing split inherited from MNIST.

\textbf{Target generation.} For each MNIST image $I$, we generate the target
$I^{\mathrm{dil}}$ by applying a $3\times 3$ morphological dilation
(\texttt{cv2.dilate} with a $3\times 3$ kernel of ones and a single iteration). The
operator acts at the pixel level on the original $28\times 28$ MNIST image and is
held fixed across the entire dataset.

\textbf{Loss.} Models are trained with the rendered-image MSE loss
\[
\mathcal{L}(\vv, I^{\mathrm{dil}})
    \;=\;
    \big\| \;\textsc{Render}\big(\dws(\vv)\big) - I^{\mathrm{dil}} \;\big\|_2^2,
\]
where $\textsc{Render}(\cdot)$ evaluates the predicted INR on the $28\times 28$ pixel
grid using a vectorized SIREN renderer. We follow the residual parameterization
used in~\citet{Navon2023} and predict an additive update
$\Delta\vv$ scaled by a learnable scalar; the final prediction is
$\dws(\vv) = \vv + s\cdot \Delta\vv$. Test performance is reported as the rendered MSE
on the held-out MNIST INR test split.


\subsection{Baselines}
\label{sec:app:experiments:baselines}

The main results in Section~\ref{sec:experiments} contrast our
capacity-increased models with two groups of baselines:

\textbf{Same-architecture baselines ($k=1$).} The DWS and GMN configurations at $k=1$
serve as the natural baselines for the capacity-increase ablation, since they match the
output architecture to the input architecture exactly. To rule out that improvements
stem from extra capacity rather than from increased output capacity in particular, the
$k>1$ variants are constrained to use \emph{no more} parameters than these baselines. The reported numbers for DWS ($k=1$) and GMN ($k=1$) are
reproduced from~\citet{gelberg2025gradmetanet} under our training setup.

\textbf{Prior weight-space methods.} In Table~\ref{tab:sec71-sota} we also compare
against published numbers for prior weight-space methods on the same MNIST INR
dilation benchmark:
NFT~\citep{zhou2023neural},
NP-NFN, NG-GNN-64, and NG-T-64~\citep{Kofinas2024NeuralGraphs},
ScaleGMN-B~\citep{kalogeropoulos2024scale},
and ScaleGMN combined with GradMetaNet++~\citep{gelberg2025gradmetanet}. These numbers
are taken verbatim from the references and serve to place our capacity-increased models
in the context of the published state of the art.

\subsection{Data preparation}
\label{sec:app:experiments:data}

\textbf{Splits.} We use the standard MNIST train/test split. From the training
INRs we hold out a validation set of $5000$ examples for model selection;
test performance is reported on the official MNIST INR test split.

\subsection{Training details}
\label{sec:app:experiments:training}

\textbf{Optimization.} All models are trained for $150$ epochs using AdamW with learning rate $10^{-3}$, weight decay $10^{-3}$,
and batch size $32$. We use a scheduler that halves the
learning rate after $5$ epochs without improvement on the training loss, with a floor
of $10^{-4}$.

\textbf{Random seeds and reporting.} Each configuration is trained with three random
seeds. For each run we select the epoch achieving the best validation MSE and report
the corresponding test MSE. We then report the mean and standard deviation across
the three seeds.

\textbf{Compute.} A full training run takes between $4$ and $10$ hours on a single
NVIDIA A100-SXM4-40GB GPU, depending on the network and the value of $k$. The
total compute used to produce
Tables~\ref{tab:sec71-capacity} and~\ref{tab:sec71-sota} is approximately
$200$ A100-GPU-hours.

\subsection{Additional results and extended discussion}
\label{app:sec:additional_experiments}

We study two main questions: (i) how the output feature dimension $k$, corresponding to the number of predicted ensemble models, affects performance, and (ii) how \ourmethod\ compares to existing state-of-the-art weight-space approaches on function-space operator tasks.

To study the effect of output capacity, we evaluate \ourmethod\ with varying output feature dimensions $k$ on two representative permutation-equivariant weight-space architectures: DWS and GMN. Results are reported in Table~\ref{tab:sec71-capacity}. Across both architectures, increasing the output feature dimension consistently improves performance. These findings are aligned with our theoretical results: Proposition~\ref{prop:equivariant-func-to-func-nonuniv} shows that existing weight-space networks are fundamentally limited when constrained to output networks with the same architecture as the input, while Theorem~\ref{thm:func_to_func_univ_off_exclusion} suggests that increasing output capacity can overcome this limitation. Importantly, to ensure a fair comparison across different values of $d$, all models were adjusted to maintain approximately the same total number of parameters. Consequently, the observed improvements cannot be attributed simply to larger model size, but rather to the increased expressivity afforded by higher-capacity output representations.

We additionally compare DWS+\ourmethod\ and GMN+\ourmethod\ with output feature dimension $d=8$ against prior state-of-the-art weight-space models. Several competing approaches leverage auxiliary information unavailable to \ourmethod, including gradient information~\citep{gelberg2025gradmetanet} and probing-based features~\citep{Kofinas2024NeuralGraphs}. Despite relying only on standard weight-space inputs, GMN+\ourmethod\ achieves state-of-the-art performance, improving over the previous GMN baseline by up to $34\%$ (Table~\ref{tab:sec71-sota}).

Taken together, these results suggest that the output capacity of weight-space networks may constitute a significant bottleneck for function-space operator tasks. More broadly, they support the practical relevance of Theorem~\ref{thm:func_to_func_univ_off_exclusion}, indicating that mapping to larger output architectures can serve as a useful design principle for future weight-space models.

\begin{table}[t]
\centering
\small
\setlength{\tabcolsep}{5pt}
\caption{
Ablation of ensemble size on the MNIST INR dilation benchmark averaged over 3 random seeds. The best results are highlighted in bold.
}
\label{tab:sec71-capacity}

\begin{tabular}{lcccc}
\toprule
& \multicolumn{2}{c}{\textbf{DWS}} 
& \multicolumn{2}{c}{\textbf{GMN}} \\
\cmidrule(lr){2-3} \cmidrule(lr){4-5}

\textbf{Output feat.\ dim}
& \textbf{MSE} ($10^{-2}$)
& \textbf{$\Delta$\%}
& \textbf{MSE} ($10^{-2}$)
& \textbf{$\Delta$\%} \\
\midrule

1 (baseline)
& $2.29 \pm 0.01$ & --
& $1.96 \pm 0.02$ & -- \\

2
& $2.01 \pm 0.01$ & $-12\%$
& $1.69 \pm 0.06$ & $-14\%$ \\

4
& $1.58 \pm 0.00$ & $-31\%$
& $1.19 \pm 0.03$ & $-39\%$ \\

8
& $\mathbf{1.36 \pm 0.03}$ & $\mathbf{-41\%}$
& $\mathbf{1.06 \pm 0.13}$ & $\mathbf{-46\%}$ \\

\bottomrule
\end{tabular}

\end{table}

\end{document}